\newcites{Supp}{Supplementary References}
\theoremstyle{plain}
\newtheorem{theorem}{Theorem}[section]
\theoremstyle{definition}
\theoremstyle{remark}
\newtheorem{remark}[theorem]{Remark}
\newcommand{\blind}{0}
\begin{document}

\def\spacingset#1{\renewcommand{\baselinestretch}%
{#1}\small\normalsize} \spacingset{1}


\if0\blind
{
  \title{\bf Bayesian Federated Learning with Hamiltonian Monte Carlo: Algorithm and Theory}


  \author{Jiajun Liang\thanks{ByteDance Inc, China}
    \and
    Qian Zhang \thanks{Department of Statistics, Purdue University, West Lafayette, IN}
    \and
    Wei Deng \thanks{Machine Learning Research, Morgan Stanley, New York, NY}
    \and
    Qifan Song \thanks{Department of Statistics, Purdue University, West Lafayette, IN}
    \and 
    Guang Lin \thanks{Department of Mathematics \& School of Mechanical Engineering, Purdue University, West Lafayette, IN. G.L. gratefully acknowledges the support of the National Science Foundation (DMS-2053746, DMS-2134209, ECCS-2328241, and OAC-2311848), and U.S. Department of Energy (DOE) Office of Science Advanced Scientific Computing Research program DE-SC0023161, and DOE – Fusion Energy Science,  under grant number: DE-SC0024583. }
   }

  \maketitle
} \fi


\if1\blind
{
  \bigskip
  \bigskip
  \bigskip
  \begin{center}
    {\LARGE\bf Title}
\end{center}
  \medskip
} \fi

\bigskip
\begin{abstract}
This work introduces a novel and efficient Bayesian federated learning algorithm, namely, the Federated Averaging stochastic Hamiltonian Monte Carlo (FA-HMC), for parameter estimation and uncertainty quantification. We establish rigorous convergence guarantees of FA-HMC on non-iid distributed data sets, under the strong convexity and Hessian smoothness assumptions. Our analysis investigates the effects of parameter space dimension, noise on gradients and momentum, and the frequency of communication (between the central node and local nodes) on the convergence and communication costs of FA-HMC. Beyond that, we establish the tightness of our analysis by showing that the convergence rate cannot be improved even for continuous FA-HMC process. Moreover, 
extensive empirical studies demonstrate that FA-HMC outperforms the existing Federated Averaging-Langevin Monte Carlo (FA-LD) algorithm.
\end{abstract}

\noindent%
{\it Keywords:}  Hamiltonian Monte Carlo, Federated Learning, Bayesian sampling, Federated averaging, Stochastic Gradient Langevin Dynamics
\vfill

\newpage
\spacingset{1.5} 

\section{Introduction}
Standard learning algorithms usually require centralizing the training data, in the sense that the learning machine can directly access all pieces of the data. Federated learning (FL), on the other hand, enables multiple parties to collaboratively train a consensus model without directly sharing confidential data \citep{konevcny2015federated,konevcny2016federated,bonawitz2019towards,li2020federated}. The framework of FL is quite appealing to applications where data confidentiality is of vital importance, such as aggregating user app data from mobile phones to learn a shared predictive model \citep[e.g.,][]{tran2019federated,chen2020joint} or analyzing medical data from multiple healthcare stakeholders (e.g., hospitals, research centers, life science companies) \citep[e.g.,][]{li2020multi,rieke2020future}.

FL shares a similar algorithmic architecture to parallel optimization. First, parallel algorithms are commonly based on the divide-and-combine strategy, i.e., the learning system assigns (usually i.i.d.) training samples to each worker node, say via simple random sampling. As such,  the training data sets are similar in nature across worker nodes. But under the FL framework, the data sets of each worker node are generated or collected locally and are not homogeneous, which poses challenges for convergence analysis. Secondly, parallel computing is commonly practiced in the same physical location, such as a data center, where high throughput computer networking communications are available between worker nodes. In contrast, FL has either a vast number of worker nodes (e.g., mobile devices) or geographically separated worker nodes (e.g., hospitals), which limits the connectivity between the central nodes and worker nodes. Due to the unavailability of fast or frequent communication, FL needs to be communication-efficient.

Federated Averaging \citep[FedAvg,][]{mmr+17} is one of the most widely used  FL optimization algorithms. It trains a global model by synchronously averaging multi-step local stochastic gradient descent (SGD) updated parameters of all the worker nodes.  Various attempts have been made to enhance the robustness and efficiency of FedAvg 
\citep[e.g.,][]{LS20,wang2020federated}. 
However, optimization-based approaches often fail to provide proper uncertainty quantification for their estimations. Reliable uncertainty quantification, such as interval estimations or hypothesis testing, provides a vital diagnostic for both developers and users of an AI system. 

The Bayesian counterpart naturally integrates an inference component, thus it provides a unified solution for both estimations and uncertainty quantification. This paper studies a Bayesian computing algorithm aiming to obtain samplers from the global posterior distribution by infrequently aggregating samples drawn from local posterior distributions. 
Unlike existing results that utilize stochastic gradient Langevin dynamics \citep{Welling11}, this work considers (stochastic gradient) Hamiltonian Monte Carlo \citep[HMC,][]{Neal12}.  While the second-order nature of HMC poses more theoretical difficulties, it has been demonstrated to be more computationally efficient through numerous empirical studies \citep[see, e.g.,][]{girolami2011riemann,chen2014stochastic}.
Readers can refer to Section A 
in Supplementary Material for a review of related literature on federated sampling and HMC.


The contributions of the presented work are three-fold:


(1) We propose the  Federated Averaging Hamiltonian Monte Carlo (FA-HMC) algorithm which is effective  for global posterior inferences in federated learning. It utilizes stochastic gradient HMC on individual local nodes and combines the local samples obtained infrequently to yield global samples.

 (2) Under strong log-concavity and proper smoothness assumptions, we have proven a non-asymptotic convergence result under the Wasserstein metric for various training settings. 
 Furthermore, we demonstrate that this upper bound of the convergence rate of the FA-HMC sampling algorithm  is tight (i.e., best achievable for certain sampling problems).

(3) We conduct simulation and real data experiments to validate our theoretical findings. Additionally, the numerical studies  show that FA-HMC is easy to tune, improves communication efficiency, and can outperform FA-LD in different settings.
\paragraph{Roadmap:}  The paper is organized as follows: In Section \ref{prelim}, we summarize   the problem setup and provide the necessary background on HMC. In Section  \ref{section_fa-HMC}, we present the FA-HMC algorithm and the assumptions used for its analysis. In Section \ref{sec:theory}, we provide the key theoretical findings and examine the effects of SGD noise and the correlation between momentum. Furthermore, we prove that our analysis is tight and cannot be improved for certain sampling problems, even for continuous FA-HMC. In Section \ref{sec:simu}, we compare the FA-HMC algorithm with the FA-LD algorithm through extensive simulations and real-data experiments. Finally, in Section \ref{section_conclude}, we conclude our work and suggest potential future directions.


\section{Preiminary}
\label{prelim}

\subsection{Problem Setup}\label{sec:fald}

Let $z^c_{i}, 1\leq i\leq n_c$ be the available data of the $c$-th node and $\ell(\theta; z^c_{i})$ be a user-specified negative log-likelihood function. Define $n=\sum n_c$, $w_c=n_c/n$, and $f^{(c)}(\theta):=n\sum_{i=1}^{n_c}\ell(\theta; z^c_{i})/n_c$ is the local loss function of parameter $\theta\in \mathbb{R}^d$ accessible to the $c$-th local node (e.g., the normalized negative log-likelihood function based on the data set available at $c$-th local node) for $1\leq c\leq N$. The goal is to simulate the global target distribution $\pi(\theta)\propto \exp(-{f(\theta)})$, where $f(\theta)=\sum_{c=1}^N w_cf^{(c)}(\theta)$, $w_c\geq 0$ and $\sum_c w_c = 1$.

\subsection{Hamilton's Equations and HMC}
Hamiltonian (Hybrid) Monte Carlo (HMC) was first proposed by \cite{Duane87} for simulations of quantum chromodynamics and was then extended to molecular dynamics and neural networks \cite{Neal12}. To alleviate the random-walk behavior in the vanilla Langevin dynamics, HMC simulates the trajectory of a particle according to Hamiltonian dynamics and obtains a much faster convergence rate than Langevin dynamics \cite{Mangoubi18_leapfrog}. In specific,  HMC introduces a set of auxiliary momentum variables $p\in\mathbb{R}^d$ to capture second-order information, whereas Langevin Monte Carlo is only a first-order method. In this way, HMC generates samples from the following joint distribution
\begin{equation*}
    \pi(\theta, p)\propto \exp(-{f(\theta)}- \frac{1}{2}p'\Sigma^{-1}p),
\end{equation*}
where ${f(\theta)}+ p'\Sigma^{-1}p/2$ is the Hamiltonian function and quantifies the total energy of a physical system.
To further generate more efficient proposals, HMC simulates according to the following Hamilton's equations
\begin{align}\label{eq:hmc}
\begin{split}
    \frac{d\theta(t)}{dt}=\Sigma^{-1/2}{p(t)},\quad\frac{dp(t)}{dt}=-\Sigma^{1/2}\nabla_\theta f(\theta(t)),
\end{split}
\end{align}
which satisfy the conservation law and are time reversible. Such properties leave the distribution invariant and the nature of Hamiltonian conservation always makes the proposal accepted ideally. Note that commonly, one chooses $\Sigma =\mathbb{I}_d$ such that the momentum follows the standard multivariate normal distribution.

To numerically implement the continuous HMC process, a popular numerical integrator is the ``leapfrog'' approximation, see Algorithm \ref{alg:disHMC}. Here, to enhance the computational efficiency, $\nabla\widetilde{f}(\theta_k,\xi_k)$ and $\nabla\widetilde{f}(\theta_{k+1},\xi_{k+1/2})$ are the stochastic versions of $\nabla f(\theta_k)$ and $\nabla f(\theta_{k+1})$, respectively. The arguments $\xi_k$ and $\xi_{k+1/2}$ denote random variables that control the randomness of the stochastic gradients. For example, given $f(\theta)=\sum_{i=1}^{n}\ell(\theta;z_i)$ with data $\{z_i\}_{i=1}^n$, we let $\nabla \widetilde{f}(\theta,\xi_k)=n\sum_{i \in S(\xi_k)}\nabla\ell(\theta;z_i)/|S(\xi_k)|+Z(\xi_k)$ where  $S(\xi_k)$ is a random index subset, $Z(\xi_k)$ is an injected Gaussian noise, and $\xi_k$ is the random seed. When the exact gradients are used, it holds that $\nabla\widetilde{f}(\theta_k,\xi_k)=\nabla f(\theta_k)$ and $\nabla\widetilde{f}(\theta_{k+1},\xi_{k+1/2})=\nabla f(\theta_{k+1})$. Note that throughout this paper, when the exact gradient is used instead of a stochastic gradient, the algorithm is referred to as the vanilla version, e.g., {\it vanilla} FA-HMC.

\begin{algorithm}[tb]
	\caption{Stochastic gradient leapfrog approximation $\widetilde{h}_{\rm LF}$ }\label{alg:disHMC}
	\begin{algorithmic}
		\STATE {\bfseries Input:} Energy function $f(\cdot) $; Initial parameters $\theta_0$, momentum $p_0$; learning rate $\eta$; leapfrog step $K$; $k=0$
	       \WHILE{$k\leq K$:}
   \STATE    $\theta_{k+1}=\theta_k+\eta p_k-\frac{\eta^2}{2}\nabla\widetilde{f}(\theta_k,\xi_k)$  
        
   \STATE     $p_{k+1}=p_k-\frac{\eta}{2} \nabla\widetilde{f}(\theta_k,\xi_k)-\frac{\eta}{2}\nabla\widetilde{f}(\theta_{k+1},\xi_{k+\frac{1}{2}})$
     \STATE      $k=k+1;$
 \ENDWHILE
 \STATE {\bfseries Output:} $\widetilde{h}_{\rm LF}(f,\theta_0,p_0,\eta,K)=\theta_K$
	\end{algorithmic}
\end{algorithm}
For convenience in analysis, the leapfrog method without Metropolis correction (see   Algorithm \ref{HMC_Standard}), is commonly studied in the literature \citep{Mangoubi18_leapfrog, chen2019optimal, zou2021convergence}. One may also add an additional accept/reject step according to the Metropolis ratio \citep{chen2020fast}.
\begin{algorithm}[ ]
	\caption{HMC algorithm (without Metropolis correction)}\label{HMC_Standard}
	\begin{algorithmic}
		\STATE {\bfseries Input:} Energy function $f(\cdot)$; Initial point $\theta_0$; Stepsize function $\eta_t=\eta(t)$; Leapfrog step $K$; $t=0$;
       \WHILE{ the stopping rule is not satisfied }
   \STATE {\bf sample }momentum $p_t\sim N(0,\mathbb{I}_d)$
        \STATE {\bf update}         	$\theta_{t+1}=\widetilde{h}_{\rm LF}(f,\theta_t,p_t, \eta_t, K)$, $t=t+1$;
 \ENDWHILE
 \STATE {\bfseries Output:} $\{\theta_i\}_{i=1}^{t}$
	\end{algorithmic}
\end{algorithm}

Note that in the literature, \cite{chen2014stochastic} proposed a different HMC algorithm, based on Euler integrator of Hamilton dynamics. Their implementation includes variance adjustment to counteract the noise of the stochastic gradient, which can negatively impact the stationary distribution. This adjustment eventually leads to an underdamped Langevin Monte Carlo algorithm with stochastic gradient \citep[see also e.g.,][]{ma2015complete, zou2019stochastic,chau2019stochastic, akyildiz2020nonasymptotic, nemeth2021stochastic}. 

\section{FA-HMC Algorithm and Assumptions}\label{section_fa-HMC}
Ensuring the confidentiality of the data utilized for training a model is a vital concern in federated learning. 
To safeguard against potential gradient leakage \citep{NEURIPS2019_60a6c400} and breaches of local data privacy,  it is preferable to use  noisy gradients and less-correlated momentum among local nodes \citep[see][]{deng2021convergence,Maxime2021}. This could  make it more difficult to recover local data information through accumulated communication. 

With these considerations, we propose  Federated Averaging via HMC algorithm that utilizes general stochastic gradients and non-necessarily identical  momentum across nodes. We let all local devices run HMC (Algorithm \ref{alg:disHMC}), and synchronize their model parameters every $T$ iteration. All devices may use stochastic gradients and share part of the initial momentum of leapfrog approximation. Note that in practice, correlated momentum between devices can be easily achieved by sending a common random seed to all devices for momentum generation. This FA-HMC algorithm is formalized in Algorithm \ref{alg:disFAHMC}.
\begin{algorithm}[tb]
	\caption{FA-HMC algorithm}\label{alg:disFAHMC}
	\begin{algorithmic}
		\STATE {\bfseries Input:} $\theta^{(c)}_0=\theta_0$, $t=0$; stepsize function $\eta_t=\eta(t)$; Local update step $T$; leapfrog update step $K$; 
		\WHILE{the stopping rule is not satisfied}
  \STATE {\bf sample }momentum $p_t^{(c)}$
  \IF{$t\equiv 0 (\mathrm{mod }\ T)$}
  \STATE Broadcast $\theta_t:=\sum_{c=1}^Nw_c\theta^{(c)}_t$ and set $\theta^{(c)}_{t+1,0}=\theta_t$
  \ELSE
  \STATE $\theta^{(c)}_{t+1,0}=\theta^{(c)}_t$
  \ENDIF
  \STATE {\bf update} $\theta^{(c)}_{t+1}=\widetilde{h}_{\rm LF}(f^{(c)},\theta^{(c)}_{t+1,0},p_t^{(c)},\eta_t,K) $ in parallel for all devices, $t=t+1$
  \ENDWHILE
	\end{algorithmic}
\end{algorithm}
It is worth mentioning that when leapfrog step $K=1$, the leapfrog approximation of the unadjusted HMC algorithm (i.e., Algorithm \ref{alg:disHMC}) reduces to 
$\theta_{t+1}=\theta_{t}-(\eta_t^2/2)\nabla_{\theta}\widetilde{f}(\theta_t,\xi_t)+\eta_t N(0,\mathbb{I}_d)$, 
which is exactly the unadjusted Langevin Monte Carlo with dynamic learning rate $\eta_t^2/2$. And the FA-HMC reduces to FA-LD \cite{deng2021convergence}.

\subsection{Assumptions}
To establish the convergence performance of the aggregated model with respect to $\theta_{t}$, we adopted the following assumptions.

\begin{restatable}[$\mu$-Strongly Convex]{assum}{assumconvex}\label{assum:convex}
For each $c=1,2,\ldots, N$, $f^{(c)}$ is $\mu$-strongly convex for some $\mu>0$, i.e., $\forall x, y \in \mathbb{R}^{d}$, 
$f^{(c)}(y)\geq f^{(c)}(x)+\langle\nabla f^{(c)}(x), y-x\rangle+\frac{\mu}{2}\|y-x\|_{2}^{2}.$ 

\end{restatable}
\vskip -0.2in
\begin{restatable}[$L$-Smoothness]{assum}{assumsmooth}\label{assum:smooth}
 For each $c=1,2,\ldots, N$, $f^{(c)}$ is $L$-smooth for some $L>0$, i.e., $\forall x, y \in \mathbb{R}^{d}$, $\|\nabla f^{(c)}(y)-\nabla f^{(c)}(x)\|\leq L\|x-y\|.$
\end{restatable}
\vskip -0.2in
\begin{restatable}[$L_H$-Hessian Smoothness]{assum}{assumHsmooth}\label{assum:Hsmooth} For each $c=1,2,\ldots, N$, $f^{(c)}$ is $L_H$ Hessian smoothness, i.e., for any  $\theta_1,\theta_2, p\in\mathbb{R}^d$,
$
\|\big(\nabla^2 f^{(c)}(\theta_1)-\nabla^2 f^{(c)}(\theta_2)\big)p\|^2\leq L_H^2\|\theta_1-\theta_2\|^2\|p\|_{\infty}^2.
$
\end{restatable}

Assumptions~\ref{assum:convex}-\ref{assum:smooth} are commonly used for the convergence analysis of gradient-based MCMC algorithms \citep[e.g.,][ and references therein]{dk17,Mangoubi18_leapfrog,dk19,erdogdu2021convergence}. 
The strong convexity condition, in some theoretical literature of stochastic Langevin Monte Carlo, has also been relaxed to the dissipativity condition \citep[e.g.,][]{Maxim17,zou2021faster} for non-log-concave target distributions. But such an extension is beyond the scope of this paper and will be investigated in future works.
Assumption \ref{assum:Hsmooth} ensures second-order smoothness of energy functions beyond gradient Lipchitzness.  Similar Hessian smoothness conditions are used in the literature. For example, \cite{dk19,chen2020fast,zou2021faster} required the Hessian matrix of energy function to be Lipchitz under $\ell_2$ operator norm. In comparison, Assumption \ref{assum:Hsmooth} is a stronger requirement since $\ell_{\infty}$ norm appears on the RHS.
Our assumption is somewhat comparable to Assumption 1 of \cite{Mangoubi18_leapfrog} which defines a semi-norm with respect to a set of pre-specified unit vectors. 

 We require an additional assumption to model stochastic gradients. Denote $\theta^{(c)}_{t,k}$ as the position parameter of the $c$-th local node at iteration $t$ and leapfrog step $k$, and $\xi^{(c)}_{t,x}$ ($x=k-1/2,k$) as the corresponding variable that controls the randomness of gradient.

\begin{restatable}[$\sigma_g$-Bounded Variance]{assum}{assumboundvar}\label{assum:boundvar}For local device $c=1,2,\ldots,N$, and leapfrog step $k=1,2,\ldots,K$, $t=1,2,\ldots$, we have $\max_{x=k-1/2,k}\mathrm{tr(Var}(\nabla \widetilde{f}^{(c)}(\theta^{(c)}_{t,k},\xi^{(c)}_{t,x})|\theta^{(c)}_{t,k}))\leq \sigma_g^2Ld$,
 for some $\sigma_g>0$.
\end{restatable}
This is a common assumption in the literature \citep[see][]{gurbuzbalaban2021decentralized,Maxime2021,deng2021convergence}. It is worth noting that in practice, the stochastic gradient is computed based on a random subsample of the whole dataset, thus the variability of the stochastic gradient can be naturally controlled by adjusting the batch sizes.

Under our framework, we can also relax the above assumption to 
\[
\max_{x=k-1/2,k}\text{tr(Var}(\nabla \widetilde{f}^{(c)}(\theta^{(c)}_{t,k},\xi^{(c)}_{t,x})|\theta^{(c)}_{t,k}))\leq \sigma_g^2(G_{t,k}^{(c)}+d),
\]
without significant changes to our proof, where $G_{t,k}^{(c)}$ denote $\|\nabla f^{(c)}(\theta^{(c)}_{t,k})\|^2$.  
The extension of the proof to accommodate this assumption is discussed in Section K 
in the appendix.


Before presenting our main result, we emphasize that this paper examines the convergence of the FA-HMC sampling algorithm, specifically in regard to dimension $d$ and error $\epsilon$. It also explores ways to adjust the algorithm to maintain its effectiveness when considering variations in gradient and momentum noise.  Adapting the FA-HMC algorithm to more general settings like non-convexity will be our future study. 

\section{Theoretical Results}\label{sec:theory}
In Section \ref{sec:theo:main}, we describe the general convergence rate of FA-HMC on different settings and point out the setting where FA-HMC achieves the fastest speed and least communication cost. In Section C of the supplementary material, 
we argue that that the upper bound on the nearly ideal case is tight by giving a matching lower bound result. In Section \ref{sec:theo:conanalysis}, we present a detailed result of the convergence behavior of the FA-HMC algorithm.

\subsection{Main Results}\label{sec:theo:main}
Define $\theta^*:=\mathrm{argmin}_{\theta} f(\theta)$ and denote the marginal distribution of $\theta_t$ by $\pi_t$. 
Given two probability measures $\mu$ and $\nu$, the $2$-Wasserstein distance is  $\mathcal{W}_2(\mu,\nu)=\inf_{X\sim \mu,Y\sim\nu}(\mathbb{E}\|X-Y\|^2)^{1/2}$. 
The following theorem describes the general convergence rate of FA-HMC.
\begin{restatable}[]{theorem}{thmconvergratesg}\label{thm:convergrate:sg}
Assume \ref{assum:convex}-\ref{assum:boundvar}, and ${\cal W}_2(\pi_0,\pi)^2=O\footnote{\label{fn:bigO} As $d\rightarrow \infty$, we say $f=O(g)$ if $f\leq C g$ for some constant $C$, and say $f=\widetilde{O}(g)$ for $C$ being a  polynomial of $\log(d)$. } (d)$ and $\sum_{c=1}^Nw_c\|\nabla f^{(c)}(\theta^*)\|^2=O(d)$. For a given local iteration step $T$, there exists some constant $C$ depending on $L,L/\mu,L^2_H/L^3$ such that if we choose $\eta(t)\equiv \eta$  and (denote $\gamma=(K\eta)^2$)
\[
\eta^2 = \frac{\gamma}{K^2} =C \min\Bigl\{\frac{1}{K^2L},\frac{\epsilon}{K^2\sqrt{d}T},\frac{\epsilon^2}{K^2dT^2(1-\rho)N},\frac{\epsilon^2}{Kd\sum_{c=1}^Nw_c^2\sigma^2_g}\Bigr\}
\]
then  ${\cal W}_2(\pi_{t_\epsilon},\pi)\leq \epsilon$ for any $\epsilon>0$, with iteration number
\[
t_\epsilon=\frac{d\log(d/\epsilon^2)}{\epsilon^2}\widetilde{O}^{\ref{fn:bigO}}\Bigl(T^2\big(\gamma+(1-\rho)N\big)+\frac{\sum_{c=1}^Nw_c^2\sigma^2_g}{K}\Bigr)
\]
and corresponding communication times
\[
\frac{t_{\epsilon}}{T}=\frac{d\log(d/\epsilon^2)}{\epsilon^2}\widetilde{O}\Bigl(T\big(\gamma+(1-\rho)N\big)+\frac{\sum_{c=1}^Nw_c^2\sigma^2_g}{KT}\Bigr).
\]
\end{restatable}
When one uses small batch stochastic gradients (i.e., large $\sigma_g$)  or less correlated momentum (i.e, small $\rho$) to improve computational feasibility and protect privacy, the proposed $\gamma$ is negligible. Under this scenario, the required number of iterations is of rate $\widetilde{O}( d/\epsilon^2)$ with respect to the dimension $d$ and precision level $\epsilon$. 

\begin{remark}
Regarding the stopping rule of algorithms \ref{HMC_Standard} and \ref{alg:disFAHMC}, Theorem 4.1 does provide a nonasymptotic choice of $t_\epsilon$ to achieve an $\epsilon$-$W_2$ error in theory. 
But this bound is impractical, as it relies on the unknown distributional properties of the target distribution. For more practical rules, various suggestions have been made in the literature \citep[e.g., ][]{gelman1995bayesian}. For example,
(i) From a visual inspection perspective, we can randomly pick some dimensions and visually compare the trace plots between two parts of a single chain (by splitting one chain in half) or between two chains. We keep running the chains until they become ``approximately" stationary; (ii) From a quantitative perspective, we can compute the between- and within-sequence variances following the potential scale reduction factor 
$\widehat R$ defined in Eq.(11.4) of \citet{gelman1995bayesian}, the stopping rule can be triggered when $\widehat R \approx 1$.
Note that it is beyond the scope of this paper to design a stopping rule with statistical guarantees.

\end{remark}


The result of Theorem \ref{sec:theo:main} also shows that for a fixed $\epsilon$, under proper tuning, the communication cost $t_\epsilon/T$ may initially decrease and then increase as the number of local HMC iteration steps $T$ increases (i.e., a `U' curve w.r.t, $T$). Therefore, there is a trade-off between communication and divergence,  and  an optimal choice for local iteration can be made. Similar discoveries were also argued by \cite{deng2021convergence} for Bayesian Federated Averaging Langevin system. The above results provide a certain level of direction for optimizing the performance of FA-HMC algorithms, considering any well-defined federated learning loss that accounts for total running time, overall communication cost, and divergence. 

For instance, by reducing the noise of the stochastic gradients and improving correlation between momentum to a certain level, we can achieve significant improvement on the convergence speed from $\widetilde{O}(d/\epsilon^2)$ to $\widetilde{O}(\sqrt{d}/\epsilon)$, which is argued by the following proposition.
\begin{restatable}[]{proposition}{propconvergrate}\label{prop:convergrate}
With the assumptions as stated in Theorem \ref{thm:convergrate:sg}, 
if we choose $\eta(t)\equiv \eta$ and (denote $\gamma=(K\eta)^2$)
\begin{equation}\label{eq:vansetting}
  \gamma=C\min\Bigl\{\frac{1}{L}, \frac{\epsilon}{T\sqrt{d}}\Bigr\},\quad \rho=1-O(\frac{\gamma}{N}),\quad \sigma_g^2=O(K\gamma)  
\end{equation}
then 
it achieves that ${\cal W}_2(\pi_{t_\epsilon},\pi)\leq \epsilon$, where $\pi_t$ denotes the marginal distribution of $\theta_t$, with iteration $t$ and corresponding communication times $t_\epsilon/T$ as
\[
t_\epsilon=\frac{\sqrt{d}\log(d/\epsilon^2)}{\epsilon}\widetilde{O}(T),\qquad \frac{t_{\epsilon}}{T}=\widetilde{O}\big(\frac{\sqrt{d}\log(d/\epsilon^2)}{\epsilon}\big).
\]
\end{restatable}
Under the setting (\ref{eq:vansetting}), referred to as vanilla FA-HMC, the obtained convergence rate matches that of the underdamped Langevin Monte Carlo algorithm on a single device in \cite{ccbj18} and is superior to that of Federated Averaging of underdamped Langevin Monte Carlo algorithm under decentralized setting \citep[i.e., rate $\widetilde{O}(d/\epsilon^2)$ in][]{gurbuzbalaban2021decentralized}. It also matches existing results about Federated Langevin algorithm tackling heterogeneity under the federated learning framework \cite{plassier2022federated} and is better than those without hessian smoothness assumption \cite{deng2021convergence}. 

Furthermore, in Section C 
of the supplementary material, we establish a lower bound for $t_\epsilon =\Omega(\sqrt{d}T\log(d/\epsilon)/\epsilon)$ for some log-concave target distribution. In other words, our result in Proposition \ref{prop:convergrate} is tight w.r.t. dimension $d$ and local iteration $T$. This tight result implies that 
(1) Unlike the ``U'' curve with respect to $T$ discovered in Theorem \ref{thm:convergrate:sg}, when there are small stochastic gradients and large correlations between momentum, communication times have limited variations in $T$. Therefore, the tradeoff between communication and divergence will not exist for vanilla FA-HMC and it suggests a small local iteration $T$ to minimize unnecessary computation; and (2)In terms of rate dependency w.r.t. the dimension, under similar conditions on the Hessian matrix, the rate of single-device HMC is as low as $O(d^{1/4})$ \cite{Mangoubi18_leapfrog}, which is strictly better than our rate $O(d^{1/2})$ under the federated learning setting. This intrinsic gap is caused by (i) FA algorithm design and (ii) the use of stochastic gradient.

\subsection{Convergence Behaviour for  FA-HMC Algorithm}\label{sec:theo:conanalysis}

For correlated momentum, for simplicity of analysis, we consider the following setting
\[
p^{(c)}_t=\sqrt{\rho}\xi_t+\sqrt{1-\rho}{\xi_t^{(c)}}/{\sqrt{w_c}},\quad \text{for all } c\in [N], t\geq 1,
\]
where $\xi_t,\xi_t^{(c)}$ are independent standard Gaussian and the $\xi_t$ are the shared across all local nodes and $\xi_t^{(c)}$'s are private to each local node $c$. 

Here the factor $1/\sqrt{w_c}$ on $\xi_t^{(c)}$ is a scaling treatment such that the average momentum is a standard Gaussian. To see this, note that the average momentum $p_t= \sum_{c=1}^Nw_cp^{(c)}_t$ has a smaller variance due to the correlation between $\{p^{(c)}_t\}_c$. 
By direct calculations, we have
\[
\mathbb{E}\|p_t^{(c)}\|^2=(\rho+\frac{1-\rho}{w_c})d,\qquad\mathbb{E}\|p_t\|^2=d.
\]
Note that for FA-HMC, the momentum of each local device is not standard Gaussian. This is to ensure that the center momentum (i.e., $p_t = \sum_{c=1}^N w_c p_t^{(c)}$ aggregated from local momentum) is close to the standard Gaussian. 
 This is a special setting induced by distributed sampling and the goal of privacy preservation.

We define the aggregated global model $\theta_t:= \sum_{c=1}^N\theta^{(c)}_t$ for all $t\geq 1$. Note that $\theta_t$, in practice, is not accessible unless $t\equiv 0 (\mathrm{mod }\ T)$. For $t\geq 0$, we also define $\theta^\pi_{t+1}$ as the parameter resulting from the evolution over $K\eta_t$ time  following dynamic (\ref{eq:hmc}) with initial position  $\theta_{t}^\pi$ and momentum $p_t$. With the above preparations, to intuitively understand  the convergence of the distribution of $\theta_t$, 
we take the vanilla FA-HMC as an example. We can decompose $ \theta_{t+1}^{(c)}-\theta^\pi_{t+1}$ as follow:
\begin{align*}
    \theta_{t+1}^{(c)}-\theta^\pi_{t+1}=(\text{I}_1)-\eta^2\sum_{k=1}^{K-1}(K-k)(\text{I}_2)_k-(\text{I}_3),
\end{align*}
where
\begin{align*}
    &(\text{I}_1)=\theta^{(c)}_{t,0}-\frac{(K\eta)^2}{2}\nabla f^{(c)}(\theta^{(c)}_{t,0})-\frac{(K^3-K)\eta^3}{6}\nabla^2 f^{(c)}(\theta^{(c)}_{t,0}) \\
    &\quad\cdot p_t-\Bigl(\theta^\pi_t-\frac{(K\eta)^2}{2}\nabla f(\theta^\pi_t)-\frac{(K\eta)^3}{6}\nabla^2 f(\theta^\pi_t)p_t\Bigr);\\
    &(\text{I}_2)_k=\nabla f^{(c)}(\theta^{(c)}_{t,k})-\nabla f^{(c)}(\theta^{(c)}_{t,0})-\nabla^2 f^{(c)}(\theta^{(c)}_{t,0})\eta p_tk\\
    &(\text{I}_3)=\int_0^{K\eta}\int_0^s\nabla f(\theta^\pi_t(u))-\nabla f(\theta^\pi_t)-\nabla^2 f(\theta^\pi_t)p_tududs.
\end{align*}
Here $(\text{I}_1)$ represents second-order random approximation of $\theta_{t+1}^{(c)}-\theta^\pi_{t+1}$ through $\theta_{t}^{(c)}$ and $\theta^\pi_{t}$, and we expect that
\[
\mathbb{E}\|\sum_{c=1}^Nw_c(\text{I}_1)\|^2\leq \alpha_t  \mathbb{E}\|\theta_{t}-\theta^\pi_{t}\|^2 + \varepsilon_t^2,
\]
where the contraction factor $\alpha_t\in (0,1)$ and one-iteration divergence error $\varepsilon_t>0$.

On the other hand, $\|(\text{I}_2)_k\|$ and $\|(\text{I}_3)\|$ represent second-order approximation error and are expected to be $O((K\eta_t)^2\sqrt{d})$. 

By utilizing Lemma D.1, 
the overall behavior is summarized in the following theorem. 

\begin{restatable}[Convergence]{theorem}{thmdisFLHMC}\label{thm:disFLHMC}
Under Assumptions \ref{assum:convex}-\ref{assum:boundvar}, if we  set $\eta_{t''}\leq \eta_{t'}\leq 1/(K\sqrt{L})$ for any $t' \leq t''$ in Algorithm \ref{alg:disFAHMC},  then  $\{\theta_t\}_t$  satisfies 
\[
    \mathbb{E} \|\theta_{t+1}-\theta^\pi_{t+1}\|^2
     \leq(1-\frac{\mu(K\eta_t)^2}{4})^t\mathbb{E}\|\theta_0-\theta^\pi_0\|^2+\eta_t^2\Delta_t
\]
where there exist constants $C_1,C_2>0$ depending on $L,L/\mu,L^2_H/L^3$  and $c_d=\log^2(d)$, such that
\begin{align*}
 \Delta_t=& C_1T^2 K^2\sum_{c=1}^N\Bigl(\underbrace{\frac{w_cB^{(c)}_{\nabla}}{L}}_{\mathrm{B
 ias}}(K\eta_t)^2+\underbrace{\frac{1-\rho}{L}d}_{\mathrm{Correlation}}\Bigr)+C_2K\underbrace{\sum_{c=1}^Nw_c^2\sigma_g^2d}_{\mathrm{Stoc.\; Grad.}}
\end{align*}
with $B^{(c)}_{\nabla}:=\sup_t\mathbb{E}\|\nabla f^{(c)}(\theta^{(c)}_{t,0})\|^2$.
\end{restatable}

The proof is postponed to Section G 
in the supplementary material. 
The divergence error is made up of three main components: error resulting from bias across local nodes (which includes heterogeneity and sampling cost), momentum noise, and gradient noise. In the absence of stochastic gradients and when momentum is identical across nodes, the only errors present are lower-order biases.
 Similar intermediate contraction results have been derived in the literature on gradient-based sampling algorithms \citep[e.g.,][]{deng2021convergence, plassier2022federated}.

By the definition of Wasserstein metric, Theorem \ref{thm:disFLHMC} immediately establishes a convergence result of the marginal distribution of $\theta_t$, denoted by $\pi_t$, towards $\pi$ under Wasserstein-2 distance.
The convergence result involves a term $\sum_{c=1}^Nw_c\sup_t\mathbb{E}\|\nabla f^{(c)}(\theta^{(c)}_{t,0})\|^2/L$. In Lemma D.7 
in the appendix, we shows that uniformly, $\mathbb{E}\|\nabla f^{(c)}(\theta^{(c)}_{t,0})\|^2=\widetilde{O}(\sum_{c=1}^Nw_c\|\nabla f^{(c)}(\theta^*)\|^2+L\mathbb{E}\|\theta_0-\theta_0^\pi\|^2+d)$  omitting its dependency on constants $L, L/\mu$ and $L_H^2/L^3$, and in consequence,  solving the two inequalities
\[
(1-\mu(K\eta_t)^2/4)^t\mathbb{E}\|\theta_0-\theta^\pi_0\|^2\leq \epsilon^2/2,\qquad\eta_t^2\Delta_t\leq \epsilon^2/2,
\]
we obtain Theorem \ref{thm:convergrate:sg}.

On the other hand, in literature, people design settings for converging learning rate such that the extra logarithmic factor in the convergence result can be removed. We also obtain a similar result on a learning rate design as stated in the following proposition.
\begin{restatable}[Dynamic stepsize]{proposition}{propdynamic}\label{prop:dynamic}Under Assumptions \ref{assum:convex}-\ref{assum:boundvar}, there is a setting of  $\{\eta_t\}_t$ for Algorithm \ref{alg:disFAHMC} such that $\mathbb{E}\|\theta_t-\theta^\pi_t\|^2\leq \epsilon^2$ at some $t\leq C\log^2(d)d\big(T^2(\gamma+(1-\rho)N\big)+\sum_{c=1}^Nw_c^2\sigma^2_g/K\big)/\epsilon^2$,  with $\gamma=\min\{1/\sqrt{L},\epsilon/\sqrt{d}T,\epsilon^2/(dT^2(1-\rho)N),\epsilon^2K/(d\sum_{c=1}^Nw_c^2\sigma^2_g)\}$.
\end{restatable}

By this proposition, we see that the $\log(d/\epsilon^2)$ factors are removed in the convergent iteration compared to Theorem~\ref{thm:convergrate:sg}. One setting of $\eta_t$ that satisfies the claims in Proposition 4.6 is specified in the proof (i.e., Section H in the supplement file).

\section{Experiments} \label{sec:simu}
In this section, we first compare the empirical performance of FA-HMC and FA-LD on simulated data. Then we examine the relationship between dimension and communication round in our theoretical suggested setting of the learning rate. Last we present the performance of FA-HMC on the real datasets. We apply FA-HMC with constant stepsize $\eta$ and the same momentum initialization across devices. 
We conduct 
the synchronization of the model parameters 
every $T$ local leapfrog step in the implementation of FA-HMC. Due to the significant computational costs involved in evaluating performance at each cohort level, some results in this section are obtained from a single run and others are obtained by averaging multiple runs. 
We defer part of the results with error bars to Section L 
in the supplementary materials.

\subsection{Simulation: FA-HMC vs FA-LD} \label{sec:sim_data}


We first sample from the posterior of a Bayesian logistic regression on a simulated dataset of dimension $d=1000$~\citep{Mangoubi18_leapfrog}.  Specifically, we split the dataset of size $1000$ equally into $20$ local nodes; we run the experiments using both exact gradients (i.e., vanilla version) and stochastic gradients, where the later ones are simulated by adding an independent zero-mean Gaussian noise of variance $\sigma^2=100$ to each coordinate of the true gradients. Note that simulating the randomness of the stochastic gradient by a normal variable is consistent with the experiment setting in \cite{Mangoubi18_leapfrog}. We argue that Gaussian noise is a reasonable approximation when invoking the central limit theorem with a large enough batch size.



As the benchmark, we run Metropolis-adjusted HMC (MHMC) for a sufficient number of iterations. 
To evaluate the performance of FA-HMC, we use the computable metric $\frac{1}{d}\sum_{i=1}^d\mathcal{W}_1(\mu_i,\nu_i)$ as a measure  of marginal error (ME) of two sets of samples, as proposed by \cite{Mangoubi18_leapfrog,faes2011variational}. This metric compares the empirical distributions of the $i$-th coordinate of the two sets of samples, represented by $\mu_i$ and $\nu_i$, respectively. 

We first compare FA-HMC with FA-LD (i.e., FA-HMC with $K=1$). 
Noticing that the communication limit is a major bottleneck for federated learning, we suppose the
local computation cost is negligible compared with the communication cost. Therefore, the comparison between FA-HMC and FA-LD is based on the same number of communications, or equivalently, the same number of steps $t$.
Fixing local step $T=10$, we try different stepsizes $\eta$ and leapfrog steps $K$. For FA-HMC, we set $K=\lfloor \pi/(3\eta)\rfloor$ following \cite{Mangoubi18_leapfrog} when $\eta\le 0.01$ and tune $K$ (such that the performance is optimal w.r.t the choice of $K$) when $\eta\ge 0.02$. 
Each run consists of $2\times10^7$ steps and we collect the same number of samples from the last $10^7$ steps. 
We plot the curves of the calculated MEs against $\eta$ in Figure \ref{fig:sim-stepsize} (exact gradients, G) and \ref{fig:sim-stepsize-sto} (stochastic gradients, SG). 
We observe that in this task, where FA-LD is already a competitive baseline, 
\emph{FA-HMC still significantly outperforms FA-LD with around 5\% improvement on the performance}. 
Moreover, we realize that a wide range of stepsizes for FA-HMC yields pretty decent performance. As such, FA-HMC appears to be more robust w.r.t. its hyperparameters around the optimal choices, suggesting that FA-HMC is easier to tune than FA-LD, and a small stepsize usually leads to a good performance.

Next, we study the impact of local steps $T$ on communication efficiency in FA-HMC with SG. Fixing leapfrog step $K=100$ and stepsize $\eta=0.01$, we run FA-HMC with $T$ ranging from 1 to 100.
For each run, we collect one sample after a fixed number of communication rounds and calculate the MEs in an online manner. Then, we report the required rounds $R_{\epsilon}$ 
to achieve $\textup{ME}=\epsilon$ under different settings and present the results 
in Figure \ref{fig:sim-local-sto}. 
As we can see, the optimal local step $T$ is 70; setting $T$ too large or too small leads to more communication costs. 
We also notice that under the optimal local step,
a smaller $\epsilon$ leads to more improvement on the communication cost $R_\epsilon$ compared with the result of $T=100$. Moreover, compared with the communication efficiency of $T=1$, the optimal \emph{communication efficiency improves by more than 65 times when $\epsilon$ is around $0.101$.}

Furthermore, we reduce the dimension $d$ to 10 in the simulated data and run FA-HMC as well as FA-LD with different stepsizes $\eta$ on this new dataset fixing local step $T=10$. Apart from the dimension $d$, the other settings are the same as those in the experiments for Figure \ref{fig:sim-stepsize-sto}. To list a few, stochastic gradients are adopted, and we choose the leapfrog step $K=\lfloor \pi/(3\eta)\rfloor$ when $\eta\le 0.01$ and tune $K>1$ when $\eta\ge 0.02$ for FA-HMC. The curves of the MEs against $\eta$ are plotted in Figure \ref{fig:sim-dim10}. We observe that the general pattern in  Figure \ref{fig:sim-dim10} is similar to Figure \ref{fig:sim-stepsize-sto}. The optimal performance of FA-HMC is better than that of FA-LD, and the performance gap is larger for smaller step sizes. Comparing Figure \ref{fig:sim-dim10} with Figure \ref{fig:sim-stepsize-sto}, we comment that FA-HMC is more advantageous under high-dimensional settings. This observation is consistent with our theoretical results that FA-HMC has a better convergence rate in terms of the dimension.


\begin{figure*}[htbp]
    \centering
    \subfigure[Study of $\eta$ and $K$ (G)]{
    \begin{minipage}[t]{0.24\linewidth}
    \centering
    \label{fig:sim-stepsize}
    \includegraphics[width=\linewidth]{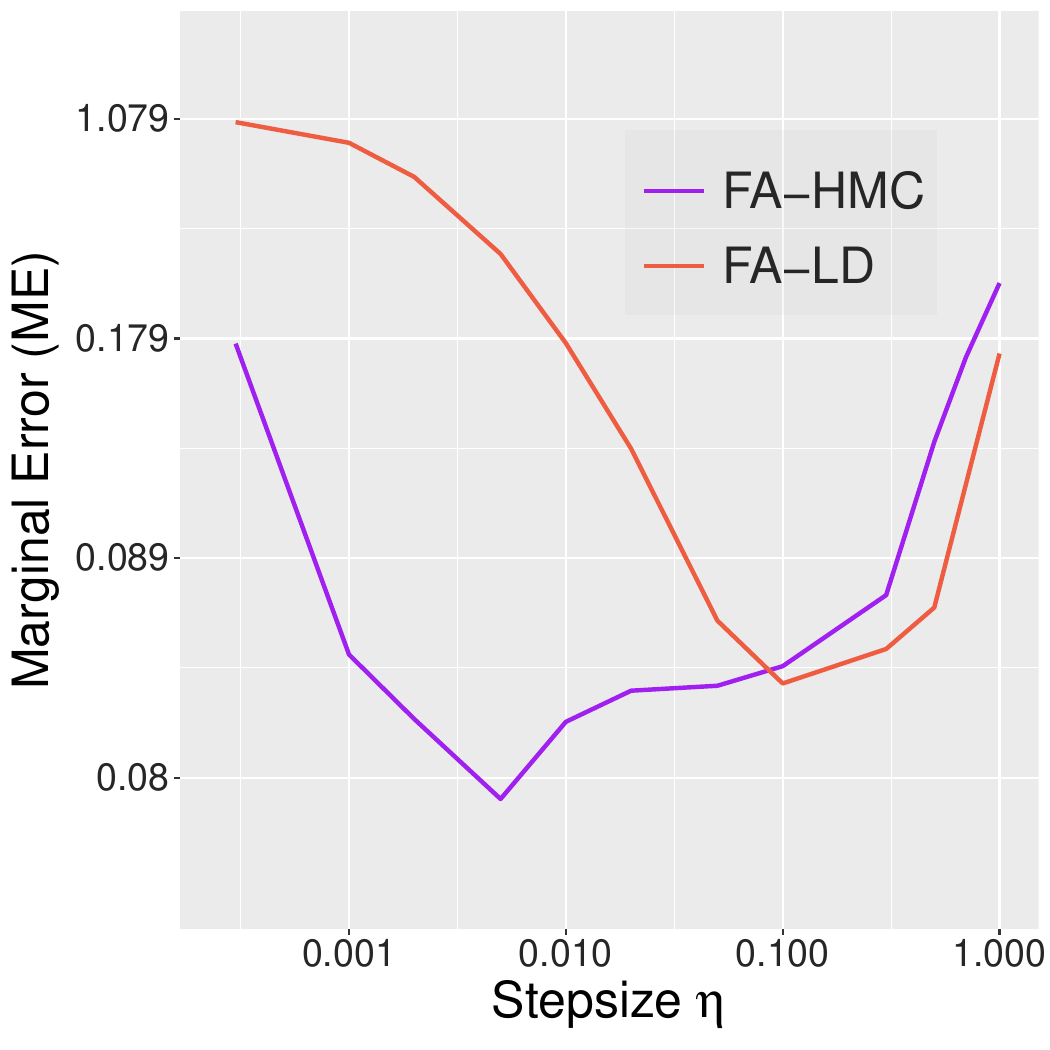}
    \end{minipage}%
    }%
    \subfigure[Study of $\eta$ and $K$ (SG)]{
    \begin{minipage}[t]{0.24\linewidth}
    \centering
    \label{fig:sim-stepsize-sto}
    \includegraphics[width=\linewidth]{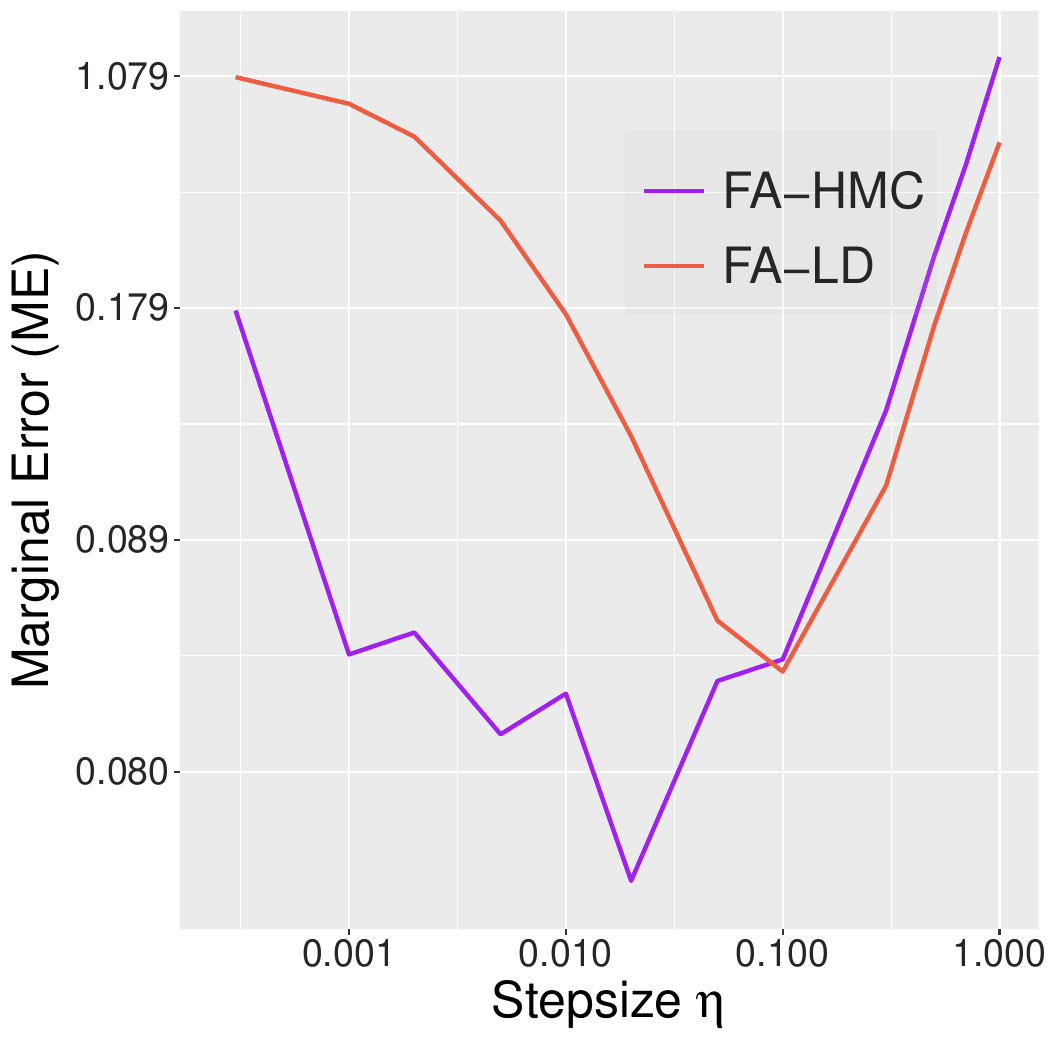}
    \end{minipage}%
    }%
    \subfigure[Study of $T$ (SG)]{
    \begin{minipage}[t]{0.24\linewidth}
    \centering
    \label{fig:sim-local-sto}
    \includegraphics[width=\linewidth]{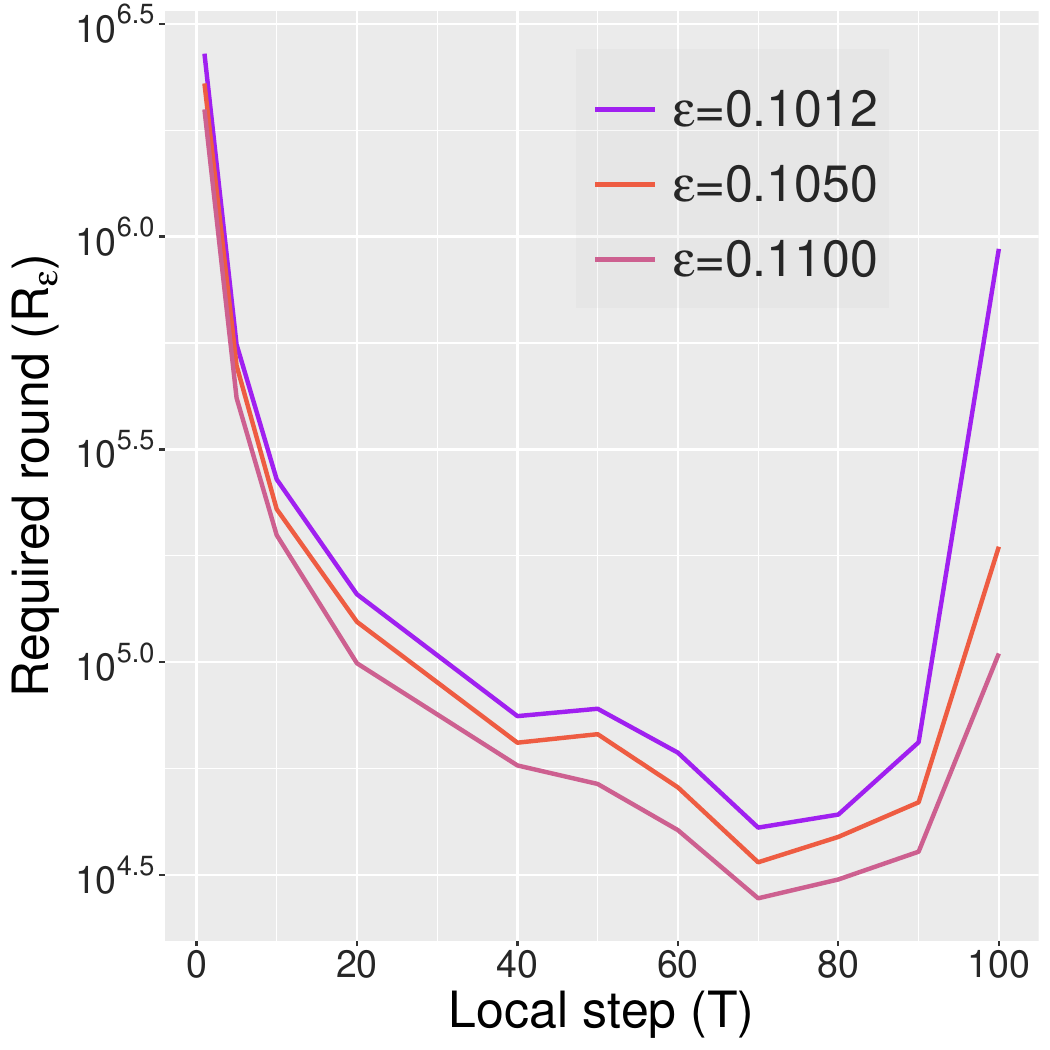}
    \end{minipage}%
    }%
    \subfigure[$d=10$ (SG)]{
    \begin{minipage}[t]{0.24\linewidth}
    \centering
    \label{fig:sim-dim10}
    \includegraphics[width=\linewidth]{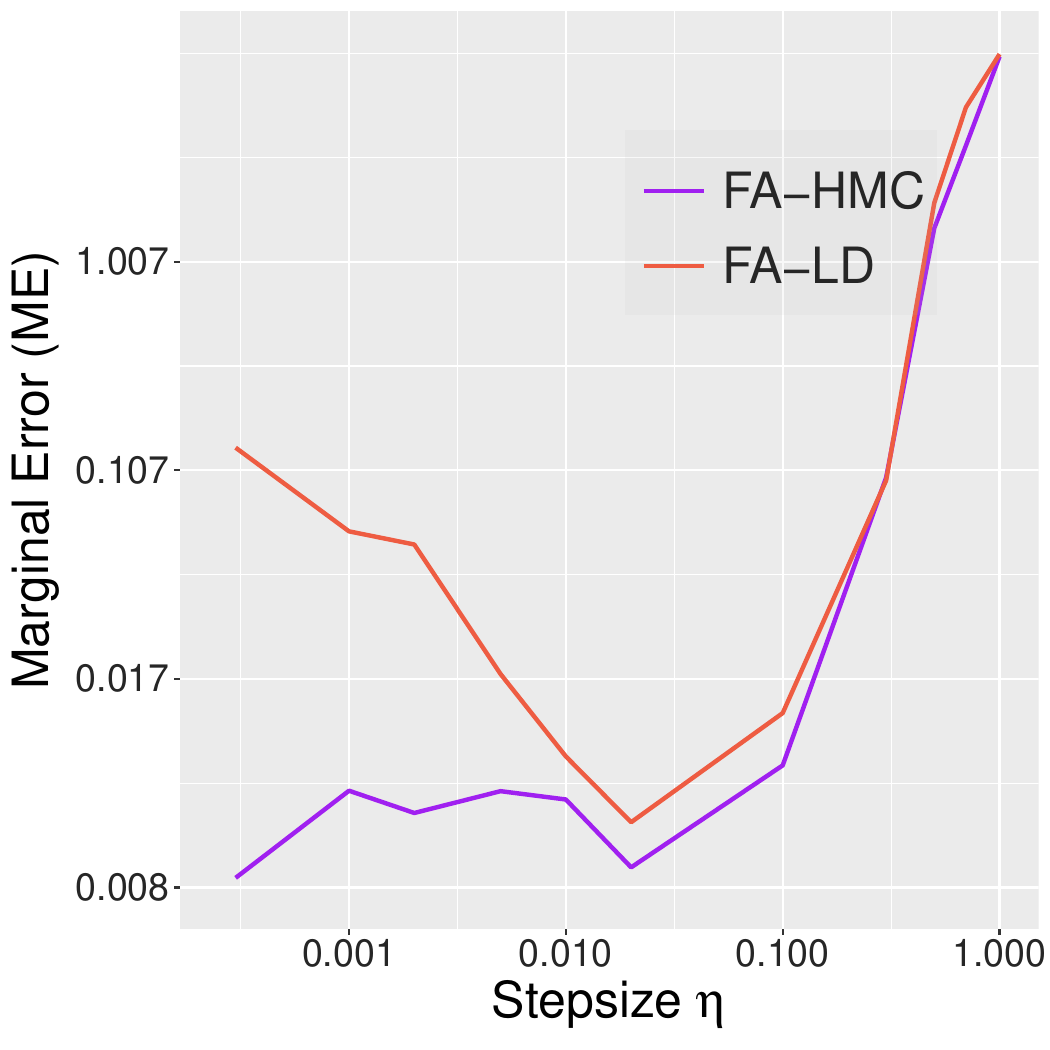}
    \end{minipage}%
    }%
  \vskip -0.1in
  \caption{Experimental results of FA-HMC and FA-LD on the simulated dataset using exact gradients (G) and stochastic gradients (SG). Dimension $d=1000$ in Figure (a)-(c) and $d=10$ in Figure (d).}
  \label{figure:sim}
\end{figure*}

\subsection{Simulation: Dimension vs Communication for FA-HMC}
In this experiment, under the suggested setting of learning rate in Proposition~\ref{prop:convergrate}, we examine the relationship between communication rounds $t_{\epsilon}/T$ required to achieve a ${\cal W}_2(\theta_{t_{\epsilon}},\theta^{\pi})^2<0.1$ and dimension $d$. 

To obtain an accurate computation of the ${\cal W}_2(\theta_{t_{\epsilon}},\theta^{\pi})$, we consider a distributed heterogeneous Gaussian model where the ${\cal W}_2(\theta_{t_{\epsilon}},\theta^{\pi})$ can be explicitly calculated in terms of the population mean and variance of the parameter. Specifically, we assume that the posterior distribution of half of the local nodes' parameters is $N(20\mathbf{1}_d,\mathbb{I}_d)$, and for the other half, it is $N(\mathbf{1}_d,2\mathbb{I}_d)$. One can check that the overall posterior distribution of parameters is $N(16.2\mathbf{1}_d,1.6\mathbb{I}_d)$. We use  leapfrog steps $K=5$, local steps $T = 10$, and a learning rate $\eta=0.02/d^{1/4}$. For different dimensions $d = 2, 50, 100, 150, ..., 950, 1000$, we repeat the experiment   $200 \cdot d(d-1)/2$ times and sample the parameter $\theta_t$ at the last iteration $t$ for each time. The sampled parameters allow us to estimate the population mean and variance on the calculation of ${\cal W}_2(\theta_{t_{\epsilon}},\theta^{\pi})$. 

The simulation results in Figure~\ref{fig:dim_comm} suggest that the square of communication round $(t_\epsilon/T)^2$ is approximately proportional to dimension $d$. This aligns well with our theoretical discovery in Proposition~\ref{prop:convergrate}, where under the suggested learning rate setting $t_\epsilon/T=O(\sqrt{d}\log(d/\epsilon^2)/\epsilon)$.

\begin{figure}[htbp]
    \centering
    \includegraphics[width=0.5\linewidth]{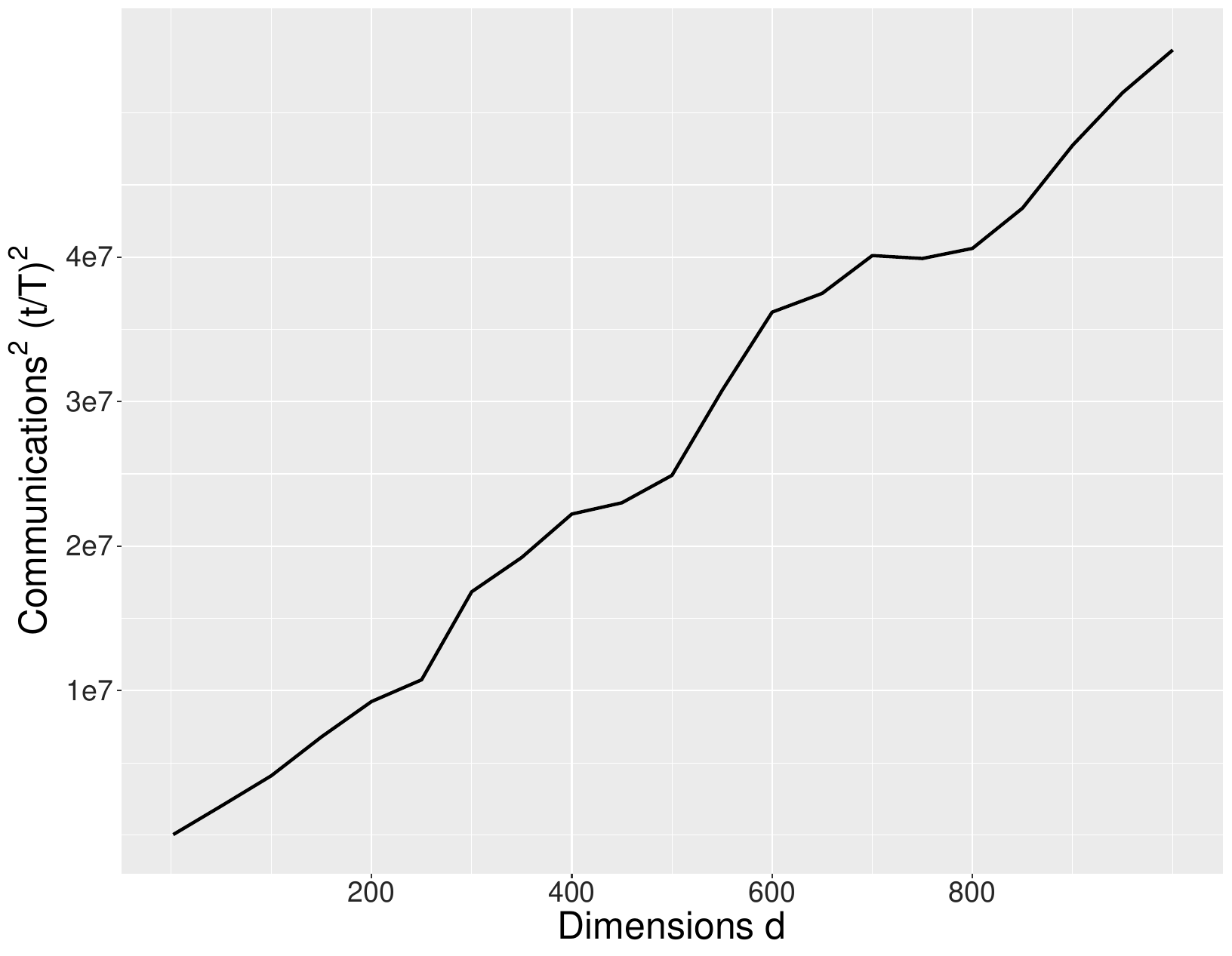}
    \caption{Experimental results of FA-HMC to achieve ${\cal W}_2<0.1$ at different dimensions $d$.}
    \label{fig:dim_comm}
\end{figure}

\subsection{Application: Logistic Regression Model for FMNIST} \label{sec:real_data}

In this section, we apply FA-HMC to train a logistic regression on the Fashion-MNIST dataset. The data points are randomly split into 10 subsets of equal size for $N=10$ clients. We run FA-HMC under different settings of local step $T$ and leapfrog step $K$ with stochastic gradients that are calculated using a batch size of $1000$ in each local device. 
In each run, one parameter sample is collected after a fixed number of communication rounds, and the predicted probabilities made by all the previously collected parameter samples are averaged to calculate four test statistics: prediction accuracy, Brier Score (BS) \citep{brier1950verification}, Expected Calibration Error (ECE) \citep{guo2017calibration}, and Negative Log Likelihood (NLL) on the test dataset. We tune the step size $\eta$ in each setting for the best test statistic. 
We conduct $5$ independent runs in each setting and report the average results of those chains. The standard deviations of the results across multiple runs are displayed in Section L 
in the supplementary materials.

Specifically, to study the impact of leapfrog step $K$ on the performance of FA-HMC, we fix local step $T=50$, run FA-HMC with $K=1$, 10, 50, and 100, and plot the curves of the calculated test statistics (accuracy, BS, ECE, and NLL) against communication rounds in Figure \ref{figure:Fashion_HMC}. 
As we can see, under the same budgets of communication and computation, FA-LD ($K=1$) performs the worst in terms of BS, ECE, and NLL and the second worst in terms of accuracy, which shows the superiority of FA-HMC with $K>1$ over FA-LD. 
Moreover, FA-HMC with $K=50$ performs the best in terms of accuracy, BS, and NLL and achieves a small ECE. In particular, the improvement on ECE and NLL over $K=1$ can be as large as 26\% and 2\% respectively, indicating that the optimal choice of leapfrog step is around 50 in this setting.

\begin{figure*}[htbp]
    \centering
    \subfigure[Accuracy]{
    \begin{minipage}[t]{0.24\linewidth}
    \centering
    \label{fig:HMC-accu}
    \includegraphics[width=\linewidth]{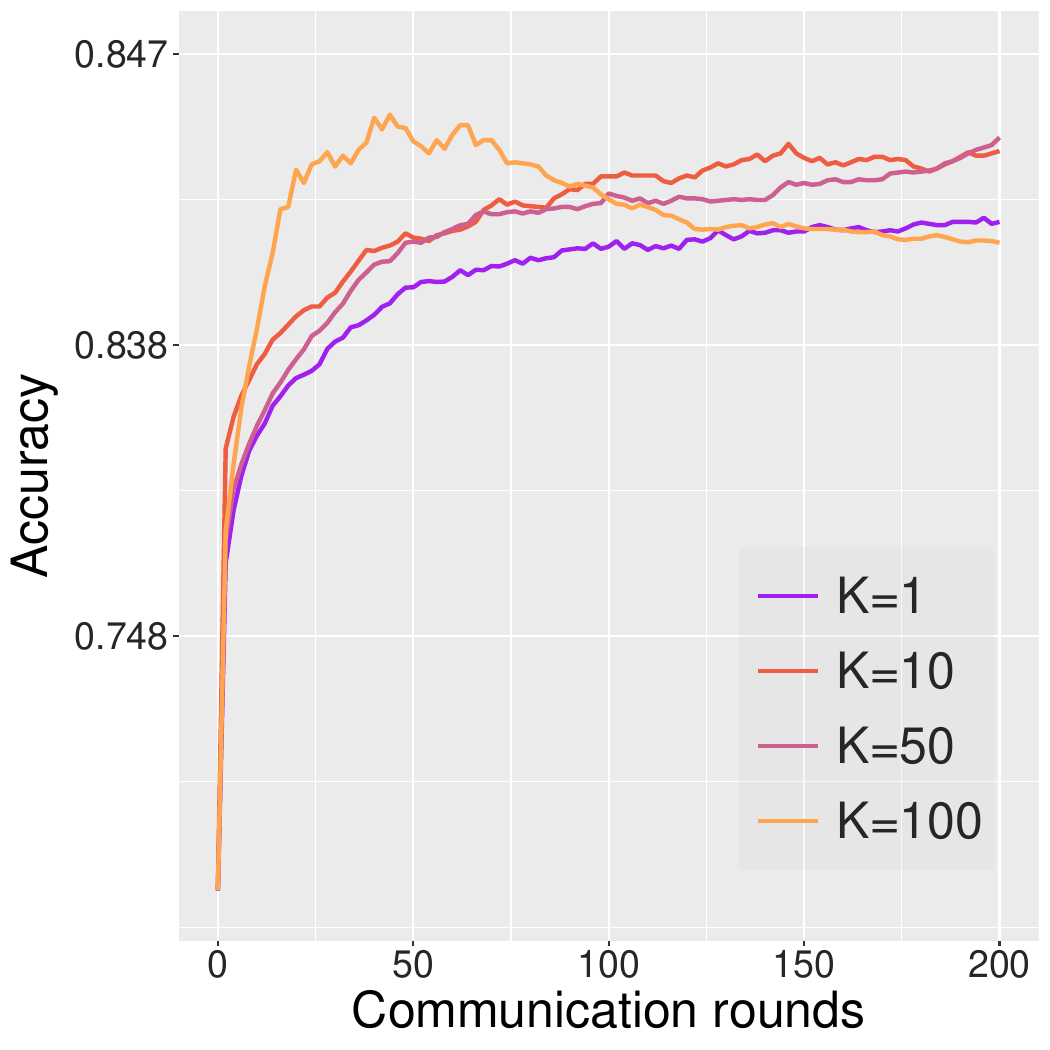}
    \end{minipage}%
    }%
    \subfigure[BS]{
    \begin{minipage}[t]{0.24\linewidth}
    \centering
    \label{fig:HMC-brier}
    \includegraphics[width=\linewidth]{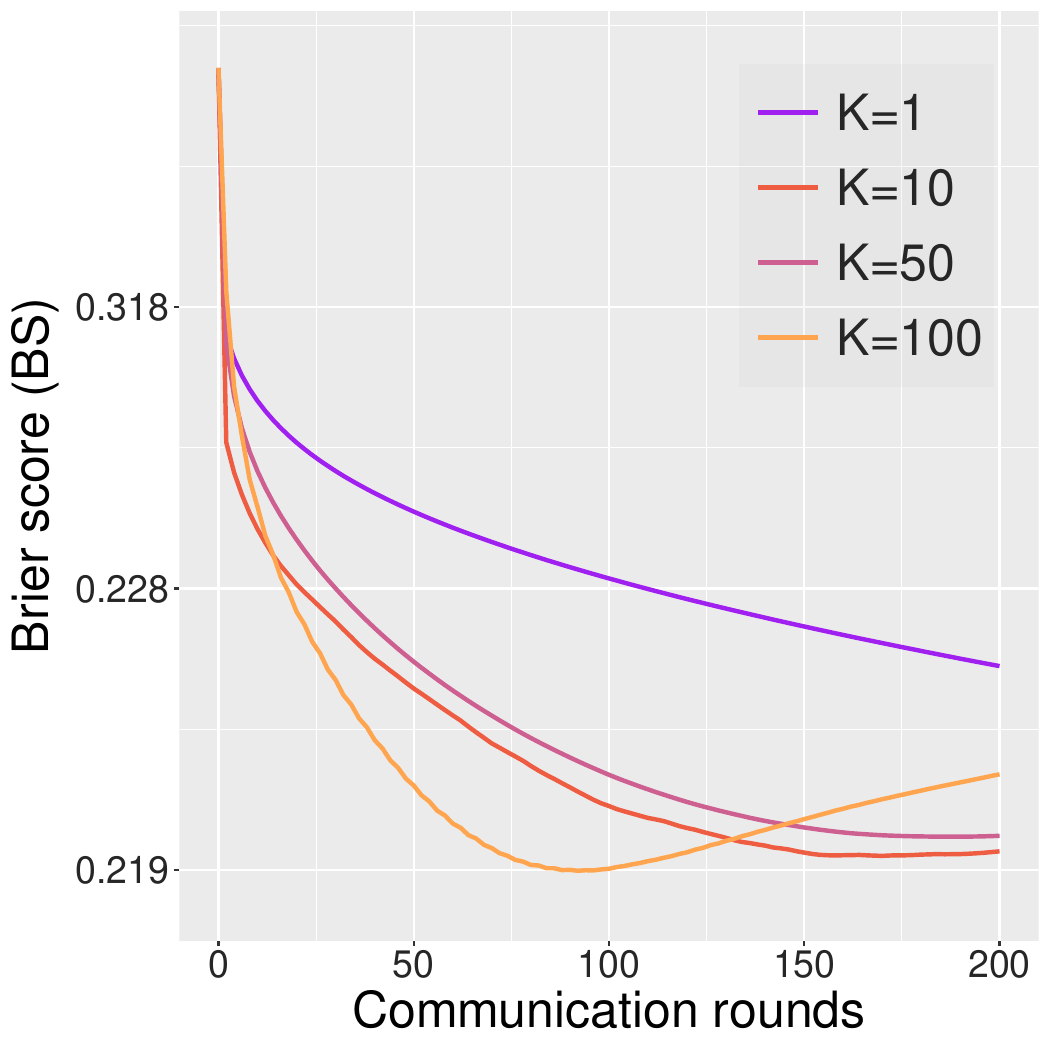}
    \end{minipage}%
    }%
    \subfigure[ECE]{
    \begin{minipage}[t]{0.24\linewidth}
    \centering
    \label{fig:HMC-ECE}
    \includegraphics[width=\linewidth]{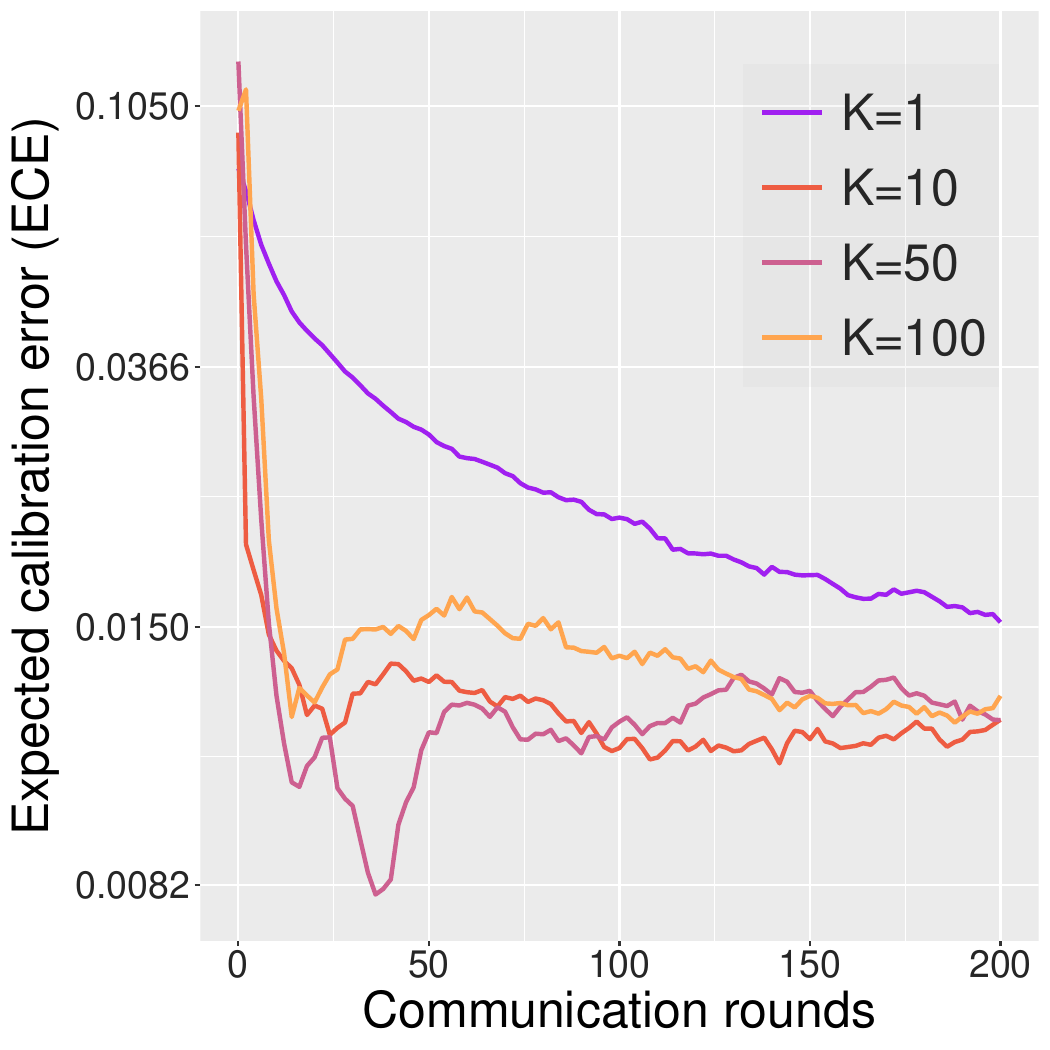}
    \end{minipage}%
    }%
    \subfigure[NLL]{
    \begin{minipage}[t]{0.24\linewidth}
    \centering
    \label{fig:HMC-NLL}
    \includegraphics[width=\linewidth]{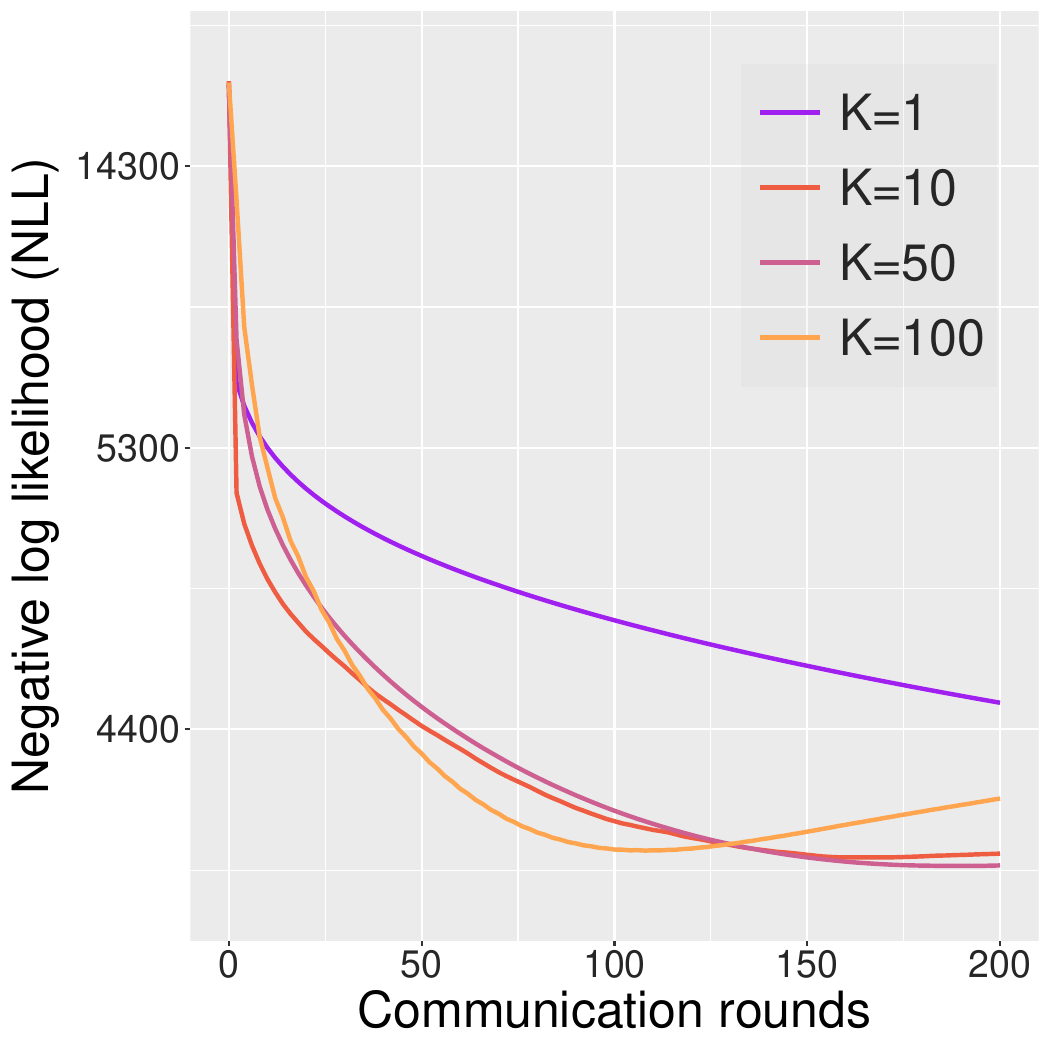}
    \end{minipage}%
    }%
  \vskip -0.1in
  \caption{The impact of leapfrog steps $K$ on FA-HMC applied on the Fashion-MNIST dataset.}
  \label{figure:Fashion_HMC}
\end{figure*}

To study the impact of local step $T$ on the performance of FA-HMC, we fix leapfrog step $K=10$, run FA-HMC with $T=1$, 10, 20, 50, and 100, and plot the curves of the calculated test statistics (accuracy, BS, ECE, and NLL) against communication rounds in Figure \ref{figure:Fashion_local}. 
According to the figure, FA-HMC with $T=1$ performs the worst in terms of all four statistics, which shows the necessity of multiple local updates in this setting. 
Besides, the optimal local step $T$ differs with testing evaluation metrics; e.g., the optimal $T$ is 50 in terms of BS, while the optimal $T$ is 20 in terms of NLL.

\begin{figure*}[htbp]
    \centering
    \subfigure[Accuracy]{
    \begin{minipage}[t]{0.24\linewidth}
    \centering
    \label{fig:local-accu}
    \includegraphics[width=\linewidth]{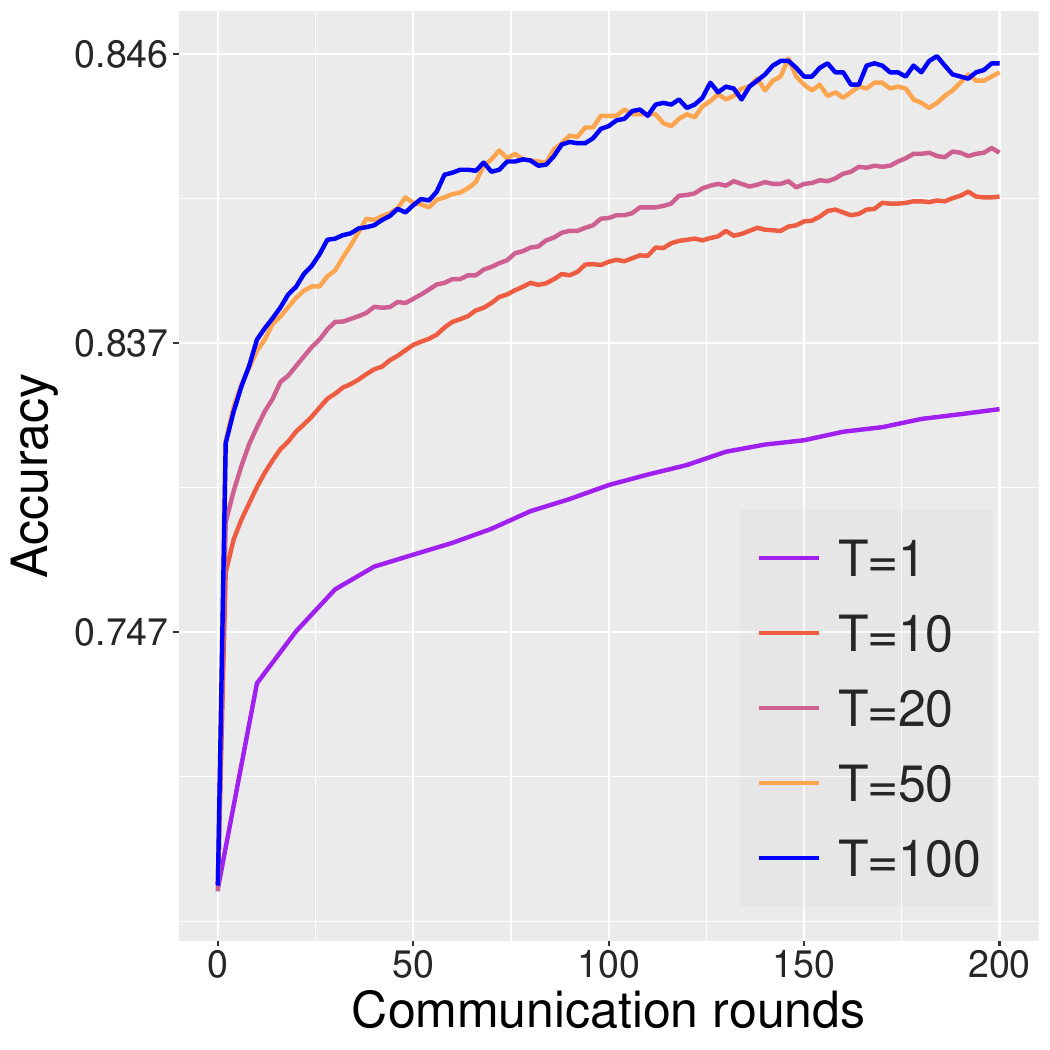}
    \end{minipage}%
    }%
    \subfigure[BS]{
    \begin{minipage}[t]{0.24\linewidth}
    \centering
    \label{fig:local-brier}
    \includegraphics[width=\linewidth]{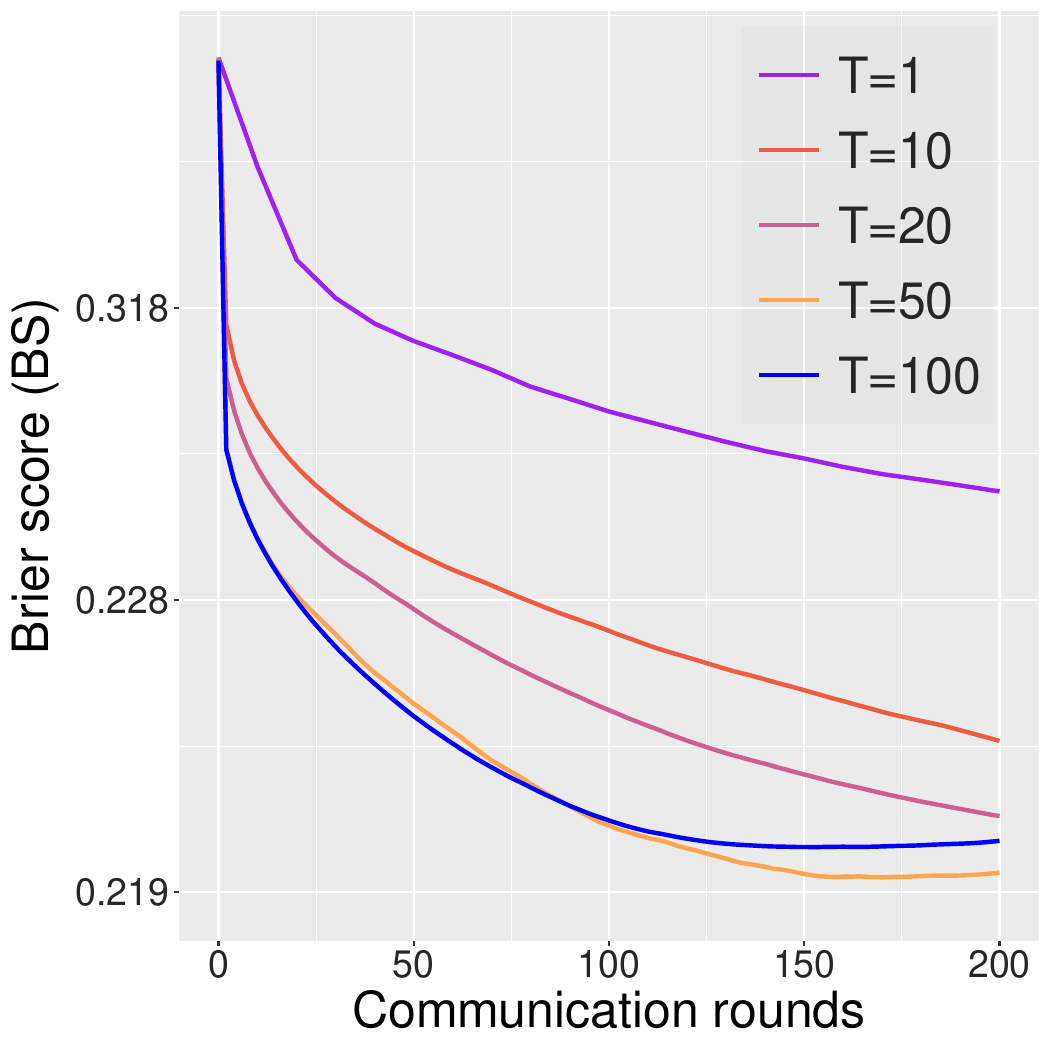}
    \end{minipage}%
    }%
    \subfigure[ECE]{
    \begin{minipage}[t]{0.24\linewidth}
    \centering
    \label{fig:local-ECE}
    \includegraphics[width=\linewidth]{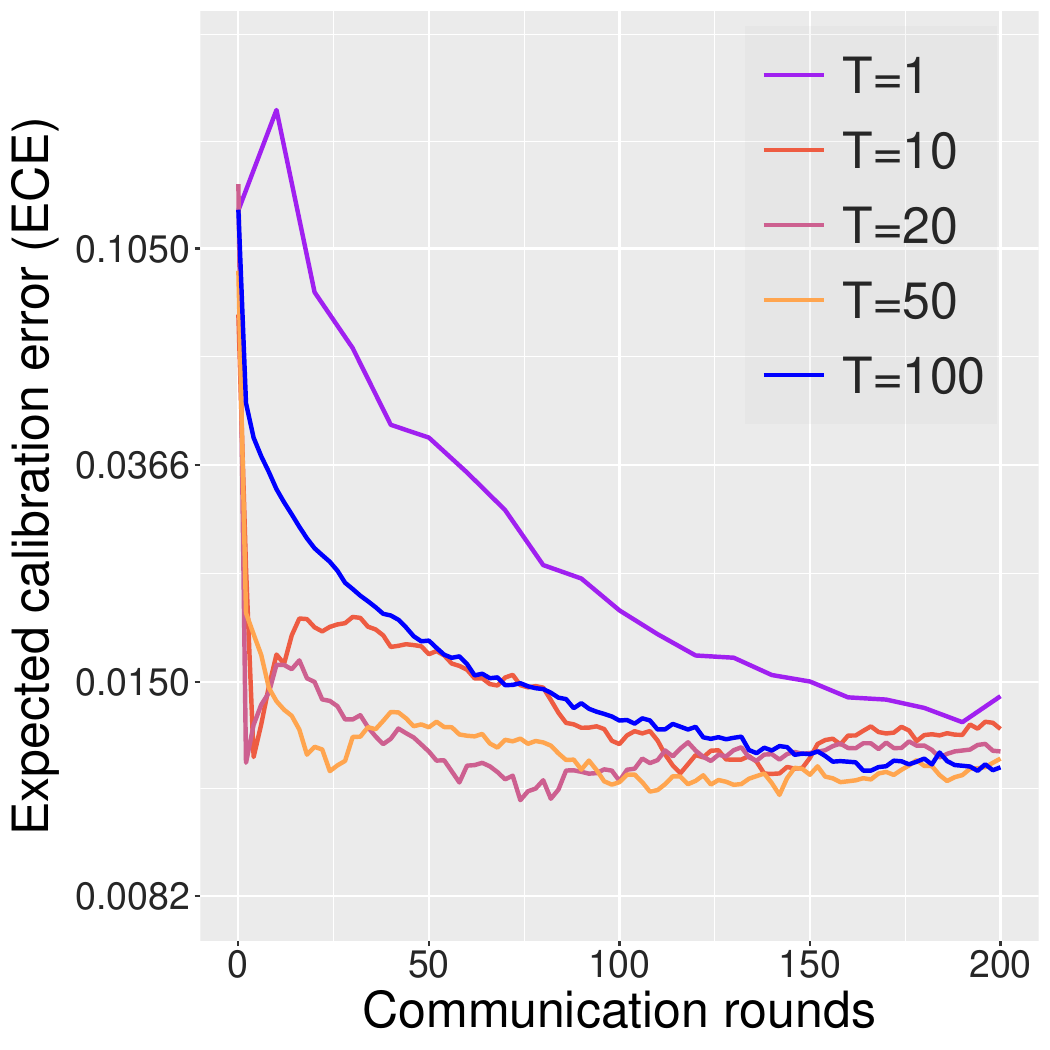}
    \end{minipage}%
    }%
    \subfigure[NLL]{
    \begin{minipage}[t]{0.24\linewidth}
    \centering
    \label{fig:local-NLL}
    \includegraphics[width=\linewidth]{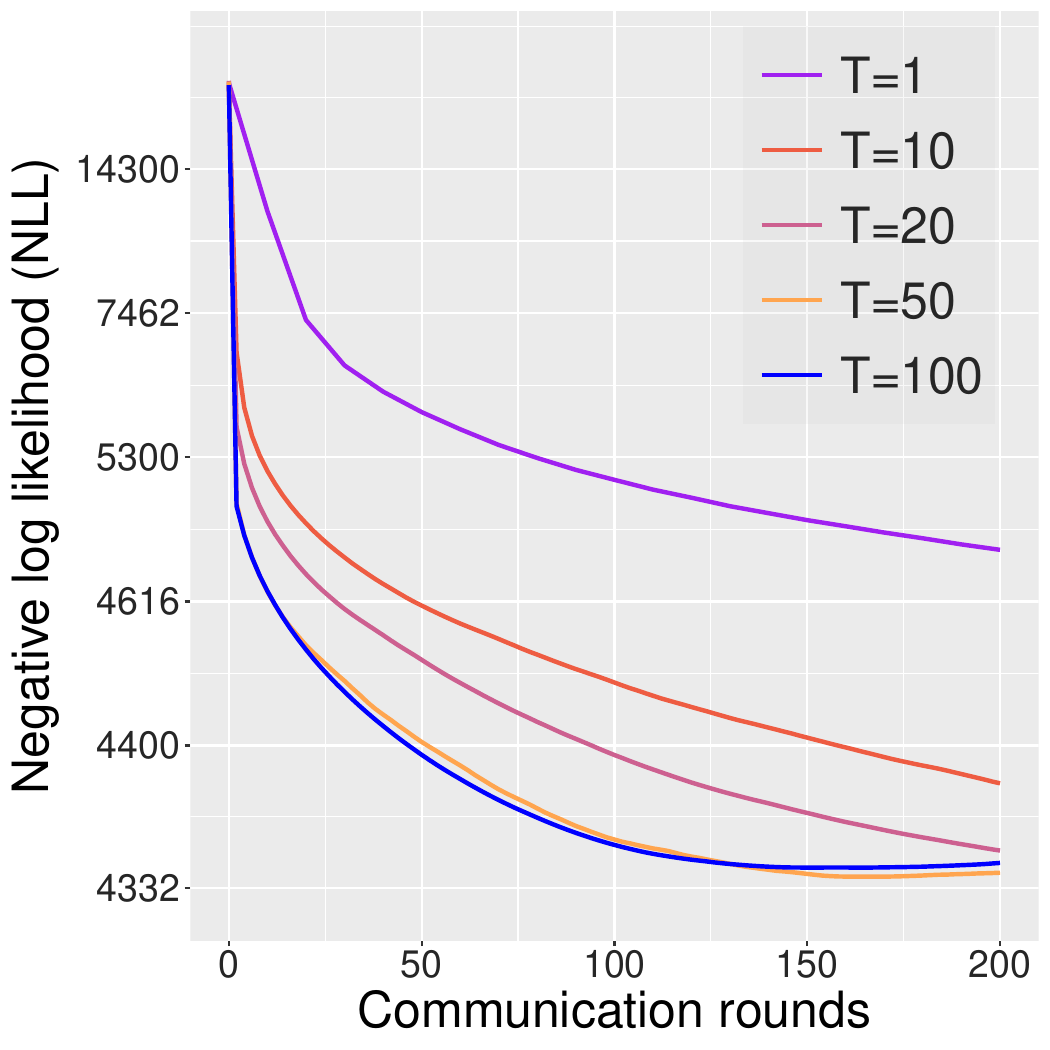}
    \end{minipage}%
    }%
  \vskip -0.1in
  \caption{The impact of local steps $T$ on FA-HMC applied on the Fashion-MNIST dataset.}
  \label{figure:Fashion_local}
\end{figure*}

\subsection{Application: Neural Network Model for FMNIST} \label{sec:nn_exp}
To further assess the performance of FA-HMC on non-convex problems, we apply FA-HMC and FA-LD to train a fully connected neural network with two hidden layers\footnote{The widths of two layers are 512 times input dimension and 512 times the number of classification labels respectively.} and the ReLU activation function on the Fashion-MNIST dataset. 
Other settings of the experiments are the same as the logistic regression experiments in Section \ref{sec:real_data} except that only one chain is simulated in each case. We also calculate prediction accuracy, Brier Score (BS), and Expected Calibration Error (ECE) on the test dataset. The step size $\eta$ is tuned in each setting for the best test statistic. Fixing local step $T=50$ and choosing leapfrog step $K=1$, 10, 50, and 100, the curves of the calculated test statistics against communication rounds are plotted in Figure \ref{fig:fashion-accu}, \ref{fig:fashion-brier}, and \ref{fig:fashion-ece}. As is shown in the figures, the optimal leapfrog step differs among different test statistics. For accuracy and ECE, the optimal $K=10$ (i.e., FA-HMC notably outperforms FA-LD), while the optimal $K=1$ for BS (i.e., FA-LD sightly outperforms FA-HMC). 
Fixing $K=10$ and choosing $T=1$, 10, and 50, the curves of the calculated test statistics against communication rounds are plotted in  Figure \ref{fig:fashion-accu-local}, \ref{fig:fashion-brier-local}, and \ref{fig:fashion-ece-local}. We can see that the best local step is $T=50$ and the worst local step is $T=1$, indicating that the communication cost can be greatly reduced.
\begin{figure*}[htbp]
    \centering
    \subfigure[Accuracy]{
    \begin{minipage}[t]{0.16\linewidth}
    \centering
    \label{fig:fashion-accu}
    \includegraphics[width=\linewidth]{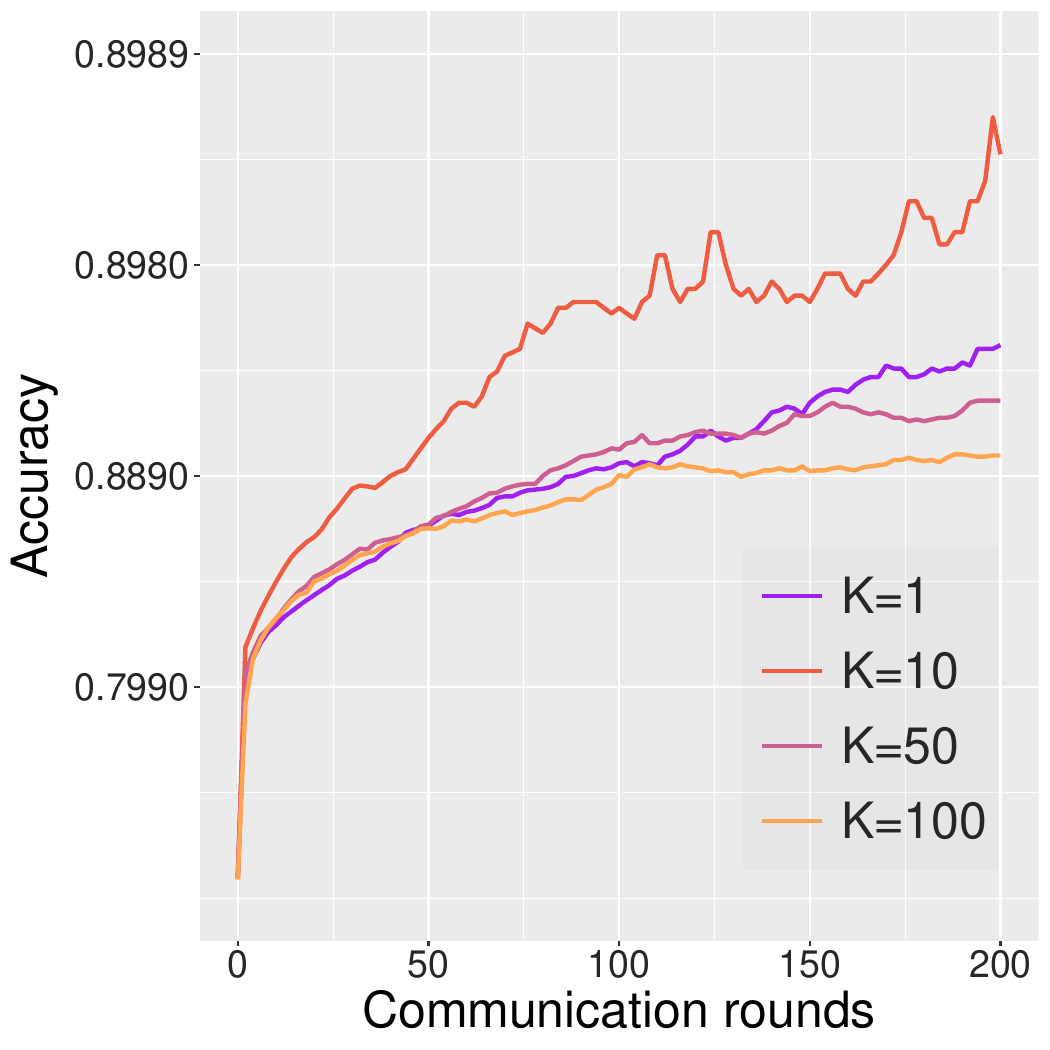}
    \end{minipage}%
    }%
    \subfigure[BS]{
    \begin{minipage}[t]{0.16\linewidth}
    \centering
    \label{fig:fashion-brier}
    \includegraphics[width=\linewidth]{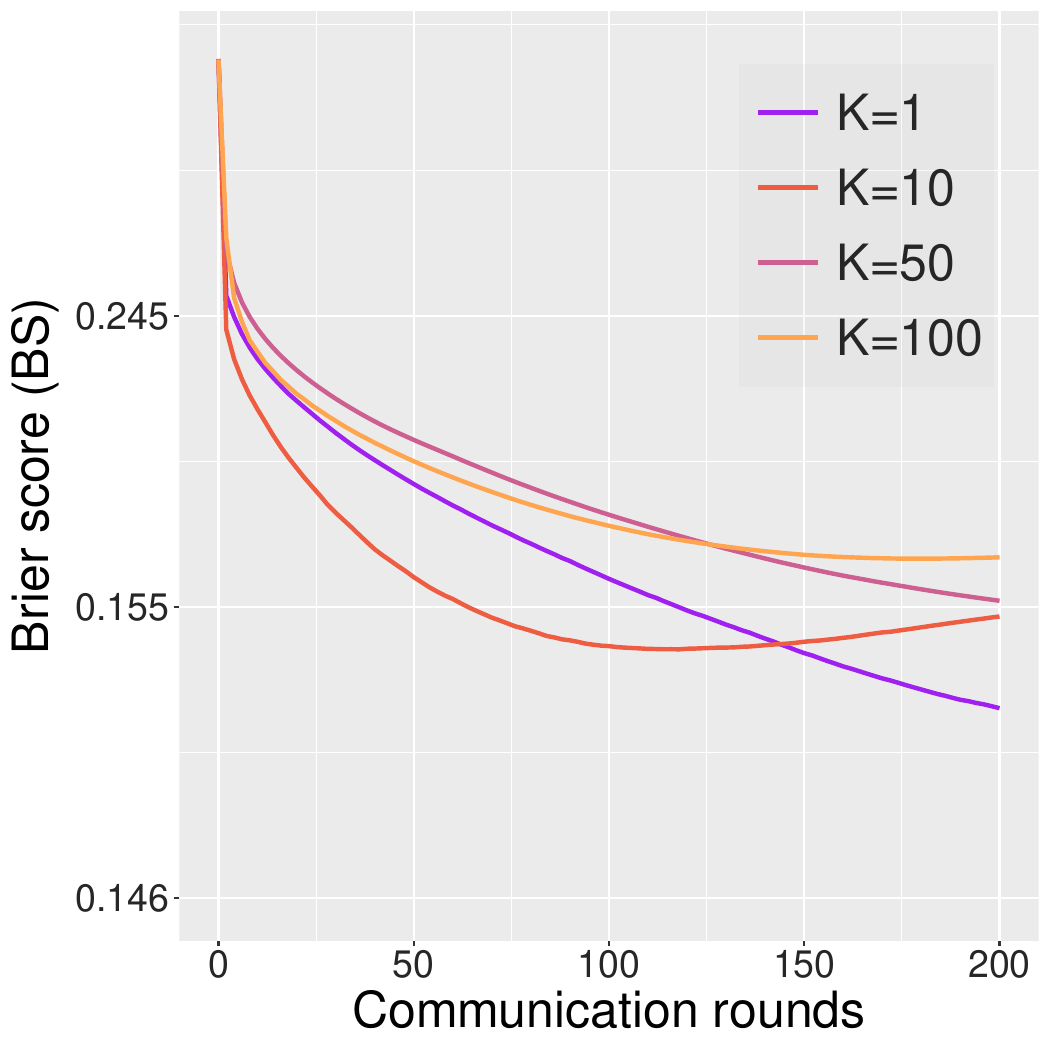}
    \end{minipage}%
    }%
    \subfigure[ECE]{
    \begin{minipage}[t]{0.16\linewidth}
    \centering
    \label{fig:fashion-ece}
    \includegraphics[width=\linewidth]{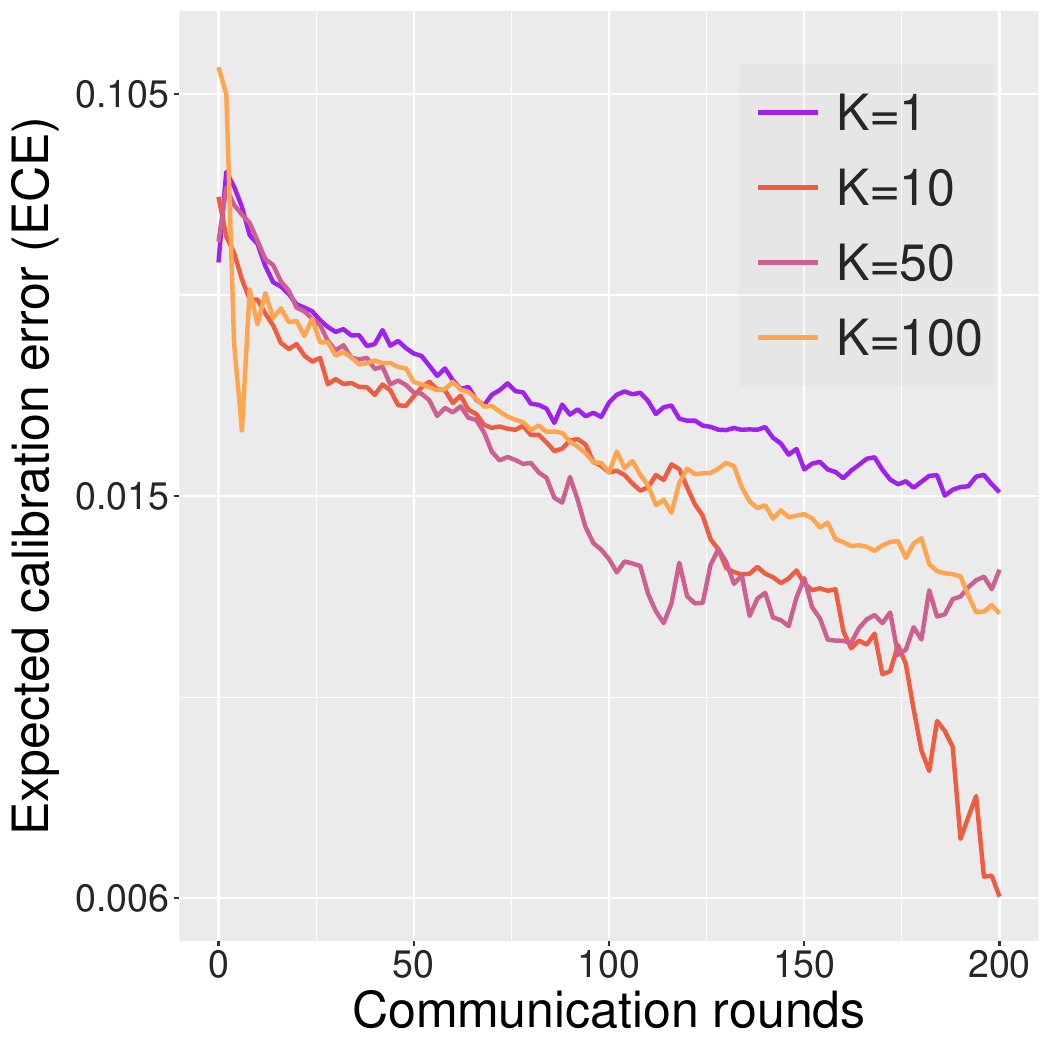}
    \end{minipage}%
    }%
    \subfigure[Accuracy]{
    \begin{minipage}[t]{0.16\linewidth}
    \centering
    \label{fig:fashion-accu-local}
    \includegraphics[width=\linewidth]{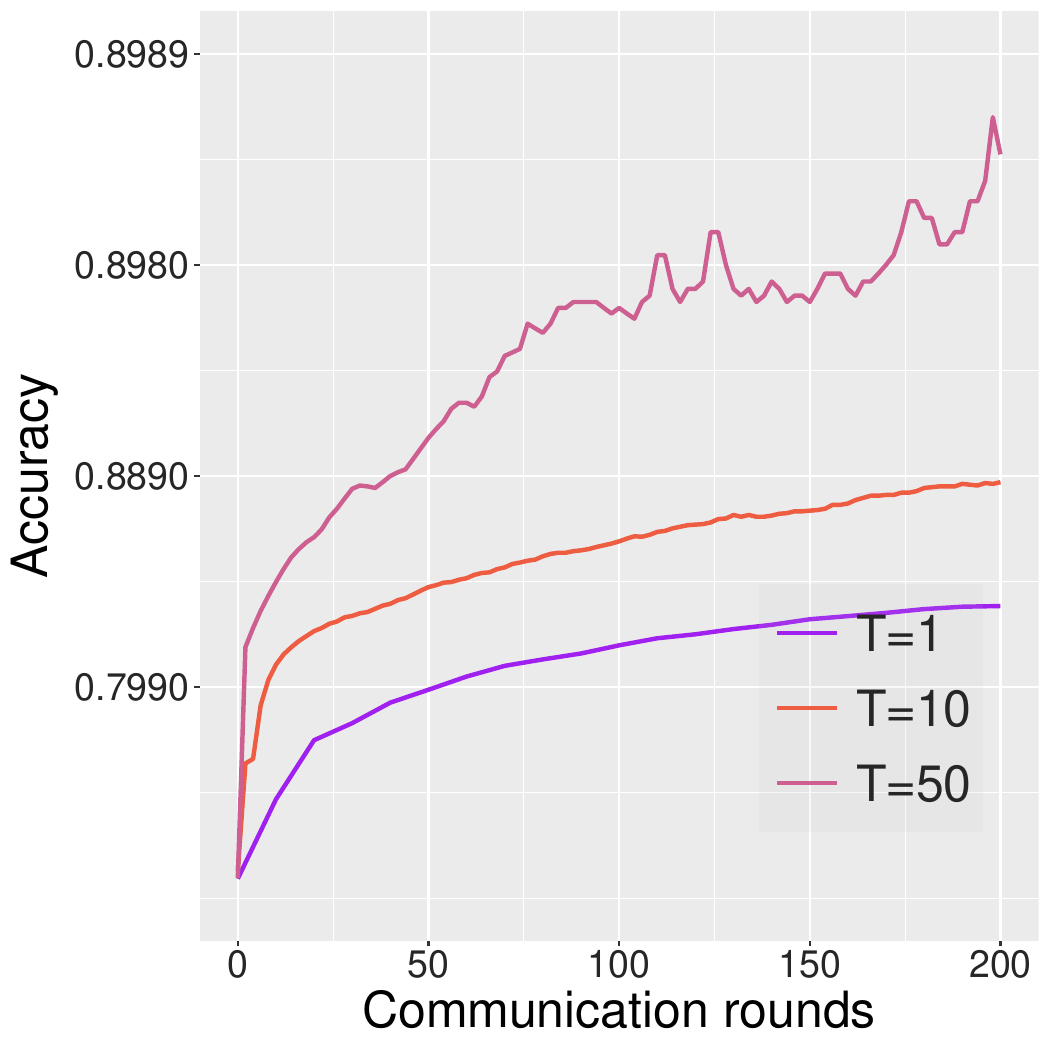}
    \end{minipage}%
    }%
    \subfigure[BS]{
    \begin{minipage}[t]{0.16\linewidth}
    \centering
    \label{fig:fashion-brier-local}
    \includegraphics[width=\linewidth]{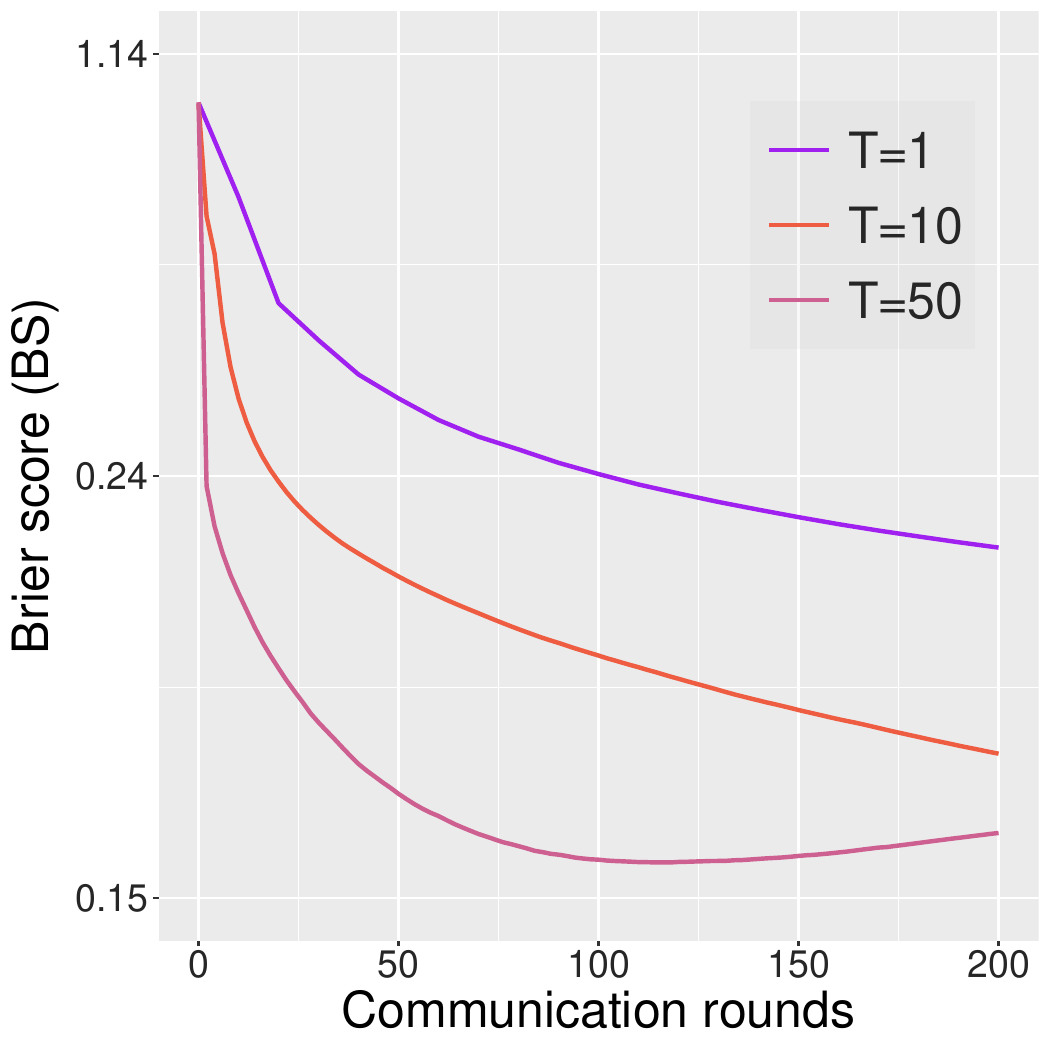}
    \end{minipage}%
    }%
    \subfigure[ECE]{
    \begin{minipage}[t]{0.16\linewidth}
    \centering
    \label{fig:fashion-ece-local}
    \includegraphics[width=\linewidth]{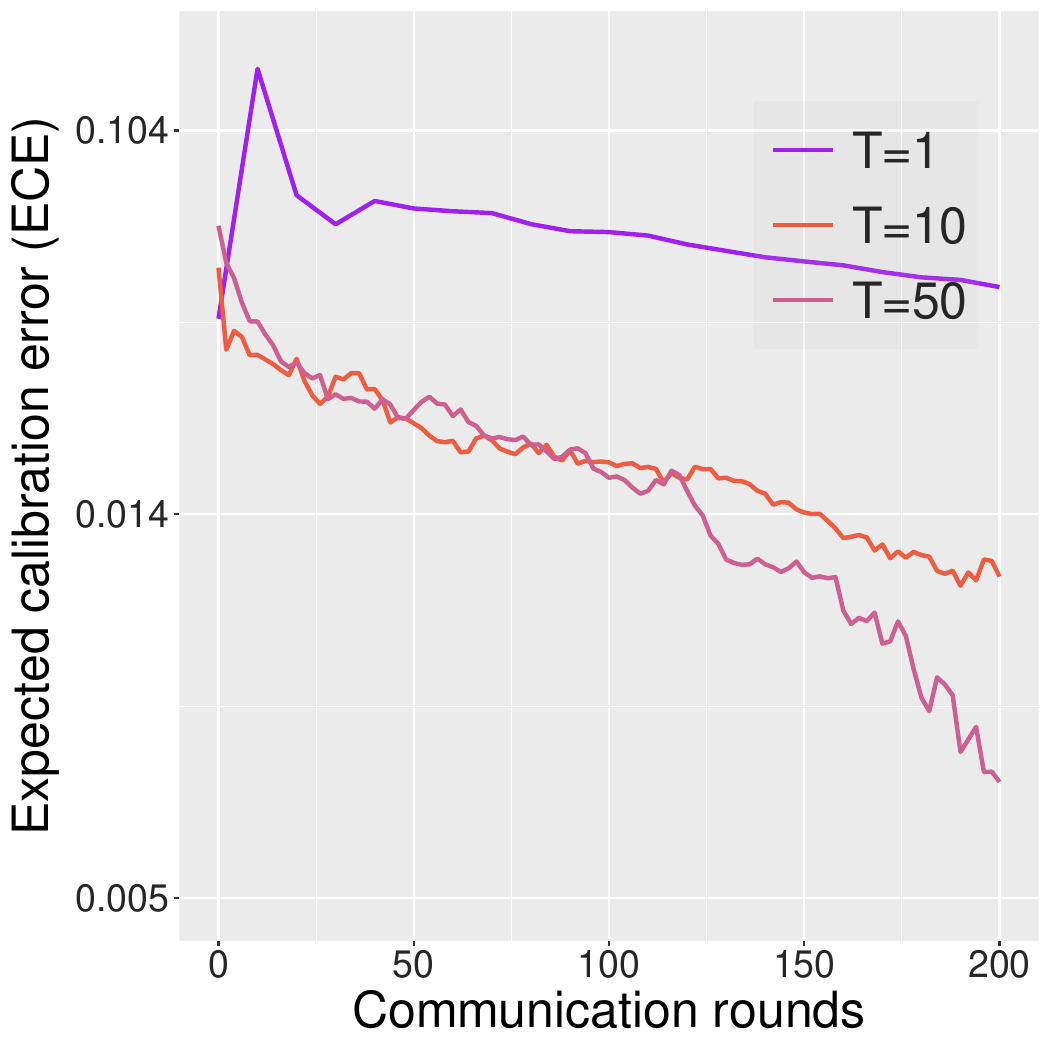}
    \end{minipage}%
    }%
  \vskip -0.1in
  \caption{The impact of leapfrog step $K$ and local step $T$ on FA-HMC applied to train a two-hidden-layer neural network on the Fashion-MNIST datasets.}
  \label{figure:Fashion2}
\end{figure*}

\subsection{Application: Logistic Regression Model on KMNIST/CIFAR2}
We also apply FA-HMC to train logistic regression on the Kuzushiji-MNIST (KM) \citep{clanuwat2018deep} and CIFAR10 dataset \citep{krizhevsky2009learning}. Specifically, we only use the first two classes (airplane and automobile) of the CIFAR10 dataset in the experiments to simplify the problem and denote it by CF2. 
The data points in each dataset are randomly split into 10 subsets of equal size for $N=10$ clients. We run FA-HMC under different settings of local step $T$ and leapfrog step $K$ with stochastic gradients that are calculated using a batch size of 1000 in each local device. 
As usual, we tune the step size $\eta$ in each setting and report the best statistics: prediction accuracy (AC), Brier Score (BS), and Expected Calibration Error (ECE) on the test dataset.
The choices of local step $T$ and leapfrog step $K$ are the same as those in Section \ref{sec:nn_exp}. 
\begin{figure*}[htbp]
    \centering
    \subfigure[CF2: AC]{
    \begin{minipage}[t]{0.16\linewidth}
    \centering
    \label{fig:cifar-accu}
    \includegraphics[width=\linewidth]{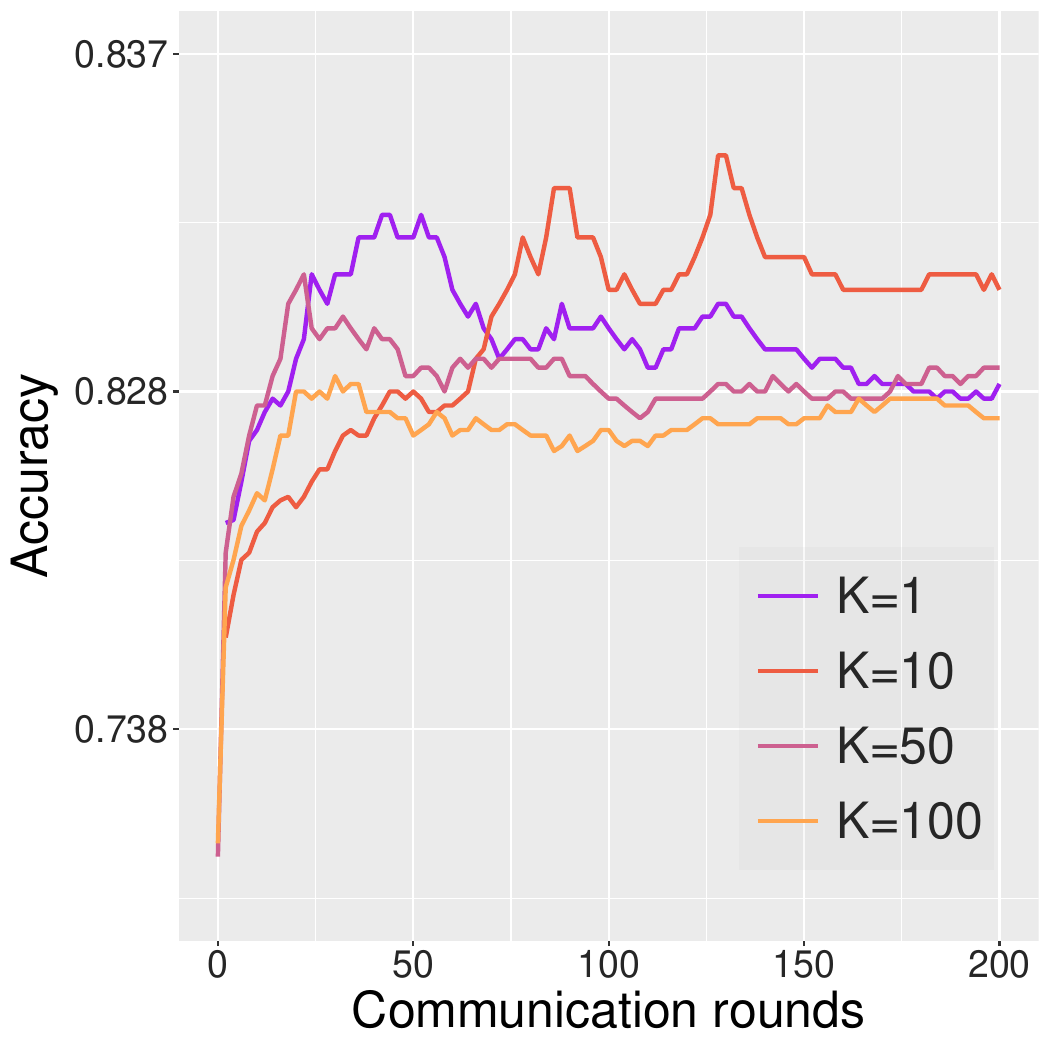}
    \end{minipage}%
    }%
    \subfigure[CF2: BS]{
    \begin{minipage}[t]{0.16\linewidth}
    \centering
    \label{fig:cifar-brier}
    \includegraphics[width=\linewidth]{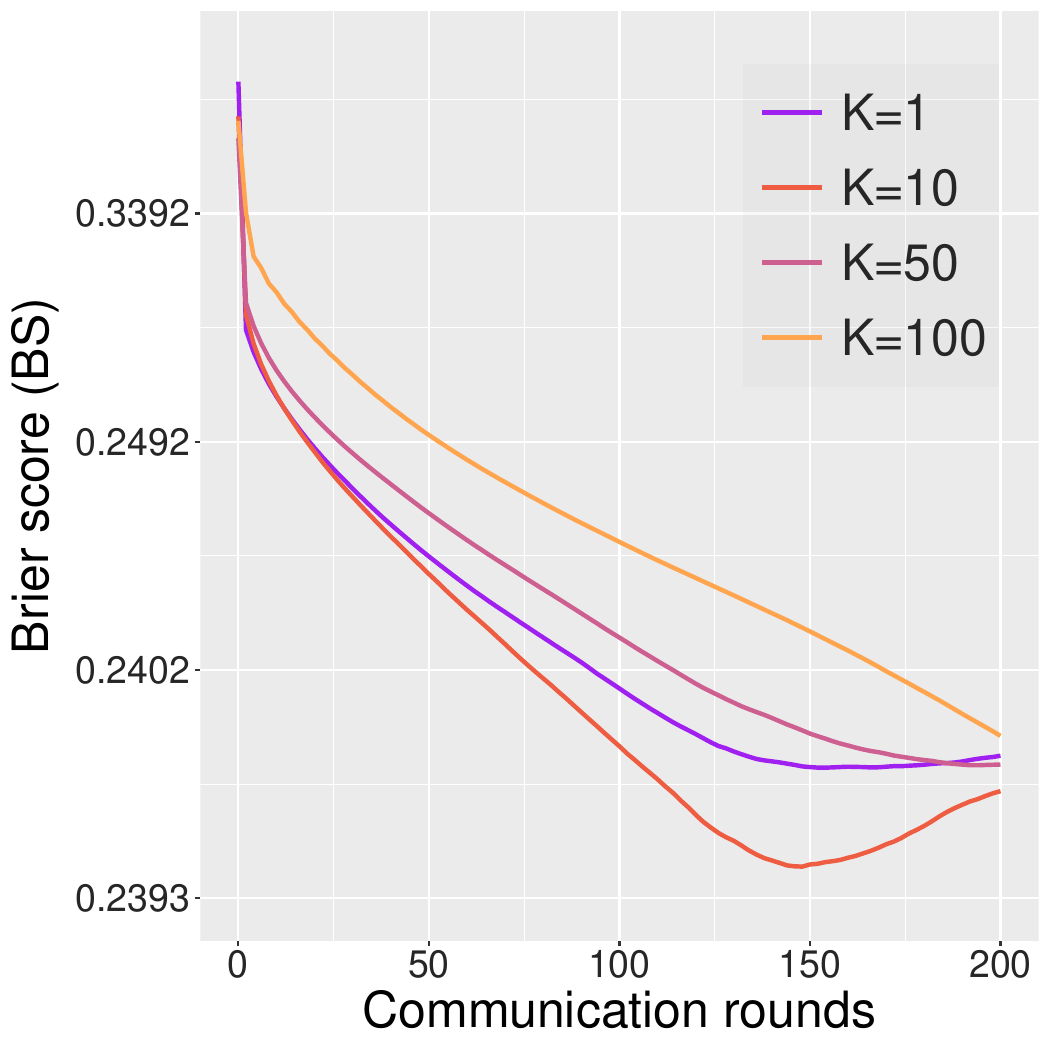}
    \end{minipage}%
    }%
    \subfigure[CF2: ECE]{
    \begin{minipage}[t]{0.16\linewidth}
    \centering
    \label{fig:cifar-ece}
    \includegraphics[width=\linewidth]{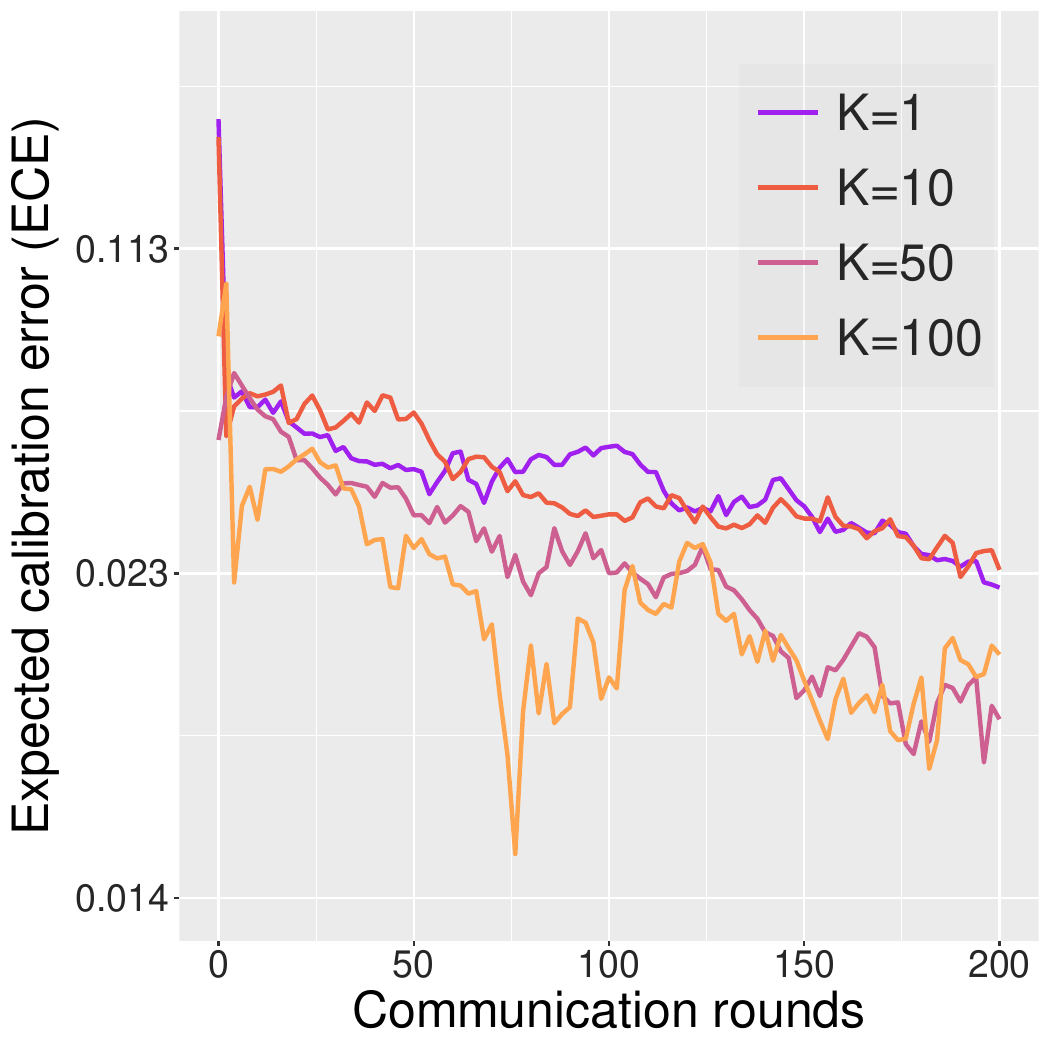}
    \end{minipage}%
    }%
    \subfigure[KM: AC]{
    \begin{minipage}[t]{0.16\linewidth}
    \centering
    \label{fig:kmnist-accu}
    \includegraphics[width=\linewidth]{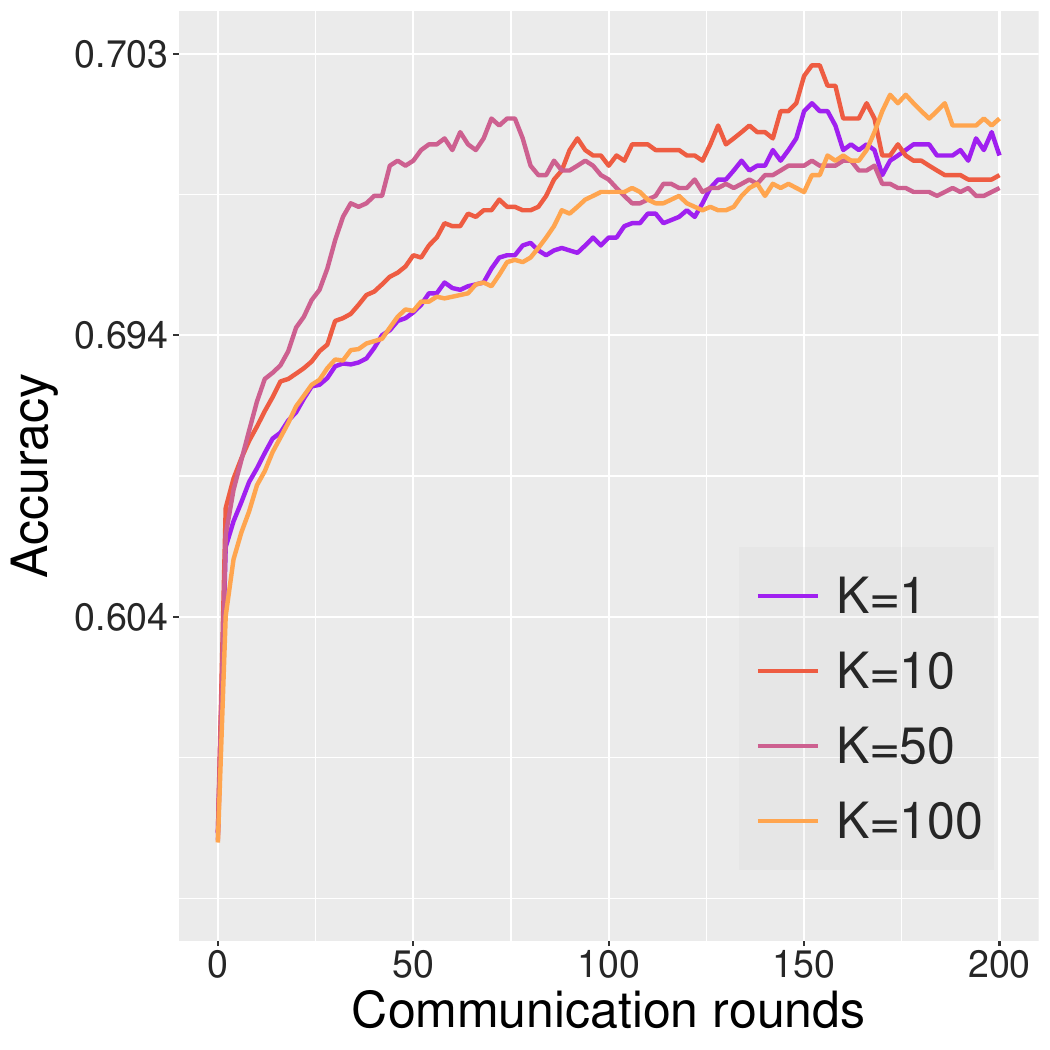}
    \end{minipage}%
    }%
    \subfigure[KM: BS]{
    \begin{minipage}[t]{0.16\linewidth}
    \centering
    \label{fig:kmnist-brier}
    \includegraphics[width=\linewidth]{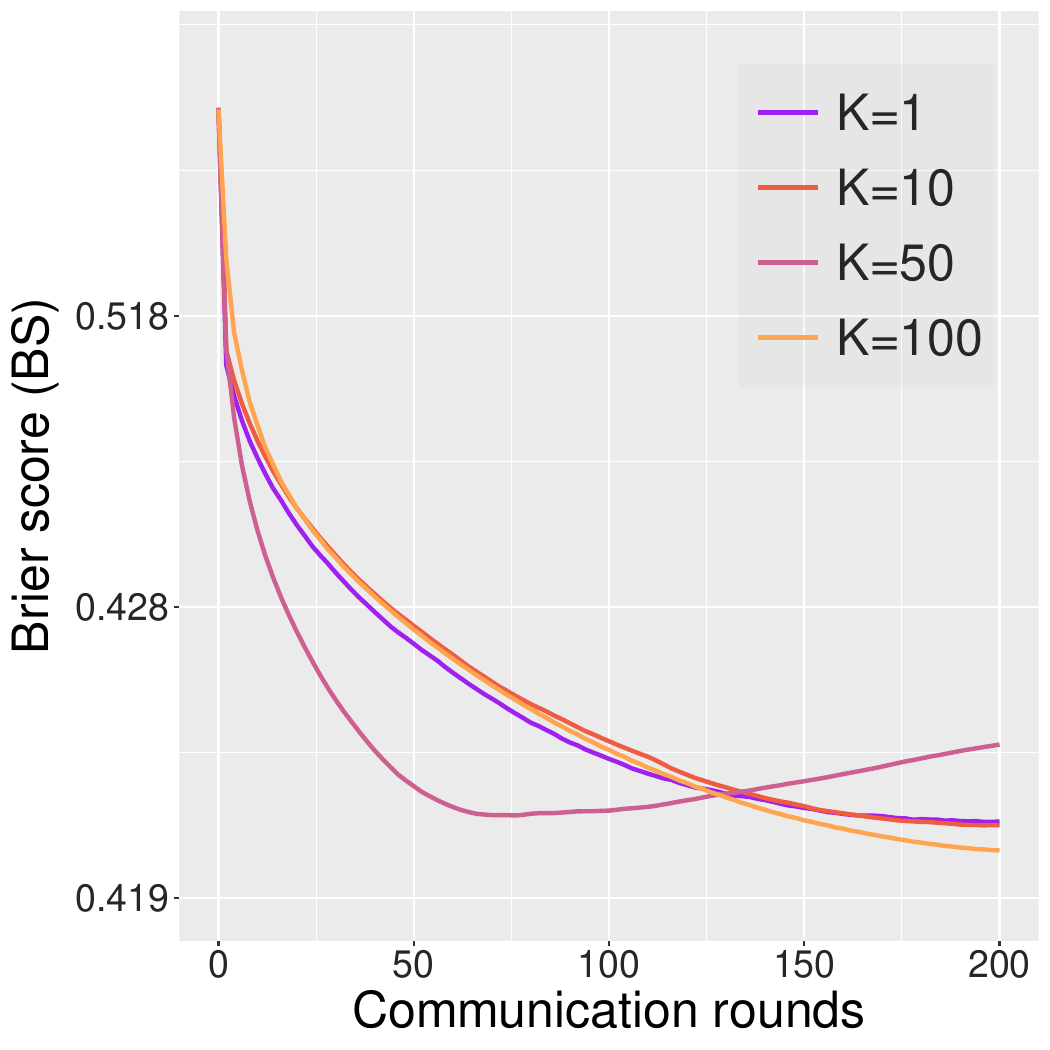}
    \end{minipage}%
    }%
    \subfigure[KM: ECE]{
    \begin{minipage}[t]{0.16\linewidth}
    \centering
    \label{fig:kmnist-ece}
    \includegraphics[width=\linewidth]{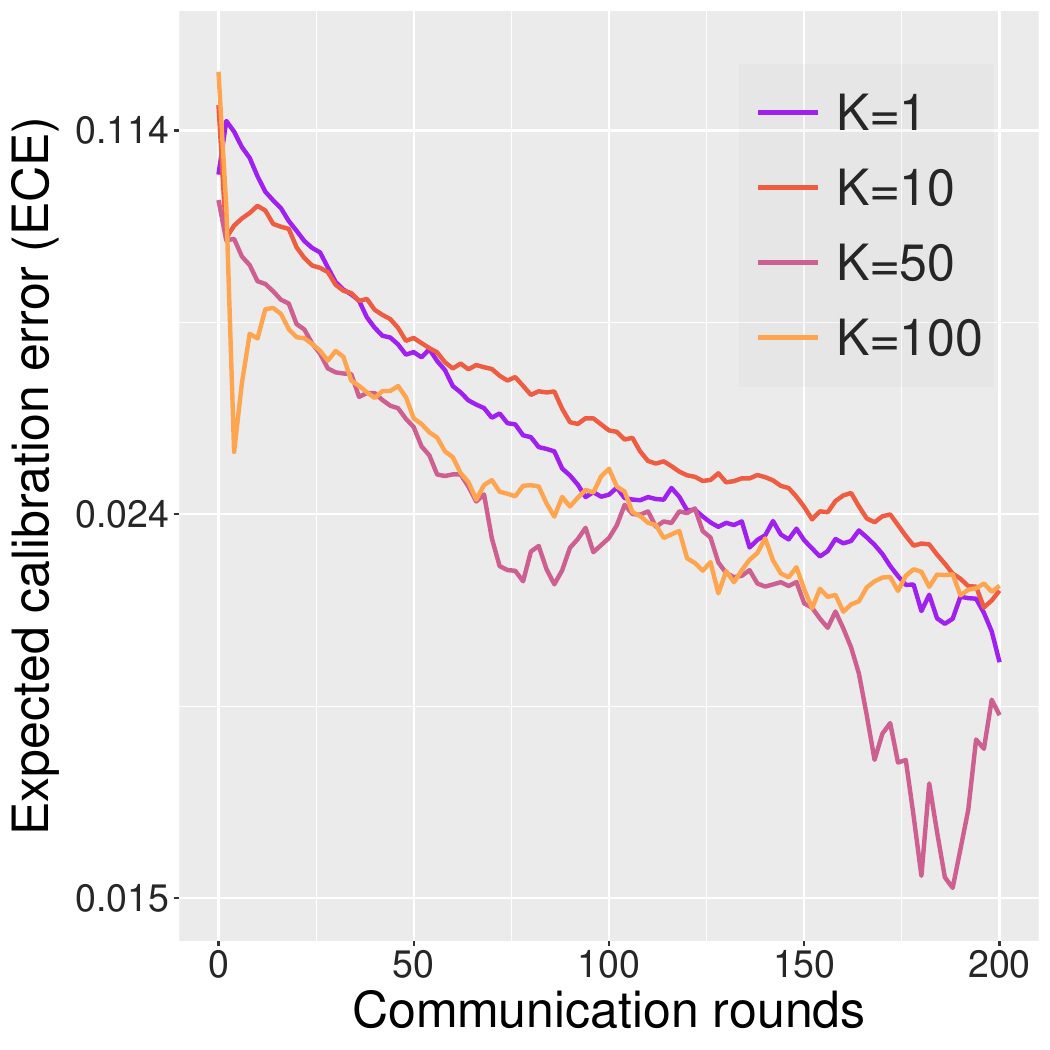}
    \end{minipage}%
    }%
  \vskip -0.1in
  \caption{The impact of leapfrog step $K$ on FA-HMC applied on the CIFAR2 and KMNIST datasets.}
  \label{figure:cifar_kmnist_leap}
\end{figure*}

\begin{figure*}[htbp]
    \centering
    \subfigure[CF2: AC]{
    \begin{minipage}[t]{0.16\linewidth}
    \centering
    \label{fig:cifar-accu-local}
    \includegraphics[width=\linewidth]{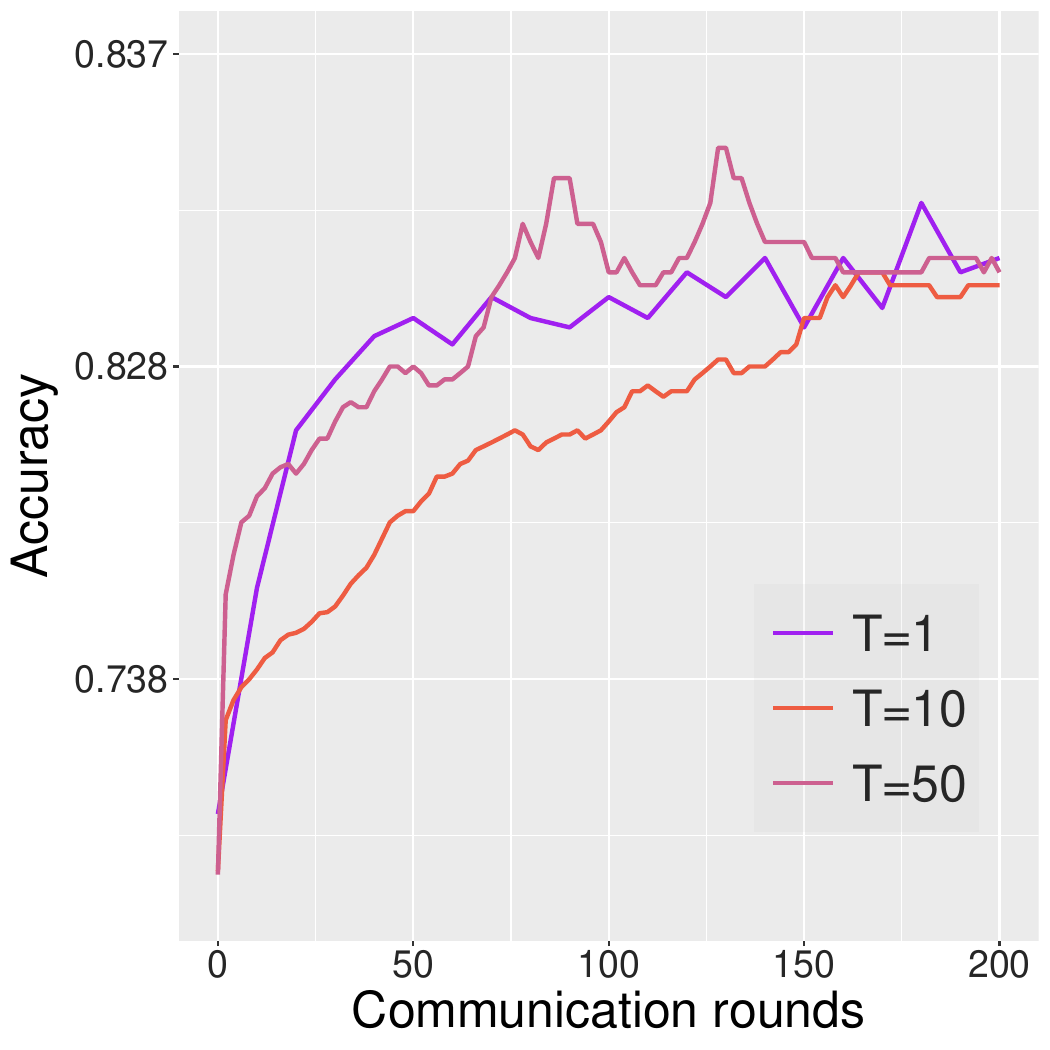}
    \end{minipage}%
    }%
    \subfigure[CF2: BS]{
    \begin{minipage}[t]{0.16\linewidth}
    \centering
    \label{fig:cifar-brier-local}
    \includegraphics[width=\linewidth]{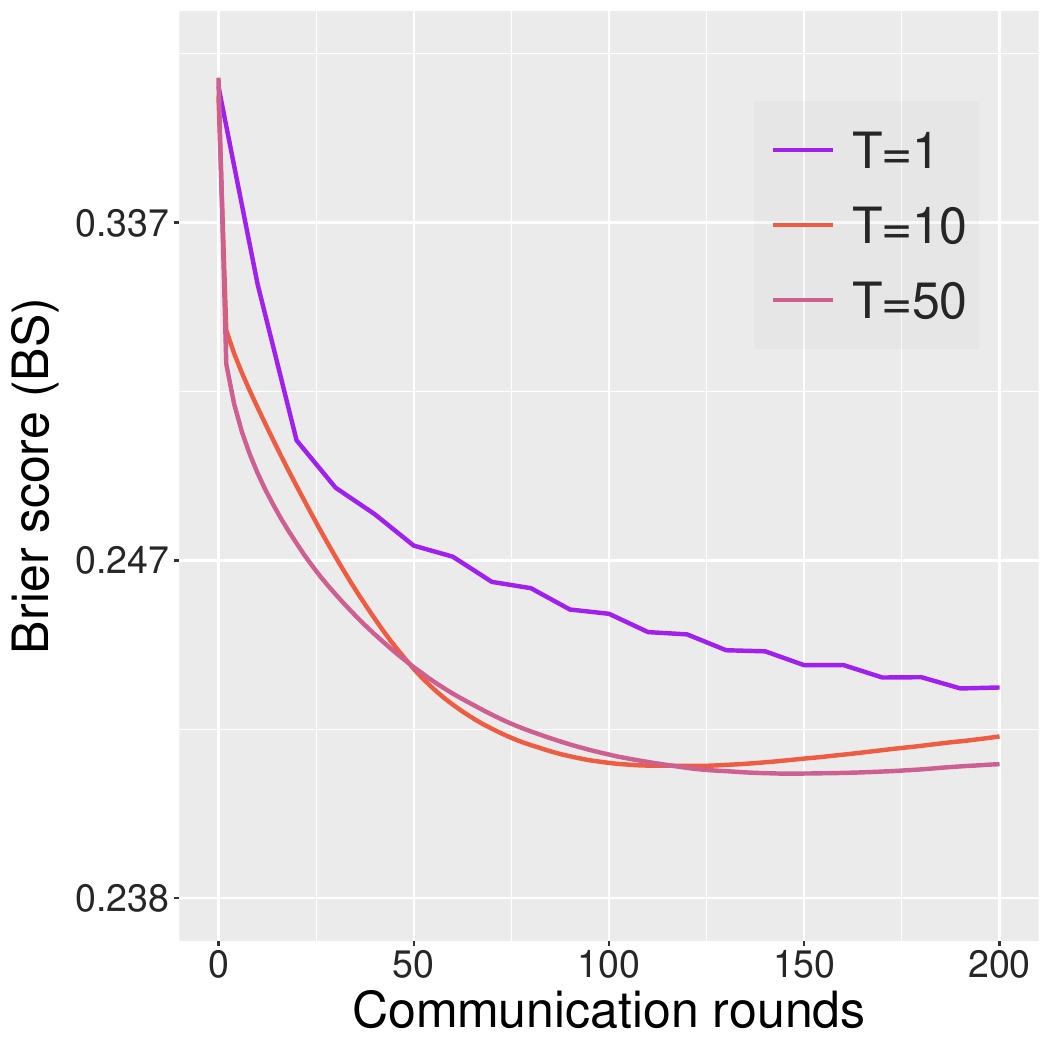}
    \end{minipage}%
    }%
    \subfigure[CF2: ECE]{
    \begin{minipage}[t]{0.16\linewidth}
    \centering
    \label{fig:cifar-ece-local}
    \includegraphics[width=\linewidth]{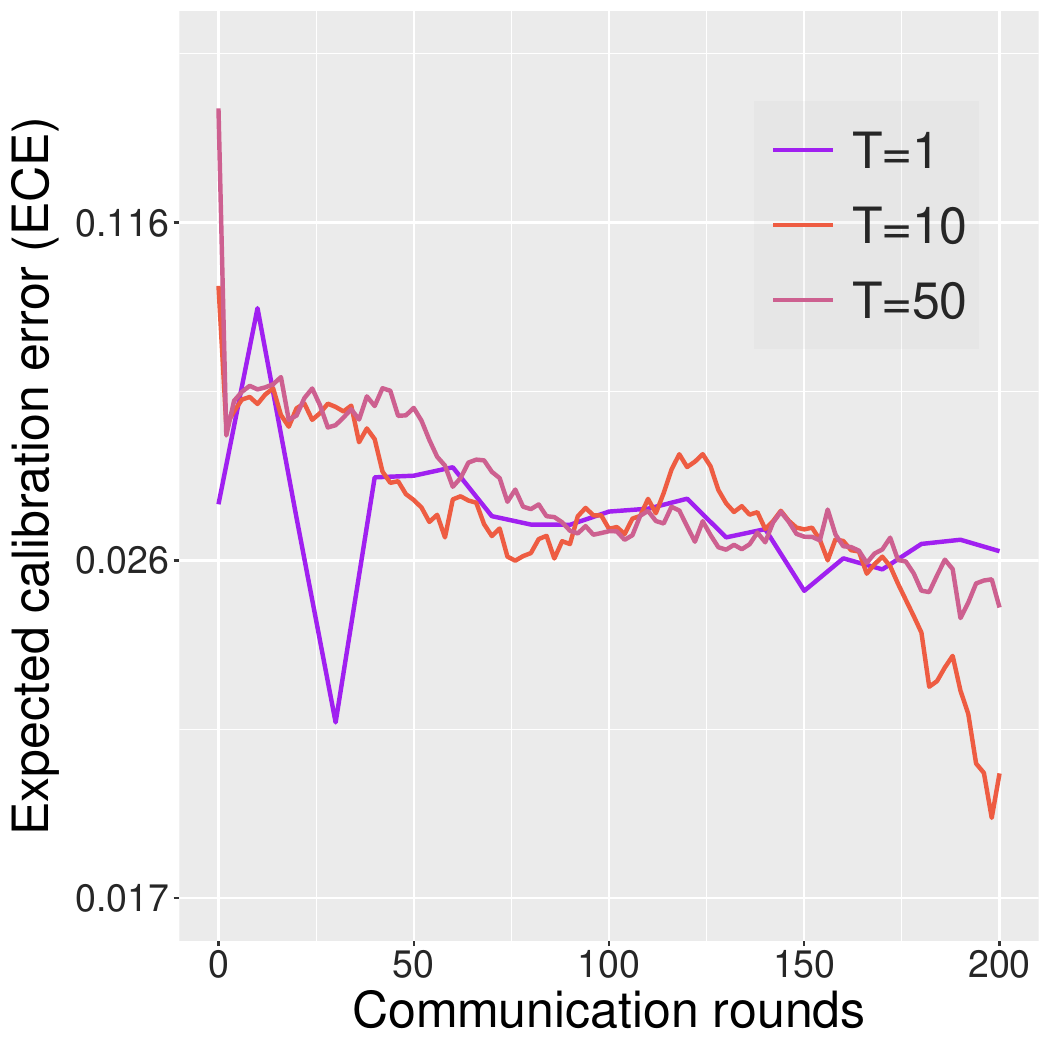}
    \end{minipage}%
    }%
    \subfigure[KM: AC]{
    \begin{minipage}[t]{0.16\linewidth}
    \centering
    \label{fig:kmnist-accu-local}
    \includegraphics[width=\linewidth]{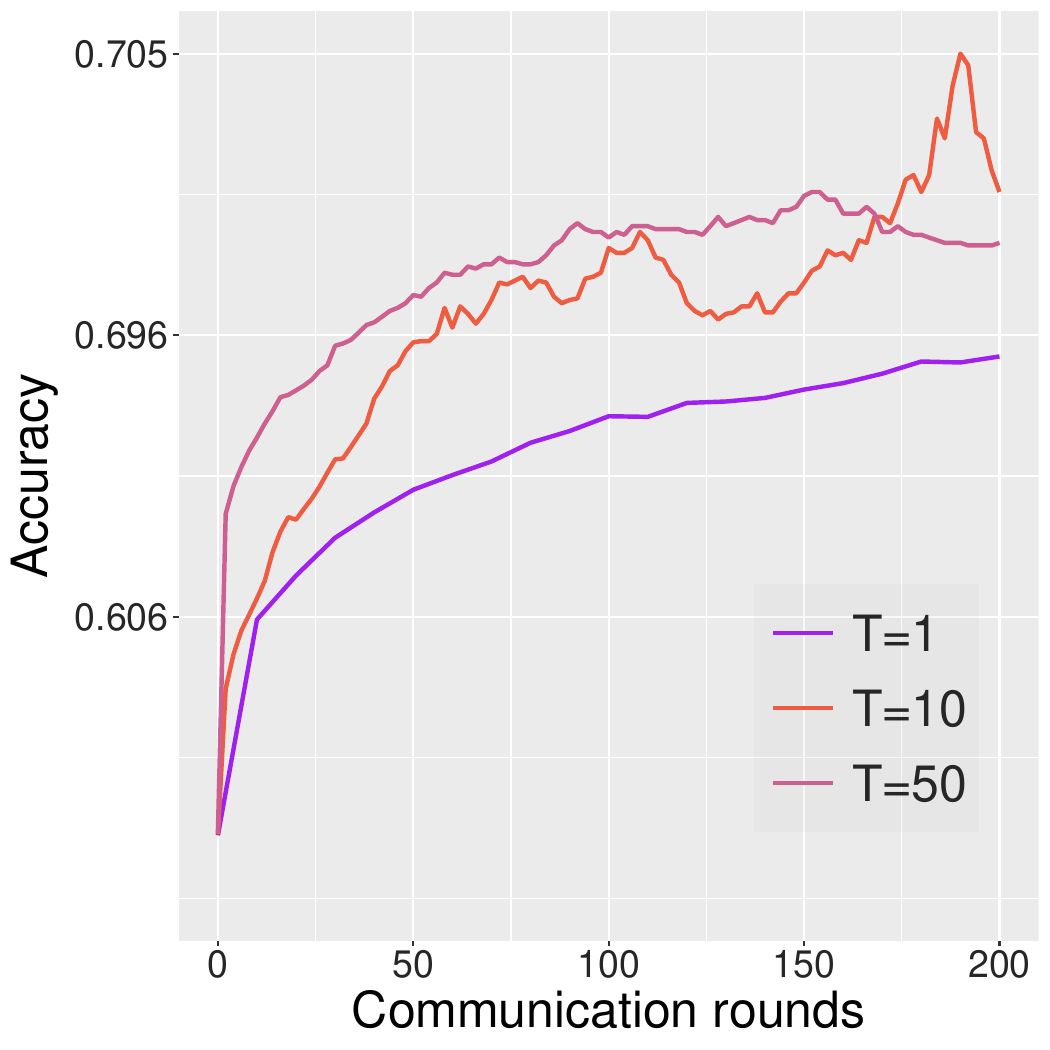}
    \end{minipage}%
    }%
    \subfigure[KM: BS]{
    \begin{minipage}[t]{0.16\linewidth}
    \centering
    \label{fig:kmnist-brier-local}
    \includegraphics[width=\linewidth]{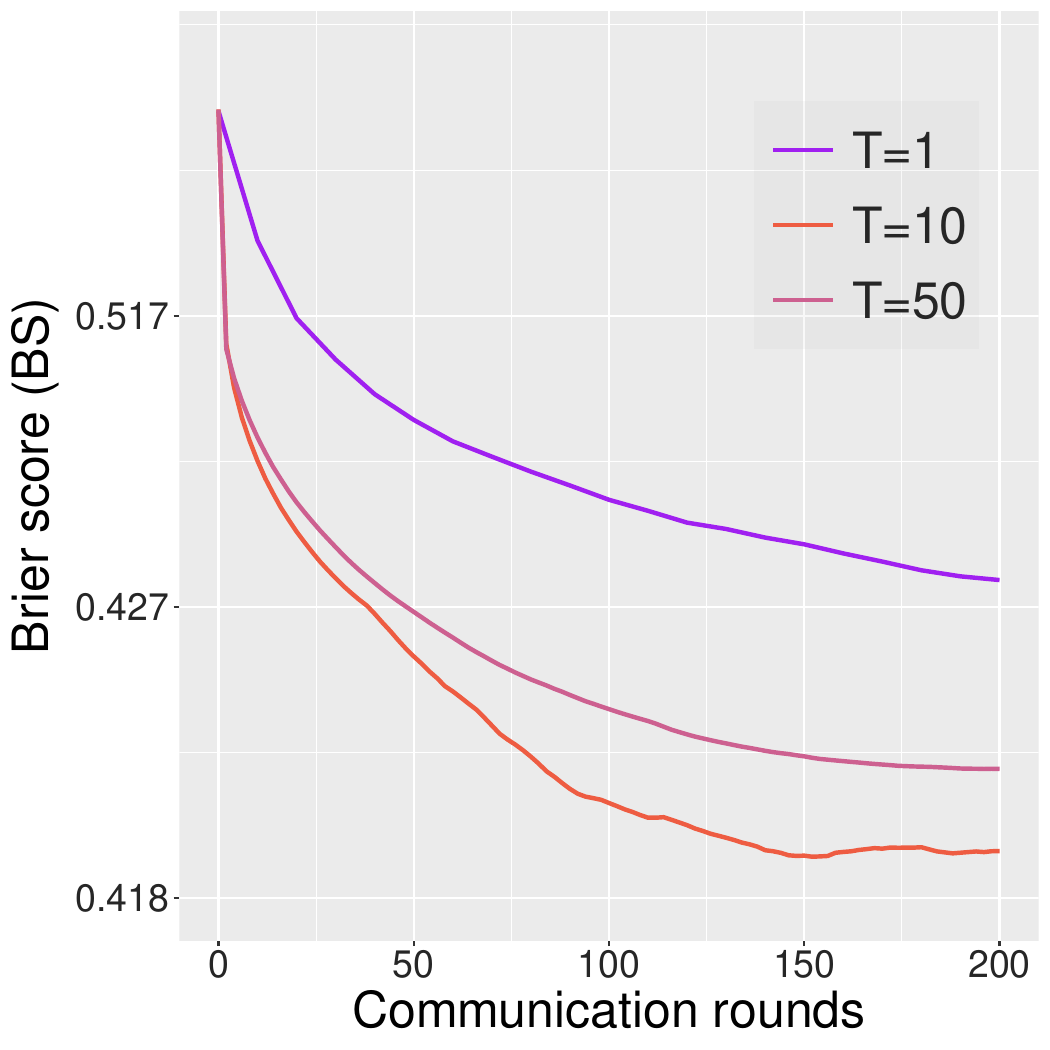}
    \end{minipage}%
    }%
    \subfigure[KM: ECE]{
    \begin{minipage}[t]{0.16\linewidth}
    \centering
    \label{fig:kmnist-ece-local}
    \includegraphics[width=\linewidth]{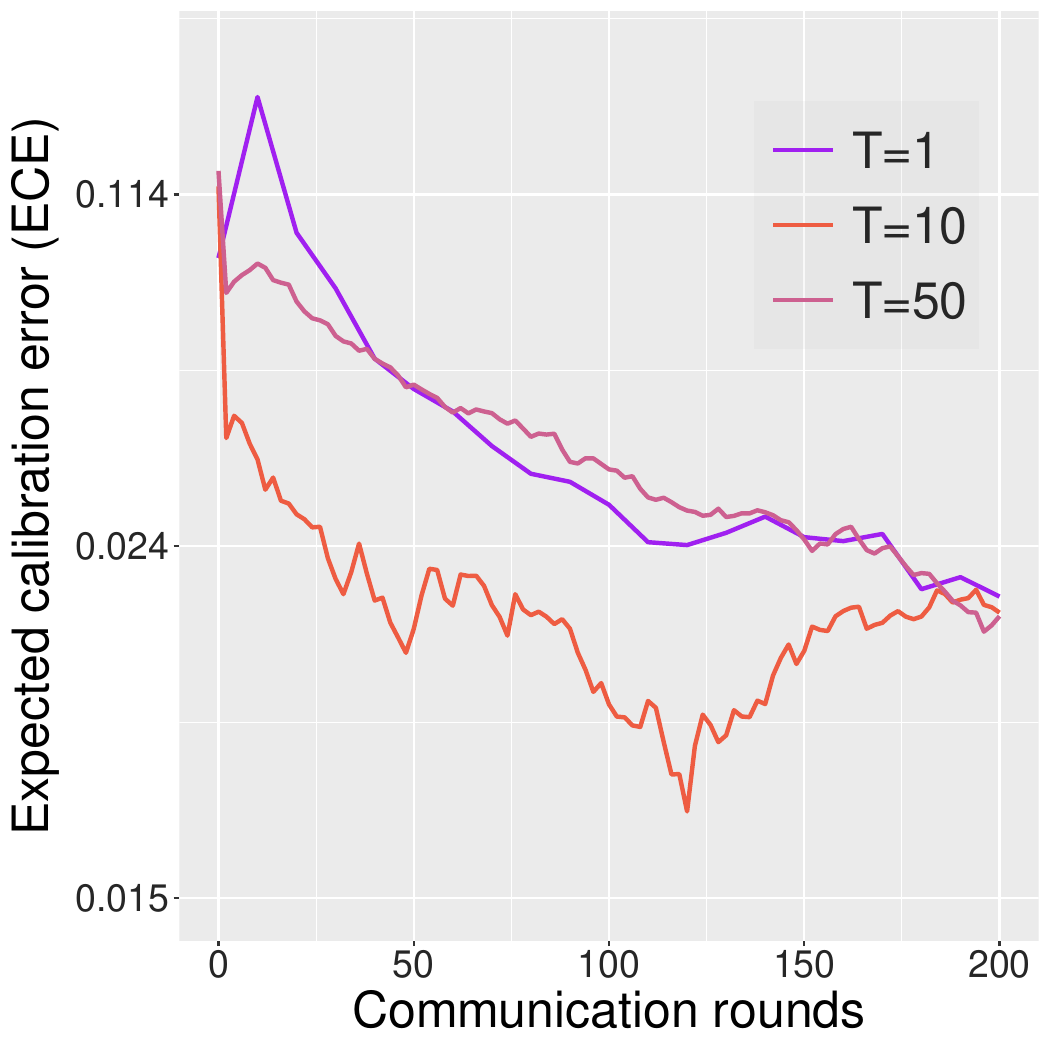}
    \end{minipage}%
    }%
  \vskip -0.1in
  \caption{The impact of local step $T$ on FA-HMC applied on the CIFAR2 and KMNIST datasets.}
  \label{figure:cifar_kmnist_local}
\end{figure*}

The performance of FA-HMC using different leapfrog steps $K$ is shown in Figure \ref{figure:cifar_kmnist_leap}. We see that the optimal leapfrog step $K$ varies with different test statistics and datasets, for example, the best $K$ is 10 for AC and BS and the best $K$ is larger than 10 for ECE on the CF2 dataset, and none of the experiments support $K=1$ (i.e., FA-LD) as the optimal leapfrog step, showcasing the advantage of FA-HMC (i.e., $K>1$) over FA-LD. We also study the impact of different local steps $T$ as shown in Figure \ref{figure:cifar_kmnist_local}. We observe that except for the AC metric on CF2, federated learning with $T>1$ outperforms the standard baseline $T=1$ on the rest of the metrics on both the CF2 and KM datasets.

\section{Conclusions and Future Work}
\label{section_conclude}
In this paper, we develop a tight theoretical guarantee for FA-HMC and provide suggestions to speed up FA-HMC.  Through experimentation, we demonstrate that FA-HMC outperforms FA-LD. We believe that FA-HMC potentially captures the similarities between local nodes, giving it an advantage over FA-LD. For future directions, it would be interesting to explore if further improvements can be achieved by addressing heterogeneity in local leapfrog steps. Note that for second-order methods, one would need to tackle heterogeneity both on local positions and local momentum parameters. For example, motivated by \cite{kkm+20}, suppose at $t_0$-th iteration (communication round), each local device obtain $\theta_{t_0+T}$ and $\nabla f(\theta_{t_0}):=\sum_{c=1}^Nw_c\nabla f^{(c)}(\theta_{t_0})$. Then each local device with local loss function $f^{(c)}$ is going to perform the following update for $k=0,1,\ldots,K-1$, $t=t_0+T+1,\ldots,t_0+2T$
\begin{align*}
\theta^{(c)}_{t,k+1}=&\theta^{(c)}_{t,k}+\eta_t p^{(c)}_{t,k}-\frac{\eta_t^2}{2}\big(\nabla f^{(c)}(\theta^{(c)}_{t,k})
    -\nabla f^{(c)}( \theta_{t_0})+\nabla f( \theta_{t_0})\big),\\
    p^{(c)}_{t,k+1}=&p^{(c)}_{t,k}-\frac{\eta_t}{2} \big(\nabla f^{(c)}(\theta^{(c)}_{t,k})+\nabla f^{(c)}(\theta^{(c)}_{t,k+1})
    -2\nabla f^{(c)}( \theta_{t_0})+2\nabla f( \theta_{t_0})\big).
\end{align*}
The complete version is deferred to Algorithm B.2 
in the Supplementary Material.

%
%
%
Another direction we are working on is to consider the privacy guarantee of the sampling algorithms and compare them with optimization algorithms.

It would also be interesting to examine  the above directions and the application of underdamped Langevin Monte Carlo algorithm to federated learning as future research.

\bibliographystyle{jasa3}
\bibliography{reference}





\end{document}


\maketitle
\appendix
In this supplementary file, we first review related literature on federated sampling and HMC in Section \ref{related-work}. Then, we include the algorithms omitted in the main text in Section~\ref{sec:omitalg}., and the low bound result of $t_\epsilon$ in Section \ref{sec:LB}. Subsequently, we first organize the settings, notations and preliminary lemmas in Section~\ref{sec:pre_sett}. Following that, we provide the proof for the main result in Section~\ref{sec:pfpropconver} and offer a sketch proof for the convergence of vanilla FA-HMC in Section~\ref{sec:sketchproof}. This is done to enlighten the understanding of more complex cases. We then present the formal proofs for the main convergence result and their associated preliminary lemmas in Sections~\ref{sec:pfmainthm}-\ref{sec:pfpropdynamic}. We also discuss how to extend the proofs to more general stochastic gradient assumptions in Section~\ref{sec:relaxation}.
Finally, in Section \ref{secton:figures}, we show additional plots with error bars that are omitted from the main text.



\section{Related Work}
\label{related-work}

\paragraph{Federated optimization} Federated optimization  is a collaborative learning  that trains a model without a direct share of user data. In addition to encryption techniques to ensure secure communications, a major focus is to minimize communications in distributed computing \citep{dcm+12,ss15,mmra16,mmr+17,yyz19,kkm+20, ljz+21, hlsy21}. In particular, federated averaging  \citep[FedAvg,][]{mmr+17}  proposed a scalable paradigm by conducting more local steps to achieve this target, which further motivates the study   based on non-iid data \citep{zhao18, Sattler20, lhy+20, LS20} and asynchronous computing \citep{xie20}.

\paragraph{Federated Langevin sampling:} 

Vanilla-distributed Monte Carlo methods require frequent communications of user data, which has data privacy concerns. To tackle this issue while maintaining model uncertainty, posterior averaging is empirically studied in federated learning to reduce data leakage risks \cite{agxr21, F-SGLD, hw+21}. To guarantee theoretical properties in privacy and communication efficiency, further analysis has been established in \cite{deng2021convergence, plassier2022federated,Maxime2021} based on multiple local steps and compressed operators, respectively.


\paragraph{Distributed Monte Carlo methods} 
Despite the advances of Monte Carlo methods in big data problems \citep[e.g., ][]{Welling11, Ahn12, chen2014stochastic, Chen15, ccbj18, Yian_higher, deng2020, icsgld}, the computation is still inefficient given limited devices. To mitigate this issue, \cite{Neiswanger13, wang13, Minsker14} proposed sub-posterior aggregation to speed-up the computations with distributed devices. In addition, other types of parallel paradigms, such as synchronous, asynchronous, decentralized computing, are also conducted with Monte Carlo computations \cite{Nishihara14, Ahn14_icml, chen16_distributed, Chowdhury18, Li19_v2}.

\paragraph{Hamiltonian Monte Carlo:}

 HMC is widely known as a state-of-the-art sampling algorithm \citep{Neal12, no_U_turn_sampler, chen2014stochastic}. However, a theoretical underpinning for the convergence rate study has been lacking until recently. The irreducibility and geometrical ergodicity have been studied in \cite{Irreducibility_HMC}. \cite{Mangoubi18_leapfrog} showed that the
number of gradient evaluations for unadjusted HMC only depends on the dimension with an order as low as $d^{1/4}$ given proper smoothness. On another direction, \cite{mangoubi2017rapid} presented the convergence rate with an upper bound of   $O(\kappa^2)$ for the unadjusted HMC; \cite{Yin_Tat_18} improved that result to $O(\kappa^{1.5})$ by considering an ODE solver in the HMC algorithm and \cite{chen2019optimal} further refined the mixing time to $O(\kappa)$. 

Without the Metropolis-Hastings correction, the corresponding Markov chains lead to a consistent bias and require a mixing time that is polynomially dependent on $O(1/\epsilon)$ to control such errors; by contrast, the mixing time for Metropolized HMC can be further improved to  $O(\log(1/\epsilon))$. To analyze the Metropolized HMC algorithms, \cite{coupling_HMC} proposed 
the coupling methods and showed that the mixing-time bound scales in the order of $O(d^{1.5})$; \cite{chen2020fast} presented the non-asymptotic convergence and confirmed that Metropolished HMC is strictly faster than MALA and other basic Metropolished algorithms.

In big data settings, querying the entire dataset becomes quite expensive and increases the challenge to obtain the desired performance \citep{Betancourt_15, talldata17, HMC_preserve}; to tackle this issue, \cite{zou2021convergence} studied the convergence of HMC based on stochastic gradients achieves the target distribution with an error up to $O(\sqrt{\eta})$ (learning rate); and variance reduction techniques were further proposed to reduce that error; in decentralized settings, \cite{gurbuzbalaban2021decentralized} studied non-asymptotic guarantees of SGHMC by constructing a proper Lyapunov function and appropriate parameters; they further view decentralized SGHMC as a noisy heavy ball algorithm \citep{Flammarion_bach_15, Can_19, Xin_2020}.









\section{Algorithms Omitted in the Main Text}\label{sec:omitalg}


\begin{algorithm}[]
	\caption{de-bias leapfrog $h_{\text{de-LF}}$ }
	\begin{algorithmic}
		\STATE {\bfseries Input:} Energy function $f^{(c)}(\cdot)$; Shared gradient $\nabla  f( \theta_{t_0})$ and  parameter $\theta_{t_0}$; Initial parameters $ \theta_{t_0}$, momentum $p_0$; learning rate $\eta$; leapfrog step $K$; $k=0$
\WHILE{$k\leq K$:}
\STATE  $\theta_{k+1}=\theta_k+\eta p_k-\frac{\eta^2}{2}\big(\nabla f^{(c)}(\theta_k)-\nabla f^{(c)}( \theta_{t_0})+\nabla f( \theta_{t_0})$\big)
\STATE $p_{k+1}=p_k-\frac{\eta}{2} (\nabla f^{(c)}(\theta_k)+\nabla f^{(c)}(\theta_{k+1})-2\nabla f^{(c)}( \theta_{t_0})+2\nabla f( \theta_{t_0})$)
\STATE $k=k+1;$
\ENDWHILE
\STATE {\bfseries Output:} $\theta_K$
	\end{algorithmic}
\end{algorithm}

\begin{algorithm}[]
	\caption{debias FL-HMC algorithm}\label{alg:disdeFLHMC}
	\begin{algorithmic}
		\STATE {\bfseries Input:} Initial parameters $\theta^{(c)}_0=\theta_0$, $\theta_{-T}=\nabla f(\theta_{-T})={\bf 0}$, $t=0$; Stepsize function $\eta_t=\eta(t)$; Local update step $T$; Leapfrog update step $K$
		\WHILE{stopping conditions are not satisfied}
  \STATE {\bf sample }momentum $p_t^{(c)}\sim N(0,\sigma_t^{(c)2}\mathbb{I}_d)$
  \IF{$t\equiv 0 (\mathrm{mod }\ T)$}
  \STATE Broadcast  $\theta_t:=\sum_{c=1}^Nw_c\theta^{(c)}_t$, $\nabla f(\theta_{t-T}):=\sum_{c=1}^Nw_c\nabla f^{(c)}(\theta_{t-T})$ and set $\theta^{(c)}_{t+1,0}=\theta_t$
  \ELSE
  \STATE $\theta^{(c)}_{t+1,0}=\theta^{(c)}_t$
  \ENDIF
  \STATE {\bf update} $\theta^{(c)}_{t+1}=h_{\text{de-LF}}(f^{(c)},\theta^{(c)}_{t+1,0},\nabla f(\theta_{t-T}),\theta_{t-T},p^{(c)}_t,\eta_t,K) $ in parallel for all devices, $t=t+1$
  \ENDWHILE
	\end{algorithmic}
\end{algorithm}
        











\section{Lower Bound for  FA-HMC Algorithm}\label{sec:LB}
In the main text, we claim a trade-off between communication and divergence based on the upper bounds results in Theorem \ref{thm:convergrate:sg}.  In this section, we show that for certain sampling problems, 
the upper bound results on vanilla FA-HMC above also match the lower bound below. 
\begin{restatable}[Dimensional tight lower bound for ideal FA-HMC process]{theorem}{thmLBconFLHMC}\label{thm:LBconFLHMC}
Under Assumptions \ref{assum:convex}-\ref{assum:Hsmooth},  if we initialize $\theta_0\in N(0,\sigma^2\mathbb{I}_d)$ for any $\sigma^2> 0$, then there is a sampling task where the least running time $t$ for continuous FA-HMC to achieve ${\cal W}_2(\theta_t,\theta^\pi)^2\leq  \epsilon^2$  is taken when $t=\Omega(\sqrt{d}T\log(d/\epsilon)/\epsilon)$.
\end{restatable}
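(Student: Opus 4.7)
The plan is to construct an explicit hard sampling instance---a Gaussian target with carefully chosen heterogeneous quadratic client potentials---and to track the Wasserstein distance of the ideal continuous FA-HMC process in closed form. This works because continuous HMC on a quadratic Hamiltonian is a linear symplectic flow, so every iterate remains Gaussian and $\mathcal{W}_2$ between the iterate and the target reduces to explicit expressions in the first two moments. Staying inside this Gaussian world avoids the need for sophisticated coupling arguments and lets us invert the usual upper-bound machinery to produce matching lower bounds.

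Concretely, I would take $\pi = N(\mu, I_d)$ with $\|\mu\|^2 = \Theta(d)$, so that for every $\sigma^2 > 0$ the initialization $N(0, \sigma^2 I_d)$ is at distance $\mathcal{W}_2(\theta_0,\pi) = \Theta(\sqrt d)$; this is what ultimately supplies the $\sqrt d$ factor. I would then split the global potential $f(\theta) = \tfrac12\|\theta-\mu\|^2$ as $f = \sum_c w_c f^{(c)}$ with heterogeneous quadratics $f^{(c)}(\theta) = \tfrac12(\theta-\mu^{(c)})^\top A^{(c)}(\theta-\mu^{(c)})$ whose $w$-weighted average recovers $(I_d,\mu)$. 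The spread among the $A^{(c)}$ is the knob that manufactures a nontrivial federated-averaging bias. Between two synchronizations each client evolves by the symplectic rotation generated by $A^{(c)}$, so averaging the $N$ client states and refreshing momenta produces an explicit linear recurrence for the mean and covariance of $\theta_{kT}$, and hence a closed form for $\mathcal{W}_2^2(\theta_t,\pi)$ at every $t$.

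From this closed form I would extract a lower bound of the schematic shape $\mathcal{W}_2^2(\theta_t,\pi) \gtrsim d\,\rho^{\,t/T} + B^2$, where $\rho < 1$ is the per-cycle spectral radius of the averaged symplectic map and $B$ is the persistent bias driven by the non-commutativity $\sum_c w_c \cos(\sqrt{A^{(c)}}\,\tau) \ne \cos(\sqrt{A}\,\tau)$ at the physical cycle time $\tau$. The spectral radius encodes the usual HMC contraction and, combined with the $\Theta(\sqrt d)$ starting distance, forces the burn-in term to take at least $\Omega(\sqrt{d}\log(d/\epsilon))$ time to fall below $\epsilon$; choosing the $A^{(c)}$ to maximize the heterogeneity gap keeps the stationary FA bias $B$ of order $T$ in appropriate units, so pushing $B^2$ below $\epsilon^2$ costs an additional multiplicative factor of $T/\epsilon$. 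Combining the two regimes yields $t = \Omega(\sqrt{d}\,T\log(d/\epsilon)/\epsilon)$, matching the claim.

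The main obstacle, I expect, will be pinning down the persistent bias $B$ rigorously. Because the continuous flow has no discretization error, the bias must come purely from the non-commutativity of averaging and rotation, and a naive symmetric choice of $A^{(c)}$ could accidentally make it vanish; one has to verify that for the chosen hard instance $B$ is lower-bounded and scales as the correct power of $T$ at leading order. A secondary challenge is that the statement quantifies over every $\sigma^2 > 0$, so the Gaussian--Wasserstein calculation has to isolate the contribution of $\|\mu\|^2$ (the source of the $\sqrt d$ factor) and show the bound does not degrade for the worst-case initialization variance.
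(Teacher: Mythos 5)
Your proposal follows essentially the same route as the paper's proof: the paper's hard instance is precisely a two-group special case of your construction ($N/2$ clients with isotropic curvature $L$ centered at $\theta^*_L\mathbb{I}_d$ and $N/2$ with curvature $\mu$ centered at $\theta^*_\mu\mathbb{I}_d$, uniform weights), the cosine form of the quadratic HMC flow yields the same linear Gaussian recurrence $\theta_{t+1}=\gamma\theta_t+\theta^*_{K\eta}(1-\gamma)\mathbb{I}_d+(\text{noise})$ with $\gamma=\tfrac12\bigl(\cos^T(\sqrt{L}K\eta)+\cos^T(\sqrt{\mu}K\eta)\bigr)$, and the bound is extracted from exactly your decomposition ${\cal W}_2^2\gtrsim d\gamma^{2t}+\|\theta^*_{K\eta}-\theta^*\|^2$ after verifying the fixed-point bias does not vanish ($|\theta^*_{K\eta}-\theta^*|=\Theta(T(K\eta)^2)$ per coordinate), which is the obstacle you correctly flagged. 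One bookkeeping caveat: the $\sqrt d$ is not supplied by the initial distance (that only gives the $\log(d/\epsilon)$ factor); it comes from the bias term being $\Theta(T^2(K\eta)^4)\cdot d$ because the client centers are separated by $\Theta(1)$ in every coordinate, which forces $T(K\eta)^2\lesssim\epsilon/\sqrt d$ and hence $1-\gamma\lesssim\epsilon/\sqrt d$ in the burn-in count --- so your schematic $d\rho^{t/T}+B^2$ with a dimension-free $B$ would lose the $\sqrt d$, and you must arrange the heterogeneity so that $B^2$ itself scales linearly in $d$.
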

The result provides a lower bound on the continuous FA-HMC algorithm, where the discretization error does not exist. Therefore, even if FA-HMC runs with an infinitely small learning rate, the convergence rate cannot be improved in general.






\section{Preliminary Settings}\label{sec:pre_sett}

\subsection{Continuous HMC Process of Single Device}

The dynamic of HMC regrading the loss(energy) function $f(\cdot)$ is characterized by the following two ODEs:
\begin{align}
\begin{split}\label{eq:PDE}
    \frac{d\theta(t)}{dt}=&p(t),\\
    \frac{dp(t)}{dt}=&-\nabla_\theta f(\theta(t)).
\end{split}
\end{align}

 Noting that the solution of 
 \begin{equation}\label{eq:contin}
     \theta(t)=\theta(0)+\int_0^t p(s)ds=\theta(0)+\int_0^t\Bigl(p(0)+ \int_{0}^s \nabla_\theta f(\theta(u))du\Bigr)ds 
 \end{equation}
involves double integration with respect to $\nabla_\theta f(\theta(u))$. Denote ${\cal H}^{\pi}_s(\theta(0))=\theta(s)$, the trajectory starting from $\theta(0)$ with an independently sampled momentum $p(0)$. Further, we define
\[
\theta_t^{\pi}=({\cal H}^{\pi}_{K\eta_t})^{\frac{t}{K\eta_t}}(\theta^{\pi}),\qquad t\in \mathbb{N} \cdot K\eta_t.
\]
Note that each mapping ${\cal H}_{K\eta_t}$ uses an independently sampled momentum.
 

\subsection{Notations for FA-HMC}

For iteration $t=0,1,\ldots$, $k=0,1,\ldots$, define $\theta^{(c)}_{t,k}$ and $p^{(c)}_{t,k}$ to be the position and momentum parameters of the $c$-th local device at iteration $t$ and $k$-th leapfrog step, respectively and 
\[
\theta_t=\sum_{c=1}^Nw_c\theta^{(c)}_t,
\quad p_t=\sum_{c=1}^Nw_cp^{(c)}_t,\quad  p^{(c)}_{t,0}=p^{(c)}_t,\quad \theta^{(c)}_{t,0}=\Bigl\{\begin{array}{ll}
   \theta_t,  &  \text{ if $\frac{t}{T}\in \mathbb{Z}$}\\
   \theta^{(c)}_t,  &  \text{if $\frac{t}{T}\notin \mathbb{Z}$}
\end{array}
\]
We use the following short hand
\[
\sigma_t^{(c)}:=(\mathbb{E}\|p_t^{(c)}\|^2)^{1/2}=((\rho+\frac{1-\rho}{w_c})d)^{1/2},\qquad \sigma_t:=(\mathbb{E}\|p_t\|^2)^{1/2}=d^{1/2}.
\]

For iteration $t=0,1,\ldots$ and leapfrog step  $k=1,2\ldots,K$, from the Algorithm 3,
by recursive calculations, if the true gradient is used, then
\begin{align}
\begin{split}\label{eq:iteraform:nonran}
        p^{(c)}_{t,k}=&p^{(c)}_{t,0}-\frac{\eta_t}{2}\nabla f^{(c)}(\theta^{(c)}_{t,0})-\sum_{j=1}^{k-1}\eta_t\nabla f^{(c)}(\theta^{(c)}_{t,j})-\frac{\eta_t}{2}\nabla f^{(c)}(\theta^{(c)}_{t,k})\\
    \theta^{(c)}_{t,k}=&\theta^{(c)}_{t,0}+k\eta_t p^{(c)}_{t,0}-\frac{k\eta_t^2}{2}\nabla f^{(c)}(\theta^{(c)}_{t,0})-\eta_t^2\sum_{j=1}^{k-1}(k-j)\nabla f^{(c)}(\theta^{(c)}_{t,j});
\end{split}
\end{align}


and if stochastic gradient is used, then
\begin{align}
\begin{split}\label{eq:iteraform}
        p^{(c)}_{t,k}=&p^{(c)}_{t,0}-\frac{\eta_t}{2}\widetilde{g}^{(c)}_{t,0}(\xi_0)-\frac{\eta_t}{2}\sum_{j=1}^{k-1}\Bigl(\widetilde{g}^{(c)}_{t,j}(\xi_{j-\frac{1}{2}})+\widetilde{g}^{(c)}_{t,j}(\xi_j)\Bigr)-\frac{\eta_t}{2}\widetilde{g}^{(c)}_{t,k}(\xi_{k-\frac{1}{2}})\\
    \theta^{(c)}_{t,k}=&\theta^{(c)}_{t,0}+k\eta_t p^{(c)}_{t,0}-\frac{k\eta_t^2}{2}\widetilde{g}^{(c)}_{t,0}(\xi_0)-\frac{\eta_t^2}{2}\sum_{j=1}^{k-1}(k-j)\Bigl(\widetilde{g}^{(c)}_{t,j}(\xi_{j-\frac{1}{2}})+\widetilde{g}^{(c)}_{t,j}(\xi_j)\Bigr)
\end{split}
\end{align}
where we denote
\[
    \widetilde{g}^{(c)}_{t,k}(\xi_{k-\frac{1}{2}})=\nabla \widetilde{f}^{(c)}(\theta^{(c)}_{t,k},\xi^{(c)}_{t,k-\frac{1}{2}}),\qquad  \widetilde{g}^{(c)}_{t,k}(\xi^{(c)}_k)=\nabla \widetilde{f}^{(c)}(\theta^{(c)}_{t,k},\xi^{(c)}_{t,k}),\qquad \widetilde{g}_{t,k}(\cdot)=\sum_{c=1}^Nw_c\widetilde{g}^{(c)}_{t,k}(\cdot).
\]
\subsection{Assumptions}

\begin{customassu}{3.1}[$\mu$-Strongly Convex]\label{assum:convex}
For each $c=1,2,\ldots, N$, $f^{(c)}$ is $\mu$-strongly convex for some $\mu>0$, i.e., $\forall x, y \in \mathbb{R}^{d}$, 
$f^{(c)}(y)\geq f^{(c)}(x)+\langle\nabla f^{(c)}(x), y-x\rangle+\frac{\mu}{2}\|y-x\|_{2}^{2}.$ 
\end{customassu}

\begin{customassu}{3.2}[$L$-Smoothness]\label{assum:smooth}
 For each $c=1,2,\ldots, N$, $f^{(c)}$ is $L$-smooth for some $L>0$, i.e., $\forall x, y \in \mathbb{R}^{d}$, $\|\nabla f^{(c)}(y)-\nabla f^{(c)}(x)\|\leq L\|x-y\|.$
\end{customassu}

\begin{customassu}{3.3}[$L_H$-Hessian Smoothness]\label{assum:Hsmooth} For each $c=1,2,\ldots, N$, $f^{(c)}$ is $L_H$ Hessian smoothness, i.e., for any  $\theta_1,\theta_2, p\in\mathbb{R}^d$,
$
\|\big(\nabla^2 f^{(c)}(\theta_1)-\nabla^2 f^{(c)}(\theta_2)\big)p\|^2\leq L_H^2\|\theta_1-\theta_2\|^2\|p\|_{\infty}^2.
$
\end{customassu}

\begin{customassu}{3.4}[$\sigma_g$-Bounded Variance]\label{assum:boundvar}For local device $c=1,2,\ldots,N$, and leapfrog step $k=1,2,\ldots,K$, $t=1,2,\ldots$, we have $\max_{x=k-1/2,k}\mathrm{tr(Var}(\nabla \widetilde{f}^{(c)}(\theta^{(c)}_{t,k},\xi^{(c)}_{t,x})|\theta^{(c)}_{t,k}))\leq \sigma_g^2Ld$,
 for some $\sigma_g>0$.
\end{customassu}









  


         


    



 


        
        




\subsection{Preliminary Lemmas}
\begin{restatable}[Jensen's inequality]{lemmma}{lemconvex}\label{lem:convex}
For any vectors $\{x_i\}_{i=1}^n$ and positive constants $\{\lambda_i\}_{i=1}^n$ with $\sum\lambda_i=1$, due to the convexity of $\|\cdot\|^2$, we have
\[
\begin{split}
&\|\sum_{i}\lambda_ix_i\|^2\leq  \sum_i\lambda_i\|x_i\|^2;\\
&\|\sum_{i}x_i\|^2\leq \sum_i\lambda_i\|x_i/\lambda_i\|^2=\sum_i \frac{1}{\lambda_i}\|x_i\|^2.
\end{split}
\]
For any vector-value function $x(t)$ and non-negative function $\lambda(t)$, if $\int\lambda(t)dt=1$, then
\[
\|\int \lambda(t)x(t)dt\|^2\leq \int\lambda(t)\|x(t)\|^2dt.
\]
\end{restatable}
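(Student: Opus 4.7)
The plan is to derive all three inequalities from a single ingredient: the convexity of the map $v\mapsto \|v\|^2$ on $\mathbb{R}^d$. My first step is to record this fact via the explicit algebraic identity
\[
\sum_i \lambda_i \|x_i\|^2 \;-\; \Bigl\|\sum_i\lambda_i x_i\Bigr\|^2 \;=\; \sum_{i<j}\lambda_i\lambda_j \,\|x_i-x_j\|^2,
\]
valid whenever $\sum_i \lambda_i = 1$. Since the right-hand side is a sum of squared norms it is non-negative, which is exactly the first inequality of the lemma; this also exposes the slack and requires no further tools beyond bilinear expansion of the inner product.

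For the second inequality, my approach is a reparametrization trick: write $\sum_i x_i = \sum_i \lambda_i \,(x_i/\lambda_i)$ and apply the first inequality to the vectors $y_i := x_i/\lambda_i$ with the same weights $\lambda_i$. This yields
\[
\Bigl\|\sum_i x_i\Bigr\|^2 \;=\; \Bigl\|\sum_i \lambda_i y_i\Bigr\|^2 \;\le\; \sum_i \lambda_i \|y_i\|^2 \;=\; \sum_i \frac{\|x_i\|^2}{\lambda_i},
\]
with no additional input. This is the form that will be most useful in the downstream analysis, since it lets one split a sum of error terms by assigning arbitrary positive weights $\lambda_i$ that can be tuned to balance competing contributions.

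Finally, for the integral version, the cleanest route is to invoke Jensen's inequality for the probability measure $\mu(dt) = \lambda(t)\, dt$ applied to the convex function $v\mapsto \|v\|^2$. If a self-contained argument is preferred, one can approximate by Riemann sums with weights $\lambda(t_i)\,\Delta t$ (which sum to $1$ in the limit), apply the discrete inequality already proved, and pass to the limit using dominated convergence. The only mild subtlety is ensuring that $\|x(\cdot)\|^2$ is $\mu$-integrable, which is implicit in the statement (otherwise the right-hand side is $+\infty$ and the inequality is trivial). There is no substantive obstacle here; the lemma is Jensen's inequality packaged in the two discrete forms and one continuous form that the paper's later error bounds invoke repeatedly on client-averaged quantities and on time integrals along the HMC trajectory.
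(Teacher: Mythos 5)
Your proposal is correct: the quadratic identity $\sum_i\lambda_i\|x_i\|^2-\|\sum_i\lambda_i x_i\|^2=\sum_{i<j}\lambda_i\lambda_j\|x_i-x_j\|^2$ checks out by bilinear expansion, the reparametrization $x_i=\lambda_i(x_i/\lambda_i)$ gives the second form, and the integral version follows from Jensen for the probability measure $\lambda(t)\,dt$ (or Riemann-sum approximation). The paper gives no proof of this lemma and simply invokes convexity of $\|\cdot\|^2$, so your argument supplies the standard details in essentially the same spirit.
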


The next lemma is Proposition 1 in \cite{durmus2019high}:

\begin{restatable}{lemmma}{lemOracle}\label{lem:Oracle} Let $x^*\in \mathbb{R}^d$ be the global minimizer of a loss function $f(\cdot)$ satisfying Assumption \ref{assum:convex} and $x^{\pi}$ be a random variable following distribution $\propto e^{-f(x)}$, then 
\[
\mathbb{E}\|x^*-x^{\pi}\|_2^2\leq \frac{d}{\mu}.
\]
\end{restatable}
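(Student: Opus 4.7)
The plan is to use integration by parts (a Stein-type identity) against the Gibbs density $\pi(x) \propto e^{-f(x)}$, combined with the strong convexity inequality from Assumption 3.1 applied at the pair $(x, x^*)$. The key observation is that $x^*$ is the global minimizer, so $\nabla f(x^*) = 0$, which turns strong convexity into a clean lower bound on $\langle x-x^*, \nabla f(x)\rangle$.

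First, I would compute $\mathbb{E}[\langle x^\pi - x^*, \nabla f(x^\pi)\rangle]$ by integration by parts. Writing $Z = \int e^{-f(x)}\,dx$ and using $\nabla e^{-f(x)} = -\nabla f(x)\,e^{-f(x)}$,
\begin{equation*}
\int \langle x-x^*, \nabla f(x)\rangle\,e^{-f(x)}\,dx
= -\int \langle x-x^*, \nabla e^{-f(x)}\rangle\,dx
= \int \nabla\!\cdot\!(x-x^*)\,e^{-f(x)}\,dx = d\,Z,
\end{equation*}
where the boundary terms vanish because the strong convexity of $f$ forces $e^{-f(x)}$ and $\|x-x^*\|\,e^{-f(x)}$ to decay faster than any polynomial at infinity. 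Dividing by $Z$ yields $\mathbb{E}[\langle x^\pi - x^*, \nabla f(x^\pi)\rangle] = d$.

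Second, I would invoke strong convexity. Assumption 3.1 applied symmetrically gives $\langle \nabla f(x) - \nabla f(x^*), x - x^*\rangle \geq \mu\|x-x^*\|^2$, and since $\nabla f(x^*) = 0$ this reduces to $\langle \nabla f(x), x - x^*\rangle \geq \mu\|x-x^*\|^2$ pointwise. Taking expectation under $\pi$ and combining with the identity from the first step,
\begin{equation*}
d = \mathbb{E}\bigl[\langle x^\pi - x^*, \nabla f(x^\pi)\rangle\bigr] \geq \mu\,\mathbb{E}\|x^\pi - x^*\|^2,
\end{equation*}
which rearranges to the claimed bound $\mathbb{E}\|x^\pi - x^*\|^2 \leq d/\mu$.

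The main (minor) obstacle is the justification of the integration by parts, i.e., showing that the surface terms vanish. This is where I would appeal to the fact that, since $f$ is $\mu$-strongly convex with minimizer $x^*$, we have $f(x) \geq f(x^*) + \tfrac{\mu}{2}\|x-x^*\|^2$, so $e^{-f(x)}$ is dominated by a Gaussian tail; thus $\|x-x^*\|\,e^{-f(x)} \to 0$ as $\|x\|\to\infty$ and the divergence theorem applies. No other technicality is needed, which matches the fact that this is stated as a preliminary lemma borrowed from Durmus--Moulines.
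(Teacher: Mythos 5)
Your proof is correct, and it is essentially the same argument that underlies the result: the paper itself gives no proof of this lemma but imports it verbatim as Proposition 1 of \cite{durmus2019high}, whose proof rests on exactly the two ingredients you use --- the integration-by-parts (Stein) identity $\mathbb{E}\langle x^{\pi}-x^*,\nabla f(x^{\pi})\rangle = d$, which is equivalent to stationarity of $\pi$ under the Langevin generator applied to $\|x-x^*\|^2$, together with the monotonicity consequence of $\mu$-strong convexity combined with $\nabla f(x^*)=0$. Your justification of the vanishing boundary terms via the Gaussian domination $f(x)\geq f(x^*)+\tfrac{\mu}{2}\|x-x^*\|^2$ is the right one, so nothing is missing.
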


\begin{restatable}{lemmma}{lemGuD}\label{lem:GuD3} 
Under Assumption \ref{assum:smooth}  consider two HMC systems $\theta(t)$ and $\bar{\theta}(t)$ starting with the same initial positions and different initial momentum $p(0)$ and $\widetilde{p}(0)$ respectively. Suppose $\sigma^2$ is the variance of the momentum, we have
\[
\|\theta(t)-\bar{\theta}(t)\|_{2} \leq \frac{1}{\sqrt{L/\sigma^2}}\sinh(\sqrt{L/\sigma^2}t)\|p(0)-\widetilde{p}(0)\|_{2},
\]
Further if we have Assumption  \ref{assum:convex}, consider two HMC systems $\theta(t)$ and $\widetilde{\theta}(t)$ starting with initial positions  $\theta(0)$ and  $\widetilde{\theta}(0)$, and initial momentum $p(0)$ and $\widetilde{p}(0)$ respectively, then
for $0\leq t\leq \frac{1}{2\sqrt{L/\sigma^2}}$, we have
\[
\|\theta(t)-\widetilde{\theta}(t)\|_{2} \leq (1-\frac{\mu}{4\sigma^2}t^2)\|\theta(0)-\widetilde{\theta}(0)\|_{2}+Ct\|p(0)-\widetilde{p}(0)\|_{2},
\]
where $C$ is any constant $\geq\sinh(0.5)$.

\end{restatable}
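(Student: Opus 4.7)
The statement has two parts, and I would address them separately, leveraging that in both cases the difference $\Delta(t)$ of the two trajectories satisfies a second-order ODE driven by a difference of gradients that can be linearized.

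For the first bound, I set $\Delta(t)=\theta(t)-\bar\theta(t)$. The HMC ODE (\ref{eq:PDE}) gives $\dot\Delta(t)=p(t)-\bar p(t)$ and $\ddot\Delta(t)=-(\nabla f(\theta(t))-\nabla f(\bar\theta(t)))$, with $\Delta(0)=0$ and $\dot\Delta(0)=p(0)-\widetilde p(0)$. By Assumption~\ref{assum:smooth}, $\|\ddot\Delta(t)\|\le L\|\Delta(t)\|$ (possibly with the appropriate $\sigma^2$ rescaling matching the normalization in the statement). Integrating twice and using $\Delta(0)=0$ produces the Volterra inequality
\[
\|\Delta(t)\|\le t\,\|p(0)-\widetilde p(0)\|+L\int_0^t(t-s)\|\Delta(s)\|\,ds,
\]
whose maximal solution is the scalar ODE $\ddot m=Lm$, $m(0)=0$, $\dot m(0)=\|p(0)-\widetilde p(0)\|$, i.e.\ $m(t)=\sinh(\sqrt{L}\,t)/\sqrt{L}\cdot\|p(0)-\widetilde p(0)\|$. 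A standard Gr\"onwall/Picard comparison then gives the bound in the form stated.

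For the second bound, I use the mean-value representation $\nabla f(\theta(t))-\nabla f(\widetilde\theta(t))=A(t)\Delta(t)$ with $A(t)=\int_0^1\nabla^2 f(\widetilde\theta(t)+s\Delta(t))\,ds$, so $\mu I\preceq A(t)\preceq LI$ by Assumptions~\ref{assum:convex}--\ref{assum:smooth}. The perturbation equation $\ddot\Delta=-A(t)\Delta$ is linear, so $\Delta(t)=\Phi_1(t)\Delta(0)+\Phi_2(t)\dot\Delta(0)$ with matrix fundamental solutions $\Phi_1,\Phi_2$. The momentum piece $\Phi_2(t)\dot\Delta(0)$ is exactly the situation of the first bound, and on the range $t\le 1/(2\sqrt{L/\sigma^2})$ the monotonicity of $\sinh(x)/x$ gives $\|\Phi_2(t)\dot\Delta(0)\|\le Ct\,\|p(0)-\widetilde p(0)\|$ with $C$ at least $\sinh(0.5)$ (modulo an absolute factor absorbed into $C$).

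The key piece is the position contraction $\|\Phi_1(t)\Delta(0)\|\le(1-\mu t^2/(4\sigma^2))\|\Delta(0)\|$. Setting $u(t):=\Phi_1(t)\Delta(0)$, so $u(0)=\Delta(0)$, $\dot u(0)=0$, integration yields
\[
u(t)=u(0)-\int_0^t(t-s)A(s)u(s)\,ds = (I-B(t))u(0)-\int_0^t(t-s)A(s)(u(s)-u(0))\,ds,
\]
where $B(t):=\int_0^t(t-s)A(s)\,ds\succeq(\mu t^2/2)I$, giving $\|(I-B(t))u(0)\|\le(1-\mu t^2/2)\|u(0)\|$. A Gr\"onwall bootstrap on $M(t):=\max_{s\le t}\|u(s)\|$ shows $M(t)\le 2\|u(0)\|$ on the allowed range, after which $\|u(s)-u(0)\|\le L s^2\|u(0)\|$ and the remainder term is $O(L^2 t^4\|u(0)\|)$.

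The main obstacle is then showing this remainder can be absorbed, i.e.\ $(1-\mu t^2/2)+O(L^2 t^4/\sigma^4)\le 1-\mu t^2/(4\sigma^2)$, which requires careful constant tracking using the assumption $t\le 1/(2\sqrt{L/\sigma^2})$ (so that $Lt^2/\sigma^2\le 1/4$), and the fact that $\mu\le L$; summing the momentum and position contributions completes the proof.
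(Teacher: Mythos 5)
Your treatment of the first claim is sound and is essentially the paper's own argument: both reduce $\ddot\Delta=-(\nabla f(\theta)-\nabla f(\bar\theta))$ with $\|\ddot\Delta\|\le (L/\sigma^2)\|\Delta\|$ (after the momentum rescaling) to the scalar comparison ODE $x''=(L/\sigma^2)x$, $x(0)=0$, whose solution gives the stated $\sinh$ bound. For the second claim, your superposition $\Delta(t)=\Phi_1(t)\Delta(0)+\Phi_2(t)\dot\Delta(0)$ is a legitimate reorganization of what the paper does via a triangle inequality through an auxiliary trajectory started at $\theta(0)$ with momentum $\widetilde p(0)$, and your handling of the $\Phi_2$ piece via monotonicity of $\sinh(x)/x$ on $t\le\frac{1}{2}\sqrt{\sigma^2/L}$ matches the paper (both you and the paper are slightly loose on the constant: $\sinh(x)\le \frac{\sinh(1/2)}{1/2}x$ on $[0,1/2]$, so the honest constant is $2\sinh(0.5)$, not $\sinh(0.5)$).

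The genuine gap is in the position-contraction piece, which the paper does \emph{not} prove from scratch: it quotes it wholesale as Lemma 6 of \cite{chen2019optimal}. Your attempted proof of it --- the second-order Volterra expansion $u(t)=(I-B(t))u(0)-\int_0^t(t-s)A(s)(u(s)-u(0))\,ds$ with $B(t)\succeq \frac{\mu t^2}{2\sigma^2}I$ and a bootstrapped remainder of size $O\bigl(\frac{L^2t^4}{\sigma^4}\bigr)\|u(0)\|$ --- does not close. The absorption step $1-\frac{\mu t^2}{2\sigma^2}+c\frac{L^2t^4}{\sigma^4}\le 1-\frac{\mu t^2}{4\sigma^2}$ requires $t^2\le \frac{\mu\sigma^2}{4cL^2}$, whereas the lemma must hold on the full range $t^2\le \frac{\sigma^2}{4L}$; these are compatible only when the condition number $\kappa=L/\mu$ is bounded by an absolute constant (roughly $12$--$21$ with the constants your computation produces). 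The fact $\mu\le L$, which you invoke, cuts the \emph{wrong} way: at $t\asymp \sigma/\sqrt{L}$ the remainder $L^2t^4/\sigma^4\asymp Lt^2/\sigma^2$ exceeds the contraction term $\mu t^2/\sigma^2$ by a factor $\kappa$. Shrinking the horizon to where your absorption does work, $t\lesssim \sqrt{\mu}\,\sigma/L$, would degrade the per-step contraction from $1-\Theta(\mu/L)$ to $1-\Theta(\mu^2/L^2)$ --- precisely the $O(\kappa^2)$-versus-$O(\kappa)$ gap that the more delicate argument of \cite{chen2019optimal} was designed to close; a one-step Taylor/Gr\"onwall bootstrap provably cannot recover their long-horizon contraction. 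So either cite that lemma, as the paper does, or reproduce its actual proof; as written, your step ``$(1-\mu t^2/2)+O(L^2t^4)\le 1-\mu t^2/4$'' is false on the stated time range once $\kappa$ is large.
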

The proof is postponed to Section \ref{pflem:GuD3}.

\begin{restatable}[One-step update]{lemmma}{lemonestepupdate}\label{lem:onestepupdate}Under the same conditions of Theorem \ref{thm:disFLHMC}. Denote
\begin{align*}
    \text{(I)}=&\sum_{c=1}^Nw_c\Bigl(\theta^{(c)}_{t,0}-\frac{(K\eta_t)^2}{2}\nabla f^{(c)}(\theta^{(c)}_{t,0})-\frac{(K^3-K)\eta_t^3}{6}\nabla^2 f^{(c)}(\theta^{(c)}_{t,0})p^{(c)}_t\Bigr)\\
    &\qquad-\Bigl(\theta^\pi_t-\frac{(K\eta_t)^2}{2}\nabla f(\theta^\pi_t)-\frac{(K\eta_t)^3}{6}\nabla^2 f(\theta^\pi_t)p_t\Bigr);
\end{align*}
we have
\[
 \mathbb{E}\|(\text{I})\|^2\leq (1-\frac{\mu(K\eta_t)^2}{2})\mathbb{E}\|\theta_t-\theta^\pi_t\|^2+2(K\eta_t)^6L^2\big(T\sum_{i=t_0}^{t-1} \sum_{c=1}^Nw_c(1-w_c)\Delta^{(c)}_i+\frac{\sum_{c=1}^Nw_c\sigma_t^{(c)2}d}{9}\big)
\]
where $\Delta^{(c)}_i$ is defined in (\ref{eqlem:dislocb:1}).
\end{restatable}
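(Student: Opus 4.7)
The plan is to decompose $(\text{I})$ by introducing the ``centered'' one-step expansion at $\theta_t$ using the global potential $f$ and the averaged momentum $p_t$,
\[
\mathcal{C}_t := \theta_t - \frac{(K\eta_t)^2}{2}\nabla f(\theta_t) - \frac{(K\eta_t)^3}{6}\nabla^2 f(\theta_t)\,p_t,
\]
and writing $(\text{I}) = \big(\text{federated block} - \mathcal{C}_t\big) + \big(\mathcal{C}_t - \text{continuous block}\big)$. Because $\sum_c w_c\nabla f^{(c)}(\theta_t)=\nabla f(\theta_t)$ and $\sum_c w_c p^{(c)}_t=p_t$, the first difference isolates exactly the federation errors: the local drift of $\theta^{(c)}_{t,0}$ away from $\theta_t$, the use of device-specific momenta $p^{(c)}_t$ instead of $p_t$, and the leapfrog coefficient $(K^3-K)\eta_t^3/6$ in place of the Taylor coefficient $(K\eta_t)^3/6$. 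The second difference is a pure single-chain comparison of one-step updates initiated from $\theta_t$ and $\theta^\pi_t$ under $f$.

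For the single-chain piece I would invoke Lemma \ref{lem:GuD3} with both sides sharing the momentum $p_t$, so that the momentum-difference contribution in that lemma vanishes; with the normalization that makes the per-coordinate momentum unit variance and with horizon $K\eta_t$, squaring the stated contraction factor delivers exactly $1-\mu(K\eta_t)^2/2$, which is the leading coefficient in the bound. The gap between the true HMC flow used by the lemma and its third-order Taylor truncation inside $\mathcal{C}_t$ is absorbed via $L_H$-Hessian smoothness (Assumption \ref{assum:Hsmooth}) into the higher-order remainder.

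For the drift piece I would split it further. The gradient drift
$-\tfrac{(K\eta_t)^2}{2}\sum_c w_c\big(\nabla f^{(c)}(\theta^{(c)}_{t,0})-\nabla f^{(c)}(\theta_t)\big)$
is controlled by $L$-smoothness together with Jensen (Lemma \ref{lem:convex}), giving a squared bound of order $(K\eta_t)^4 L^2\sum_c w_c\|\theta^{(c)}_{t,0}-\theta_t\|^2$; unrolling recursion (\ref{eq:iteraform:nonran}) or (\ref{eq:iteraform}) back to the most recent synchronization time $t_0$ (at most $T$ local steps) replaces each squared position drift by the accumulated sum $T\sum_{i=t_0}^{t-1}\sum_c w_c(1-w_c)\Delta^{(c)}_i$, the factor $w_c(1-w_c)$ being the variance weight of $\theta^{(c)}_{t,0}$ relative to the weighted mean. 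The extra $(K\eta_t)^2$ needed to reach the stated $(K\eta_t)^6 L^2$ scaling enters naturally at this unrolling, through one additional application of $L$-smoothness on the local leapfrog update. For the Hessian-momentum drift I would add and subtract $\nabla^2 f^{(c)}(\theta_t)p^{(c)}_t$: one residual is handled by $L_H$-Hessian smoothness on the position gap, and the other by the operator bound $\|\nabla^2 f^{(c)}(\theta_t)\|_{\mathrm{op}}\leq L$ combined with independence of the Gaussian momenta and the identity $\mathbb{E}\|p^{(c)}_t\|^2=\sigma_t^{(c)2}d$, producing the second explicit remainder.

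The main obstacle I anticipate is the Hessian-momentum step, where three asymmetries overlap: the position gap $\theta^{(c)}_{t,0}\neq\theta_t$, the device-versus-global Hessian gap $\nabla^2 f^{(c)}\neq\nabla^2 f$, and the leapfrog coefficient gap $(K^3-K)/(6K^3)$ versus $1/6$. Producing the stated clean constants without spurious powers of $K$ requires preserving independence of the $p^{(c)}_t$'s so that their cross-covariances vanish, splitting each residual with Jensen weights that sum to one so the inequality is tight, and absorbing lower-order corrections into the two explicit remainders rather than letting them leak back into the contractive factor. Once these bookkeeping steps are in place, a final triangle inequality combines the contractive single-chain piece with the drift piece to yield the stated bound.
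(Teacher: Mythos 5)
Your plan misses the two mechanisms that actually drive the paper's proof. First, the paper does not compare flows at all: it splits $(\text{I})$ into the momentum-free part
$(\text{I}_1)=\sum_c w_c\bigl(\theta^{(c)}_{t,0}-\tfrac{(K\eta_t)^2}{2}\nabla f^{(c)}(\theta^{(c)}_{t,0})\bigr)-\theta^\pi_t+\tfrac{(K\eta_t)^2}{2}\nabla f(\theta^\pi_t)$
and the momentum-dependent part $(\text{I}_2)$ carrying the $\nabla^2 f\cdot p$ terms, and uses that the $p^{(c)}_t$ are mean-zero Gaussian and independent of the positions so that the cross term vanishes \emph{exactly}: $\mathbb{E}\|(\text{I})\|^2=\mathbb{E}\|(\text{I}_1)\|^2+\mathbb{E}\|(\text{I}_2)\|^2$, with no Jensen weights. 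The contraction factor then comes from a one-step gradient-descent/FedAvg contraction (Lemma B.1 of \cite{deng2021convergence}) applied to $(\text{I}_1)$ with effective step size $h=(K\eta_t)^2/2$, which delivers $(1-\mu h)^2\le 1-\mu h=1-\mu(K\eta_t)^2/2$ \emph{together with} the drift term $2(K\eta_t)^2L\sum_c w_c\|\theta^{(c)}_{t,0}-\theta_t\|^2$ in a single inequality (no $1/c_2$ inflation); Lemma \ref{lem:dislocb} then converts the drift into $T\sum_{i=t_0}^{t-1}(K\eta_i)^4\sum_c w_c(1-w_c)L\Delta^{(c)}_i$. The term $(\text{I}_2)$ needs no contraction at all: it is bounded crudely by $\|\nabla^2 f^{(c)}\|_F^2\le L^2d$ and $\mathbb{E}\|p^{(c)}_t\|^2=\sigma_t^{(c)2}d$, giving the $\tfrac{1}{9}(K\eta_t)^6L^2\sum_c w_c\sigma_t^{(c)2}d$ remainder.

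Your proposed route has two concrete failures. (a) Lemma \ref{lem:GuD3} applies to genuine solutions of the Hamiltonian ODE, whereas the blocks in $(\text{I})$ are truncated Taylor polynomials of the form $\theta-\tfrac{h^2}{2}\nabla f(\theta)-\tfrac{h^3}{6}\nabla^2 f(\theta)p$; these are not flow trajectories, so the lemma does not apply, and bridging via ``absorb the truncation error with $L_H$'' would reintroduce exactly the remainders that terms (III) and (IV) of the main theorem's proof are designed to carry separately --- they cannot be hidden inside the two explicit remainders of this lemma. Moreover $(1-\mu(K\eta_t)^2/4)^2=1-\mu(K\eta_t)^2/2+\mu^2(K\eta_t)^4/16>1-\mu(K\eta_t)^2/2$, so squaring the flow-contraction factor lands on the wrong side of the claimed coefficient. (b) Without the exact orthogonality, your ``final triangle inequality'' combining the contractive piece with the drift and Hessian-momentum pieces forces Jensen weights $c_1+c_2=1$: either the contraction degrades from $1-\mu(K\eta_t)^2/2$ to roughly $1-\mu(K\eta_t)^2/4$, or the remainders are inflated by $1/c_2=O(1/(\mu(K\eta_t)^2))$, turning the Hessian-momentum contribution into $O((K\eta_t)^4L^2d/\mu)$ rather than the stated $O((K\eta_t)^6L^2d)$. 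Either way the inequality as stated is not recovered.
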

The proof is postponed to Section \ref{pflem:onestepupdate}.

\begin{restatable}[$0$-th order Approximation Bound]{lemmma}{lemdisthetab}\label{lem:disthetab} Let $q(0),p(0)\in \mathbb{R}^d$ be two vectors that independent of $\{\xi_k,\xi_{k-\frac{1}{2}}\}_k$. For any $F(\cdot):\mathbb{R}^d \mapsto\mathbb{R}$ satisfying smoothness Assumption \ref{assum:smooth} and variance Assumption \ref{assum:boundvar} with parameter $\sigma^2$, under step size assumption $K\eta\leq 1/\sqrt{L}$, the output $q(k):=h(F(\cdot),q(0),p(0),\eta,k)$ yield by Algorithm 3 
with gradients randomized by $\{\xi_k,\xi_{k-\frac{1}{2}}\}_k$ satisfies
\begin{align}
       & \mathbb{E}_{\xi}\| q(k)-q(0)\|^2\leq 2(k\eta)^2\|p(0)\|^2+(k\eta)^4\|\nabla F(q(0))\|^2+(k\eta)^4\sigma^2d \label{eqn:disthetab:a1}\\
      & \mathbb{E}_{\xi}\| \nabla F (q(k))-\nabla F(q(0))\|^2\leq L^2(2(k\eta)^2\|p(0)\|^2+(k\eta)^4\|\nabla F(q(0))\|^2+(k\eta)^4\sigma^2d)\label{eqn:disthetab:a2}
\end{align}
where $\mathbb{E}_{\xi}[\cdot]$ denotes the expectation over random variables $\{\xi_k,\xi_{k-\frac{1}{2}}\}_k$.
\end{restatable}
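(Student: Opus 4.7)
The starting point is the explicit expansion from the leapfrog recursion~(\ref{eq:iteraform}) applied to $q(\cdot)$:
$$q(k)-q(0)=k\eta\, p(0)-\frac{k\eta^2}{2}\widetilde g(q(0),\xi_0)-\frac{\eta^2}{2}\sum_{j=1}^{k-1}(k-j)\bigl(\widetilde g(q(j),\xi_{j-\frac12})+\widetilde g(q(j),\xi_j)\bigr).$$
I would apply Jensen's inequality (Lemma~\ref{lem:convex}) with weights $\tfrac12,\tfrac12$ to separate the deterministic momentum drift $k\eta\, p(0)$ from the gradient-sum piece, which already yields the target prefactor $2(k\eta)^2\|p(0)\|^2$ on the first term.

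For the gradient-sum piece, I would decompose each stochastic gradient as $\widetilde g(q(j),\xi)=\nabla F(q(j))+\bigl(\widetilde g(q(j),\xi)-\nabla F(q(j))\bigr)$. Because the $\{\xi_j,\xi_{j-\frac12}\}$ are independent across $j$ and each noise component is zero-mean conditional on $q(j)$ (which is $\xi_{<j}$-measurable), all cross-terms vanish under $\mathbb{E}_\xi$. This splits the squared gradient sum into (i)~a deterministic piece involving the $\nabla F(q(j))$ weighted by leapfrog coefficients, plus (ii)~a variance contribution bounded by $\sigma^2 d$ times the sum of squared coefficients, which is on the order of $\eta^4\sum_{j=0}^{k-1}(k-j)^2=O((k\eta)^4)$ and supplies the $(k\eta)^4\sigma^2 d$ term. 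Invoking $L$-smoothness, $\|\nabla F(q(j))\|^2\le 2\|\nabla F(q(0))\|^2+2L^2\|q(j)-q(0)\|^2$, recovers the term $(k\eta)^4\|\nabla F(q(0))\|^2$ and produces a self-referential remainder of the form $C L^2(k\eta)^4\max_{j\le k}\mathbb{E}_\xi\|q(j)-q(0)\|^2$.

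Under the stepsize constraint $K\eta\le 1/\sqrt L$ we have $L^2(k\eta)^4\le (k\eta)^2\le 1/L$, so a simple induction on $k$ (or a bootstrap on the monotone quantity $A_k:=\max_{j\le k}\mathbb{E}_\xi\|q(j)-q(0)\|^2$) absorbs the self-referential term and closes the bound to give (\ref{eqn:disthetab:a1}). The companion estimate~(\ref{eqn:disthetab:a2}) is then immediate from (\ref{eqn:disthetab:a1}) and Assumption~\ref{assum:smooth}, since $\|\nabla F(q(k))-\nabla F(q(0))\|^2\le L^2\|q(k)-q(0)\|^2$ pointwise and one just takes expectations. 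The main obstacle I anticipate is purely bookkeeping: matching the stated prefactors $(2,1,1)$ requires a careful choice of the Jensen weights when splitting the three contributions (momentum, initial-gradient kick, and accumulated leapfrog gradients), and the induction constant must be chosen so that the $C L^2(k\eta)^4$ self-reference is strictly contractive under $K\eta\le 1/\sqrt{L}$.
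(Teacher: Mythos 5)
Your proposal is correct and follows essentially the same route as the paper's proof: the explicit leapfrog expansion, elimination of the mean-zero stochastic-gradient noise via conditional expectations (so only the variance $\sigma^2 d$ survives with the squared leapfrog coefficients), smoothness to convert $\|\nabla F(q(j))-\nabla F(q(0))\|^2$ into $L^2\|q(j)-q(0)\|^2$, an induction on $k$ closed by the step-size condition $K\eta\le 1/\sqrt{L}$, and deducing \eqref{eqn:disthetab:a2} directly from \eqref{eqn:disthetab:a1} and Assumption \ref{assum:smooth}. The only difference is bookkeeping: the paper keeps $k\eta\,p(0)$ in the same Jensen bucket as the noise term (exploiting zero cross-correlation rather than a $\tfrac12,\tfrac12$ split) and tunes a three-way weight choice $(c_1,c_2,c_3)$ to land exactly on the stated prefactors $(2,1,1)$, which is the same constant-matching issue you correctly anticipate.
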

The proof is postponed to Section \ref{pflem:disthetab}.
\begin{restatable}[$1$-th order Approximation Bound]{lemmma}{lemdisappb}\label{lem:disappb}Under Assumptions \ref{assum:smooth}-\ref{assum:boundvar}, for any iteration $t$ and any local device $c$ in Algorithm 3, 
we have
\begin{align}
   & \mathbb{E}\| \theta^{(c)}_{t,k}-\theta^{(c)}_t-\eta_tp^{(c)}_tk\|^2\leq \frac{1}{3}(k\eta_t)^4L\big(\frac{B^{(c)}_{\nabla}}{L}+\sigma_t^{(c)2}d+ \frac{\sigma_g^2}{Lk}d\big) \label{eqn:disappb:a}\\
   & \mathbb{E}\| \nabla f^{(c)}(\theta^{(c)}_{t,k})-\nabla f^{(c)}(\theta^{(c)}_t)-\nabla^2 f^{(c)}(\theta^{(c)}_t)\eta_tp^{(c)}_tk\|^2\leq (k\eta_t)^4L^3\delta^{(c)}_k \label{eqn:disappb:b}
\end{align}
where $B^{(c)}_{\nabla}:=\sup_t\mathbb{E}\|\nabla f^{(c)}(\theta^{(c)}_{t,0})\|^2$ and for $1\leq k\leq K$
\begin{equation}\label{eqlem:disappb:1}
    \delta^{(c)}_k:=(1+\frac{c_dL_H^2}{8L^3}\sigma_t^{(c)2})\frac{B^{(c)}_{\nabla}}{L}+(\sigma_t^{(c)2}+\frac{c_dL_H^2}{L^3}\sigma_t^{(c)4})d+ \frac{\sigma_g^2d}{kL}(1+\frac{c_dL_H^2}{L^3}\sigma_t^{(c)2}).
\end{equation}
\end{restatable}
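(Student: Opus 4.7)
The plan is to treat the two inequalities (\ref{eqn:disappb:a}) and (\ref{eqn:disappb:b}) in sequence, using the first-order position residual bound (\ref{eqn:disappb:a}) as the workhorse for the gradient residual bound (\ref{eqn:disappb:b}).

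\textbf{Bound (\ref{eqn:disappb:a}).} Starting from the discrete update (\ref{eq:iteraform}), the linear drift $k\eta_t p^{(c)}_{t,0}$ is already isolated, so the residual equals
\[
\theta^{(c)}_{t,k}-\theta^{(c)}_{t,0}-k\eta_t p^{(c)}_{t,0}=-\tfrac{k\eta_t^2}{2}\widetilde{g}^{(c)}_{t,0}(\xi_0)-\tfrac{\eta_t^2}{2}\sum_{j=1}^{k-1}(k-j)\bigl(\widetilde{g}^{(c)}_{t,j}(\xi_{j-\frac{1}{2}})+\widetilde{g}^{(c)}_{t,j}(\xi_j)\bigr).
\]
I would apply Jensen's inequality (Lemma \ref{lem:convex}) with weights proportional to the coefficients above---whose total mass equals $\tfrac{k^2\eta_t^2}{2}$---to bound the squared norm by a weighted sum of $\mathbb{E}\|\widetilde{g}^{(c)}_{t,j}\|^2$ terms. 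Assumption \ref{assum:boundvar} then splits each such term into $\mathbb{E}\|\nabla f^{(c)}(\theta^{(c)}_{t,j})\|^2+\sigma_g^2 L d$; smoothness (Assumption \ref{assum:smooth}) reduces the mean piece via $\|\nabla f^{(c)}(\theta^{(c)}_{t,j})\|^2\le 2\|\nabla f^{(c)}(\theta^{(c)}_{t,0})\|^2+2L^2\|\theta^{(c)}_{t,j}-\theta^{(c)}_{t,0}\|^2$ to $B^{(c)}_\nabla$ plus a position-drift term, and the latter is dispatched by the zeroth-order bound (\ref{eqn:disthetab:a1}) of Lemma \ref{lem:disthetab}. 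A tighter Cauchy--Schwarz grouping $\|\sum_j c_j X_j\|^2\le(\sum_j c_j^2)(\sum_j\|X_j\|^2)$ with $\sum_j c_j^2\le\tfrac{k^3\eta_t^4}{6}$ and roughly $2k-1$ summands sharpens the prefactor to $\tfrac{1}{3}$, and the stepsize condition $K\eta_t\le 1/\sqrt{L}$ lets the bootstrap contribution from $\|\theta^{(c)}_{t,j}-\theta^{(c)}_{t,0}\|^2$ fold back into the main $\frac{1}{3}(k\eta_t)^4 L(\cdot)$ envelope.

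\textbf{Bound (\ref{eqn:disappb:b}).} Writing $\Delta=\theta^{(c)}_{t,k}-\theta^{(c)}_{t,0}$, I would begin from the integral Taylor identity
\[
\nabla f^{(c)}(\theta^{(c)}_{t,k})-\nabla f^{(c)}(\theta^{(c)}_{t,0})=\nabla^2 f^{(c)}(\theta^{(c)}_{t,0})\Delta+\int_0^1\bigl(\nabla^2 f^{(c)}(\theta^{(c)}_{t,0}+s\Delta)-\nabla^2 f^{(c)}(\theta^{(c)}_{t,0})\bigr)\Delta\,ds.
\]
Subtracting $\nabla^2 f^{(c)}(\theta^{(c)}_{t,0})\,k\eta_t p^{(c)}_{t,0}$ from both sides and using $(a+b)^2\le 2a^2+2b^2$ splits the target squared norm into (i) a linearised piece $\|\nabla^2 f^{(c)}(\theta^{(c)}_{t,0})(\Delta-k\eta_t p^{(c)}_{t,0})\|^2\le L^2\|\Delta-k\eta_t p^{(c)}_{t,0}\|^2$, which in expectation is bounded by $L^2$ times (\ref{eqn:disappb:a}); and (ii) a Hessian remainder $R_H$ which by Assumption \ref{assum:Hsmooth} and Jensen's inequality on the $s$-integral satisfies $\|R_H\|^2\le\tfrac{L_H^2}{3}\|\Delta\|^2\|\Delta\|_\infty^2$. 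For (ii), I would decompose $\Delta=k\eta_t p^{(c)}_{t,0}+r$ with $r$ the first-order residual of (\ref{eqn:disappb:a}), expand $\|\Delta\|^2\|\Delta\|_\infty^2$ into a dominant Gaussian-only term $\|k\eta_t p^{(c)}_{t,0}\|^2\|k\eta_t p^{(c)}_{t,0}\|_\infty^2$ plus mixed and residual-only contributions, and exploit the independence of $p^{(c)}_{t,0}$ from $\{\xi^{(c)}_{t,\cdot}\}$ to factorize expectations. A Gaussian $\ell_\infty$-moment estimate of the form $\mathbb{E}[\|p^{(c)}_{t,0}\|^2\|p^{(c)}_{t,0}\|_\infty^2]\lesssim c_d\sigma_t^{(c)4}$ then produces the leading $\tfrac{c_d L_H^2}{L^3}\sigma_t^{(c)4}d$ piece of $\delta^{(c)}_k$, while the mixed terms (handled by Cauchy--Schwarz against (\ref{eqn:disappb:a}) and (\ref{eqn:disthetab:a1})) generate the $\tfrac{c_d L_H^2}{8L^3}\sigma_t^{(c)2}\tfrac{B^{(c)}_\nabla}{L}$ and $\tfrac{c_d L_H^2}{L^3}\sigma_t^{(c)2}\tfrac{\sigma_g^2 d}{kL}$ cross terms. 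Factoring $(k\eta_t)^4 L^3$ out of the whole bound then reconstructs $\delta^{(c)}_k$ exactly.

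\textbf{Main obstacle.} The delicate step is (ii): the quantity $\mathbb{E}[\|\Delta\|^2\|\Delta\|_\infty^2]$ couples the $\ell_2$ and $\ell_\infty$ norms of a random vector whose distribution is only implicitly characterized. A naive use of $\|\cdot\|_\infty\le\|\cdot\|$ would collapse this to $\mathbb{E}\|\Delta\|^4$, cost a factor of $d$, and wipe out the sharper $c_d$ dependence. The plan's success hinges on the independence of the freshly resampled momentum $p^{(c)}_{t,0}$ from the stochastic-gradient noise, which lets one factor the dominant quartic term and invoke Gaussian $\ell_\infty$ moment bounds to retain the correct $c_d$ scaling.
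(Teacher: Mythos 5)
Your overall architecture (Taylor expansion of the gradient, Lemma \ref{lem:disthetab} for the position drift, Gaussian $\ell_\infty$-moment bounds for the $c_d$ factor) matches the paper's, but there are two concrete gaps, one in each half.

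For \eqref{eqn:disappb:a}: you treat the stochastic gradients $\widetilde g^{(c)}_{t,j}$ as generic vectors and bound $\|\sum_j c_j \widetilde g_j\|^2$ by Jensen or by Cauchy--Schwarz $(\sum_j c_j^2)(\sum_j\|\widetilde g_j\|^2)$. Either way, each of the roughly $2k$ summands contributes its full variance $\sigma_g^2 L d$ from Assumption \ref{assum:boundvar}, so the noise term in your bound scales like $(k\eta_t)^4\sigma_g^2 L d$ rather than the claimed $\tfrac{1}{3}(k\eta_t)^4\sigma_g^2 d/k$ --- you lose the factor $1/k$ that is the whole point of that term in the statement. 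The paper recovers it by writing $\widetilde g_{t,j}=(\widetilde g_{t,j}-\nabla f^{(c)}(\theta^{(c)}_{t,j}))+\nabla f^{(c)}(\theta^{(c)}_{t,j})$ and observing that the centered parts form a mean-zero martingale in $\{\xi_{j-\frac12},\xi_j\}$, so their squared sum equals $\sum_j c_j^2\,\mathrm{Var}\approx \tfrac{k^3\eta_t^4}{6}\sigma_g^2 d$ --- a factor $k$ smaller than what your grouping yields. Your decomposition never isolates the mean-zero part, so this saving is unavailable.

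For \eqref{eqn:disappb:b}: you put $\Delta=\theta^{(c)}_{t,k}-\theta^{(c)}_{t,0}$ in the vector slot of Assumption \ref{assum:Hsmooth}, arriving at a remainder controlled by $\mathbb{E}[\|\Delta\|^2\|\Delta\|_\infty^2]$, and you correctly flag this as the delicate point. But your proposed resolution --- split $\Delta=k\eta_t p^{(c)}_t+r$ and factor by independence --- still leaves the residual-only term $\mathbb{E}[\|r\|^2\|r\|_\infty^2]\le\mathbb{E}\|r\|^4$, a fourth moment of a non-Gaussian quantity (built from gradients and stochastic noise) that none of the available lemmas control: Lemma \ref{lem:disthetab} gives only second moments, and Lemma \ref{lem:MaxGau} applies only to Gaussians. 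The paper sidesteps this entirely with a different splitting: it writes the target as $\int_0^1\nabla^2 f^{(c)}(\theta_0+s\Delta)(\Delta-k\eta_t p^{(c)}_t)\,ds+\int_0^1\bigl(\nabla^2 f^{(c)}(\theta_0+s\Delta)-\nabla^2 f^{(c)}(\theta_0)\bigr)k\eta_t p^{(c)}_t\,ds$, so the $\ell_\infty$ norm of Assumption \ref{assum:Hsmooth} lands only on the Gaussian $k\eta_t p^{(c)}_t$; the first piece is $L^2$ times \eqref{eqn:disappb:a}, and the second needs only $\mathbb{E}[\|\Delta\|^2\|k\eta_t p^{(c)}_t\|_\infty^2]$, which factorizes via $\mathbb{E}_\xi$ conditional on $p^{(c)}_t$ followed by Lemma \ref{lem:MaxGau}. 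You should adopt that decomposition; as written, your plan for the remainder cannot be completed with the stated tools.
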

The proof is postponed to Section \ref{pflem:disappb}.

\begin{restatable}[Uniform bound]{lemmma}{lemdisunib}\label{lem:disunib}Define $\theta^*=\mathrm{argmin}_{\theta}f(\theta)$. Under Assumptions \ref{assum:convex}-\ref{assum:boundvar} and \ref{assum:boundvar}, for  any local device $1\leq c\leq N$ in Algorithm 3 
and any iteration $t$ such that $t\not\equiv 0 (\mathrm{mod }\ T)$, we have
\begin{align}
 \mathbb{E}\|\theta^{(c)}_{t+1}-\theta^*\|^2
   \leq(1-\frac{\mu(K\eta_t)^2}{4}+\frac{2(K\eta_t)^6L^4}{\mu}\mathbb{I}_{\{K\geq 2\}})\mathbb{E}\|\theta^{(c)}_{t,0}-\theta^*\|^2+\frac{(K\eta_t)^2L}{\mu}B^{(c)}_a, \label{eqn:disunib:a}
\end{align}
where
\begin{align*}
 B^{(c)}_a=&\underbrace{\frac{2L}{\mu}d}_{\text{sampling cost}}+\underbrace{\frac{4\|\nabla f^{(c)}(\theta^*)\|^2}{3\mu}}_{\text{local heterogeneity}}+\underbrace{\frac{\sigma_g^2}{7K\mu}d}_{\text{stochastic gradient effect}}.
\end{align*}
Further if we choose $(K\eta_t)^2\leq \frac{\mu}{4L^2}$ , and for any t such that $t\equiv 0 (\mathrm{mod }\ T)$, we set $\eta_{t}=\eta_{t+1}=\cdots=\eta_{t+T-1}$ 
\begin{align}
&\sup_{t\geq 0} \mathbb{E}\|\theta^{(c)}_{t,0}-\theta^*\|^2\leq D+\frac{8}{\mu}(B^{(c)}_a+\sum_{c=1}^Nw_cB^{(c)}_a) \label{eqn:disunib:a1}\\
&\sup_{t\geq 0}\mathbb{E}\|\nabla f^{(c)}(\theta^{(c)}_{t,0})\|^2\leq 2L^2(D+\frac{8}{\mu}(B^{(c)}_a+\sum_{c=1}^Nw_cB^{(c)}_a))+2\|\nabla f^{(c)}(\theta^*)\|^2, \label{eqn:disunib:a2}
\end{align}
where $D=\|\theta_0-\theta^*\|^2$, representing the initialization effect. 
\end{restatable}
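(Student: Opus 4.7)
The plan is to derive the one-step recursion (\ref{eqn:disunib:a}) and then iterate it, interleaving a Jensen step at each communication round to obtain (\ref{eqn:disunib:a1}); the gradient bound (\ref{eqn:disunib:a2}) will then fall out of $L$-smoothness.

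First I would expand the leapfrog iteration via (\ref{eq:iteraform}) as
\[
\theta^{(c)}_{t+1}-\theta^* \;=\; \bigl(\theta^{(c)}_{t,0}-\theta^*\bigr)-\tfrac{(K\eta_t)^2}{2}\nabla f^{(c)}(\theta^{(c)}_{t,0})+K\eta_t p^{(c)}_t+R_t,
\]
where $R_t$ collects the higher-order leapfrog discretization and the stochastic-gradient noise. Squaring and using $\mu$-strong convexity together with a Young splitting on the cross term $\langle \nabla f^{(c)}(\theta^*),\theta^{(c)}_{t,0}-\theta^*\rangle$ (needed because $\theta^*$ is the global and not the local minimizer, so $\nabla f^{(c)}(\theta^*)\neq 0$) yields the base contraction $1-\mu(K\eta_t)^2/2$ and produces the heterogeneity piece $\propto\|\nabla f^{(c)}(\theta^*)\|^2/\mu$ inside $B^{(c)}_a$. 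Since $p^{(c)}_t$ is independent of $\theta^{(c)}_{t,0}$ and $R_t$ with mean zero, the momentum contributes only its variance $(K\eta_t)^2\sigma^{(c)2}_t$, which becomes the sampling piece $\propto Ld/\mu$ in $B^{(c)}_a$. The residual $R_t$ is handled by Lemma \ref{lem:disappb}: its stochastic-gradient part contributes a $(K\eta_t)^2\sigma_g^2 Ld$ piece after invoking Assumption \ref{assum:boundvar}, while its discretization part --- absent when $K=1$, which is exactly why the indicator $\mathbb{I}_{\{K\geq 2\}}$ appears in (\ref{eqn:disunib:a}) --- plugs into the $(k\eta_t)^4L^3\delta^{(c)}_k$ estimate from Lemma \ref{lem:disappb}. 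A Cauchy--Schwarz with a suitable Young weight then sends the $\|\theta^{(c)}_{t,0}-\theta^*\|^2$-dependent portion of $\delta^{(c)}_k$ into the multiplicative correction $2(K\eta_t)^6L^4/\mu$, while its remaining ($d$- and $B^{(c)}_\nabla$-dependent) portion folds into $\tfrac{(K\eta_t)^2L}{\mu}B^{(c)}_a$.

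Second, the step-size condition $(K\eta_t)^2\leq\mu/(4L^2)$ makes $2(K\eta_t)^6L^4/\mu$ a fraction of $\mu(K\eta_t)^2/4$, so inside any local round the recursion takes the standard contractive form $a^{(c)}_{t+1}\leq(1-\tilde\rho)a^{(c)}_t+C^{(c)}$ with $\tilde\rho=\Theta(\mu(K\eta_t)^2)$ and $C^{(c)}\propto (K\eta_t)^2 L B^{(c)}_a/\mu$. Unrolling over the $T$ local steps gives a bound on $\mathbb{E}\|\theta^{(c)}_{(m+1)T}-\theta^*\|^2$ in terms of $\mathbb{E}\|\theta_{mT}-\theta^*\|^2$; the averaging step at $t\equiv 0 \pmod T$ is handled by Jensen's inequality (Lemma \ref{lem:convex}), $\mathbb{E}\|\theta_{(m+1)T}-\theta^*\|^2\leq\sum_c w_c\mathbb{E}\|\theta^{(c)}_{(m+1)T}-\theta^*\|^2$, which collapses the per-device bound into a cross-round recursion whose fixed point carries the weighted mean $\sum_c w_c B^{(c)}_a$. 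Folding the cross-round fixed point back through the within-round formula leaves the local $B^{(c)}_a$ sitting beside the averaged $\sum_c w_c B^{(c)}_a$, yielding (\ref{eqn:disunib:a1}). Estimate (\ref{eqn:disunib:a2}) is then immediate from $L$-smoothness applied at $\theta^{(c)}_{t,0}$ against $\theta^*$ followed by the triangle inequality.

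The main technical obstacle is self-consistency between (\ref{eqn:disunib:a2}) and the residual analysis: Lemma \ref{lem:disappb}'s $\delta^{(c)}_k$ depends on $B^{(c)}_\nabla=\sup_t\mathbb{E}\|\nabla f^{(c)}(\theta^{(c)}_{t,0})\|^2$, exactly the quantity I am trying to bound. I plan to close this loop by first carrying out stages one and two under an a priori finite $B^{(c)}_\nabla$, then using (\ref{eqn:disunib:a2}) to identify $B^{(c)}_\nabla$ with the target supremum, and finally checking that $(K\eta_t)^2\leq\mu/(4L^2)$ is tight enough to absorb the induced factor $L^2$ without blowing up the constants. A minor bookkeeping matter is that the momentum variance $\sigma^{(c)2}_t=(\rho+(1-\rho)/w_c)d$ depends on the weight $w_c$, but this carries through transparently into the $Ld/\mu$ piece of $B^{(c)}_a$ without altering the structure of the argument.
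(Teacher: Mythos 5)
Your skeleton matches the paper's proof: a one-step decomposition into a contractive gradient-descent-like term, a mean-zero momentum/stochastic-gradient contribution, and a discretization-plus-heterogeneity residual; then a two-level induction (within a local round, then across rounds via Jensen at the averaging steps); then $L$-smoothness for \eqref{eqn:disunib:a2}. However, two concrete points in your plan would derail the execution. First, Lemma \ref{lem:disappb} is the wrong tool for the residual here. Its estimate $(k\eta_t)^4L^3\delta^{(c)}_k$ bounds the \emph{second-order} Taylor remainder $\nabla f^{(c)}(\theta^{(c)}_{t,k})-\nabla f^{(c)}(\theta^{(c)}_{t,0})-\nabla^2 f^{(c)}(\theta^{(c)}_{t,0})k\eta_t p^{(c)}_t$, and $\delta^{(c)}_k$ carries the Hessian-smoothness constant $L_H$ throughout. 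The target $B^{(c)}_a$ contains no $L_H$, and none is needed: the paper bounds the residual term $\sum_{k}(K-k)\bigl(\nabla f^{(c)}(\theta^{(c)}_{t,k})-\nabla f^{(c)}(\theta^{(c)}_{t,0})\bigr)$ directly via the zeroth-order bound \eqref{eqn:disthetab:a2} of Lemma \ref{lem:disthetab}, which uses only $L$-smoothness. If you insist on Lemma \ref{lem:disappb} you must additionally account for the Hessian--momentum term you subtracted, and you end up with a strictly weaker, $L_H$-dependent version of $B^{(c)}_a$.

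Second, and more importantly, your treatment of the circular dependence on $B^{(c)}_{\nabla}$ is internally inconsistent. You say the $B^{(c)}_{\nabla}$-dependent part of the residual ``folds into $\frac{(K\eta_t)^2L}{\mu}B^{(c)}_a$,'' but $B^{(c)}_a$ as stated contains no $B^{(c)}_{\nabla}$ term; if it did, \eqref{eqn:disunib:a1} would still be circular and your proposed bootstrap (assume $B^{(c)}_{\nabla}$ finite a priori, then identify it with the supremum) would have to verify both that the supremum is finite and that the self-map has a contraction coefficient below one --- neither of which you establish. The paper avoids the bootstrap entirely: inside the one-step bound it replaces every occurrence of $\mathbb{E}\|\nabla f^{(c)}(\theta^{(c)}_{t,0})\|^2$ by $\frac{1}{c_5}L^2\mathbb{E}\|\theta^{(c)}_{t,0}-\theta^*\|^2+\frac{1}{c_6}\|\nabla f^{(c)}(\theta^*)\|^2$, so the distance term is absorbed into the multiplicative correction $\frac{2(K\eta_t)^6L^4}{\mu}$ (this is exactly where that factor comes from, and where $(K\eta_t)^2\le\mu/(4L^2)$ is used to keep the net contraction at $1-\mu(K\eta_t)^2/8$), while only the fixed quantity $\|\nabla f^{(c)}(\theta^*)\|^2$ survives into $B^{(c)}_a$. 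The recursion is then self-contained and no a priori bound on $B^{(c)}_{\nabla}$ is needed; \eqref{eqn:disunib:a2} is a corollary, not an ingredient. You gesture at this substitution in one clause but then abandon it for the bootstrap; you should commit to the substitution and drop the bootstrap.
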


\begin{restatable}[Local bound]{lemmma}{lemdislocb}\label{lem:dislocb}Under Assumptions \ref{assum:smooth}-\ref{assum:boundvar}, for any iteration $k$ and any local device $c$ in Algorithm 3, 
we have 
\[
\sum_{c=1}^Nw_c\mathbb{E}\|\theta^{(c)}_t-\theta_t\|^2\leq  T\sum_{i=t_0}^{t-1} (K\eta_i)^4\sum_{c=1}^Nw_c(1-w_c)L\Delta^{(c)}_i
\]
where $t_0$ is the latest communication step before $t$, and
\begin{equation}\label{eqlem:dislocb:1}
    \Delta^{(c)}_i:=\frac{5}{9}\frac{B^{(c)}_{\nabla}}{L}+\frac{1}{81}\sigma_t^{(c)2}d+\frac{1}{6KL}\sigma_g^2d+\frac{\sum_{i=1}^Nw_c\sigma_t^{(c)2} -1}{L(K\eta_i)^2} d 
\end{equation}
 with $B^{(c)}_{\nabla}=\sup_{t\geq 0}\mathbb{E}\|\nabla f^{(c)}(\theta^{(c)}_{t,0})\|^2$.
\end{restatable}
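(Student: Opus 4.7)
The plan is to bound the per-device local drift $\theta^{(c)}_t - \theta_t$ by expressing it as a telescoping sum over the local iterations since the last communication step $t_0$, and then exploiting the independence of freshly drawn momenta (and stochastic-gradient oracles) across devices to obtain the $w_c(1-w_c)$ weighting in the final bound.

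First, since the broadcast at step $t_0$ sets $\theta^{(c)}_{t_0}=\theta_{t_0}$ for every device $c$, I write
\[
\theta^{(c)}_t - \theta_t = \sum_{i=t_0}^{t-1}\bigl(\psi^{(c)}_i - \bar{\psi}_i\bigr),\qquad \psi^{(c)}_i := \theta^{(c)}_{i+1}-\theta^{(c)}_i,\ \ \bar{\psi}_i := \sum_{c'=1}^N w_{c'}\psi^{(c')}_i.
\]
Applying Jensen's inequality (Lemma \ref{lem:convex}) to the at most $T$ summands and then taking the weighted sum over $c$ reduces the claim to a per-iteration variance bound
\[
\sum_c w_c\,\mathbb{E}\|\theta^{(c)}_t-\theta_t\|^2 \leq T\sum_{i=t_0}^{t-1}\sum_c w_c\,\mathbb{E}\|\psi^{(c)}_i-\bar{\psi}_i\|^2.
\]

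Next, I expand $\psi^{(c)}_i$ via the iteration formula (\ref{eq:iteraform}) into three pieces: a momentum term $K\eta_i p^{(c)}_i$, deterministic gradient-based updates of order $\eta_i^2$, and mean-zero stochastic-gradient perturbations. Using the identity $\sum_c w_c\mathbb{E}\|x^{(c)}-\bar{x}\|^2 = \sum_c w_c\mathbb{E}\|x^{(c)}\|^2 - \mathbb{E}\|\bar{x}\|^2$ together with independence of $\{p^{(c)}_i\}_c$ and $\{\xi^{(c)}_{i,\cdot}\}_c$ across devices, the momentum contribution simplifies via $\mathbb{E}\|\bar{p}_i\|^2 = d$ and $\mathbb{E}\|p^{(c)}_i\|^2 = \sigma^{(c)2}_i d$ to $\bigl(\sum_{c'}w_{c'}\sigma^{(c')2}_i - 1\bigr)d$, producing the last term of $\Delta^{(c)}_i$ after multiplication by $(K\eta_i)^4 L$; the stochastic-gradient part yields the $\sigma_g^2 d/(KL)$ term by the same variance decomposition (noise is independent across devices and leapfrog steps, and has variance bounded by Assumption \ref{assum:boundvar}).

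For the deterministic gradient terms I apply Lemma \ref{lem:disthetab} (with $F=f^{(c)}$ and $\sigma^2=\sigma_g^2 L$) to control, per device, the per-iteration position and gradient drifts on the order of $(K\eta_i)^2\sigma^{(c)2}_i d$ and $(K\eta_i)^4 B^{(c)}_\nabla$; after pooling with weights $w_c$, the uniform bound $B^{(c)}_\nabla = \sup_t \mathbb{E}\|\nabla f^{(c)}(\theta^{(c)}_{t,0})\|^2$ supplied by Lemma \ref{lem:disunib} accounts for the $B^{(c)}_\nabla/L$ and $\sigma^{(c)2}_i d$ terms in $\Delta^{(c)}_i$. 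Combining all four contributions and using the step size condition $K\eta_i\leq 1/\sqrt{L}$ to absorb the higher-order $\eta$-powers into constants delivers the claimed $(K\eta_i)^4 L\,\Delta^{(c)}_i$ form, and summing over $i=t_0,\ldots,t-1$ (at most $T$ iterations) picks up the prefactor $T$.

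The main technical hurdle is the aggregation step: precisely separating parts of $\psi^{(c)}_i$ that are independent across devices (momenta, stochastic noise) — which contribute through a genuine variance decomposition producing the $w_c(1-w_c)$ weighting — from the deterministic gradient parts, whose device-wise heterogeneity must be bounded by the cruder Jensen inequality $\sum_c w_c\|x^{(c)}-\bar x\|^2 \leq \sum_c w_c \|x^{(c)}\|^2$ and then packaged into the same $w_c(1-w_c)$ form (using $w_c(1-w_c)\le w_c$ where needed). Tracking the precise numerical coefficients ($\tfrac{5}{9}$, $\tfrac{1}{81}$, $\tfrac{1}{6}$, \ldots) requires careful balancing of the higher-order $\eta$-expansions from Lemma \ref{lem:disthetab} against the leading-order momentum variance, and this bookkeeping — rather than any single non-trivial inequality — constitutes the heart of the argument.
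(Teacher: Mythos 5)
Your proposal follows essentially the same route as the paper's proof: telescoping the drift from the last communication step $t_0$, Jensen's inequality over the at most $T$ per-iteration increments, a variance decomposition separating the random (momentum and stochastic-gradient) contributions from the deterministic gradient contributions, and Lemma \ref{lem:disthetab} together with the uniform bound $B^{(c)}_{\nabla}$ from Lemma \ref{lem:disunib} to control the latter. One small caveat: the momenta $\{p^{(c)}_i\}_c$ are \emph{not} independent across devices in this algorithm (they carry a correlation governed by $\rho$), but the identity $\sum_c w_c\mathbb{E}\|x^{(c)}-\bar x\|^2=\sum_c w_c\mathbb{E}\|x^{(c)}\|^2-\mathbb{E}\|\bar x\|^2$ you invoke is purely algebraic, so your computed momentum contribution $\bigl(\sum_{c'}w_{c'}\sigma^{(c')2}_i-1\bigr)d$ still comes out exactly as in the paper.
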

The proof is postponed to Section \ref{pflem:dislocb}.
\begin{restatable}[Oracle bounds]{lemmma}{lemFLorab}\label{lem:FLorab}If $f$ satisfies Assumptions \ref{assum:convex}-\ref{assum:boundvar}, then for any iteration $t$ and any $u\leq K\eta_t$, we have
\begin{align}
    &\mathbb{E}\|\theta^\pi_t(u)-\theta^\pi_t-p_tu\|^2\leq \frac{L^2d}{4\mu}u^4, \label{eqn:FLorab:a}\\
    &\mathbb{E}\|\theta^\pi_t(u)-\theta^\pi_t\|^2\|p_t\|^2\leq \frac{1}{3}u^2c_dd,\label{eqn:FLorab:b}\\
    &\mathbb{E}\|\nabla f(\theta^\pi_t(u))-\nabla f(\theta^\pi_t)-\nabla^2 f(\theta^\pi_t)p_tu\|^2\leq \delta^\pi u^4, \label{eqn:FLorab:c}
\end{align}
where $c_d=128+32\log^2(2d)$ and 
\begin{equation}\label{eqlem:FLorab:1}
    \delta^\pi:=\frac{5}{4}\big(\frac{L}{\mu}+L_H^2c_d/L^3\big)d.
\end{equation}
\end{restatable}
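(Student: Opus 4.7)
The plan is to exploit two facts about the continuous HMC flow started at stationarity: (i) the joint law $\pi\otimes N(0,I_d)$ is invariant under the Hamiltonian flow, so for every $v\in[0,u]$ the marginals satisfy $\theta^\pi_t(v)\sim\pi$ and $p(v)\sim N(0,I_d)$ (although $p(v)$ and $p_t=p(0)$ are correlated along the trajectory); and (ii) the ``oracle'' bound $\mathbb{E}\|\nabla f(\theta^\pi_t(v))\|^2\le L^2d/\mu$ obtained by combining Assumption~\ref{assum:smooth} ($L$-smoothness and $\nabla f(\theta^*)=0$) with Lemma~\ref{lem:Oracle}. These are the only probabilistic inputs; everything else is elementary integration.

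For \eqref{eqn:FLorab:a}, I would rewrite the double integral in \eqref{eq:contin} as a single integral via Fubini to get
\[
\theta^\pi_t(u)-\theta^\pi_t-p_tu=-\int_0^u (u-v)\,\nabla f(\theta^\pi_t(v))\,dv.
\]
Then apply the weighted form of Jensen's inequality from Lemma~\ref{lem:convex} with weight $\lambda(v)=2(u-v)/u^2$ (which integrates to $1$ on $[0,u]$) to obtain $\|\cdot\|^2\le \tfrac{u^2}{2}\int_0^u (u-v)\|\nabla f(\theta^\pi_t(v))\|^2\,dv$. Taking expectations and invoking stationarity plus the oracle bound gives exactly $\tfrac{1}{4}u^4 L^2 d/\mu$.

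For \eqref{eqn:FLorab:b}, I would start from $\theta^\pi_t(u)-\theta^\pi_t=\int_0^u p(s)\,ds$ and apply Cauchy--Schwarz in the integral to get $\|\theta^\pi_t(u)-\theta^\pi_t\|^2\le u\int_0^u\|p(s)\|^2\,ds$. Multiplying by $\|p_t\|^2$ and taking expectations reduces the task to estimating $\mathbb{E}[\|p(s)\|^2\|p_t\|^2]$. Since $p(s)$ and $p_t$ share the same Gaussian marginal but are dependent, I decouple via Cauchy--Schwarz and then use a standard moment bound for the maximum of $d$ standard normals; the factor $c_d=128+32\log^2(2d)$ arises as the fourth-moment bound obtained from the Gaussian maximal inequality. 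Integrating in $s$ recovers the stated $\tfrac{1}{3}u^2 c_d d$.

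For \eqref{eqn:FLorab:c}, I would use the exact Taylor identity
\[
\nabla f(\theta^\pi_t(u))-\nabla f(\theta^\pi_t)=\nabla^2 f(\theta^\pi_t)\Delta + \int_0^1\bigl[\nabla^2 f(\theta^\pi_t+s\Delta)-\nabla^2 f(\theta^\pi_t)\bigr]\Delta\,ds
\]
with $\Delta=\theta^\pi_t(u)-\theta^\pi_t$, then subtract $\nabla^2 f(\theta^\pi_t)p_tu$ to split the target into (A) $\nabla^2 f(\theta^\pi_t)(\Delta-p_tu)$ and (B) the integral term. For (A), $L$-smoothness followed by \eqref{eqn:FLorab:a} yields $\mathbb{E}\|A\|^2\le L^2\cdot\tfrac{1}{4}u^4 L^2 d/\mu$, producing the $L/\mu$ contribution in $\delta^\pi$. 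For (B), Jensen and Assumption~\ref{assum:Hsmooth} give $\|B\|^2\le\tfrac{L_H^2}{3}\|\Delta\|^2\|\Delta\|_\infty^2$; after bounding $\|\Delta\|_\infty^2$ by a $\|p_t\|^2$-type leading term plus a smaller-order remainder controlled by \eqref{eqn:FLorab:a}, the dominant expectation is fed back into \eqref{eqn:FLorab:b} and yields the $L_H^2 c_d/L^3$ contribution in $\delta^\pi$. Collecting constants into the factor $5/4$ completes (c). The hard part is \eqref{eqn:FLorab:b}: the correlation between $p(s)$ and $p_t$ along the trajectory prevents direct factoring of the expectation, and the Cauchy--Schwarz/Gaussian-maximum step is precisely what introduces the logarithmic $c_d$ that then propagates to $\delta^\pi$ in \eqref{eqn:FLorab:c}.
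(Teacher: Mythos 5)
Your part (a) is the paper's argument verbatim (Jensen with the weight $2(u-v)/u^2$ on the collapsed double integral, then $L$-smoothness, stationarity, and Lemma~\ref{lem:Oracle}). Part (b) is where you genuinely diverge: the paper does \emph{not} integrate $\int_0^u p(s)\,ds$ and estimate $\mathbb{E}[\|p(s)\|^2\|p_t\|_\infty^2]$ pointwise; instead it views $\theta^\pi_t(u)$ and $\theta^\pi_t$ as the time-$u/2$ outputs of two Hamiltonian flows launched from the common midpoint $\theta^\pi_t(u/2)$ with opposite momenta $\pm p_t(u/2)$, applies the perturbation bound of Lemma~\ref{lem:GuD3} to get the pathwise estimate $\|\theta^\pi_t(u)-\theta^\pi_t\|\le Cu\|p_t(u/2)\|$, and then invokes Lemma~\ref{lem:MaxGau}. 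Your route works because the AM--GM proof of Lemma~\ref{lem:MaxGau} never uses independence of the two Gaussians, so the correlation between $p(s)$ and $p_t$ that you worry about is harmless; but it yields $u^2c_dd$ rather than $\tfrac13 u^2c_dd$, so the constant $5/4$ in $\delta^\pi$ would have to be re-tuned. (Note also that the lemma statement's $\|p_t\|^2$ is really $\|p_t\|_\infty^2$, as both the paper's proof and its use in (c) confirm; you implicitly read it the right way.)

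The one substantive soft spot is your decomposition in (c). The paper writes $(a)=\int_0^1\nabla^2 f(\theta^\pi_t+s\Delta)(\Delta-p_tu)\,ds$ and $(b)=\int_0^1\bigl[\nabla^2 f(\theta^\pi_t+s\Delta)-\nabla^2 f(\theta^\pi_t)\bigr]p_tu\,ds$, so the Hessian \emph{difference} always acts on the Gaussian $p_tu$, and the required moment is exactly $\mathbb{E}\|\Delta\|^2\|p_tu\|_\infty^2$, i.e.\ \eqref{eqn:FLorab:b}. Your version puts $\nabla^2 f(\theta^\pi_t)$ on $(\Delta-p_tu)$ and the Hessian difference on $\Delta$, which forces you to control $\mathbb{E}\|\Delta\|^2\|\Delta\|_\infty^2$. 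Splitting $\|\Delta\|_\infty\le\|\Delta-p_tu\|+u\|p_t\|_\infty$ produces a cross term $\mathbb{E}\|\Delta\|^2\|\Delta-p_tu\|^2$ that is \emph{not} controlled by \eqref{eqn:FLorab:a} and \eqref{eqn:FLorab:b}, which are second-moment statements; you would need fourth moments of $\|\nabla f(\theta^\pi_t(\cdot))\|$ along the trajectory, and Lemma~\ref{lem:Oracle} only supplies second moments. This is repairable (pathwise bounds from the flow plus Gaussian fourth moments would do it), but as sketched the ``smaller-order remainder controlled by \eqref{eqn:FLorab:a}'' step does not follow; switching to the paper's split, where the $L$-bounded interpolated Hessian absorbs $\Delta-p_tu$, removes the difficulty entirely.
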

The proof is postponed to Section \ref{pflem:FLorab}.
\begin{restatable}[Maximal Gaussian bounds]{lemmma}{lemMaxGau}\label{lem:MaxGau} Suppose $p_1,p_2\sim N(0,\sigma^2\mathbb{I}_d)$, then we have
\[
\mathbb{E}\|p_1\|_{\infty}^2\leq c_d\sigma^2,\qquad\mathbb{E}\|p_1\|^2\|p_2\|_{\infty}^2\leq c_d d\sigma^4,\qquad\mathbb{E}\|p_2\|_{\infty}^4\leq c_d \sigma^4,
\]
with $c_d=128+32\log^2(2d)$.
\end{restatable}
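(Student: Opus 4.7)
The plan is to derive all three inequalities from one workhorse: the coordinate-wise Gaussian tail bound $\mathbb{P}(|p_{i}|>t) \leq 2 e^{-t^{2}/(2\sigma^{2})}$ combined with a union bound, giving $\mathbb{P}(\|p\|_{\infty}>t) \leq 2d\, e^{-t^{2}/(2\sigma^{2})}$ for $p\sim N(0,\sigma^{2}\mathbb{I}_{d})$. From this I would use the standard tail integration identities $\mathbb{E}\|p\|_{\infty}^{2}=\int_{0}^{\infty}2s\,\mathbb{P}(\|p\|_{\infty}>s)\,ds$ and $\mathbb{E}\|p\|_{\infty}^{4}=\int_{0}^{\infty}4s^{3}\,\mathbb{P}(\|p\|_{\infty}>s)\,ds$, splitting each integral at the natural threshold $t_{0}=\sigma\sqrt{2\log(2d)}$.

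For the first inequality, using the trivial bound $\mathbb{P}(\cdot)\leq 1$ on $[0,t_{0}]$ and the Gaussian tail on $[t_{0},\infty)$ yields, after a change of variables $u=s^{2}/(2\sigma^{2})$, $\mathbb{E}\|p_{1}\|_{\infty}^{2}\leq t_{0}^{2}+2\sigma^{2}=2\sigma^{2}(1+\log(2d))$, which is comfortably bounded by $c_{d}\sigma^{2}=(128+32\log^{2}(2d))\sigma^{2}$. For the fourth-moment bound, the same change of variables turns the tail integral into $16 d\sigma^{4}\int_{\log(2d)}^{\infty}u e^{-u}\,du=8\sigma^{4}(\log(2d)+1)$, so $\mathbb{E}\|p_{2}\|_{\infty}^{4}\leq 4\sigma^{4}\log^{2}(2d)+8\sigma^{4}\log(2d)+8\sigma^{4}$. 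Applying the elementary inequality $8x\leq 4x^{2}+4$ to $x=\log(2d)$ absorbs the middle term and gives $\mathbb{E}\|p_{2}\|_{\infty}^{4}\leq 8\sigma^{4}\log^{2}(2d)+12\sigma^{4}\leq c_{d}\sigma^{4}$.

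For the mixed bound, I would avoid assuming independence of $p_{1},p_{2}$ by reducing to the fourth-moment bound just obtained. Using the coordinate-wise domination $\|p_{1}\|^{2}=\sum_{i}p_{1,i}^{2}\leq d\|p_{1}\|_{\infty}^{2}$ and the AM-GM inequality,
\[
\|p_{1}\|^{2}\|p_{2}\|_{\infty}^{2}\leq d\|p_{1}\|_{\infty}^{2}\|p_{2}\|_{\infty}^{2}\leq \tfrac{d}{2}\bigl(\|p_{1}\|_{\infty}^{4}+\|p_{2}\|_{\infty}^{4}\bigr),
\]
so taking expectations and applying the fourth-moment estimate to each term gives $\mathbb{E}\|p_{1}\|^{2}\|p_{2}\|_{\infty}^{2}\leq d c_{d}\sigma^{4}$. (If independence is in force, one may equivalently factor the expectation to get the sharper $d\sigma^{2}\cdot\mathbb{E}\|p_{2}\|_{\infty}^{2}\leq d c_{d}\sigma^{4}$.)

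The only delicate step is bookkeeping the constants so that all three residuals fit inside $128+32\log^{2}(2d)$; this is what forces the choices $8x\leq 4x^{2}+4$ and the slack factors of $2$ and $4$ above, but none of it requires anything beyond elementary manipulation of the tail integrals.
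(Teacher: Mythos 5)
Your proof is correct, and it reaches all three bounds by a genuinely different route from the paper. The paper controls $\|p\|_{\infty}$ by concentration around its mean: it invokes the Borell--TIS inequality $\mathbb{P}\bigl(\bigl|\|p\|_{\infty}-\mathbb{E}\|p\|_{\infty}\bigr|\geq t\bigr)\leq 2e^{-t^{2}/(2\sigma^{2})}$ together with $\mathbb{E}\|p\|_{\infty}\leq\sqrt{2\log(2d)}\,\sigma$, integrates the tails of the \emph{centered} variable to get $\mathbb{E}[\|p\|_{\infty}-\mathbb{E}\|p\|_{\infty}]^{2}\leq 4\sigma^{2}$ and $\mathbb{E}[\|p\|_{\infty}-\mathbb{E}\|p\|_{\infty}]^{4}\leq 16\sigma^{4}$, and then recombines moments (using $(a+b)^{4}\leq 8(a^{4}+b^{4})$, which is where the constant $128+32\log^{2}(2d)$ comes from). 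You instead bound the tail of $\|p\|_{\infty}$ itself by a coordinate-wise union bound and integrate directly, splitting at $t_{0}=\sigma\sqrt{2\log(2d)}$. Your route is more elementary (only the scalar Gaussian tail is needed, no Gaussian concentration theorem) and, as a bonus, yields slightly sharper constants ($\mathbb{E}\|p\|_{\infty}^{4}\leq 8\sigma^{4}\log^{2}(2d)+12\sigma^{4}$ versus the paper's $32\sigma^{4}\log^{2}(2d)+128\sigma^{4}$), both of which sit inside the stated $c_{d}$; the paper's concentration argument is the one that would generalize to suprema of non-i.i.d.\ or infinite-index Gaussian families. For the mixed term both proofs use the same weighted AM--GM device, but your version is actually cleaner on one point: you keep $p_{1}$ and $p_{2}$ distinct and never use independence, whereas the paper's displayed computation silently writes a single $p$ in place of the pair $(p_{1},p_{2})$ appearing in the statement.
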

The proof is postponed to Section \ref{pflem:MaxGau}.








\section{Proof of Theorem~\ref{thm:convergrate:sg}}\label{sec:pfpropconver}

\begin{customthm}{4.1}\label{thm:convergrate:sg}
Assume \ref{assum:convex}-\ref{assum:boundvar}, and ${\cal W}_2(\pi_0,\pi)^2=O\footnote{\label{fn:bigO} As $d\rightarrow \infty$, we say $f=O(g)$ if $f\leq C g$ for some constant $C$, and say $f=\widetilde{O}(g)$ for $C$ being a  polynomial of $\log(d)$. } (d)$ and $\sum_{c=1}^Nw_c\|\nabla f^{(c)}(\theta^*)\|^2=O(d)$. For a given local iteration step $T$, there exists some constant $C$ depending on $L,L/\mu,L^2_H/L^3$ such that if we choose $\eta(t)\equiv \eta$  and (denote $\gamma=(K\eta)^2$)
\[
\eta^2 = \frac{\gamma}{K^2} =C \min\Bigl\{\frac{1}{K^2L},\frac{\epsilon}{K^2\sqrt{d}T},\frac{\epsilon^2}{K^2dT^2(1-\rho)N},\frac{\epsilon^2}{Kd\sum_{c=1}^Nw_c^2\sigma^2_g}\Bigr\}
\]
then  ${\cal W}_2(\pi_{t_\epsilon},\pi)\leq \epsilon$ for any $\epsilon>0$, with iteration number
\[
t_\epsilon=\frac{d\log(d/\epsilon^2)}{\epsilon^2}\widetilde{O}^{\ref{fn:bigO}}\Bigl(T^2\big(\gamma+(1-\rho)N\big)+\frac{\sum_{c=1}^Nw_c^2\sigma^2_g}{K}\Bigr)
\]
and corresponding communication times
\[
\frac{t_{\epsilon}}{T}=\frac{d\log(d/\epsilon^2)}{\epsilon^2}\widetilde{O}\Bigl(T\big(\gamma+(1-\rho)N\big)+\frac{\sum_{c=1}^Nw_c^2\sigma^2_g}{KT}\Bigr).
\]
\end{customthm}

\begin{proof}
By the assumptions and Theorem \ref{thm:disFLHMC} and Lemma \ref{lem:disunib}, we have
\[
\Delta_t\leq  C_1T^2\Bigl(\frac{1}{L}+\frac{(1-\rho)N}{L\gamma_t}\Bigr)d+C_2\sum_{c=1}^N\frac{w_c^2\sigma_g^2d}{K\gamma_t}
\]
define $D:=\mathbb{E}\|\theta_0-\theta^{\pi}\|^2$, then to have ${\cal W}(\theta_t,\theta^\pi)^2\leq\mathbb{E}\|\theta_t-\theta_t^\pi\|^2\leq \epsilon^2$, it is feasible to let $(\eta,t)$ to satisfy for some constant $C$
\begin{equation}\label{pfprop:convergrate:1}
    C(K\eta)^2\Delta_t\leq \frac{\epsilon^2}{2},\qquad C(1-\frac{\mu(K\eta)^2}{4})^tD=\frac{\epsilon^2}{2}.
\end{equation}
By the fact that $\frac{-1}{\log(1-x)}\geq \frac{1}{x/(1-x)}$ for any $1>x>0$, it follows that
\[
t_{\epsilon}=\frac{\log(\epsilon/(2D))}{\log(1-\mu(K\eta_{\epsilon})^2/4)}\geq \frac{(1-\mu(K\eta_{\epsilon})^2/4)\log(\epsilon/(2D))}{\mu(K\eta_{\epsilon})^2/4} 
\]
when $\mu(K\eta_{\epsilon})^2/4<1$.
Further,  by the stepsize assumption (i.e., $K\eta\leq 1/\sqrt{L}$ up to a constant) and $L\geq \mu$, we have $1-\mu(K\eta_{\epsilon})^2/4\geq 15/16$; and by (\ref{pfprop:convergrate:1}) we have $(K\eta)^2\leq C \min\Bigl\{\frac{\epsilon}{\sqrt{d}T},\frac{\epsilon^2}{dT^2(1-\rho)N},\frac{\epsilon^2K}{d\sum_{c=1}^Nw_c^2\sigma^2_g}\Bigr\}$. Eventually, we get
\[
t_{\epsilon}\geq C\frac{\sqrt{d}\log(\epsilon/(2D))(\kappa^{2.5}T+\kappa^3(1+(\frac{c_dL_H^2}{L^3})^{0.5}))}{\epsilon \sqrt{L}}.
\]
We can pick one $t_\epsilon$ that satisfies the above inequality. This finishes the proof.
\end{proof}

\section{Sketch Proof of Convergence of Vanilla FA-HMC}\label{sec:sketchproof}
In this section, we present a sketch proof for Theorem \ref{thm:disFLHMC} without considering SG and correlation between momentum. A formal proof is postponed in Section \ref{sec:pfmainthm}.

Let $\theta^\pi_0$ be the a random variable following the target distribution $\pi$, and $\theta^{\pi}_{t}(\cdot)$ be the solution of the Hamilton's dynamic (1) 
initialized from $\theta^{\pi}_t$ and momentum $p_t$. Note that this $p_t$ is the same momentum used in the vanilla FA-HMC (i.e., Algorithm 3 
with $\rho=1$ and $\sigma_g=0$). Define $\theta^{\pi}_{t+1}=\theta^{\pi}_{t}(K\eta)$.
Define $\theta_{t,j}^{(c)}$ be the intermediate value that is yielded after $j$ steps of leapfrog approximation within the $t$-th iteration of Algorithm 3. 

The main idea to bound the difference between $\sum w_c\theta_t^{(c)}$ and $\theta^{\pi}_t$.

Recursively applying the definitions of leapfrog approximation and Hamilton's dynamic respectively, it is easy to derive that 
\begin{align*}
 \theta_{t+1}^{(c)}=&\theta^{(c)}_{t,0}+K\eta p_t-\frac{K\eta^2}{2}\nabla f^{(c)}(\theta^{(c)}_{t,0})-\eta^2\sum_{j=1}^{K-1}(K-j)\nabla f^{(c)}(\theta^{(c)}_{t,j}),\\    
 \theta^\pi_{t+1}=&\theta^\pi_t+K\eta p_t-\int_0^{K\eta}\int_0^s\nabla f(\theta^\pi_t(u))duds.
\end{align*}
Note that intermediate positions $ \theta_{t,k}^{(c)}$, $k=1,2,\ldots K$ and $\theta_t^{\pi}(u)$ are correlated to momentum $p_t$.
For simplicity of the representation, we consider the first-order approximation
{$\theta_{t,j}^{(c)}=\theta^{(c)}_{t,0}+j\eta  p_t +O((j\eta)^2)$ and $\theta^\pi_t(u)=\theta^\pi_t+u  p_t +O(u^2)$, and apply Taylor's expansion to the  gradients $\nabla f^{(c)}(\theta^{(c)}_{t,j})$ and $\nabla f(\theta^\pi_t(u))$ respectively. This yield the following approximation:}

\begin{align}
    \theta_{t+1}^{(c)}\approx &\;\theta^{(c)}_{t,0}+K\eta  p_t-\frac{(K\eta)^2}{2} \nabla f^{(c)}(\theta^{(c)}_{t,0})-\frac{(K^3-K)\eta^3}{6}\nabla^2 f^{(c)}(\theta^{(c)}_{t,0})p_t,\label{eq:appro1}\\
    \theta^\pi_{t+1}\approx &\;\theta^\pi_t+K\eta  p_t-\frac{(K\eta)^2}{2}\nabla f(\theta^\pi_t)-\frac{(K\eta)^3}{6}\nabla^2 f(\theta^\pi_t)p_t. \label{eq:appro2}
\end{align}
where the RHS of the equalities has a more tractable form in terms of $p_t$.

It follows that
\begin{align*}
    \sum_{c=1}^Nw_c\theta_{t+1}^{(c)}-\theta^\pi_{t+1}=\underbrace{\text{(I)}}_{\text{second-order approximation}}-\underbrace{\eta^2\text{(II)}}_{\text{discrete approx. error}}-\underbrace{\text{(III)}}_{\text{continuous approx. error}},
\end{align*}
where
\begin{align*}
    \text{(I)}=&\sum_{c=1}^Nw_c\Bigl(\theta^{(c)}_{t,0}-\frac{(K\eta)^2}{2}\nabla f^{(c)}(\theta^{(c)}_{t,0})-\frac{(K^3-K)\eta^3}{6}\nabla^2 f^{(c)}(\theta^{(c)}_{t,0})p_t\Bigr)\\
    &\qquad-\Bigl(\theta^\pi_t-\frac{(K\eta)^2}{2}\nabla f(\theta^\pi_t)-\frac{(K\eta)^3}{6}\nabla^2 f(\theta^\pi_t)p_t\Bigr);\\
    \text{(II)}=&\sum_{c=1}^Nw_c\sum_{k=1}^{K-1}(K-k)\Bigl(\nabla f^{(c)}(\theta_k)-\nabla f^{(c)}(\theta^{(c)}_{t,0})-\nabla^2 f^{(c)}(\theta^{(c)}_{t,0})\eta p_tk\Bigr);\\
    \text{(III)}=&\int_0^{K\eta}\int_0^s\big(\nabla f(\theta^\pi_t(u))-\nabla f(\theta^\pi_t)-\nabla^2 f(\theta^\pi_t)p_tu\big)duds.
\end{align*}
To interpret the above decomposition, we comment that $\text{(I)}$ is the second-order approximation of $ \sum_{c=1}^Nw_c\theta_{t+1}^{(c)}-\theta^\pi_{t+1}$;  $\text{(II)}$ is the error yielded by the second-order approximation on the gradients; $\text{(III)}$ can be viewed as the difference between the continuous process and the discrete process of the global HMC system.

Now, by Lemma \ref{lem:convex}, for any $c_1,c_2,c_3>0$ with $\sum_ic_i=1$, we can separate the errors 
\begin{align}
\begin{split}\label{pfthm:vandisFLHMC:1}
     \|\sum_{c=1}^Nw_c\theta_{t+1}^{(c)}-\theta^\pi_{t+1}\|^2\leq& \frac{1}{c_1}\|\text{(I)}\|^2+\frac{1}{c_2}\eta^4\|\text{(II)}\|^2+\frac{1}{c_3}\|\text{(III)}\|^2.
\end{split}
\end{align}
Term $\text{(I)}$ is the difference of the second-order approximation (i.e., \eqref{eq:appro1}) of the discrete FA-HMC system and that of the continuous process (i.e., \eqref{eq:appro2}). Intuitively, both of them converge in roughly the same direction. Therefore, this term is expected to have a contraction relationship with respect to the initial deviation, i.e., $\sum_{c=1}^Nw_c\theta_t^{(c)}-\theta^\pi_t$, and contains additional errors due to the stochastic momentum and slightly different form of recursive functions \eqref{eq:appro1} and \eqref{eq:appro2}. Indeed, by Lemma \ref{lem:onestepupdate}  in the appendix,
\begin{align}
    \begin{split}\label{pfthm:vandisFLHMC:4}
        \mathbb{E} \|\text{(I)}\|^2
         \leq (1-\frac{\mu(K\eta)^2}{2})\mathbb{E}\|\sum_{c=1}^Nw_c\theta_t^{(c)}-\theta^\pi_t\|^2+2T^2(K\eta)^6L^2\Delta+\frac{(K\eta)^6L^2d}{9},
    \end{split}
\end{align}
where $t_0$ is the latest communication step before $t$-th iteration and  $\Delta:=\sum_{c=1}^Nw_c(1-w_c)\big (\frac{5}{9} B^{(c)}_{\nabla}/L+\frac{d}{81}\big)$ with $B^{(c)}_{\nabla}:=\sup_t\mathbb{E}\|\nabla f^{(c)}(\theta^{(c)}_{t,0})\|^2$.


Terms $\text{(II)}$ and $\text{(III)}$ study the approximation errors of \eqref{eq:appro1} and \eqref{eq:appro2} respectively. The errors, intuitively are small given the small step size $\eta$.

For  $\text{(II)}$, due to Lemma \ref{lem:convex} and the fact that $\sum_{c=1}^Nw_c\sum_{k=1}^{K-1}\frac{(K-k)k}{K(K^2-1)/6} =1$, we have
\begin{align}
 \begin{split} \label{pfthm:vandisFLHMC:6}
       \mathbb{E}\|\text{(II)}\|^2
  \leq&  \sum_{c=1}^Nw_c\sum_{k=1}^{K-1}\frac{(K-k)K^3}{6k}\mathbb{E}\| \nabla f^{(c)}(\theta_k)-\nabla f^{(c)}(\theta^{(c)}_t)-\nabla^2 f^{(c)}(\theta^{(c)}_t)\eta p_tk\|^2\\
  \leq &\sum_{c=1}^Nw_c\sum_{k=1}^{K-1}\frac{(K-k)K^3}{6k}(k\eta)^4\delta^{(c)}\leq \frac{K^4(K\eta)^4}{120}\sum_{c=1}^Nw_c\delta^{(c)}
 \end{split}
\end{align}
where the second inequality is due to Lemma \ref{lem:disappb} with $\delta^{(c)}=(1+\frac{c_dL_H^2}{L^3})(d+B^{(c)}_{\nabla}/L)$.


Last, for  $\text{(III)}$, note that $\int_0^{K\eta}\int_0^s\frac{12u^2}{(K\eta)^4}duds=1$, by Lemma \ref{lem:convex},
\begin{align*}
\begin{split}\label{pfthm:vandisFLHMC:7}
       \mathbb{E}\|\text{(III)}\|^2\leq \int_0^{K\eta}\int_0^s\frac{(K\eta)^4}{12u^2}\mathbb{E}\|\nabla f(\theta^\pi_t(u))-\nabla f(\theta^\pi_t)-\nabla^2 f(\theta^\pi_t)p_tu\|^2duds\leq \frac{(K\eta)^8L^3\delta^\pi}{144}.
\end{split}
\end{align*}
where $\delta^\pi:=\frac{5}{4}\big(\frac{L}{\mu}+\frac{c_dL_H^2}{L^3}\sinh^2(\frac{1}{4})\big)d$ and the last inequality is by \eqref{eqn:FLorab:c} of Lemma \ref{lem:FLorab}.

Combining this with (\ref{pfthm:vandisFLHMC:1})-(\ref{pfthm:vandisFLHMC:6}) and set
\[
(c_1,c_2,c_3)=(1-\frac{\mu(K\eta)^2/4}{1-\mu(K\eta)^2/4},\frac{6}{11}\frac{\mu(K\eta)^2/4}{1-\mu(K\eta)^2/4},\frac{5}{11}\frac{\mu(K\eta)^2/4}{1-\mu(K\eta)^2/4}),
\]
we have
\[
\frac{1}{c_1}(1-\frac{\mu(K\eta)^2}{2})=1-\frac{\mu(K\eta)^2}{4},\qquad \frac{1}{c_1}\leq\frac{15}{14},\qquad \frac{1}{c_2}\leq \frac{22}{3\mu(K\eta)^2},\qquad\frac{1}{c_3}\leq \frac{44}{5\mu(K\eta)^2},
\]
and
\begin{align}
    \mathbb{E} \|\sum_{c=1}^Nw_c\theta_{t+1}^{(c)}-\theta^\pi_{t+1}\|^2
     \leq&(1-\frac{\mu(K\eta)^2}{4})\mathbb{E}\|\sum_{c=1}^Nw_c\theta_t^{(c)}-\theta^\pi_t\|^2+3T(K\eta)^6L^2\sum_{i=t_0}^{t-1}\Delta_i\nonumber\\
     &+\frac{(K\eta)^6L^2d}{8}+\frac{(K\eta)^6L^3}{15\mu}\frac{(K\eta)^6}{\mu}\Bigl(\sum_{c=1}^Nw_c\delta^{(c)}+\delta^\pi\Bigr),\label{eq:contraction}
\end{align}
where we recall that $\Delta_i=\sum_{c=1}^Nw_c(1-w_c)\big (\frac{5}{9} B^{(c)}_{\nabla}/L+\frac{d}{81}\big)$,  $\delta^{(c)}=(1+\frac{c_dL_H^2}{L^3})(d+B^{(c)}_{\nabla}/L)$ and  $\delta^\pi=\frac{5}{4}\big(\frac{L}{\mu}+\frac{c_dL_H^2}{L^3}\sinh^2(\frac{1}{4})\big)d$. Rearranging the coefficients of $B^{(c)}_{\nabla}/L$ and $d$, and recursively applying \eqref{eq:contraction} concludes the proof.

\section{Proof of Theorem~\ref{thm:disFLHMC}}\label{sec:pfmainthm}
\subsection{Proof of  Theorem \ref{thm:disFLHMC}}

\begin{customthm}{4.4}[Convergence]\label{thm:disFLHMC}
Under Assumptions \ref{assum:convex}-\ref{assum:boundvar}, if we  set $\eta_{t''}\leq \eta_{t'}\leq 1/(K\sqrt{L})$ for any $t' \leq t''$ in Algorithm 3, 
then  $\{\theta_t\}_t$  satisfies 
\[
    \mathbb{E} \|\theta_{t+1}-\theta^\pi_{t+1}\|^2
     \leq(1-\frac{\mu(K\eta_t)^2}{4})^t\mathbb{E}\|\theta_0-\theta^\pi_0\|^2+\eta_t^2\Delta_t
\]
where there exist constants $C_1,C_2>0$ depending on $L,L/\mu,L^2_H/L^3$  and $c_d=\log^2(d)$, such that
\begin{align*}
 \Delta_t=& C_1T^2 K^2\sum_{c=1}^N\Bigl(\underbrace{\frac{w_cB^{(c)}_{\nabla}}{L}}_{\mathrm{B
 ias}}(K\eta_t)^2+\underbrace{\frac{1-\rho}{L}d}_{\mathrm{Correlation}}\Bigr)+C_2K\underbrace{\sum_{c=1}^Nw_c^2\sigma_g^2d}_{\mathrm{Stoc.\; Grad.}}
\end{align*}
with $B^{(c)}_{\nabla}:=\sup_t\mathbb{E}\|\nabla f^{(c)}(\theta^{(c)}_{t,0})\|^2$.
\end{customthm}

\begin{proof}
To start with, we rewrite the expression of $ \theta_{t+1}^{(c)}$ and $ \theta^\pi_{t+1}$. By iterative formula (\ref{eq:iteraform}) and (\ref{eq:contin}), 
\begin{align*}
 \theta_{t+1}^{(c)}=&\theta^{(c)}_{t,0}+K\eta_t p^{(c)}_t-\frac{K\eta_t^2}{2}\widetilde{g}^{(c)}_{t,0}(\xi_0)-\frac{\eta_t^2}{2}\sum_{j=1}^{K-1}(K-j)\Bigl(\widetilde{g}^{(c)}_{t,j}(\xi_{j-\frac{1}{2}})+\widetilde{g}^{(c)}_{t,j}(\xi_j)\Bigr),\\    
 \theta^\pi_{t+1}=&\theta^\pi_t+K\eta_tp_t-\int_0^{K\eta_t}\int_0^s\nabla f(\theta^\pi_t(u))duds,
\end{align*}
where we recall the stochastic gradients are defined by
\[
 \widetilde{g}^{(c)}_{t,k}(\xi_{k-\frac{1}{2}})=\nabla \widetilde{f}^{(c)}(\theta^{(c)}_{t,k},\xi^{(c)}_{t,k-\frac{1}{2}}),\qquad  \widetilde{g}^{(c)}_{t,k}(\xi_k)=\nabla \widetilde{f}^{(c)}(\theta^{(c)}_{t,k},\xi^{(c)}_{t,k}),\qquad k=1,2,\ldots, K.
\]
Note that $ \theta_{t+1}^{(c)}$ contains stochastic gradients and intermediate positions $ \theta_{t,k}^{(c)}$, $k=1,2,\ldots K$ that correlate to momentum $p_t$. These cause analytical troubles. To overcome these, we use the second-order approximation with non-stochastic gradients to approximate $\theta_{t+1}^{(c)}$ as follow. 
\begin{align*}
           \theta_{t+1}^{(c)}=& \theta^{(c)}_{t,0}+K\eta_t  p^{(c)}_t-\frac{K\eta_t^2}{2} (\widetilde{g}^{(c)}_{t,0}(\xi_0)-\nabla f^{(c)}(\theta^{(c)}_{t,0})+\nabla f^{(c)}(\theta^{(c)}_{t,0}))\\
     &-\frac{\eta_t^2}{2}\sum_{j=1}^{K-1}(K-j) \Bigl(\big(\widetilde{g}^{(c)}_{t,j}(\xi_{j-\frac{1}{2}})+\widetilde{g}^{(c)}_{t,j}(\xi_j)-2\nabla f^{(c)}(\theta^{(c)}_{t,j})\big)+2\big(\nabla f^{(c)}(\theta^{(c)}_{t,j})\\
     &-\nabla f^{(c)}(\theta^{(c)}_{t,0})-j\eta_t\nabla^2 f^{(c)}(\theta^{(c)}_{t,0})p^{(c)}_t+\nabla f^{(c)}(\theta^{(c)}_{t,0})+j\eta_t\nabla^2 f^{(c)}(\theta^{(c)}_{t,0})p^{(c)}_t\big)\Bigr)\\
     =& \theta^{(c)}_{t,0}+K\eta_t  p^{(c)}_t-\frac{K\eta_t^2}{2} \nabla f^{(c)}(\theta^{(c)}_{t,0})-\frac{\eta_t^2}{2}\sum_{j=1}^{K-1}(K-j)2\big(\nabla f^{(c)}(\theta^{(c)}_{t,0})+j\eta_t\nabla^2 f^{(c)}(\theta^{(c)}_{t,0})p^{(c)}_t\big)\\
     &-\frac{K\eta_t^2}{2}(\widetilde{g}^{(c)}_{t,0}(\xi_0)-\nabla f^{(c)}(\theta^{(c)}_{t,0}))-\frac{\eta_t^2}{2}\sum_{j=1}^{K-1}(K-j) \Bigl(\widetilde{g}^{(c)}_{t,j}(\xi_{j-\frac{1}{2}})+\widetilde{g}^{(c)}_{t,j}(\xi_j)-2\nabla f^{(c)}(\theta^{(c)}_{t,j})\Bigr)\\
     &-\frac{\eta_t^2}{2}\sum_{j=1}^{K-1}(K-j)2\big(\nabla f^{(c)}(\theta^{(c)}_{t,j})-\nabla f^{(c)}(\theta^{(c)}_{t,0})-j\eta_t\nabla^2 f^{(c)}(\theta^{(c)}_{t,0})p^{(c)}_t\big)\\
     =& \text{(I)}^{(c)}-\frac{\eta_t^2}{2}\text{(II)}^{(c)}-\eta_t^2\text{(III)}^{(c)},
\end{align*}
where by $\sum_{j=1}^{K-1}(K-j)=\frac{K(K-1)}{2}$ and $\sum_{j=1}^{K-1}(K-j)j=\frac{K^3-K}{6}$,
\begin{align*}
    \text{(I)}^{(c)}=&\theta^{(c)}_{t,0}+ K\eta_tp^{(c)}_t-\frac{(K\eta_t)^2}{2}\nabla f^{(c)}(\theta^{(c)}_{t,0})-\frac{(K^3-K)\eta_t^3}{6}\nabla^2 f^{(c)}(\theta^{(c)}_{t,0})p^{(c)}_t\\
    \text{(II)}^{(c)}=&K(\widetilde{g}^{(c)}_{t,0}(\xi_0)-\nabla f^{(c)}(\theta^{(c)}_{t,0}))-\sum_{j=1}^{K-1}\frac{K-j}{2} \Bigl(\widetilde{g}^{(c)}_{t,j}(\xi_{j-\frac{1}{2}})+\widetilde{g}^{(c)}_{t,j}(\xi_j)-2\nabla f^{(c)}(\theta^{(c)}_{t,j})\Bigr)\\
    \text{(III)}^{(c)}=&    \sum_{j=1}^{K-1}(K-j)\big(\nabla f^{(c)}(\theta^{(c)}_{t,j})-\nabla f^{(c)}(\theta^{(c)}_{t,0})-j\eta_t\nabla^2 f^{(c)}(\theta^{(c)}_{t,0})p^{(c)}_t\big)
\end{align*}
Here we can interpret $\text{(I)}^{(c)}$ as the second-order approximation of $\theta^{(c)}_{t+1}$, $\text{(II)}^{(c)}$ as the random error induced by stochastic gradient and $\text{(III)}^{(c)}$ as the numerical error brought by second-order approximation.

Combining this with FL setting $\sum_{c=1}^Nw_c p^{(c)}_t=p_t$ , we can rewrite
\begin{align*}
    \sum_{c=1}^Nw_c\theta_{t+1}^{(c)}-\theta^\pi_{t+1}=\text{(I)}-\frac{\eta_t^2}{2}\text{(II)}-\eta_t^2\text{(III)}+\text{\text{(IV)}},
\end{align*}
where
\begin{align*}
    \text{(I)}=&\sum_{c=1}^Nw_c\Bigl(\theta^{(c)}_{t,0}-\frac{(K\eta_t)^2}{2}\nabla f^{(c)}(\theta^{(c)}_{t,0})-\frac{(K^3-K)\eta_t^3}{6}\nabla^2 f^{(c)}(\theta^{(c)}_{t,0})p^{(c)}_t\Bigr)\\
    &\qquad-\Bigl(\theta^\pi_t-\frac{(K\eta_t)^2}{2}\nabla f(\theta^\pi_t)-\frac{(K\eta_t)^3}{6}\nabla^2 f(\theta^\pi_t)p_t\Bigr);\\
    \text{(II)}=&\sum_{c=1}^Nw_c\Bigl(K\big(\widetilde{g}^{(c)}_{t,0}(\xi_0)-\nabla f^{(c)}(\theta^{(c)}_{t,0})\big)+\sum_{k=1}^{K-1}(K-k)\big(\widetilde{g}^{(c)}_{t,k}(\xi_{k-\frac{1}{2}})+\widetilde{g}^{(c)}_{t,k}(\xi_k)-2\nabla f^{(c)}(\theta^{(c)}_{t,k})\big)\Bigr);\\
    \text{(III)}=&\sum_{c=1}^Nw_c\sum_{k=1}^{K-1}(K-k)\Bigl(\nabla f^{(c)}(\theta^{(c)}_{t,k})-\nabla f^{(c)}(\theta^{(c)}_{t,0})-\nabla^2 f^{(c)}(\theta^{(c)}_{t,0})\eta_tp^{(c)}_tk\Bigr);\\
    \text{(IV)}=&\int_0^{K\eta_t}\int_0^s\nabla f(\theta^\pi_t(u))duds-\frac{(K\eta_t)^2}{2}\nabla f(\theta^\pi_t)-\frac{(K\eta_t)^3}{6}\nabla^2 f(\theta^\pi_t)p_t.
\end{align*}
To interpret the above decomposition, we comment that $\text{(I)}$ is the non-random second-order approximation of $ \sum_{c=1}^Nw_c\theta_{t+1}^{(c)}-\theta^\pi_{t+1}$; $\text{(II)}$ is the random noise induced by the stochastic gradients; $\text{(III)}$ is the error yielded by the second-order approximation on the gradients; $\text{(IV)}$ can be viewed as the difference between the continuous process and the discrete process of the global HMC system.

By Lemma \ref{lem:convex}, for any $c_1,c_2,c_3>0$ with $\sum_ic_i=1$,
\begin{equation}\label{pfthm:disFLHMC:1}
     \|\sum_{c=1}^Nw_c\theta_{t+1}^{(c)}-\theta^\pi_{t+1}\|^2\leq \frac{1}{c_1}\|\text{(I)}-\frac{\eta_t^2}{2}\text{(II)}\|^2+\frac{1}{c_2}\eta_t^4\|\text{(III)}\|^2+\frac{1}{c_3}\|\text{(IV)}\|^2.
\end{equation}
In what follows, to show the goal, we will bound $\text{(I)}-\frac{\eta_t^2}{2}\text{(II)}$, $\text{(III)}$ and $\text{(IV)}$ separately. Specifically, we claim that $\text{(I)}-\frac{\eta_t^2}{2}\text{(II)}$ has a contraction relationship with $\sum_{c=1}^Nw_c\theta_t^{(c)}-\theta^\pi_t$ plus some errors, and
\[
\mathbb{E}\|\text{(III)}\|^2=O(K^4(K\eta_t)^C),\qquad \mathbb{E}\|\text{(IV)}\|^2=O((K\eta_t)^C),\qquad \text{for some constant } C>0.
\]
Let's consider the term $\text{(I)}-\frac{\eta_t^2}{2}\text{(II)}$ first. Denote $\mathbb{E}_{\xi_t}$ be the expectation over $\{(\xi^{(c)}_{t,k-\frac{1}{2}},\xi^{(c)}_{t,k}):c\in [N], k=0,1,\ldots,K-1\}$. Note that $\text{(II)}$ is a mean-zero martingale w.r.t., $\{(\xi^{(c)}_{t,k-\frac{1}{2}},\xi^{(c)}_{t,k}):c\in [N],k=0,1,\ldots,K-1\}$ and $\mathbb{E}_{\xi_t}\text{(I)}=\text{(I)}$. By definitions and taking conditional expectation on $(\xi^{(c)}_{t,k-\frac{1}{2}},\xi^{(c)}_{t,k})$ following order $k=K-1,K-2,\ldots,0$, sequentially, we have
\begin{align}
\begin{split}\label{pfthm:disFLHMC:2}
  \mathbb{E}_{\xi_t}\|\text{(I)}-\frac{\eta_t^2}{2}\text{(II)}\|^2
  =&\|\text{(I)}\|^2+\frac{\eta_t^4}{4}\sum_{c=1}^Nw_c^2\Bigl(K^2\text{Var}_{\xi_t}(\widetilde{g}^{(c)}_{t,0}(\xi_0))\\
  &\quad+\sum_{k=1}^{K-1}(K-k)^2\big(\text{Var}_{\xi_t}(\widetilde{g}^{(c)}_{t,k}(\xi_{k-\frac{1}{2}}))+\text{Var}_{\xi_t}(\widetilde{g}^{(c)}_{t,k}(\xi_k))\big)\Bigr)\\
  \leq &\|\text{(I)}\|^2+\frac{\eta_t^4}{4}\sum_{c=1}^Nw_c^2 \Bigl(\frac{K(2K^2+1)}{3}\sigma_g^2d\Bigr)\\
   \leq& \|\text{(I)}\|^2+\frac{(K\eta_t)^4}{4K}\sum_{c=1}^Nw_c^2\sigma_g^2d,
\end{split}
\end{align}
where  the first inequality is by variance Assumption \ref{assum:boundvar} and $\sum_{k=1}^K(K-k)^2=(2K^3-3K^2+K)/6$.

At the same time, by Lemma \ref{lem:onestepupdate}, we have
\[
 \mathbb{E}\|(\text{I})\|^2\leq (1-\frac{\mu(K\eta_t)^2}{2})\mathbb{E}\|\theta_t-\theta^\pi_t\|^2+2(K\eta_t)^6L^2\big(T\sum_{i=t_0}^{t-1} \sum_{c=1}^Nw_c(1-w_c)\Delta^{(c)}_i+\frac{\sum_{c=1}^Nw_c\sigma_t^{(c)2}d}{18}\big)
\]
where $\Delta^{(c)}_i$ is defined in (\ref{eqlem:dislocb:1}).

Next we consider $\text{(III)}$. Due to Lemma \ref{lem:convex} and the fact that $\sum_{c=1}^Nw_c\sum_{k=1}^{K-1}\frac{(K-k)k}{(K^3-K)/6} =1$, thus
\begin{align*}
 \|\text{(III)}\|^2
  \leq&  \sum_{c=1}^Nw_c\sum_{k=1}^{K-1}\frac{6(K-k)k}{K^3-K}\Bigl\| \frac{K^3-K}{6k}\big(\nabla f^{(c)}(\theta^{(c)}_{t,k})-\nabla f^{(c)}(\theta^{(c)}_{t,0})-\nabla^2 f^{(c)}(\theta^{(c)}_{t,0})\eta_tp^{(c)}_tk\big)\Bigr\|^2 \\
  \leq&  \sum_{c=1}^Nw_c\sum_{k=1}^{K-1}\frac{(K-k)K^3}{6k}\| \nabla f^{(c)}(\theta^{(c)}_{t,k})-\nabla f^{(c)}(\theta^{(c)}_{t,0})-\nabla^2 f^{(c)}(\theta^{(c)}_{t,0})\eta_tp^{(c)}_tk\|^2.
\end{align*}
Combining the above inequality with  Lemma \ref{lem:disappb} with $\delta_k^{(c)}$ defined in (\ref{eqlem:disappb:1}), we have 
\begin{equation}\label{pfthm:disFLHMC:6}
    \mathbb{E}\|\text{(III)}\|^2\leq\sum_{c=1}^Nw_c\sum_{k=1}^{K-1}\frac{(K-k)K^3}{6k}(k\eta_t)^4L^3\delta^{(c)}_k\leq \frac{K^4(K\eta_t)^4}{72}L^3\sum_{c=1}^Nw_c\delta^{(c)}_K,
\end{equation}
where the last inequality is due to $\sum_{k=1}^{K-1}(K-k)k^2\leq K^4/12 $ and $k\delta^{(c)}_k\leq K\delta^{(c)}_K$.

Last, consider  $\text{(IV)}$. Note that $\frac{(K\eta_t)^2}{2}\nabla f(\theta^\pi_t)+\frac{(K\eta_t)^3}{6}\nabla^2 f(\theta^\pi_t)p_t=\int_0^{K\eta_t}\int_0^s\big(\nabla f(\theta^\pi_t)+\nabla^2 f(\theta^\pi_t)p_tu\big)duds$, we have
\[
\text{(IV)}=\int_0^{K\eta_t}\int_0^s\Bigl(\nabla f(\theta^\pi_t(u))-\nabla f(\theta^\pi_t)-\nabla^2 f(\theta^\pi_t)p_tu\Bigr)duds.
\]
By direct calculations, 
\begin{align}
\begin{split}\label{pfthm:disFLHMC:7}
       \mathbb{E}\|\text{(IV)}\|^2\leq&\int_0^{K\eta_t}\int_0^s\frac{12u^2}{(K\eta_t)^4}\mathbb{E}\|\frac{(K\eta_t)^4}{12u^2}(\nabla f(\theta^\pi_t(u))-\nabla f(\theta^\pi_t)-\nabla^2 f(\theta^\pi_t)p_tu)\|^2duds\\
   \leq & \frac{(K\eta_t)^4}{12}L^3\delta^\pi\int_0^{K\eta_t}\int_0^su^2duds\\
  \leq&\frac{(K\eta_t)^8}{144}L^3\delta^\pi.
\end{split}
\end{align}
where the first inequality is by Lemma \ref{lem:convex} and note that $\int_0^{K\eta_t}\int_0^s\frac{12u^2}{(K\eta_t)^4}duds=1$; the second inequality is by result \eqref{eqn:FLorab:c} of Lemma \ref{lem:FLorab} with $\delta^\pi$ defined in (\ref{eqlem:FLorab:1}).

Now combining (\ref{pfthm:disFLHMC:1})-(\ref{pfthm:disFLHMC:7}) we get
\begin{align*}
    \mathbb{E} \|\theta_{t+1}&-\theta^\pi_{t+1}\|^2
    \leq\frac{1}{c_1}\Bigl((1-\frac{\mu(K\eta_t)^2}{2})\mathbb{E}\|\theta_t-\theta^\pi_t\|^2+2(K\eta_t)^6L^2T\sum_{i=t_0}^{t-1}\sum_{c=1}^Nw_c(1-w_c)\Delta^{(c)}_i\\
    &+ \frac{(K\eta_t)^6L^2\sum_{c=1}^Nw_c\sigma^{(c)2}d}{9}+\frac{(K\eta_t)^4}{4K}\sum_{c=1}^Nw_c^2\sigma_g^2d\Bigr)+\frac{(K\eta_t)^8L^3}{72c_2}\sum_{c=1}^Nw_c\delta^{(c)}_K+\frac{(K\eta_t)^8L^3}{144c_3}\delta^\pi.
\end{align*}
Setting
\[
(c_1,c_2,c_3)=\Bigl(1-\frac{\mu(K\eta_t)^2/4}{1-\mu(K\eta_t)^2/4},\frac{6}{11}\frac{\mu(K\eta_t)^2/4}{1-\mu(K\eta_t)^2/4},\frac{5}{11}\frac{\mu(K\eta_t)^2/4}{1-\mu(K\eta_t)^2/4}\Bigr),
\]
we have
\[
\frac{1}{c_1}(1-\frac{\mu(K\eta_t)^2}{2})=1-\frac{\mu(K\eta_t)^2}{4},\qquad\frac{1}{c_1}\leq\frac{15}{14},\qquad \frac{1}{c_2}\leq \frac{22}{3\mu(K\eta_t)^2},\qquad\frac{1}{c_3}\leq \frac{44}{5\mu(K\eta_t)^2},
\]
and
\begin{align*}
    \mathbb{E} \|\theta_{t+1}&-\theta^\pi_{t+1}\|^2
     \leq(1-\frac{\mu(K\eta_t)^2}{4})\mathbb{E}\|\theta_t-\theta^\pi_t\|^2+3(K\eta_t)^6L^2T\sum_{i=t_0}^{t-1}\sum_{c=1}^Nw_c(1-w_c)\Delta^{(c)}_i\\
    &+ \frac{(K\eta_t)^6L^2\sum_{c=1}^Nw_c\sigma^{(c)2}d}{8}+\frac{(K\eta_t)^4}{3K}\sum_{c=1}^Nw_c^2\sigma_g^2d+\frac{(K\eta_t)^6L^3}{9\mu}(\sum_{c=1}^Nw_c\delta_K^{(c)}+\delta^\pi).
\end{align*}
Inserting  the definitions of $\Delta^{(c)}_i$ defined in (\ref{eqlem:dislocb:1}), $\delta_k^{(c)}$ defined in (\ref{eqlem:disappb:1}), and $\delta^\pi$ defined in (\ref{eqlem:FLorab:1}), and rearranging the coefficients yield the claim of this theorem. 
\end{proof}

\section{Proof of Lemmas~\ref{lem:GuD3}-\ref{lem:MaxGau}}\label{sec:pflems}

\subsection{Proof of Lemma \ref{lem:GuD3}}\label{pflem:GuD3}
\lemGuD*

\begin{proof}
This lemma is a generalization of Lemma D.3 in \cite{chen2019optimal}.
Consider the first claim. By definition \eqref{eq:PDE} and Lipschitz assumption \ref{assum:smooth},
\[
\bigg|\frac{d\|p(t)-\bar{p}(t)\|_2}{dt}\bigg|=\bigg|\frac{(p(t)-\bar{p}(t))^T}{\|p(t)-\bar{p}(t)\|_2}\frac{d(p(t)-\bar{p}(t))}{dt}\bigg|
\leq \frac{L}{\sigma}\|\theta(t)-\bar{\theta}(t)\|_2.
\]
This implies that
\[
\|p(t)-\bar{p}(t)\|_2\leq \|p(0)-\bar{p}(0)\|_2+\int_{0}^t \frac{L}{\sigma}\|\theta(u)-\bar{\theta}(u)\|_2du.
\]
Combining the above inequality with (\ref{eq:PDE}), it is easy to obtain that
\[
\|\theta(t)-\bar{\theta}(t)\|_2\leq \|\theta(0)-\bar{\theta}(0)\|_2+\frac{t}{\sigma}\|p(0)-\bar{p}(0)\|_2+\int_0^t\int_{0}^s \frac{L}{\sigma^2}\|\theta(u)-\bar{\theta}(u)\|_2duds.
\]
Therefore, 
\[
\|\theta(t)-\bar{\theta}(t)\|_2\leq x(t),
\]
where $x(t)$ satisfies $x(0)=\|\theta(0)-\bar{\theta}(0)\|_2$, $x'(0)=\|p(0)-\bar{p}(0)\|_2/\sigma$ and $x''(t)= \frac{L}{\sigma^2}x(t) $

By elementary ODE, the solution of the $x$ process is $x(t)=\|\theta(0)-\bar{\theta}(0)\|_2\cosh(\sqrt{L/\sigma^2}t)+\frac{\|p(0)-\bar{p}(0)\|_2}{\sqrt{L/\sigma^2}}\sinh (\sqrt{L/\sigma^2}t)$ and thus
\[
\|\theta(t)-\bar{\theta}(t)\|_2\leq\|\theta(0)-\bar{\theta}(0)\|_2\cosh(\sqrt{L/\sigma^2}t)+\frac{\|p(0)-\bar{p}(0)\|_2}{\sqrt{L/\sigma^2}}\sinh (\sqrt{L/\sigma^2}t),
\]
where the first term in the RHS is $0$ since $\bar{\theta}(0)=\theta(0)$. This finishes the proof of the first claim

Consider the second claim. Note that by Lemma 6 in \cite{chen2019optimal}, we immediately obtain $\|\bar{\theta}(t)-\widetilde{\theta}(t)\|_2\leq (1-\frac{\mu t^2}{4\sigma^2})\|\bar{\theta}(0)-\widetilde{\theta}_0\|_2=(1-\frac{\mu t^2}{4 \sigma^2})\|\theta(0)-\widetilde{\theta}_0\|_2$. Therefore it suffices to show that
\[
   \|\theta(t)-\bar{\theta}(t)\|_2\leq Ct\|p(0)-\widetilde{p}(0)\|_2. 
\]
which is implies by the first claim and elementary algebra that $\sinh (\sqrt{L/\sigma^2}t)\leq C\sqrt{L/\sigma^2}t$ for all $C>\sinh(0.5)$ and $0\leq \sqrt{Lt/\sigma^2}\leq 0.5$. This concludes the proof.
\end{proof}
\subsection{Proof of Lemma~\ref{lem:onestepupdate}}\label{pflem:onestepupdate}
\lemonestepupdate*
\begin{proof}
Noting that $\{p^{(c)}_t\}_{c=1}^N$ are mean-zero Gaussian and independent of $\{\theta^{(c)}_{t,0}\}_{c=1}^N$, 
\begin{equation}\label{pfthm:disFLHMC:3}
    \mathbb{E}\|\text{(I)}\|^2=\mathbb{E}\|(\text{I}_1)\|^2+0+\mathbb{E}\|(\text{I}_2)\|^2,
\end{equation}
where we denote
\begin{align*}
    (\text{I}_1)=&\sum_{c=1}^Nw_c\Bigl(\theta^{(c)}_{t,0}-\frac{(K\eta_t)^2}{2}\nabla f^{(c)}(\theta^{(c)}_{t,0})\Bigr)-\theta^\pi_t+\frac{(K\eta_t)^2}{2}\nabla f(\theta^\pi_t),\\
    (\text{I}_2)=&\frac{(K^3-K)\eta_t^3}{6}\sum_{c=1}^Nw_c\Bigl(\nabla^2 f^{(c)}(\theta^{(c)}_{t,0})-\nabla^2 f^{(c)}(\theta^\pi_t)\Bigr)p^{(c)}_t.
\end{align*}
Note that for $(\text{I}_1)$, by Lemma B.1 in  \cite{deng2021convergence},
\[
     \|(\text{I}_1)\|^2\leq (1-\frac{\mu(K\eta_t)^2}{2})\|\sum_{c=1}^Nw_c\theta^{(c)}_{t,0}-\theta^\pi_t\|^2+2(K\eta_t)^2L\sum_{c=1}^Nw_c\|\theta^{(c)}_{t,0}-\theta_t\|^2.
\]
Further, combining this with Lemma \ref{lem:dislocb} and $\theta_t=\sum_{c=1}^Nw_c\theta^{(c)}_{t,0}$, it follows that
\begin{equation}\label{pfthm:disFLHMC:4}
    \mathbb{E}\|(\text{I}_1)\|^2\leq (1-\frac{\mu(K\eta_t)^2}{2})\mathbb{E}\|\theta_t-\theta^\pi_t\|^2+2(K\eta_t)^2L^2T\sum_{i=t_0}^{t-1} (K\eta_i)^4\sum_{c=1}^Nw_c(1-w_c)\Delta^{(c)}_i.
\end{equation}
As for $(\text{I}_2)$, by direct calculations 
\begin{align}
\begin{split}\label{pfthm:disFLHMC:5}
        \mathbb{E}\|(\text{I}_2)\|^2
        \leq&\sum_{c=1}^N\frac{w_c}{2}\Bigl(\mathbb{E}\|2\frac{(K^3-K)\eta_t^3}{6}\nabla^2 f^{(c)}(\theta^{(c)}_t)p^{(c)}_t\|^2+\mathbb{E}\|2\frac{(K\eta_t)^3}{6}\nabla^2 f^{(c)}(\theta^\pi_t)p^{(c)}_t\|^2\Bigr)\\
    \leq&\frac{(K\eta_t)^6}{18}\sum_{c=1}^Nw_c\Bigl(\mathbb{E}\|\nabla^2 f^{(c)}(\theta^{(c)}_t)\|_F^2+\mathbb{E}\|\nabla^2 f^{(c)}(\theta^\pi_t)\|_F^2\Bigr)\sigma_t^{(c)2}\\
    \leq& \frac{(K\eta_t)^6L^2d}{9}\sigma_t^{(c)2},
\end{split}
\end{align}
where the first inequality is due to Lemma \ref{lem:convex}; the second inequality is by numerical bound and that $\mathbb{E}\|Ap\|^2=\mathbb{E}\|A\|_F^2$ for any matrix $A$ independent of Gaussian vector $p$; the last inequality is due to the facts that $\|\nabla^2 f(\cdot)\|_F^2\leq d\|\nabla^2 f(\cdot)\|^2$ and  $\|\nabla^2 f(\cdot)\|^2\leq L^2$ (ensured by smoothness Assumption \ref{assum:smooth}).
\end{proof}

\subsection{Proof of Lemma \ref{lem:disthetab}}\label{pflem:disthetab}
\lemdisthetab*
\begin{proof}
For simplicity of notation, whenever it is clear from the context, for any $k\in \mathbb{R}$, we misuse $q_k:=q(k)$, $p_k:=p(k)$, $\xi_k:=\xi(k)$ and
\begin{align*}
    \widetilde{g}_k(\xi_{k-\frac{1}{2}})=\nabla \widetilde{F}(q_k,\xi_{k-\frac{1}{2}}),\qquad  \widetilde{g}_k(\xi_k)=\nabla \widetilde{F}(q_k,\xi_k).
\end{align*}
Recall that by definitions (\ref{eq:iteraform}), we have
\[
    q_k=q_0+k\eta p_0-\frac{k\eta^2}{2}\nabla \widetilde{F}(q_0,\xi_0)-\frac{\eta^2}{2}\sum_{j=1}^{k-1}(k-j)\Bigl(\nabla \widetilde{F}(q_j,\xi_{j-\frac{1}{2}})+\nabla \widetilde{F}(q_j,\xi_j)\Bigr).
\]
We prove \eqref{eqn:disthetab:a1} by induction. 
For $k=1$, by simple algebra, we have that
\begin{align*}
    &\mathbb{E}_{\xi}\|q_1-q_0\|^2=\mathbb{E}_{\xi}\|\eta p_0-\frac{\eta^2}{2}\nabla \widetilde{F}(q_0,\xi_0)\|^2
    =\|\eta p_0-\frac{\eta^2}{2}\nabla F(q_0)\|^2+0+(\frac{\eta^2}{2})^22\sigma^2d\\
    \leq & 2\|\eta p_0\|^2+2\|\frac{\eta^2}{2}\nabla F(q_0)\|^2+(\frac{\eta^2}{2})^22\sigma^2d,
\end{align*}
where the first two quality follow the expectation calculations conditional on $\xi_0$ first and variance Assumption \ref{assum:boundvar}; the inequality is due to Lemma \ref{lem:convex}.

Next, we assume that the claim \eqref{eqn:disthetab:a1} holds for $j=1,2,\ldots,k-1$. By iterative formula (\ref{eq:iteraform}), we can write
\begin{equation}\label{pflem:disthetab:1}
 q_k-q_0= k\eta p_0-\frac{(k\eta)^2}{2}\nabla F(q_0)-\frac{\eta^2}{2} \widetilde{\text{(I)}}+ \eta^2\sum_{j=1}^{k-1}(k-j)(\nabla F(q_j)-\nabla F(q_0))
\end{equation}
where
\[
\widetilde{\text{(I)}} =k(\nabla \widetilde{F}(q_0,\xi_0)
    -\nabla F(q_0))+\sum_{j=1}^{k-1}(k-j)(\nabla \widetilde{F}(q_j,\xi_{j-\frac{1}{2}})+\nabla \widetilde{F}(q_j,\xi_j)-2\nabla F(q_j))
\]
By direct calculations, for any $c_1,c_2,c_3>0$ such that $\sum_ic_i=1$  we have
\begin{align*}
    \mathbb{E}_{\xi}&\|q_k-q_0\|^2\\
        \leq &   \frac{1}{c_1}\mathbb{E}_{\xi}\|k\eta p_0-\frac{\eta^2}{2} \widetilde{\text{(I)}}\|^2+\frac{(k\eta)^4}{4c_2}\|\nabla F(q_0)\|^2+\sum_{j=1}^{k-1}\frac{(k-j)\eta^4k^3}{6j c_3}\|\nabla F(q_j)-\nabla F(q_0)\|^2\\
    \leq &   \frac{1}{c_1}\Bigl((k\eta)^2\| p_0\|^2+\frac{k^3\eta^4}{4}\sigma^2d\Bigr)+\frac{(k\eta)^4}{4c_2}\|\nabla F(q_0)\|^2+\sum_{j=1}^{k-1}\frac{(k-j)\eta^4k^3}{6j c_3}\|\nabla F(q_j)-\nabla F(q_0)\|^2\\
    \leq &  \frac{1}{c_1}\Bigl((k\eta)^2\| p_0\|^2+\frac{k^3\eta^4}{4}\sigma^2d\Bigr)+\frac{(k\eta)^4}{4c_2}\|\nabla F(q_0)\|^2+\sum_{j=1}^{k-1}\frac{(k-j)\eta^4k^3L^2}{6j c_3}\|q_j-q_0\|^2\\
    \leq &   \frac{1}{c_1}\Bigl((k\eta)^2\| p_0\|^2+\frac{k^3\eta^4}{4}\sigma^2d\Bigr)+\frac{(k\eta)^4}{4c_2}\|\nabla F(q_0)\|^2+\sum_{j=1}^{k-1}\frac{(k-j)\eta^4k^3L^2}{6j c_3}\\
    &\cdot (2(j\eta)^2\|p_0\|^2+(j\eta)^4\|\nabla F(q_0)\|^2+j^3\eta^4\sigma^2d)\\
    \leq &   (\frac{1}{c_1}+\frac{L^2}{6 c_3}\frac{2(k\eta)^4}{6})(k\eta)^2\| p_0\|^2+(\frac{1}{4c_2}+\frac{L^2}{6 c_3}\frac{(k\eta)^4}{20})(k\eta)^4\|\nabla F(q_0)\|^2+(\frac{1}{4c_1}\\
    &+\frac{L^2}{6 c_3}\frac{(k\eta)^4}{12})\frac{(k\eta)^4}{k}\sigma^2d
\end{align*}
where the first inequality is due to Lemma \ref{lem:convex} and noting that  $c_1+c_2+\frac{\sum_{j=1}^{k-1}(k-j)j}{(k^3-k)/6}c_3=1$; the second inequality is by similar argument of (\ref{pfthm:disFLHMC:2}); the third inequality is due to smoothness Assumption \ref{assum:smooth} and numeric bound; the fourth inequality holds by the induction assumption; the fifth inequality is by $\sum_{j=1}^{k-1}(k-j)j=\frac{k^3-k}{6}\leq \frac{k^3}{6}$, $\sum_{j=1}^{k-1}(k-j)j^2\leq \frac{k^4}{12}$ and  $\sum_{j=1}^{k-1}(k-j)j^3=\frac{k^5}{20}-\frac{k^3}{12}+\frac{1}{20}k\leq \frac{k^5}{20}$. The completion of the induction follows by step size assumption $K\eta\leq 1/\sqrt{L}$ and setting $(c_1,c_2,c_3)=(\frac{3}{5},\frac{1}{3},\frac{1}{15},\frac{1}{24})$ in the last inequality. Therefore, we conclude the proof of \eqref{eqn:disthetab:a1}.

The claim \eqref{eqn:disthetab:a2} follows directly from \eqref{eqn:disthetab:a1} and smoothness Assumption \ref{assum:smooth}.
\end{proof}

\subsection{Proof of Lemma \ref{lem:disappb}}\label{pflem:disappb}
\lemdisappb*
\begin{proof}
We first prove \eqref{eqn:disappb:a}. By the decomposition in (\ref{pflem:disthetab:1}), we can write
\[
 \theta^{(c)}_{t,k}-\theta^{(c)}_{t,0}-\eta_tp^{(c)}_tk=-\frac{( k\eta_t)^2}{2}\nabla f^{(c)}(\theta^{(c)}_{t,0}) -\frac{\eta_t^2}{2}\widetilde{\text{(I)}}^{(c)}-\eta_t^2\sum_{j=1}^{ k-1}( k-j)\big(\nabla f^{(c)}(\theta^{(c)}_{t,j})-\nabla f^{(c)}(\theta^{(c)}_{t,0})\big),
\]
where $\widetilde{\text{(I)}}^{(c)}$ is the random noise induced by stochastic gradient
\[
\widetilde{\text{(I)}}^{(c)}=k(\widetilde{g}^{(c)}_{t,0}(\xi_0)-\nabla f^{(c)}(\theta^{(c)}_{t,0}))-\frac{1}{2}\sum_{j=1}^{ k-1}( k-j) \Bigl(\widetilde{g}^{(c)}_{t,j}(\xi_{j-\frac{1}{2}})+\widetilde{g}^{(c)}_{t,j}(\xi_j)-2\nabla f^{(c)}(\theta^{(c)}_{t,j})\Bigr).
\]
Then by direct calculations, for any $c_1,c_2>0$ such that $c_1+c_2=1$, we have
\begin{align*}
\mathbb{E}\| &\theta^{(c)}_{t,k}-\theta^{(c)}_{t,0}-\eta_tp^{(c)}_tk\|^2\\
\leq&\frac{1}{c_1}\mathbb{E}\|\frac{ (k\eta_t)^2}{2}\nabla f^{(c)}(\theta^{(c)}_{t,0})- \frac{\eta_t^2}{2}\widetilde{\text{(I)}}^{(c)}\|^2+\sum_{j=1}^{k-1}\frac{(k-j)(k^3-k)}{6c_2j}\mathbb{E}\|\eta_t^2\big(\nabla f^{(c)}(\theta^{(c)}_{t,j})-\nabla f^{(c)}(\theta^{(c)}_{t,0})\big)\|^2\\
\leq&\frac{(k\eta_t)^4}{4c_1}(\mathbb{E}\|\nabla f^{(c)}(\theta^{(c)}_{t,0})\|^2+\frac{1}{k}\sigma_g^2d )+\sum_{j=1}^{k-1}\frac{(k-j)k^3\eta_t^4}{6c_2j}\mathbb{E}\|\nabla f^{(c)}(\theta^{(c)}_{t,j})-\nabla f^{(c)}(\theta^{(c)}_{t,0})\|^2
\end{align*}
where the first inequality is by Lemma \ref{lem:convex} and noting that $c_1+\sum_{j=1}^{k-1}\frac{(k-j)j}{(k^3-k)/6}c_2=1$; the second inequality is by similar argument as that in (\ref{pfthm:disFLHMC:2}) and $k^3-k\leq k^3$.

Further combining it with result \eqref{eqn:disthetab:a2} in Lemma \ref{lem:disthetab}, we can upper bound the above inequality as follows:
\begin{align*}
 \mathbb{E}\| \theta^{(c)}_{t,k}-\theta^{(c)}_{t,0}-k\eta_tp^{(c)}_t\|^2
 \leq&\frac{(k\eta_t)^4}{4}\big(\frac{\mathbb{E}\|\nabla f^{(c)}(\theta^{(c)}_{t,0})\|^2}{c_1}+\frac{\sigma_g^2d}{c_1k}+\sum_{j=1}^{k-1}\frac{2(k-j)}{3c_2kj}L^2(2(j\eta_t)^2\mathbb{E}\|p^{(c)}_t\|^2\\
 &+(j\eta_t)^4\mathbb{E}\|\nabla f^{(c)}(\theta^{(c)}_{t,0})\|^2+j^3\eta_t^4\sigma_g^2d)\big)\\
 \leq &\frac{(k\eta_t)^4}{4}\Bigl(\frac{1}{c_1}\mathbb{E}\|\nabla f^{(c)}(\theta^{(c)}_{t,0})\|^2+\frac{\sigma_g^2d}{c_1k}+\frac{2L^2}{3c_2}\big(\frac{2(k\eta_t)^2\sigma_t^{(c)2} d}{6}\\
 &+\frac{(k\eta_t)^4}{20}\mathbb{E}\|\nabla f^{(c)}(\theta^{(c)}_{t,0})\|^2+\frac{(k\eta_t)^4}{12k}\sigma_g^2d\big)\Bigr)\\
  \leq&\frac{1}{3}(k\eta_t)^4\big(\mathbb{E}\|\nabla f^{(c)}(\theta^{(c)}_{t,0})\|^2+L\sigma_t^{(c)2}d+ \frac{\sigma_g^2d}{k}\big)
\end{align*}
where the second inequality is due to the facts  $\sum_{j=1}^{k-1}(k-j)j=\frac{k^3-k}{6}\leq \frac{k^3}{6}$, $\sum_{j=1}^{k-1}(k-j)j^2\leq \frac{k^4}{12}$ and  $\sum_{j=1}^{k-1}(k-j)j^3=\frac{k^5}{20}-\frac{k^3}{12}+\frac{1}{30}k\leq \frac{k^5}{20}$; the last inequality is due to the step size assumption $K\eta\leq 1/\sqrt{L}$ and the choice $(c_1,c_2)=(\frac{18}{19},\frac{1}{19})$.

Next consider \eqref{eqn:disappb:b}. By Lemma \ref{lem:convex},
\begin{equation}\label{pflem:disappb:1}
 \| \nabla f^{(c)}(\theta^{(c)}_{t,k})-\nabla f^{(c)}(\theta^{(c)}_{t,0})-\nabla^2 f^{(c)}(\theta^{(c)}_{t,0})k\eta_tp^{(c)}_t\|^2=\|(b_1)+(b_2)\|^2\leq 3\|(b_1)\|^2+\frac{3}{2}\|(b_2)\|^2,   
\end{equation}
where
\begin{align*}
  (b_1)=&\int_0^1\nabla^2 f^{(c)}\Bigl(\theta^{(c)}_{t,0}+s(\theta^{(c)}_{t,k}-\theta^{(c)}_{t,0})\Bigr)(\theta^{(c)}_{t,k}-\theta^{(c)}_{t,0}-k\eta_tp^{(c)}_t)ds,\\
  (b_2)=& \int_0^1\Bigl(\nabla^2 f^{(c)}\Bigl(\theta^{(c)}_{t,0}+s(\theta^{(c)}_{t,k}-\theta^{(c)}_{t,0})\Bigr)-\nabla^2 f^{(c)}(\theta^{(c)}_{t,0})\Bigr)k\eta_tp^{(c)}_tds.  
\end{align*}
Consider $(b_1)$, by simple algebra,
\begin{align*}
    \|(b_1)\|^2\leq & \int_0^1 \Bigl\|\nabla^2 f^{(c)}\big(\theta^{(c)}_{t,0}+s(\theta^{(c)}_{t,k}-\theta^{(c)}_{t,0})\big)(\theta^{(c)}_{t,k}-\theta^{(c)}_{t,0}-k\eta_tp^{(c)}_t)\Bigr\|^2ds\\
    \leq &\int_0^1L^2\|\theta^{(c)}_{t,k}-\theta^{(c)}_{t,0}-k\eta_tp^{(c)}_t\|^2ds\\
    =& L^2\|\theta^{(c)}_{t,k}-\theta^{(c)}_{t,0}-k\eta_tp^{(c)}_t\|^2,
\end{align*}
where the first inequality is due to Jensen's inequality, and the second is implied by Assumption \ref{assum:smooth}.

Combining this with \eqref{eqn:disappb:a} lead to
\begin{align}
\begin{split}\label{pflem:disappb:2_v2}
    \mathbb{E}\|(b_1)\|^2\leq &\frac{1}{3}L^2(k\eta_t)^4\Bigl(\mathbb{E}\|\nabla f^{(c)}(\theta^{(c)}_{t,0})\|^2+L\sigma_t^{(c)2}d+ \frac{\sigma_g^2d}{k}\Bigr).
\end{split}
\end{align}
Similarly for $(b_2)$, we have that
\begin{align}
\begin{split}\label{pflem:disappb:3}
    \mathbb{E}\|(b_2)\|^2\leq&\int_0^1\mathbb{E}\Bigl\|\big(\nabla^2 f^{(c)}(\theta^{(c)}_{t,0}+s(\theta^{(c)}_{t,k}-\theta^{(c)}_{t,0}))-\nabla^2 f^{(c)}(\theta^{(c)}_{t,0})\big)k\eta_tp^{(c)}_t\Bigr\|^2ds\\
\leq &\int_0^1 L_H^2\mathbb{E}\|s(\theta^{(c)}_{t,k}-\theta^{(c)}_{t,0})\|^2\|k\eta_tp^{(c)}_t\|_{\infty}^2ds\\
\leq &\frac{L_H^2}{3} \mathbb{E}\big[\big(2(k\eta_t)^2\|p^{(c)}_t\|^2+(k\eta_t)^4\|\nabla f^{(c)}(\theta^{(c)}_{t,0})\|^2+k^3\eta_t^4\sigma_g^2d\big)\|k\eta_tp^{(c)}_t\|_{\infty}^2\big]\\
\leq &\frac{L_H^2}{3}\mathbb{E}\Bigl[2(k\eta_t)^4c_d\sigma^{(c)4}d+(k\eta_t)^6\mathbb{E}\|\nabla f^{(c)}(\theta^{(c)}_{t,0})\|^2c_d\sigma_t^{(c)2}+k^5\eta_t^6\sigma_g^2c_d\sigma_t^{(c)2}d\Bigr]\\
\leq &\frac{2c_d}{3}L_H^2(k\eta_t)^4\Bigl(\sigma^{(c)4}d+\frac{1}{8L}\mathbb{E}\|\nabla f^{(c)}(\theta^{(c)}_{t,0})\|^2\sigma_t^{(c)2}+\frac{1}{8Lk}\sigma_g^2\sigma_t^{(c)2}d\Bigr)
\end{split}
\end{align}
where the first inequality is due to Jensen's inequality; the second inequality is induced by Hessian Assumption \ref{assum:Hsmooth}; the third inequality is due to Lemma \ref{lem:disthetab}; the  fourth equality is by Lemma \ref{lem:MaxGau}; the last inequality is due to the step size assumption $K\eta\leq 1/\sqrt{L}$ and $d\geq 1$.

Combining (\ref{pflem:disappb:1})-(\ref{pflem:disappb:3}), we get \eqref{eqn:disappb:b}. This finishes the proof.
\end{proof}

\subsection{Proof of Lemma \ref{lem:disunib}}\label{sec:pflem:disunib}
\lemdisunib*
\begin{proof}
We show \eqref{eqn:disunib:a}, \eqref{eqn:disunib:a1} and \eqref{eqn:disunib:a2} sequentially. Recall that, by definitions (\ref{eq:iteraform}),
\[
\theta_{t+1}^{(c)}=\theta^{(c)}_{t,0}+K\eta_t p^{(c)}_t-\frac{K\eta_t^2}{2}\widetilde{g}^{(c)}_{t,0}(\xi_0)-\frac{\eta_t^2}{2}\sum_{j=1}^{K-1}(K-j)\Bigl(\widetilde{g}^{(c)}_{t,j}(\xi_{j-\frac{1}{2}})+\widetilde{g}^{(c)}_{t,j}(\xi_j)\Bigr),
\]
hence, we can decompose
\[
\theta^{(c)}_{t+1}-\theta^*=\text{(I)}-\frac{\eta_t^2}{2}\widetilde{\text{(I)}}+\text{(II)},
\]
where
\begin{align*}
    \text{(I)}=&\theta^{(c)}_{t,0}+K\eta_tp^{(c)}_t-\frac{(K\eta_t)^2}{2}\nabla f^{(c)}(\theta^{(c)}_{t,0})-\theta^*+\frac{(K\eta_t)^2}{2}\nabla f^{(c)}(\theta^*),\\
    \widetilde{\text{(I)}}=&K\big(\widetilde{g}^{(c)}_{t,0}(\xi_0)-\nabla f^{(c)}(\theta^{(c)}_{t,0})\big)+\sum_{k=1}^{K-1}(K-k)\big(\widetilde{g}^{(c)}_{t,k}(\xi_{k-\frac{1}{2}})+\widetilde{g}^{(c)}_{t,k}(\xi_k)-2\nabla f^{(c)}(\theta^{(c)}_{t,k})\big),\\
    \text{(II)}=&\frac{(K\eta_t)^2}{2}\nabla f^{(c)}(\theta^*)-\eta_t^2\sum_{k=1}^{K-1}(K-k)\Bigl(\nabla f^{(c)}(\theta^{(c)}_{t,k})-\nabla f^{(c)}(\theta^{(c)}_{t,0})\Bigr).
\end{align*}
First, we prove inequality \eqref{eqn:disunib:a}. By Lemma \ref{lem:convex}, for any $c_1,c_2>0$ with $c_1+c_2=1$,
\begin{equation}\label{pflem:disunib:1}
  \mathbb{E}\|\theta^{(c)}_{t+1}-\theta^*\|^2\leq \frac{1}{c_1}\mathbb{E}\|\text{(I)}-\frac{\eta_t^2}{2}\widetilde{\text{(I)}}\|^2+\frac{1}{c_2}\mathbb{E}\|\text{(II)}\|^2.   
\end{equation}
We first derive the bound on $\mathbb{E}\|\text{(I)}-\frac{\eta_t^2}{2}\widetilde{\text{(I)}}\|^2$.

Similar to the argument as that in (\ref{pfthm:disFLHMC:2}), we have
\[
\mathbb{E}_{\xi}\|\text{(I)}-\frac{\eta_t^2}{2}\widetilde{\text{(I)}}\|^2    \leq \|\text{(I)}\|^2+\frac{K^3\eta_t^4}{4} \sigma_g^2d.
\]
For $\text{(I)}$, noting that $p^{(c)}_t$ is mean-zero Gaussian and independent of $\theta^{(c)}_{t,0}$, we have
\begin{align}
\begin{split}\label{pflem:disunib:2}
    \mathbb{E}\|\text{(I)}\|^2=&\mathbb{E}\|\theta^{(c)}_{t,0}-\frac{(K\eta_t)^2}{2}\nabla f^{(c)}(\theta^{(c)}_{t,0})-\theta^*+\frac{(K\eta_t)^2}{2}\nabla f^{(c)}(\theta^*)\|^2+2*0+(K\eta_t)^2\sigma_t^{(c)2}d\\
 \leq&(1-\frac{\mu(K\eta_t)^2}{2})\mathbb{E}\|\theta^{(c)}_{t,0}-\theta^*\|^2+(K\eta_t)^2\sigma_t^{(c)2}d,
\end{split}
\end{align}
where the last inequality is due to Lemma B.1 in  \citet{deng2021convergence} with only one local device in the Federated Learning system. 

On the other hand, given some $c_3,c_4,c_5,c_6>0$ with $c_3+c_4=c_5+c_6=1$, we have
\begin{align}
\begin{split}\label{pflem:disunib:3}
       \mathbb{E}\|\text{(II)}\|^2\leq&\frac{1}{c_3}\|\frac{(K\eta_t)^2}{2}\nabla f^{(c)}(\theta^*)\|^2+\frac{1}{c_4}\mathbb{E}\|\eta_t^2\sum_{k=1}^{K-1}(K-k)\big(\nabla f^{(c)}(\theta^{(c)}_{t,k})-\nabla f^{(c)}(\theta^{(c)}_{t,0})\big)\|^2\\
    \leq & \frac{(K\eta_t)^4}{4c_3}\|\nabla f^{(c)}(\theta^*)\|^2+\frac{\eta_t^4}{c_4}\sum_{k=1}^{K-1}\frac{6(K-k)k}{K^3-K}\mathbb{E}\Bigl\|\frac{K^3-K}{6k}\Bigl(\nabla f^{(c)}(\theta^{(c)}_{t,k})-\nabla f^{(c)}(\theta^{(c)}_{t,0})\Bigr)\Bigr\|^2\\
    \leq & \frac{(K\eta_t)^4}{4c_3}\|\nabla f^{(c)}(\theta^*)\|^2+\frac{\eta_t^4K^3}{6c_4}\sum_{k=1}^{K-1}\frac{K-k}{k}L^2\Bigl(2(k\eta_t)^2\mathbb{E}\|p^{(c)}_t\|^2\\
    &+(k\eta_t)^4\mathbb{E}\|\nabla f^{(c)}(\theta^{(c)}_{t,0})\|^2+k^3\eta_t^4\sigma_g^2d\Bigr)\\
    \leq & \frac{(K\eta_t)^4}{4c_3}\|\nabla f^{(c)}(\theta^*)\|^2+\frac{\eta_t^4K^3}{6c_4}\sum_{k=1}^{K-1}\frac{K-k}{k}L^2\Bigl(2(k\eta_t)^2\mathbb{E}\|p^{(c)}_t\|^2\\
    &+(k\eta_t)^4(\frac{1}{c_5}L^2\mathbb{E}\|\theta^{(c)}_{t,0}-\theta^*\|^2+\frac{1}{c_6}\|\nabla f^{(c)}(\theta^*)\|^2)+k^3\eta_t^4\sigma_g^2d\Bigr)\\
           \leq & (\frac{1}{4c_3}+\frac{(K\eta_t)^4L^2}{6*20c_4c_6})(K\eta_t)^4\|\nabla f^{(c)}(\theta^*)\|^2+\frac{2(K\eta_t)^2L^2}{6*6c_4}(K\eta_t)^4\mathbb{E}\|p^{(c)}_t\|^2\\
    &+\frac{(K\eta_t)^8L^4}{6*20c_4c_5}\mathbb{E}\|\theta^{(c)}_{t,0}-\theta^*\|^2+\frac{(K\eta_t)^8L^2}{6*12c_4K}\sigma_g^2d\Bigr)\\
    \leq & \frac{(K\eta_t)^4}{3}\|\nabla f^{(c)}(\theta^*)\|^2+\frac{(K\eta_t)^4L}{3}\sigma_t^{(c)2}d+\frac{(K\eta_t)^8L^4}{2}\|\theta^{(c)}_{t,0}-\theta^*\|^2+\frac{(K\eta_t)^4}{48K}\sigma_g^2d,
\end{split}
\end{align}
where the first two inequalities are due to Lemma \ref{lem:convex} and the fact that $\sum_{k=1}^{K-1}(K-k)k=K(K^2-1)/6$; the third inequality is by Lemma \ref{lem:disthetab}; the fourth inequality implied by that $\|\nabla f^{(c)}(\theta^{(c)}_{t,0})\|^2\leq \frac{1}{c_5}\|\nabla f^{(c)}(\theta^{(c)}_{t,0})-\nabla f^{(c)}(\theta^*)\|^2+\frac{1}{c_6}\|\nabla f^{(c)}(\theta^*)\|^2$ (Lemma \ref{lem:convex}) and smoothness Assumption \ref{assum:smooth}; the fifth inequality follows by the step size assumption  $K\eta\leq 1/\sqrt{L}$, the fact that $\sum_{k=1}^{K-1}(K-k)k^2\leq K^4/12 $ and  $\sum_{k=1}^{K-1}(K-k)k^3\leq K^5/20 $, and the choices that $(c_3,c_4)=(\frac{18}{19},\frac{1}{19})$, $(c_5,c_6)=(\frac{19}{60},\frac{41}{60})$.

Combining (\ref{pflem:disunib:1})-(\ref{pflem:disunib:3}) with $(c_1,c_2)=(1-\frac{\mu(K\eta_t)^2}{4}/(1-\frac{\mu(K\eta_t)^2}{4}),\frac{\mu(K\eta_t)^2}{4}/(1-\frac{\mu(K\eta_t)^2}{4}))$ and step size assumption $K\eta\leq 1/\sqrt{L}$, we have
\[
\frac{1}{c_1}(1-\frac{\mu(K\eta_t)^2}{2})=(1-\frac{\mu(K\eta_t)^2}{4}),\qquad \frac{1}{c_1}\leq\frac{15}{14},\qquad \frac{1}{c_2}\leq \frac{4}{\mu(K\eta_t)^2}
\]
and
\begin{align*}
    \mathbb{E}\|\theta^{(c)}_{t+1}-\theta^*\|^2\leq&(1-\frac{\mu(K\eta_t)^2}{4}+\frac{2(K\eta_t)^6L^4}{\mu})\mathbb{E}\|\theta^{(c)}_{t,0}-\theta^*\|^2+(K\eta_t)^2(\frac{15}{14}\sigma_t^{(c)2}d+\frac{15}{14}\frac{K\eta_t^2}{4} \sigma_g^2d\\
    &+\frac{4\|\nabla f^{(c)}(\theta^*)\|^2}{3\mu}+\frac{2L}{3\mu}\sigma_t^{(c)2}d+\frac{2}{3}\frac{\sigma_g^2}{K\mu}d).
\end{align*}
This holds for any iteration $t$ such that $t\not\equiv 0 (\mathrm{mod }\ T)$ and implies \eqref{eqn:disunib:a}.

Next we  bound $\sup_{t\geq 0}\mathbb{E}\|\theta^{(c)}_{t,0}-\theta^*\|^2$ (i.e., \eqref{eqn:disunib:a1}),
by the condition $(K\eta_t)^2\leq \frac{\mu}{4L^2}$ we have
\[
\frac{\mu(K\eta_t)^2}{4}-\frac{2(K\eta_t)^6L^4}{\mu}\geq \frac{\mu(K\eta_t)^2}{8}.
\]
It follows that
\begin{equation}\label{eqn:temp}
 \mathbb{E}\|\theta^{(c)}_{t+1}-\theta^*\|^2
   \leq (1-\frac{\mu(K\eta_t)^2}{8})\mathbb{E}\|\theta^{(c)}_{t,0}-\theta^*\|^2+(K\eta_t)^2B^{(c)}_a.
\end{equation}
with
\[
B^{(c)}_a=\frac{4\|\nabla f^{(c)}(\theta^*)\|^2}{3\mu}+\frac{2L}{\mu}\sigma_t^{(c)2}d+\frac{\sigma_g^2}{K\mu}d.
\]
Note that for any $\frac{t}{T}\notin \mathbb{Z}$, $\theta^{(c)}_{t,0}=\theta^{(c)}_t$. Then apply inductions on \eqref{eqn:temp}, it follows that for $s=0,1,\ldots$
\begin{align*}
\mathbb{E}\|\theta^{(c)}_{(s+1)T,0}-\theta^*\|^2
    \leq &(1-\frac{\mu(K\eta_{sT})^2}{8})\mathbb{E}\|\theta^{(c)}_{sT+T-1,0}-\theta^*\|^2+(K\eta_{sT})^2B^{(c)}_a\\
        \leq & \cdots\\
    \leq& (1-\frac{\mu(K\eta_{sT})^2}{8})^{T-1}\mathbb{E}\|\theta^{(c)}_{sT+1,0}-\theta^*\|^2+\sum_{t'=1}^{T-1} (1-\frac{\mu(K\eta_{sT})^2}{8})^{t'}(K\eta_{sT})^2B^{(c)}_a\\
    \leq& (1-\frac{\mu(K\eta_{sT})^2}{8})^T\mathbb{E}\|\theta_{sT}-\theta^*\|^2+\frac{8}{\mu}\Bigl(1- (1-\frac{\mu(K\eta_{sT})^2}{8})^T\Bigr)B^{(c)}_a
\end{align*}
where we use the step size setting $\eta_{t}=\eta_{t+1}=\cdots=\eta_{t+T-1}$ for any $t\in T\mathbb{Z}$ and $\theta^{(c)}_{sT,0}=\theta_{sT}$.

By Lemma \ref{lem:convex}, it follows that
\begin{align*}
  &\mathbb{E}\|\theta_{(s+1)T}-\theta^*\|^2=\mathbb{E}\|\sum_{c=1}^Nw_c\theta^{(c)}_{(s+1)T}-\sum_{c=1}^Nw_c\theta^*\|^2
  \leq  \sum_{c=1}^Nw_c\mathbb{E}\|\theta^{(c)}_{(s+1)T}-\theta^*\|^2\\
  \leq & (1-\frac{\mu(K\eta_{sT})^2}{8})^T\mathbb{E}\|\theta_{sT}-\theta^*\|^2+\frac{8}{\mu}\Bigl(1- (1-\frac{\mu(K\eta_{sT})^2}{8})^T\Bigr)\sum_{c=1}^Nw_cB^{(c)}_a
\end{align*}
Further by induction on $s=0,1,\ldots$ and recall that we define $D=\mathbb{E}\|\theta_0-\theta^*\|^2$
\begin{align*}
\mathbb{E}\|\theta_{(s+1)T}-\theta^*\|^2
    \leq& \prod_{s'=0}^s(1-\frac{\mu(K\eta_{s'T})^2}{8})^TD+\sum_{s'=0}^s\prod_{s''=s'}^s(1-\frac{\mu(K\eta_{s''T})^2}{8})^T\\
    &\cdot\Bigl(1- (1-\frac{\mu(K\eta_{s'T})^2}{8})^T\Bigr)\frac{8\sum_{c=1}^Nw_cB^{(c)}_a}{\mu}\\
    =& \prod_{s'=0}^s(1-\frac{\mu(K\eta_{s'T})^2}{8})^TD+\Bigl(1- \prod_{s'=0}^s(1-\frac{\mu(K\eta_{s'T})^2}{8})^T\Bigr)\frac{8\sum_{c=1}^Nw_cB^{(c)}_a}{\mu}\\
     \leq&D+\frac{8}{\mu}\sum_{c=1}^Nw_cB^{(c)}_a.
\end{align*}
Combining the two induction procedures above, it is seen that for any $s=0,1,\ldots$, $0\leq t\leq T-1$
\begin{align*}
\mathbb{E}\|\theta_{sT+t}-\theta^*\|^2
    \leq& (1-\frac{\mu(K\eta_{s'T})^2}{8})^t(D+\frac{8}{\mu}\sum_{c=1}^Nw_cB^{(c)}_a)+\sum_{t'=1}^{t-1} (1-\frac{\mu(K\eta_{sT})^2}{8})^{t'}(K\eta_{sT})^2B^{(c)}_a\\
     \leq&D+\frac{8}{\mu}\sum_{c=1}^Nw_cB^{(c)}_a+\frac{8}{\mu}B^{(c)}_a.
\end{align*}
This finishes the proof of \eqref{eqn:disunib:a1}.

For \eqref{eqn:disunib:a2}, Lemma \ref{lem:convex} and smoothness Assumption \ref{assum:smooth} imply that
\[
\mathbb{E}\|\nabla f^{(c)}(\theta^{(c)}_{t,0})\|^2\leq 2\mathbb{E}\|\nabla f^{(c)}(\theta^{(c)}_{t,0})-f^{(c)}(\theta^*)\|^2+2\|f^{(c)}(\theta^*)\|^2\leq 2L^2\mathbb{E}\|\theta_t-\theta^*\|^2+2\|\nabla f^{(c)}(\theta^*)\|^2.
\]
Taking supremum over $t\geq 0$ on both sides, then \eqref{eqn:disunib:a2} directly follows by the above inequality and \eqref{eqn:disunib:a1}. 
\end{proof}

\subsection{Proof of Lemma \ref{lem:dislocb}}\label{pflem:dislocb}
\lemdislocb*

\begin{proof}
Denote $t_0$ be the largest communication step before $t$. By definitions (\ref{eq:iteraform}), we have that 
\[
   \theta^{(c)}_t=\theta^{(c)}_{t_0,0}+\sum_{i=t_0}^{t-1}\Bigl[K\eta_i p^{(c)}_i-\frac{K\eta_i^2}{2}\widetilde{g}^{(c)}_{i,0}(\xi_0)-\frac{\eta_i^2}{2}\sum_{k=1}^{K-1}(K-k)\Bigl(\widetilde{g}^{(c)}_{i,k}(\xi_{k-\frac{1}{2}})+\widetilde{g}^{(c)}_{i,k}(\xi_k)\Bigr)\Bigr].
\]
To tackle  the stochastic gradient and its dependence on stochastic gradient, we rewrite it into
\begin{align*}
   \theta^{(c)}_t=&\theta^{(c)}_{t_0,0}+\sum_{i=t_0}^{t-1}K\eta_i\Bigl[ p^{(c)}_i-\frac{K\eta_i^2}{2}\big(\widetilde{g}^{(c)}_{i,0}(\xi_0)-\nabla f^{(c)}(\theta^{(c)}_{i,0})+\nabla f^{(c)}(\theta^{(c)}_{i,0})\big)-\frac{\eta_i^2}{2}\sum_{k=1}^{K-1}(K-k)\\
   &\Bigl(\widetilde{g}^{(c)}_{i,k}(\xi_{k-\frac{1}{2}})+\widetilde{g}^{(c)}_{i,k}(\xi_k)-2\nabla f^{(c)}(\theta^{(c)}_{i,k})+2(\nabla f^{(c)}(\theta^{(c)}_{i,k})-\nabla f^{(c)}(\theta^{(c)}_{i,0})+\nabla f^{(c)}(\theta^{(c)}_{i,0}))\Bigr)\Bigr]\\
   =&\theta^{(c)}_{t_0,0}+\sum_{i=t_0}^{t-1}\Bigl(-(K\eta_i)^2\text{(I)}^{(c)}_i+K\eta_i\widetilde{\text{(I)}}^{(c)}_i\Bigr)
\end{align*}
where
\begin{align*}
    \text{(I)}^{(c)}_i=&\frac{1}{2}\nabla f^{(c)}(\theta^{(c)}_{i,0})+\sum_{k=1}^{K-1}\frac{K-k}{K^2}\big(\nabla f^{(c)}(\theta^{(c)}_{i,0})-\nabla f^{(c)}(\theta^{(c)}_{i,k})\big),\\
    \widetilde{\text{(I)}}^{(c)}_i=&p^{(c)}_i-\frac{\eta_i}{2}\big[(\widetilde{g}^{(c)}_{i,0}(\xi_0)-\nabla f^{(c)}(\theta^{(c)}_{i,0}))+\sum_{k=1}^{K-1}\frac{K-k}{K}\big(\widetilde{g}^{(c)}_{i,k}(\xi_{k-\frac{1}{2}})+\widetilde{g}^{(c)}_{i,k}(\xi_k)-2\nabla f^{(c)}(\theta^{(c)}_{i,k})\big)\big].
\end{align*}
Here is $\text{(I)}^{(c)}_i$ as the first-order approximation with non-stochastic gradient and $\widetilde{\text{(I)}}^{(c)}_i$ is the random error brought by stochastic gradient.

Denote
\[
 \text{(I)}_i=\sum_{s=1}^Nw_s(\frac{w_s-1}{w_s})^{\mathbb{I}_{\{s=c\}}}\text{(I)}^{(s)}_i,\qquad 
\widetilde{\text{(I)}}_i=\sum_{s=1}^Nw_s(\frac{w_s-1}{w_s})^{\mathbb{I}_{\{s=c\}}}\widetilde{\text{(I)}}^{(s)}_i. 
\]
Combining this with  FL setting that  $\theta^{(c)}_{t_0,0}=\theta_{t_0}=\sum_{s=1}^Nw_s\theta^{(s)}_{t_0,0}$, and that  $w_c-\sum_{s=1}^Nw_s=\sum_{s=1}^Nw_s(\frac{w_s-1}{w_s})^{\mathbb{I}_{\{s=c\}}}$, we can write  
\[
\theta^{(c)}_{t,0}-\theta_t=\sum_{i=t_0}^{t-1}\big(-(K\eta_i)^2\text{(I)}_i+(K\eta_i)\widetilde{\text{(I)}}_i\big).
 \]
 By Lemma \ref{lem:convex}, we have for any $c_1,c_2>0$ such that $c_1+c_2=1$
 \[
\sum_{c=1}^Nw_c\mathbb{E}\|\theta^{(c)}_{t,0}-\theta_t\|^2\leq \sum_{c=1}^Nw_c (t-t_0)\sum_{i=t_0}^t(\frac{1}{c_1}\mathbb{E}\|(K\eta_i)^2\text{(I)}_i\|^2+\frac{1}{c_2}\mathbb{E}\|K\eta_i\widetilde{\text{(I)}}_i\|^2).
 \]
 Taking $(c_1,c_2)=(\frac{4}{9},\frac{5}{9})$, to prove the claim of the Lemma, it suffices to prove that
 \begin{align}
       \sum_{c=1}^Nw_c\mathbb{E}\|\text{(I)}_i\|^2\leq&\sum_{c=1}^Nw_c(1-w_c)^2\big(\frac{5}{4}\mathbb{E}\|\nabla f^{(c)}(\theta^{(c)}_{i,0})\|^2+\frac{L\sigma_t^{(c)2}d}{36}+ \frac{\sigma_g^2d}{288K}\big),\label{pflem:dislocb:1}\\
     \sum_{c=1}^Nw_c\mathbb{E}\|\widetilde{\text{(I)}}_i\|^2\leq& \sum_{c=1}^Nw_c\Bigl((1-w_c)\frac{K\eta_i^2}{4}\sigma_g^2d+(\sum_{c=1}^Nw_c\sigma_t^{(c)2}-1) d\Bigr)\label{pflem:dislocb:2}.
 \end{align}
We consider $\sum_{c=1}^Nw_c\mathbb{E}\|\widetilde{\text{(I)}}_i\|^2$ first since it is easier to bound.   Similar to the argument as that in (\ref{pfthm:disFLHMC:2}), we have
 \begin{align*}
    \sum_{c=1}^Nw_c\mathbb{E}\|\widetilde{\text{(I)}}_i\|^2
   \leq& \sum_{c=1}^Nw_c\Bigl(\sum_{s=1}^N\big(w_s(\frac{w_s-1}{w_s})^{\mathbb{I}_{\{s=c\}}}\big)^2\frac{K\eta_i^2}{4}\sigma_g^2d+\mathbb{E}\|\sum_{s=1}^Nw_s(\frac{w_s-1}{w_s})^{\mathbb{I}_{\{s=c\}}}p^{(s)}_i\|^2\Bigr)\\
        = & \sum_{c=1}^Nw_c(1-w_c)\frac{K\eta_i^2}{4}\sigma_g^2d+(\sum_{c=1}^Nw_c\sigma_t^{(c)2}-1)d,
\end{align*}
where the first term in the second equality is  by $\sum_{c=1}^Nw_c \sum_{s=1}^N\Bigl(w_s(\frac{w_s-1}{w_s})^{\mathbb{I}_{\{s=c\}}}\Bigr)^2=\sum_{c=1}^Nw_c(1-w_c)$ and the second term is by normality of $\{p^{(c)}_i\}_c$ and $\mathbb{E}[p^{(c_1)}_ip^{(c_2)}_i]=\rho_id$, for any $c_1,c_2\in [N]$.
This proves (\ref{pflem:dislocb:2}).

Next, consider $\sum_{c=1}^Nw_c\mathbb{E}\|\text{(I)}_i\|^2$. 
By direct calculations, we have
\begin{align*}
\sum_{c=1}^N&w_c\mathbb{E}\|\text{(I)}^{(c)}_i\|^2\\
     \leq  & \sum_{c=1}^Nw_c\sum_{s=1}^N\frac{w_s}{2(1-w_c)}(\frac{1-w_s}{w_s})^{\mathbb{I}_{\{s=c\}}}\big(\mathbb{E}\|\frac{2(1-w_c)}{2}\nabla f^{(c)}(\theta^{(c)}_{i,0})\|^2\\
     &+\sum_{k=1}^{K-1}\frac{6(K-k)k}{K^3-K}\mathbb{E}\big\|\frac{2(1-w_c)(K^3-K)}{6kK^2}(\nabla f^{(s)}(\theta^{(s)}_{i,0})-\nabla f^{(s)}(\theta^{(s)}_{i,k}))\big\|^2\big)\\
  \leq & \sum_{c=1}^Nw_c(1-w_c)^2\big(\mathbb{E}\|\nabla f^{(c)}(\theta^{(c)}_{i,0})\|^2+\sum_{k=1}^{K-1}\frac{2(K-k)}{3Kk}\mathbb{E}\|\nabla f^{(c)}(\theta^{(c)}_{i,0})-\nabla f^{(c)}(\theta^{(c)}_{i,k})\|^2\big)\\
\leq& \sum_{c=1}^Nw_c(1-w_c)^2\big(\mathbb{E}\|\nabla f^{(c)}(\theta^{(c)}_{i,0})\|^2+\sum_{k=1}^{K-1}\frac{2(K-k)}{3Kk} L^2(2(k\eta_i)^2\mathbb{E}\|p^{(c)}_i\|^2\\
&+(k\eta_i)^4\mathbb{E}\|\nabla f^{(c)}(\theta^{(c)}_{i,0})\|^2+\frac{(k\eta_i)^4}{k}\sigma_g^2d)\\
     \leq &\sum_{c=1}^Nw_c(1-w_c)^2\big((1+\frac{2L^2(K\eta_i)^4}{3*20})\mathbb{E}\|\nabla f^{(c)}(\theta^{(c)}_{i,0})\|^2+\frac{2L^2(K\eta_i)^2}{3*6}\sigma_t^{(c)2}d+ \frac{2L^2(K\eta_i)^4}{3*12K}\sigma_g^2d\big)
\end{align*}
where the first  inequality is by Lemma \ref{lem:convex} and noting that $\sum_{s=1}^N\frac{w_s}{2(1-w_c)}(\frac{1-w_s}{w_s})^{\mathbb{I}_{\{s=c\}}}=1$ and $\sum_{k=1}^{K-1}\frac{(K-k)k}{(K^3-K)/6}=1$;  the equality is by noting that  $\sum_{c=1}^Nw_c\sum_{s=1}^N\frac{w_s}{2(1-w_c)}(1-w_c)^2(\frac{1-w_s}{w_s})^{\mathbb{I}_{\{s=c\}}}=\sum_{c=1}^Nw_c(1-w_c)^2$;   the second inequality is by (\ref{eqn:disthetab:a2}) in Lemma \ref{lem:disthetab};  the third inequality is  by $\sum_{j=1}^{k-1}(k-j)j\leq \frac{k^3}{6}$, $\sum_{j=1}^{k-1}(k-j)j^2\leq \frac{k^4}{12}$ and  $\sum_{j=1}^{k-1}(k-j)j^3\leq\frac{k^5}{20}$. Now by assumption $K\eta\leq 1/\sqrt{L}$ and elementary numerical calculations, we get (\ref{pflem:dislocb:1})  and finish the proof.
\end{proof}
\subsection{Proof of Lemma \ref{lem:FLorab}}\label{pflem:FLorab}
\lemFLorab*
\begin{proof}
To prove \eqref{eqn:FLorab:a}, we notice that
\begin{align*}
   \mathbb{E}\|\theta^\pi_t(u)-\theta^\pi_t-p_tu\|^2
    = &\frac{u^4}{4}\mathbb{E}\Bigl\|\int_0^u\int_0^s\frac{2}{u^2}\nabla f(\theta^\pi_t(r))drds\Bigr\|^2\\
    \leq&\frac{u^4}{4}\mathbb{E}\int_0^u\int_0^s\frac{2}{u^2}\Bigl\|\nabla f(\theta^\pi_t(r))\Bigr\|^2drds
    =\frac{u^2}{2}\int_0^u\int_0^s\mathbb{E}\|\nabla f(\theta^\pi_t(r))\|^2drds\\
    \leq&\frac{u^2}{2}\int_0^u\int_0^sL^2\mathbb{E}\|\theta^\pi_t(r)-\theta^*\|^2drds
    \leq\frac{L^2d}{4\mu}u^4,
\end{align*}
where the first inequality is due to Lemma \ref{lem:convex}; the second inequality is by Assumption \ref{assum:smooth} and the fact that $\nabla f(\theta^*)=0$; the third inequality is by Lemma \ref{lem:Oracle}.

Next, we show \eqref{eqn:FLorab:b} holds. By direct calculations
\begin{align*}
\mathbb{E}\|\theta^\pi_t(u)-\theta^\pi_t\|^2\|p_t\|_{\infty}^2\leq& \mathbb{E}\|Cu p_t(\frac{u}{2})\|^2\|p_t\|_{\infty}^2\leq \frac{1}{3}u^2c_dd
\end{align*}
where  the first inequality is by noting that $\theta^\pi_t(u)$, $\theta^\pi_t$ are positions under HMC system ${\cal H}$ at time $\frac{u}{2}$ with the same initial $\theta^\pi_t(\frac{u}{2})$ but with opposite momentum $p_t(\frac{u}{2})$ and by Lemma \ref{lem:GuD3}; the second inequality is by Lemma \ref{lem:MaxGau}.

Next consider \eqref{eqn:FLorab:c},  
by  $\nabla f(\theta^\pi_t(u))=\nabla f(\theta^\pi_t)+\int_0^1\nabla^2 f(\theta^\pi_t+s(\theta^\pi_t(u)-\theta^\pi_t))(\theta^\pi_t(u)-\theta^\pi_t)ds$
\[
\nabla f(\theta^\pi_t(u))-\nabla f(\theta^\pi_t)-\nabla^2 f(\theta^\pi_t)p_tu=\int_0^1((a)+(b))ds
\]
where
\begin{align*}
    (a)&=\nabla^2 f(\theta^\pi_t+s(\theta^\pi_t(u)-\theta^\pi_t))(\theta^\pi_t(u)-\theta^\pi_t-p_tu),\\
    (b)&=\Bigl(\nabla^2 f(\theta^\pi_t+s(\theta^\pi_t(u)-\theta^\pi_t))-\nabla^2 f(\theta^\pi_t)\Bigr)p_tu.
\end{align*}
By Lemma \ref{lem:convex}, for any $c_1,c_2>0$ such that $c_1+c_2=1$, we have
\begin{align*}
    \mathbb{E}\|\nabla& f(\theta^\pi_t(u))-\nabla f(\theta^\pi_t)-\nabla^2 f(\theta^\pi_t)p_tu\|^2\\
    \leq &\int_0^1(\frac{1}{c_1}\mathbb{E}\|(a)\|^2+\frac{1}{c_2}\mathbb{E}\|(b)\|^2)ds\\
    \leq & \int_0^1(\frac{1}{c_1}L^2\mathbb{E}\|\theta^\pi_t(u)-\theta^\pi_t-p_tu\|^2+\frac{1}{c_2}L_H^2\mathbb{E}\|s(\theta^\pi_t(u)-\theta^\pi_t)\|^2\|p_tu\|_{\infty}^2)ds\\
    \leq & \int_0^1(\frac{1}{c_1}\frac{L^4d}{4\mu}u^4\sigma_{\rho_t}^4+\frac{1}{c_2}L_H^2s^2\frac{1}{3}u^2c_ddu^2)ds\\
    \leq &\frac{5}{4}L^3\big(\frac{L^4}{\mu}+L_H^2c_d\big)u^4d
\end{align*}
where in the second inequality, the first term is by Assumption \ref{assum:smooth} and the second term is by Assumption \ref{assum:Hsmooth}; the third inequality is by \eqref{eqn:FLorab:a} and \eqref{eqn:FLorab:b} and the last inequality is by direct calculations with $(c_1,c_2)=(\frac{1}{5},\frac{4}{5})$.
\end{proof}
\subsection{Proof of Lemma \ref{lem:MaxGau}}\label{pflem:MaxGau}
\lemMaxGau*
\begin{proof}
First, we introduce some preliminary results, for any $t>0$ and for any gaussian vector $p\sim N(0,\sigma^2\mathbb{I}_d)$
\[
\text{Borell–TIS inequality: }\qquad \mathbb{P}\big(\Bigl|\|p\|_{\infty}-\mathbb{E}\|p\|_{\infty}\Bigr|\geq t\big)\leq 2\exp(-\frac{t^2}{2\sigma^2}),
\]
and
\[
\mathbb{E}\|p\|_{\infty}\leq \sqrt{2\log(2d)}\sigma.
\]
Now, by direct calculations,
\begin{align*}
     \mathbb{E}[\|p\|_{\infty}-\mathbb{E}\|p\|_{\infty}]^2= & \int_0^{\infty} 2t\mathbb{P}\Bigl(\Bigl|\|p\|_{\infty}-\mathbb{E}\|p\|_{\infty}\Bigr|\geq t\Bigr)dt\leq \int_0^{\infty} 2t2\exp(-\frac{t^2}{2\sigma^2})dt=4\sigma^2,\\
     \mathbb{E}[\|p\|_{\infty}-\mathbb{E}\|p\|_{\infty}]^4= & \int_0^{\infty} 4t^3\mathbb{P}\Bigl(\Bigl|\|p\|_{\infty}-\mathbb{E}\|p\|_{\infty}\Bigr|\geq t\Bigr)dt\leq \int_0^{\infty}4t^32\exp(-\frac{t^2}{2\sigma^2})dt=16\sigma^4.
\end{align*}
and noting that
\begin{align*}
    \mathbb{E}\|p\|_{\infty}^2=&\mathbb{E}^2\|p\|_{\infty}+\mathbb{E}[\|p\|_{\infty}-\mathbb{E}\|p\|_{\infty}]^2,\\
     \mathbb{E}\|p\|_{\infty}^4=& \mathbb{E}[\|p\|_{\infty}-\mathbb{E}\|p\|_{\infty}+\mathbb{E}\|p\|_{\infty}]^4\leq 8\big(\mathbb{E}\|p\|_{\infty}^4+\mathbb{E}^4\|p\|_{\infty}\big),
\end{align*}
It follows that
\[
\mathbb{E}\|p\|_{\infty}^2\leq (2\log(2d)+4)\sigma^2, \qquad      \mathbb{E}\|p\|_{\infty}^4\leq  \big(128+32\log^2(2d)\big)\sigma^4.
\]
Moreover, by Cauchy-Schwarz inequality
\[
\mathbb{E}\|p\|^2\|p\|_{\infty}^2\leq \mathbb{E}\big[\frac{\|p\|^4}{2d}+\frac{d\|p\|_{\infty}^4}{2}\big]\leq \big(d+2+64d+16\log^2(2d)d\big)\sigma^4.
\]
Let $c_d:=128+32\log^2(2d)$. Combining the above results, we finish the proof.
\end{proof}
\section{Proof of Theorem \ref{thm:LBconFLHMC}}\label{pfthm:LBconFLHMC}
We consider a special case. Without loss of generality, we assume $N$ is even. The loss function $f^{(c)}(\theta)$ are as follow, for some constants $C_1,C_2>0$
\[
f^{(c)}(\theta)=\bigg\{\begin{array}{ll}
    L\|\theta-\theta^*_{L}\mathbb{I}_d\|^2/2+C_1, & c= 1,2,\ldots,\frac{N}{2}  \\
     \mu\|\theta-\theta^*_{\mu}\mathbb{I}_d\|^2/2+C_2,& c= \frac{N}{2}+1,\ldots ,N. 
\end{array}
\]
and weights are
\[
w_c=\frac{1}{N},\qquad c=1,2,\ldots,N
\]
Then we have the following result on the tightness of our analysis on the upper bound.
\begin{restatable}[Dimensional tight lower bound for ideal HMC process]{theorem}{}
Under Assumptions \ref{assum:convex}-\ref{assum:Hsmooth},  if we initialize $\theta_0\in N(0,\sigma^2\mathbb{I}_d)$ for any $\sigma^2> 0$, suppose $\theta^*_L>\theta^*_\mu>0$, then we have
\begin{align*}
    \theta_t=&\theta_0\gamma^t+\frac{1-\gamma^t}{1-\gamma}\frac{1}{2}(\frac{(1-\cos^2(\sqrt{L}K\eta))}{\sqrt{L}}\\
    &+\frac{(1-\cos^2(\sqrt{\mu}K\eta))}{\sqrt{\mu}})\mathbb{I}_d+\gamma \sqrt{\frac{1-\gamma^{2t}}{1-\gamma^2}}p
\end{align*}
 where $p\sim N(0,\mathbb{I}_d)$, $\gamma=\frac{1}{2}\big(\cos^T(\sqrt{LK\eta})+\cos^T(\sqrt{\mu K\eta})\big)$. 
 
 As a result, the least $t$ for ${\cal W}(\theta_t,\theta^\pi)^2\leq  \epsilon^2$  is taken when
\[
t=\Omega(\frac{\sqrt{d}T\log(d/\epsilon)}{\epsilon}).
\]
\end{restatable}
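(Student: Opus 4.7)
The plan is to exploit the fact that every local loss $f^{(c)}$ is quadratic, so the continuous HMC ODE in \eqref{eq:PDE} admits an explicit closed-form solution per device. For a device with $f^{(c)}(\theta)=\alpha\|\theta-\theta^*_\alpha\mathbb{I}_d\|^2/2$, one HMC iteration of length $K\eta$ starting from position $\theta$ with freshly sampled momentum $p\sim N(0,\mathbb{I}_d)$ produces $\theta^*_\alpha\mathbb{I}_d+\cos(\sqrt{\alpha}K\eta)(\theta-\theta^*_\alpha\mathbb{I}_d)+\sin(\sqrt{\alpha}K\eta)p/\sqrt{\alpha}$. Iterating this map $T$ times with independent momenta gives a contraction by $\cos^T(\sqrt{\alpha}K\eta)$ toward $\theta^*_\alpha\mathbb{I}_d$ plus a telescoping sum of independent Gaussian noises.

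Next I would average over the $N$ devices. Because the first $N/2$ share $\alpha=L$ and the last $N/2$ share $\alpha=\mu$, all with uniform weights $1/N$, a single communication round starting from $\theta_0$ produces $\theta_T=\gamma\theta_0+d_{\mathrm{drift}}+\xi$, where $\gamma$ is as stated in the theorem, $d_{\mathrm{drift}}\propto\mathbb{I}_d$ is determined by $\theta^*_L,\theta^*_\mu$, and $\xi$ is a centered Gaussian whose variance is determined by the independent momentum resamplings across the two groups. Unrolling this linear recursion over $t$ communication rounds yields the displayed closed form: the transient term $\theta_0\gamma^t$, a geometric-series drift with coefficient $(1-\gamma^t)/(1-\gamma)$, and accumulated Gaussian noise with variance proportional to $\gamma^2(1-\gamma^{2t})/(1-\gamma^2)$ per coordinate.

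For the lower bound I would apply the elementary inequality $\mathcal{W}_2(\theta_t,\theta^\pi)^2\geq\|\mathbb{E}\theta_t-\mathbb{E}\theta^\pi\|^2$. The target posterior is Gaussian centered at $\theta^*=(L\theta^*_L+\mu\theta^*_\mu)/(L+\mu)\cdot\mathbb{I}_d$, whereas the iteration's stationary mean is $d_{\mathrm{drift}}/(1-\gamma)$. Expanding $\cos^T(\sqrt{\alpha}K\eta)$ in powers of $K\eta$ shows that the leading $O(K^2\eta^2)$ contributions to numerator and denominator cancel, so the fixed point agrees with $\theta^*$ only to first order; the next-order terms leave a per-coordinate bias of order $TK^2\eta^2$ with coefficient proportional to $(\theta^*_L-\theta^*_\mu)^2$, which is nonzero by the assumption $\theta^*_L>\theta^*_\mu$. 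Summed across $d$ coordinates, achieving $\mathcal{W}_2^2\leq\epsilon^2$ then forces $TK^2\eta^2\lesssim\epsilon/\sqrt{d}$.

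Finally, the transient must also decay: since $\|\theta_0-\theta^*\mathbb{I}_d\|^2=\Theta(d)$ under the Gaussian initialization, I need $\gamma^t\sqrt{d}\lesssim\epsilon$, hence $t\gtrsim\log(\sqrt{d}/\epsilon)/(1-\gamma)$. Combining with $1-\gamma\asymp TK^2\eta^2(L+\mu)/4$ and the bias constraint $TK^2\eta^2\lesssim\epsilon/\sqrt{d}$ gives $t\gtrsim\sqrt{d}\log(d/\epsilon)/\epsilon$ communication rounds, and hence a total running time of $tT=\Omega(\sqrt{d}T\log(d/\epsilon)/\epsilon)$ local iterations. The main obstacle is making the Taylor expansion sufficiently precise to confirm that the first-order bias genuinely cancels while the second-order contribution is $\Theta(TK^2\eta^2)$; this step is where the heterogeneity $\theta^*_L\neq\theta^*_\mu$ is essential, since without it the bias would vanish identically and the lower bound would break down.
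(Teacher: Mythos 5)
Your proposal is correct and follows essentially the same route as the paper's proof: the same two-group quadratic construction, the explicit cosine closed form of the per-device HMC flow iterated $T$ times with fresh momenta, the averaged linear recursion with contraction factor $\gamma$, the reduction ${\cal W}_2(\theta_t,\theta^\pi)^2\geq\|\mathbb{E}\theta_t-\mathbb{E}\theta^\pi\|^2$, and the Taylor expansion showing the stationary bias $\theta^*_{\widetilde{\eta}}-\theta^*$ is $\Theta(TK^2\eta^2)$ per coordinate while $1-\gamma=O(TK^2\eta^2)$. The one detail the paper supplies that your sketch elides is the monotonicity step $0<\theta^*_{\widetilde{\eta}}\leq\theta^*$ (proved via $\mu\big(1-\cos^T(\sqrt{L}K\eta)\big)\leq L\big(1-\cos^T(\sqrt{\mu}K\eta)\big)$ together with $\theta^*_L\geq\theta^*_\mu$), which ensures the transient term $-\gamma^t\theta^*_{\widetilde{\eta}}$ and the bias term $\theta^*_{\widetilde{\eta}}-\theta^*$ carry the same sign, so the two contributions to $\|\mathbb{E}\theta_t-\theta^*\mathbb{I}_d\|^2$ cannot cancel and may be lower-bounded separately as your argument implicitly assumes.
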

By definitions and elementary calculus, a parameter $q_0$ that follows HMC system ${\cal H}:f(q)=\frac{L}{2}\|q-q^*\|^2$ with length $\widetilde{\eta}:=K\eta$ with momentum $\widetilde{p}_0$ will give
\[
{\cal H}_{K\eta}(q_0)=q_0\cos(\sqrt{L\widetilde{\eta}})+q^*\big(1-\cos(\sqrt{L\widetilde{\eta}})\big)+\frac{\widetilde{p}_0}{\sqrt{L}}\sin(\sqrt{L\widetilde{\eta}})
\]
By induction, a parameter $q_0$ that follows HMC system ${\cal H}$ with independent momentum $\widetilde{p}_t$, $t=0,1,\ldots T-1$ will give
\begin{align*}
    {\cal H}^T_{K\eta}(q_0)=&q_0\cos^T(\sqrt{L\widetilde{\eta}})+q^*\big(1-\cos^T(\sqrt{L\widetilde{\eta}})\big)+\sum_{t=0}^{T-1}\frac{\widetilde{p}_t}{\sqrt{L}}\sin(\sqrt{L\widetilde{\eta}})\cos^{T-t-1}(\sqrt{L\widetilde{\eta}})\\
    =&q_0\cos^T(\sqrt{L\widetilde{\eta}})+q^*\big(1-\cos^T(\sqrt{L\widetilde{\eta}})\big)+p_0\frac{\big(1-\cos^{2T}(\sqrt{L\widetilde{\eta}})\big)^{1/2}}{\sqrt{L}}
\end{align*}
where the last equality is by the Gaussian property that the sum of $T$ Gaussian's is still a Gaussian random variable.

Let $\gamma=\frac{1}{2}\big(\cos^T(\sqrt{L\widetilde{\eta}})+\cos^T(\sqrt{\mu\widetilde{\eta}})\big)$ be the contraction factor and $\theta^*_{\widetilde{\eta}}=\frac{1}{2(1-\gamma)}\Bigl(\theta^*_L\big(1-\cos^T(\sqrt{L\widetilde{\eta}})\big)+\theta^*_\mu\big(1-\cos^T(\sqrt{\mu\widetilde{\eta}})\big)\Bigr)$. It follows that for FA continuous HMC, we have
\begin{align*}
    \theta_{t+1}=&\theta_t\frac{1}{2}\big(\cos^T(\sqrt{L\widetilde{\eta}})+\cos^T(\sqrt{\mu\widetilde{\eta}})\big)+\frac{1}{2}\Bigl(\theta^*_L\big(1-\cos^T(\sqrt{L\widetilde{\eta}})\big)+\theta^*_\mu\big(1-\cos^T(\sqrt{\mu\widetilde{\eta}})\big)\Bigr)\\
    &\cdot\mathbb{I}_d+p_t\frac{1}{2}\Bigl(\frac{\big(1-\cos^{2T}(\sqrt{L\widetilde{\eta}})\big)^{\frac{1}{2}}}{\sqrt{L}}+\frac{\big(1-\cos^{2T}(\sqrt{\mu\widetilde{\eta}})\big)^{\frac{1}{2}}}{\sqrt{\mu}}\Bigr)\\
    =&\theta_t\gamma+\theta^*_{\widetilde{\eta}}(1-\gamma)\mathbb{I}_d+p_t\frac{1}{2}\Bigl(\frac{\big(1-\cos^{2T}(\sqrt{L\widetilde{\eta}})\big)^{\frac{1}{2}}}{\sqrt{L}}+\frac{\big(1-\cos^{2T}(\sqrt{\mu\widetilde{\eta}})\big)^{\frac{1}{2}}}{\sqrt{\mu}}\Bigr)\\
    =&\cdots\\
    =&\theta_0\gamma^t+\theta^*_{\widetilde{\eta}}(1-\gamma^t)\mathbb{I}_d+\sum_{i=0}^{t-1}\gamma^i p_i\frac{1}{2}\Bigl(\frac{\big(1-\cos^{2T}(\sqrt{L\widetilde{\eta}})\big)^{\frac{1}{2}}}{\sqrt{L}}+\frac{\big(1-\cos^{2T}(\sqrt{\mu\widetilde{\eta}})\big)^{\frac{1}{2}}}{\sqrt{\mu}}\Bigr)\\
        =&\theta_0\gamma^t+\theta^*_{\widetilde{\eta}}(1-\gamma^t)\mathbb{I}_d+ p\sqrt{\frac{1-\gamma^{2t}}{1-\gamma^2}}\frac{1}{2}\Bigl(\frac{\big(1-\cos^{2T}(\sqrt{L\widetilde{\eta}})\big)^{\frac{1}{2}}}{\sqrt{L}}+\frac{\big(1-\cos^{2T}(\sqrt{\mu\widetilde{\eta}})\big)^{\frac{1}{2}}}{\sqrt{\mu}}\Bigr)
\end{align*}
whereby the property of Gaussian, we combine $t$ Gaussians into one Gaussian $p$. This proves the first claim for $t$ being the multiples of $T$.

By the closed form solution ${\cal W}(N(\mu_1,\Sigma_1),N(\mu_2,\Sigma_2))^2=\|\mu_1-\mu_2\|^2+\text{tr}\big(\Sigma_1+\Sigma_2-2(\Sigma_2^{\frac{1}{2}}\Sigma_2\Sigma_2^{\frac{1}{2}})^{\frac{1}{2}}\big)\geq \|\mu_1-\mu_2\|^2$, and recall that we initialize $\theta_0\sim N(0,\sigma^2\mathbb{I}_d)$, we get
\[
    {\cal W}(\theta_t,\theta^\pi)^2\geq \|(\theta^*_{\widetilde{\eta}}(1-\gamma^t)-\theta^*)\mathbb{I}_d\|^2=\gamma^{2t}(\theta^*_{\widetilde{\eta}})^2d-2\gamma^t\theta^*_{\widetilde{\eta}}(\theta^*_{\widetilde{\eta}}-\theta^*) d+(\theta^*_{\widetilde{\eta}}-\theta^*)^2d,
\]
where $\theta^*=\frac{L\theta_L^*+\mu\theta_\mu^*}{L+\mu}$ is the global minimum of the FA-HMC system (This can be easily checked).

We claim that by the conditions $L\geq \mu$ and $\theta_L^*\geq \theta_\mu^*>0$, we have $0<\theta^*_{\widetilde{\eta}}\leq \theta^*$, which will be proved after the main proof. Then
\[
    {\cal W}(\theta_t,\theta^\pi)^2\geq (\theta^*_{\widetilde{\eta}})^2\gamma^{2t}d+(\theta^*_{\widetilde{\eta}}-\theta^*)^2d,
\]
At the same time, by elementary calculus, for $\widetilde{\eta}=K\eta\leq \frac{1}{\sqrt{L}}$, there exist $C_1,C_2$ such that
\[
\gamma\geq 1-C_1\frac{T(L+\mu)}{2}\widetilde{\eta}^2,\qquad (\theta^*_{\widetilde{\eta}}-\theta^*)^2\geq C_2T^2(\frac{L^2\theta_L^*+\mu^2\theta_\mu^*}{L+\mu})^2\widetilde{\eta}^4.
\]
We have
\[
    {\cal W}(\theta_t,\theta^\pi)^2\geq (\theta^*_{\widetilde{\eta}})^2(1-C_1\frac{T(L+\mu)}{2}\widetilde{\eta}^2)^{2t}d+C_2T^2(\frac{L^2\theta_L^*+\mu^2\theta_\mu^*}{L+\mu})^2\widetilde{\eta}^4d,
\]
Let ${\cal W}(\theta_t,\theta^\pi)^2\leq \epsilon$, the least $t/T$ on the RHS is taken when
\[
t/T=\Omega(\frac{\sqrt{d}\log(d/\epsilon)}{\epsilon}).
\]
Last,  recall that
\[
\theta^*_{\widetilde{\eta}}=\frac{\theta^*_L\big(1-\cos^T(\sqrt{L\widetilde{\eta}})\big)+\theta^*_\mu\big(1-\cos^T(\sqrt{\mu\widetilde{\eta}})\big)}{2(1-\gamma)}=\frac{\theta^*_L\big(1-\cos^T(\sqrt{L\widetilde{\eta}})\big)+\theta^*_\mu\big(1-\cos^T(\sqrt{\mu\widetilde{\eta}})\big)}{2-\cos^T(\sqrt{L\widetilde{\eta}})-\cos^T(\sqrt{\mu\widetilde{\eta}})}
\]
To show  $\theta^*_{\widetilde{\eta}}\leq \theta^*$, it suffices to show that for any $a,b>0$ such that $a/(a+b)\leq L/(L+\mu)$ we have
\[
\frac{a}{a+b}\theta^*_L+\frac{b}{a+b}\theta^*_\mu\leq \frac{L}{L+\mu}\theta^*_L+\frac{\mu}{L+\mu}\theta^*_\mu,\qquad \text{and}\qquad \frac{1-\cos^T(\sqrt{L\widetilde{\eta}})}{2-\cos^T(\sqrt{L\widetilde{\eta}})-\cos^T(\sqrt{\mu\widetilde{\eta}})} \leq \frac{L}{L+\mu}
\]
The first inequality is implied by $\theta_L^*\geq \theta_\mu^*$. It is left to show the second inequality, which is equivalent to show
\[
 \frac{\mu\big(1-\cos^T(\sqrt{L\widetilde{\eta}})\big)}{L\big(1-\cos^T(\sqrt{\mu\widetilde{\eta}})\big)} \leq 1
\]
Note that
\[
 \frac{1-\cos^T(\sqrt{L\widetilde{\eta}})}{1-\cos^T(\sqrt{\mu\widetilde{\eta}})} =\frac{1-\cos(\sqrt{L\widetilde{\eta}})}{1-\cos(\sqrt{\mu\widetilde{\eta}})}\frac{\sum_{i=0}^{T-1}\cos(\sqrt{L\widetilde{\eta}})^i}{\sum_{i=0}^{T-1}\cos(\sqrt{\mu\widetilde{\eta}})^i}\leq \frac{1-\cos(\sqrt{L\widetilde{\eta}})}{1-\cos(\sqrt{\mu\widetilde{\eta}})}\cdot 1
\]
It suffices to show  for all $\eta \leq \frac{1}{2\sqrt{L}}$
\[
 \frac{ \mu\big(1-\cos(\sqrt{L\widetilde{\eta}})\big)}{L\big(1-\cos(\sqrt{\mu\widetilde{\eta}})\big)} \leq 1,\qquad \text{or}\qquad \mu\big(1-\cos(\sqrt{L\widetilde{\eta}})\leq L\big(1-\cos(\sqrt{\mu\widetilde{\eta}})\big)
\]
Note that by elementary calculus, $g(\eta):=\mu\big(1-\cos(\sqrt{L\eta})-L\big(1-\cos(\sqrt{\mu\eta})\big)$ satisfies that $g(0)=0$, $g'(0)$, $g''(\eta)=L\mu\big(\cos(\sqrt{L\eta})-\cos(\sqrt{\mu\eta})\big)\leq 0$, for all $\eta \leq \frac{1}{2\sqrt{L}}$, which implies that $g(\eta)\leq 0$  for all $\eta \leq \frac{1}{2\sqrt{L}}$ and proves the above inequality. This finishes the proof.


\section{Proof of Proposition 4.5 
}\label{sec:pfpropdynamic}
We show a more general  version of  Proposition 4.5 
as follow
\begin{proposition}[Dynamic stepsize]Suppose for the $\{\theta_t\}_t$ yielded by Algorithm 3, 
we have that  
for some $C>1$,
\begin{align*}
  \mathbb{E}&\|\theta_{t+1}-\theta^\pi_{t+1}\|^2
     \leq(1-\frac{\mu(K\eta_t)^2}{4})^tD+C(K\eta_t)^2c_d\widetilde{\Delta}
\end{align*}
with
\[
\widetilde{\Delta}=d(T^2(\gamma+(1-\rho)N)+\sum_{c=1}^Nw_c^2\sigma_g^2/K).
\]
If we set
\[
(K\eta)^2=\frac{D}{8c_d\widetilde{\Delta}L},\qquad t_1=\lceil\frac{-\log (8)}{T\log(1-\mu(K\eta)^2/4)}\rceil T
\]
and
\[
\eta_t=\eta,\quad \text{for }0\leq t\leq t_1-1.
\]
 and for $j=2,3,\ldots$
and for $s=1,2,\cdots$
\[
t_{s+1}=2t_s,\eta_{t''}=\frac{\eta_{t'}}{\sqrt{2}},\qquad
\]
$\text{ for any  }\sum_{j=1}^{s-1}t_j\leq t'\leq \sum_{j=1}^st_j-1,\;\sum_{j=1}^st_j\leq t''\leq \sum_{j=1}^{s+1}t_j-1.$ then we have
\[
\mathbb{E}\|\theta_t-\theta^\pi_t\|^2\leq \epsilon^2,\qquad \text{at some}\quad t\leq \frac{Cc_d\widetilde{\Delta}}{\epsilon^2}.
\]
\end{proposition}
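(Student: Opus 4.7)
The plan is a stage-wise analysis that chains the hypothesized closed-form bound across the geometric schedule. Writing $\eta^{(s)} := \eta \cdot 2^{-(s-1)/2}$ and $t_s := 2^{s-1} t_1$ for the stepsize and length of stage $s$, and letting $T_s := \sum_{j\leq s} t_j$ and $e_s := \mathbb{E}\|\theta_{T_s} - \theta^\pi_{T_s}\|^2$ with $e_0 = D$, I apply the bound restarted at the beginning of each stage (legitimate because the stepsize is constant on each stage) to obtain
\begin{equation*}
e_s \;\leq\; \bigl(1 - \mu(K\eta^{(s)})^2/4\bigr)^{t_s}\, e_{s-1} \;+\; C(K\eta^{(s)})^2 c_d \widetilde{\Delta}.
\end{equation*}
Plugging in the schedule, $(K\eta^{(s)})^2 = (K\eta)^2/2^{s-1}$, so the noise term evaluates to $CD/(8L\cdot 2^{s-1})$ and halves at each stage.

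The main obstacle is showing that the per-stage contraction factor stays bounded away from $1$ as $s$ grows and the stepsize shrinks. The key identity is that $t_s (K\eta^{(s)})^2 = t_1 (K\eta)^2$ is \emph{invariant} in $s$. Using $(1-x)^n \leq e^{-nx}$, the stage-$s$ contraction is at most $\exp\bigl(-t_1\mu(K\eta)^2/4\bigr)$, uniformly in $s$. The choice $t_1 \geq -\log(8)/\log(1-\mu(K\eta)^2/4)$ combined with the elementary bound $-\log(1-x) \leq x/(1-x)$ yields $t_1\mu(K\eta)^2/4 \geq (1-\mu(K\eta)^2/4)\log 8 \geq (3/4)\log 8$ under the stepsize constraint $(K\eta)^2 \leq 1/L$ inherited from Theorem \ref{thm:disFLHMC}. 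Setting $c := 8^{-3/4} < 1/2$, every stage satisfies the recursion
\begin{equation*}
e_s \;\leq\; c\, e_{s-1} + \frac{CD}{8L\cdot 2^{s-1}}.
\end{equation*}

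Unrolling yields
\begin{equation*}
e_s \;\leq\; c^s D + \frac{CD}{8L}\sum_{k=1}^s \frac{c^{s-k}}{2^{k-1}} \;\leq\; c^s D + \frac{CD}{8L(1-2c)\cdot 2^{s-1}},
\end{equation*}
where the geometric bound uses $2c < 1$. Both terms are $O(D/2^s)$, so $e_s \leq C' D/2^s$ for a constant $C'$ depending only on $L$ and absolute constants. Choosing $S := \lceil \log_2(C' D/\epsilon^2) \rceil$ therefore gives $e_S \leq \epsilon^2$.

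Finally I count iterations: $\sum_{s=1}^S t_s = (2^S - 1) t_1 \leq 2^S t_1$. Applying $-\log(1-x) \geq x$ gives $t_1 \leq T + 4\log(8)/(\mu(K\eta)^2)$, and substituting $(K\eta)^2 = D/(8c_d\widetilde{\Delta} L)$ produces $t_1 = O\bigl(T + c_d\widetilde{\Delta} L/(\mu D)\bigr)$. Multiplying by $2^S = O(D/\epsilon^2)$, the total is $O\bigl(c_d\widetilde{\Delta} L/(\mu\epsilon^2) + DT/\epsilon^2\bigr)$, which matches the claimed $Cc_d\widetilde{\Delta}/\epsilon^2$ after the condition-number $L/\mu$ and the subdominant $T$-term (since $\widetilde{\Delta} \geq dT^2\gamma$ dominates $DT$ in the regimes of interest under $D = O(d)$) are absorbed into the constant $C$.
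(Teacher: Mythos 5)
Your proof is correct and follows essentially the same route as the paper's (which adapts Theorem 14 of the cited work of Cheng et al.): a stage-wise recursion with doubling epoch lengths and halving squared stepsizes, a per-stage contraction factor bounded uniformly below $1/2$, a geometrically decaying bias term, and a final count of order $t_1\cdot 2^{S}$ with $2^{S}=O(D/\epsilon^2)$. If anything, your verification of the uniform per-stage contraction via the invariance $t_s(K\eta^{(s)})^2=t_1(K\eta)^2$ together with $(1-x)^n\le e^{-nx}$ is cleaner than the paper's, which instead invokes the inequality $y\log(1-x/y)/\log(1-x)\ge 1$ for $y\ge 1$ whose direction is dubious; your route recovers the needed conclusion with the slightly weaker but sufficient constant $8^{-3/4}<1/2$.
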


\begin{proof}

The procedure follows the proof of the Theorem 14 in \cite{ccbj18}. For completeness, we give the proof of a stronger result below, for any $a>0$
\begin{equation}\label{pfprop:dynamic:1}
   \mathbb{E}\|\theta_t-\theta^\pi_t\|^2\leq \epsilon^2,\qquad \text{at}\quad t=\frac{4\log (2a)}{\mu}\frac{c_d\widetilde{\Delta}}{D}(\frac{D}{\epsilon^2})^{\log_a(2)}.
\end{equation}
if we change to 
\[
(K\eta)^2=\frac{D}{2ac_d\widetilde{\Delta}L},\qquad t_1=\lceil\frac{-\log (2a)}{T\log(1-\mu(K\eta)^2/4)}\rceil T\qquad  \text{and}\qquad \eta_t=\eta,\quad \text{for }0\leq t\leq t_1-1.
\]
and for $s=1,2,\cdots$
\[
t_{s+1}=2t_s,\eta_{t''}=\frac{\eta_{t'}}{a^{1/4}},\qquad\text{ for any  }\sum_{j=1}^{s-1}t_j\leq t'\leq \sum_{j=1}^st_j-1,\;\sum_{j=1}^st_j\leq t''\leq \sum_{j=1}^{s+1}t_j-1.
\]
Then the claim of this proposition follows by setting $a=2$.

Note that  by the definitions of $t_j$ and $\eta_j$ assumptions,
\begin{align*}
    (1-\frac{\mu(K\eta_{t_j})^2}{4})^{t_j}=&(1-\frac{\mu(K\eta_{t_1})^2}{4\cdot 2^{j-1}})^{2^{j-1}t_1}\leq(1-\frac{\mu(K\eta_{t_1})^2}{4\cdot 2^{j-1}})^{2^{j-1}\frac{-\log (2a)}{\log(1-\mu(K\eta)^2/4)}}\\
    =&(\frac{1}{2a})^{\frac{2^{j+1}\log(1-\mu(K\eta)^2/2^{j+1})}{\log(1-\mu(K\eta)^2/4)}}\leq \frac{1}{2a},
\end{align*}
where the first inequality is by inserting the definition of $t_1$; the last inequality is by $y\log(1-x/y)/\log(1-x)\geq 1$ for any $x>0,y\geq 1$.

Based on this, (let $t^{(s)}=\sum_{j=1}^st_j$), it is seen that 
\begin{align*}
  \mathbb{E}\|\theta_{t^{(s)}}-\theta^\pi_{t^{(s)}}\|^2\leq& \frac{1}{2a}\mathbb{E}\|\theta_{t^{(s-1)}}-\theta^\pi_{t^{(s-1)}}\|^2+(K\eta_{t_s})^4Lc_d\widetilde{\Delta}\\
    =& \frac{1}{2a}\mathbb{E}\|\theta_{t^{(s-1)}}-\theta^\pi_{t^{(s-1)}}\|^2+\frac{D}{2a^s}\\
    \leq &\cdots \\
    \leq&(\frac{1}{2a})^sD+\sum_{j=1}^{s-1}\frac{D}{(2a)^j\cdot 2^{s-j}}+\frac{D}{2a^s}=\frac{D}{a^s}.
\end{align*}
 Choosing $s$ such that $\frac{D}{a^s}>\epsilon^2$, $\frac{D}{a^{s+1}}\leq \epsilon^2$, then we get
\[
  \mathbb{E}\|\theta_{t^{(s)}}-\theta^\pi_{t^{(s)}}\|^2\leq\epsilon^2,\quad t^{(s)}=\sum_{s=1}^st_j=t_1\cdot \sum_{j=0}^{s-1}2^j\leq t_1\cdot 2^s\leq 2\cdot\frac{-\log (2a)}{\log(1-\mu(K\eta)^2/4)} 2^{\log_a(D/\epsilon^2)}.
\]
Further by $-\log(1-x)\geq x$ for any $x<1$, and $(K\eta)^2=\frac{D}{2ac_d\widetilde{\Delta}L}$
\[
t^{(s)}\leq \frac{-8\log (2a)}{\mu(K\eta)^2} 2^{\log_a(D/\epsilon^2)}=\frac{32\log (2a)}{\mu}\frac{ac_d\widetilde{\Delta}L}{D}\frac{D}{\epsilon^2}.
\]
This is equivalent to (\ref{pfprop:dynamic:1}).



\end{proof}
\section{Relaxation of Variance of Stochastic Gradients}\label{sec:relaxation}
If we relax

\begin{customassu}{3.4}[$\sigma_g$-Bounded Variance]For local device $c=1,2,\ldots,N$, and leapfrog step $k=1,2,\ldots,K$, $t=1,2,\ldots$, we have $\max_{x=k-1/2,k}\mathrm{tr(Var}(\nabla \widetilde{f}^{(c)}(\theta^{(c)}_{t,k},\xi^{(c)}_{t,x})|\theta^{(c)}_{t,k}))\leq \sigma_g^2Ld$,
 for some $\sigma_g>0$.
\end{customassu}
\noindent to assumption:
\[
\max_{x=k-\frac{1}{2},k}\Bigl\{\mathbb{E}_{\xi}\|\nabla \widetilde{f}^{(c)}(\theta^{(c)}_{t,k},\xi^{(c)}_{t,x})-\nabla f^{(c)}(\theta^{(c)}_{t,k})\|^2\Bigr\}\leq \sigma_g^2\big[\mathbb{E}_{\xi}\|\nabla f^{(c)}(\theta^{(c)}_{t,k})\|^2+d\big],
\]
similar results w.r.t. dimensionality will still hold as long as we can show that 
\begin{equation}\label{pfrelax:1}
   \mathbb{E}\|\nabla f^{(c)}(\theta^{(c)}_{t,k})\|^2\leq CLd,\qquad \text{ for some $C$}.  
\end{equation}
The above goal is equivalent to generalizing the results of  Lemma \ref{lem:disunib} to that adapted to  (\ref{pfrelax:1}) and Lemma \ref{lem:disthetab} which is used in the proof of Lemma \ref{lem:disunib}.

For  Lemma~\ref{lem:disthetab} with the weaker Assumption \ref{assum:boundvar}, if we repeat the procedure in Section \ref{pflem:disthetab}, we can get (for some constants $C_1,C_2,C_3>0$
\begin{align*}
      & \mathbb{E}_{\xi}\| q(k)-q(0)\|^2\leq C_1(k\eta)^2\|p(0)\|^2+C_2(k\eta)^4(1+\frac{\sigma_g^2}{K})\|\nabla F(q(0))\|^2+C_3(k\eta)^4\sigma_g^2d \\
      & \mathbb{E}_{\xi}\| \nabla F (q(k))-\nabla F(q(0))\|^2\leq L^2(C_1(k\eta)^2\|p(0)\|^2+C_2(k\eta)^4(1+\frac{\sigma_g^2}{K})\|\nabla F(q(0))\|^2+C_3(k\eta)^4\sigma_g^2d)
\end{align*}
The difference between this result and  Lemma~\ref{lem:disthetab} is that the coefficient of the term $|\nabla F(q(0))\|^2$ is related to $\sigma^2_g$. This won't affect the use of Lemma~\ref{lem:disthetab} as long as we have $\sigma_g^2=O(K)$ which is a mild assumption.

Further base on this, if we repeat the procedure of the proof of  Lemma \ref{lem:disunib} in  Section \ref{pflem:disthetab}, we can get (for some constants $C_4,C_5,C_6,C_7>0$)
\[
 \mathbb{E}\|\theta^{(c)}_{t+1}-\theta^*\|^2
   \leq(1-\frac{\mu(K\eta_t)^2}{4}+C_4(1+\frac{\sigma_g^2}{K})\frac{(K\eta_t)^6L^4}{\mu}\mathbb{I}_{\{K\geq 2\}})\mathbb{E}\|\theta^{(c)}_{t,0}-\theta^*\|^2+\frac{(K\eta_t)^2L}{\mu}\widetilde{B}^{(c)}_a, 
\]
where
\begin{align*}
\widetilde{B}^{(c)}_a=&C_5\frac{L}{\mu}d+C_6(1+\frac{\sigma_g^2}{K})\frac{\|\nabla f^{(c)}(\theta^*)\|^2}{\mu}+C_7\frac{\sigma_g^2}{K\mu}d.
\end{align*}
As long as we have $\sigma_g^2=O(K)$, then there is not much significant difference between this result and Lemma \ref{lem:disunib}.
\section{Supplementary Figures to Main Text}\label{secton:figures}
For a robust comparison, we add standard deviation information to some figures in the main text, based on multiple runs. Specifically, we present the shaded error: [average-standard deviation, average+standard deviation] based on independent runs and add them to the figures. 


In Figures \ref{fig:HMC-accu_eb} - \ref{fig:HMC-NLL_eb}, we add shaded error based on 5 independent runs to Figures 3(a) - 3(b). 

In Figures \ref{fig:local-accu_eb} - \ref{fig:local-NLL_eb}, we add shaded error based on 5 independent runs to Figures 4(a) - 4(b). 


\begin{figure*}[htbp]
    \centering
    \subfigure[Accuracy]{
    \begin{minipage}[t]{0.24\linewidth}
    \centering
    \label{fig:HMC-accu_eb}
    \includegraphics[width=\linewidth]{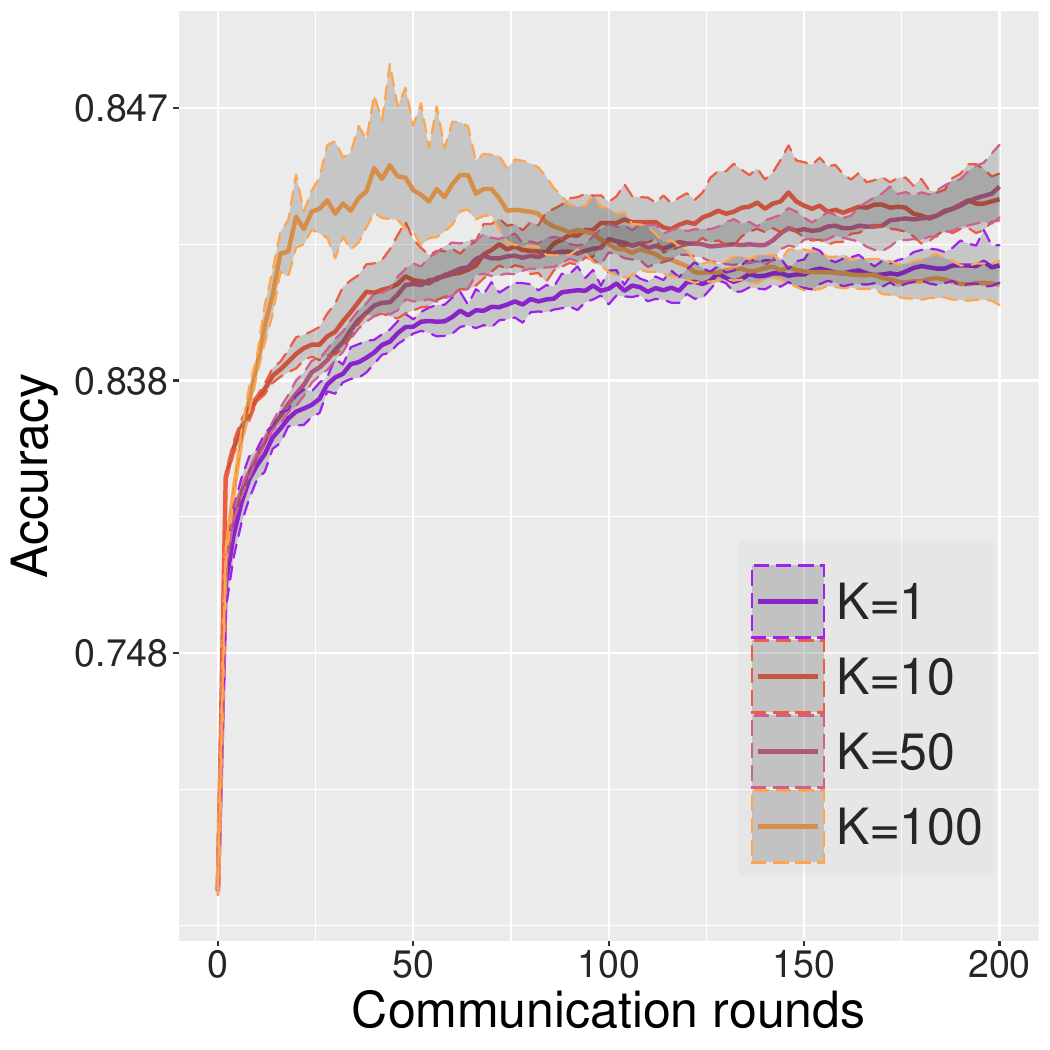}
    \end{minipage}%
    }%
    \subfigure[BS]{
    \begin{minipage}[t]{0.24\linewidth}
    \centering
    \label{fig:HMC-brier_eb}
    \includegraphics[width=\linewidth]{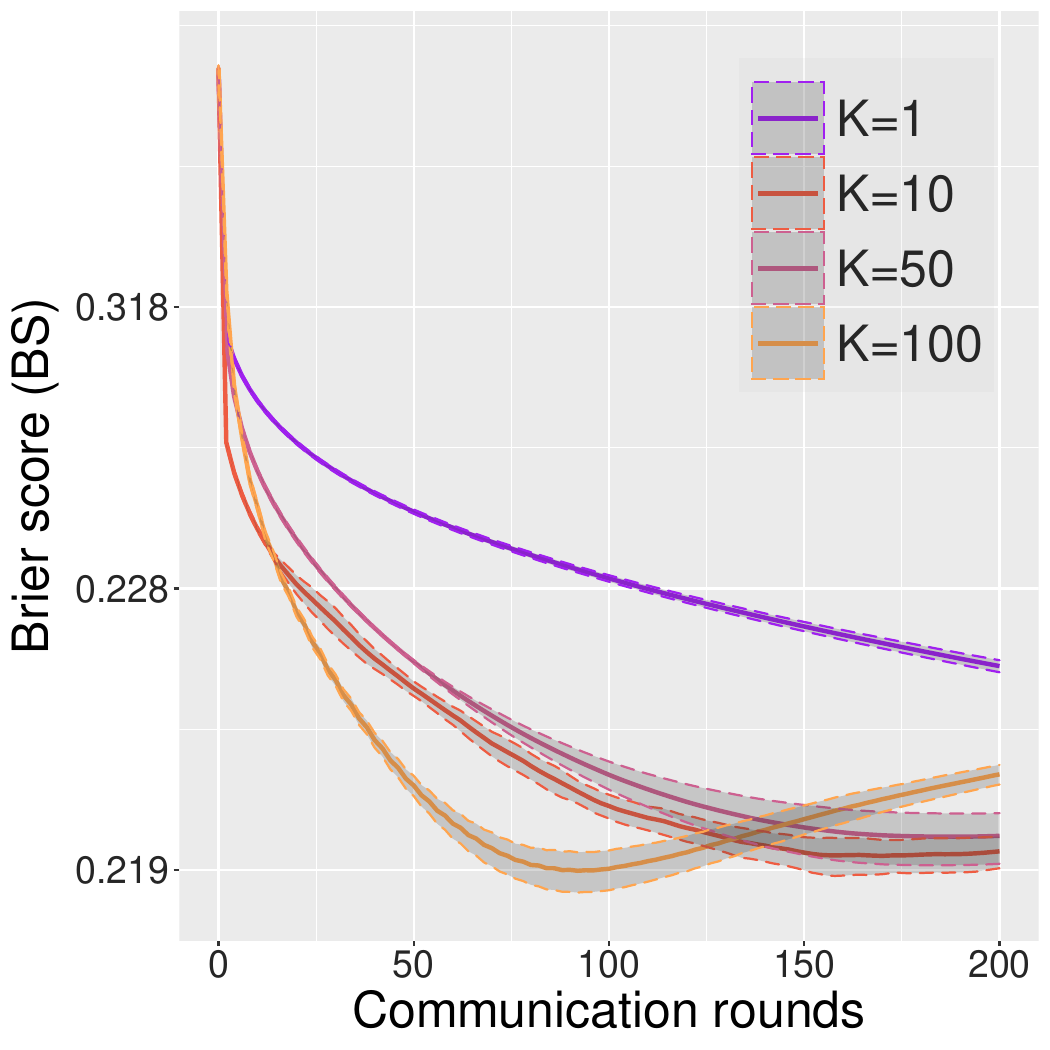}
    \end{minipage}%
    }%
    \subfigure[ECE]{
    \begin{minipage}[t]{0.24\linewidth}
    \centering
    \label{fig:HMC-ECE_eb}
    \includegraphics[width=\linewidth]{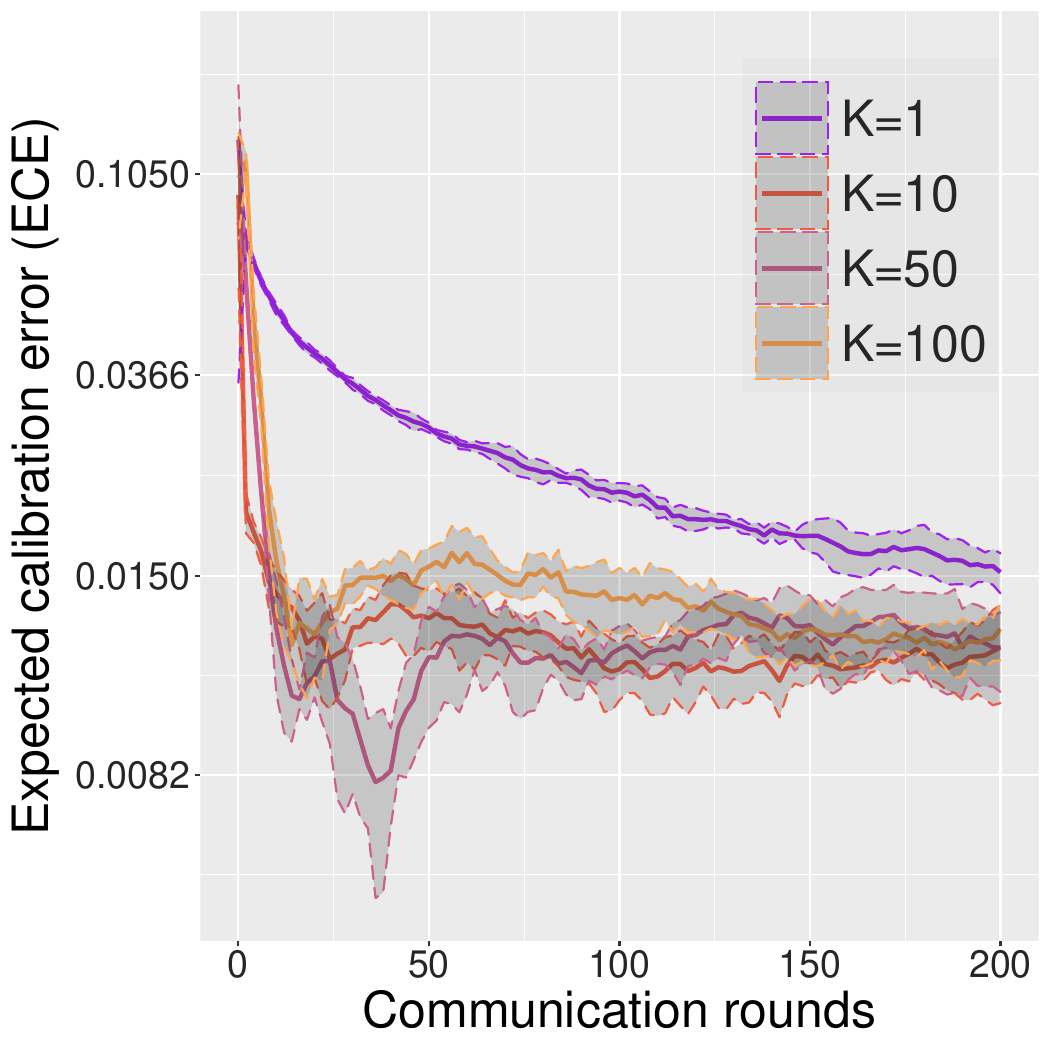}
    \end{minipage}%
    }%
    \subfigure[NLL]{
    \begin{minipage}[t]{0.24\linewidth}
    \centering
    \label{fig:HMC-NLL_eb}
    \includegraphics[width=\linewidth]{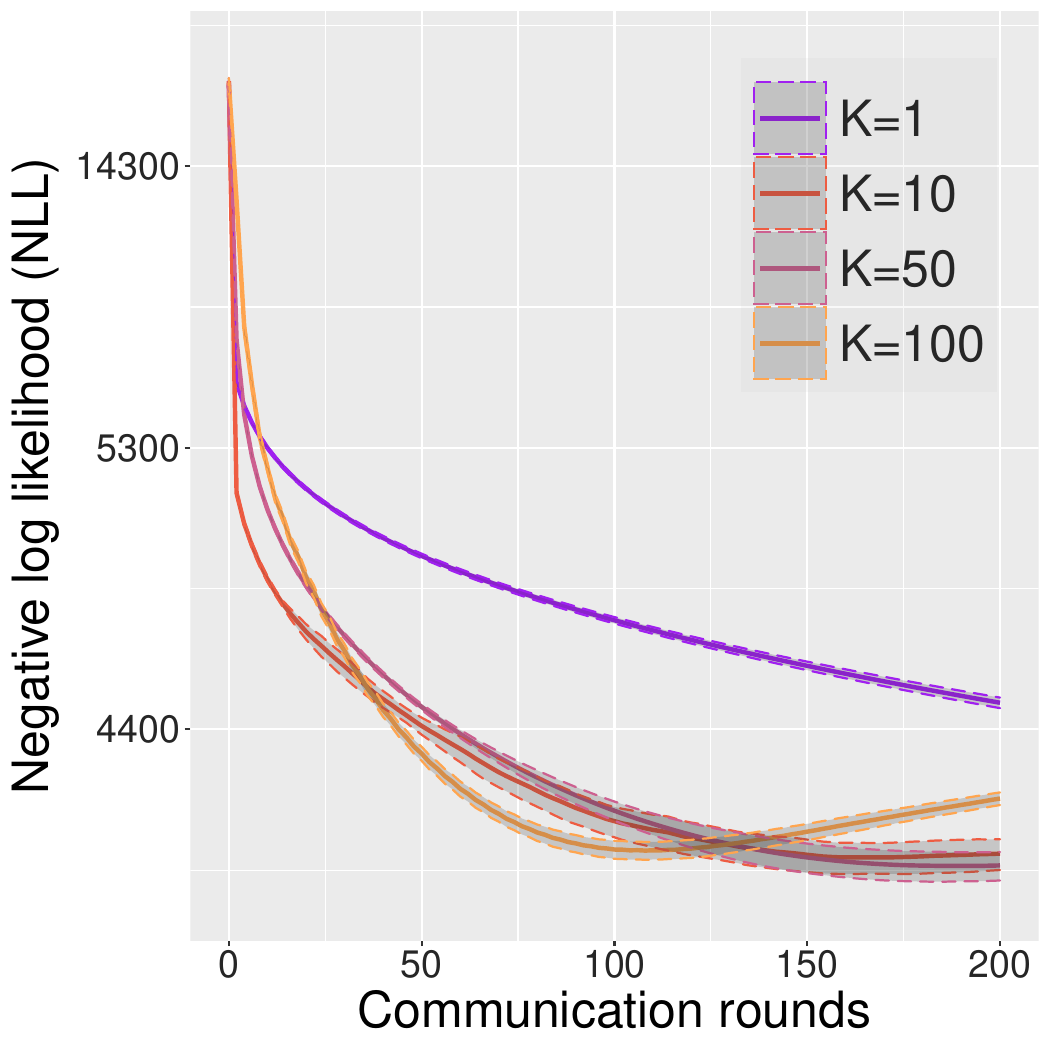}
    \end{minipage}%
    }%
  \vskip -0.1in
  \caption{The impact of leapfrog steps $K$ on FA-HMC applied on the Fashion-MNIST dataset. The shaded error represents the standard deviation based on 5 independence runs.}
  \label{figure:Fashion_HMC_eb}
\end{figure*}

\begin{figure*}[htbp]
    \centering
    \subfigure[Accuracy]{
    \begin{minipage}[t]{0.24\linewidth}
    \centering
    \label{fig:local-accu_eb}
    \includegraphics[width=\linewidth]{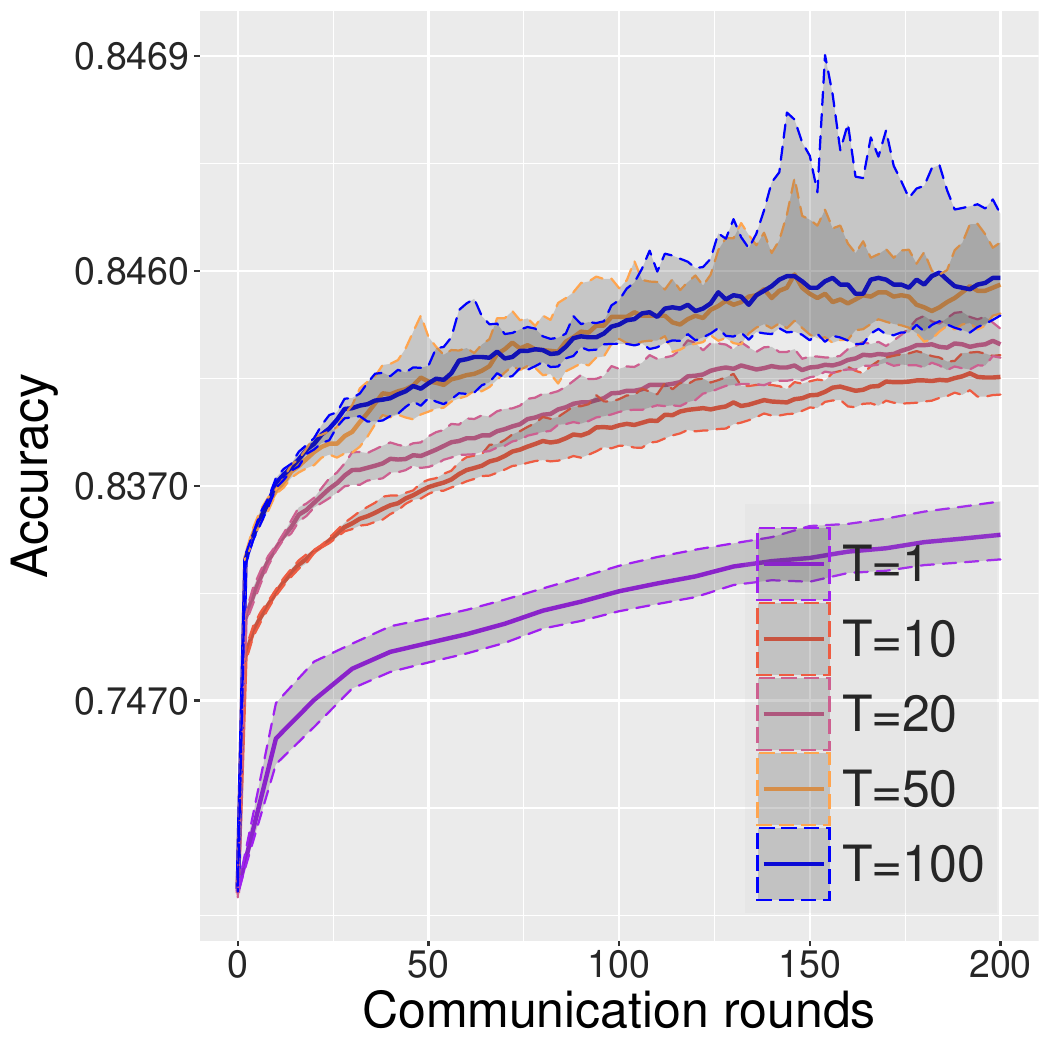}
    \end{minipage}%
    }%
    \subfigure[BS]{
    \begin{minipage}[t]{0.24\linewidth}
    \centering
    \label{fig:local-brier_eb}
    \includegraphics[width=\linewidth]{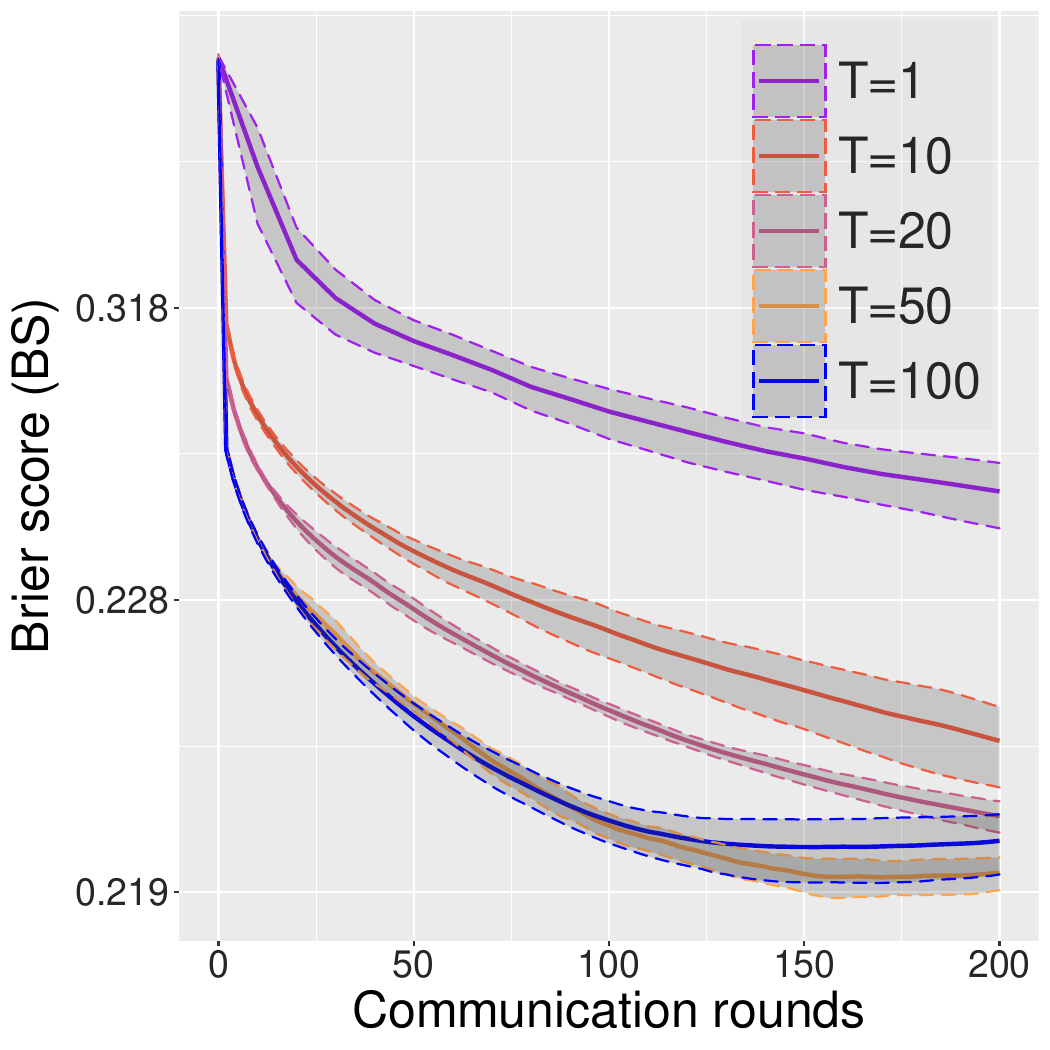}
    \end{minipage}%
    }%
    \subfigure[ECE]{
    \begin{minipage}[t]{0.24\linewidth}
    \centering
    \label{fig:local-ECE_eb}
    \includegraphics[width=\linewidth]{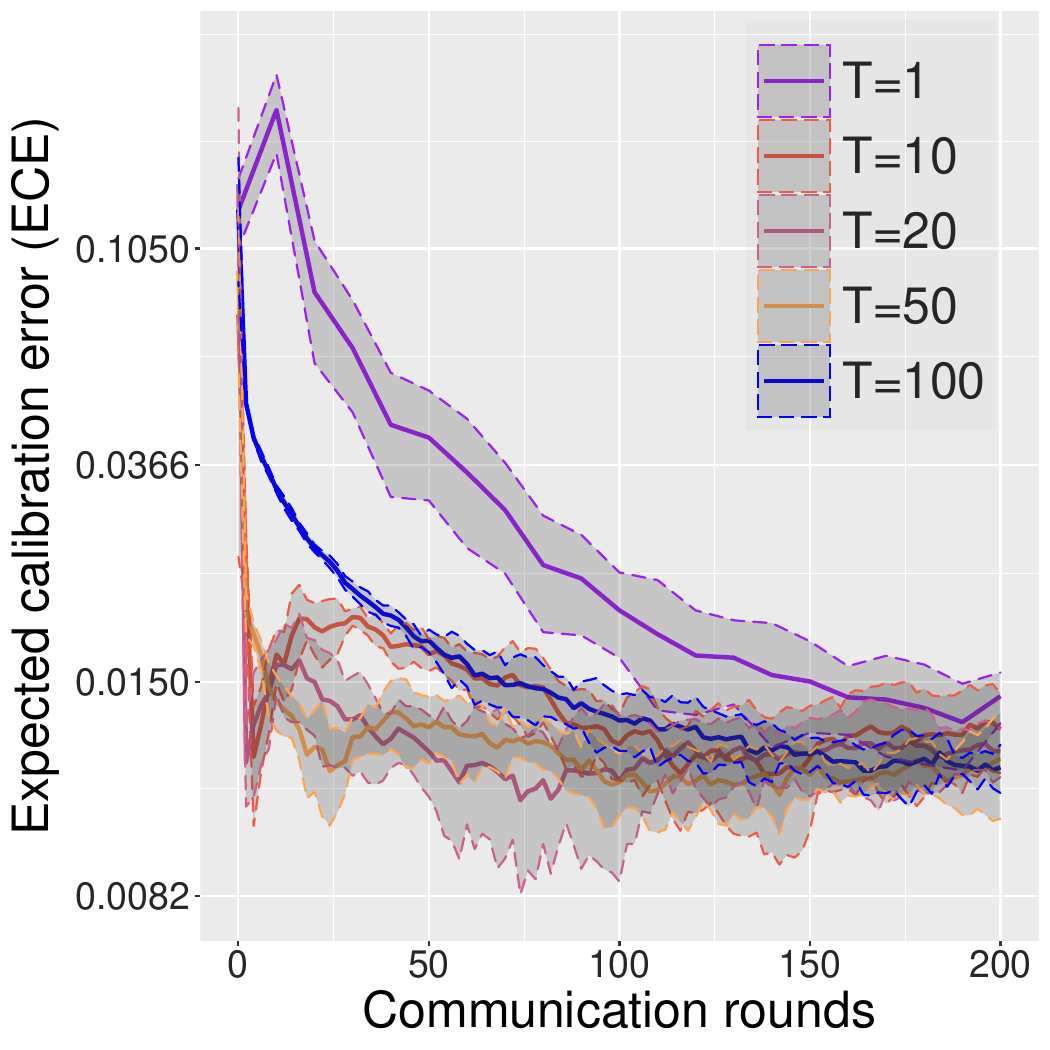}
    \end{minipage}%
    }%
    \subfigure[NLL]{
    \begin{minipage}[t]{0.24\linewidth}
    \centering
    \label{fig:local-NLL_eb}
    \includegraphics[width=\linewidth]{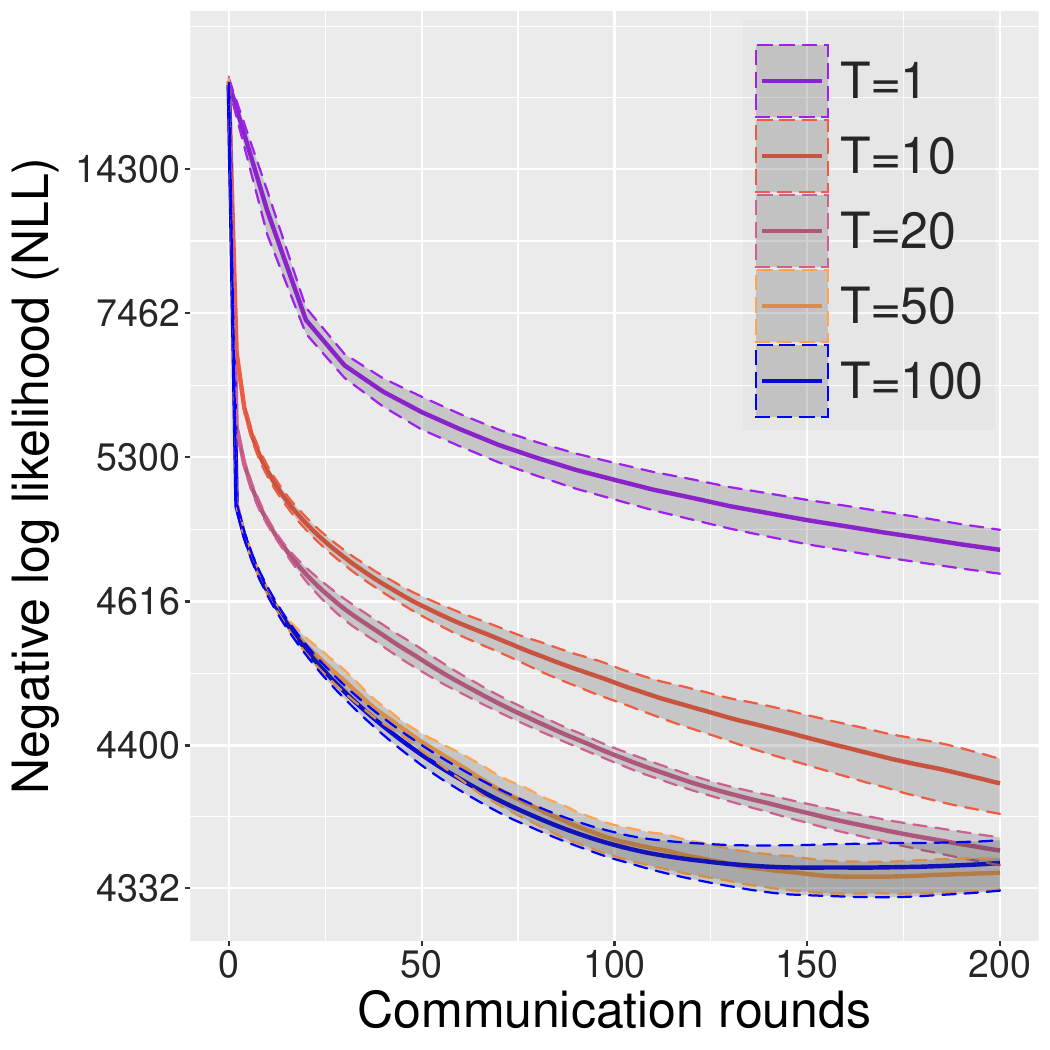}
    \end{minipage}%
    }%
  \vskip -0.1in
  \caption{The impact of local steps $T$ on FA-HMC applied on the Fashion-MNIST dataset. The shaded error represents the standard deviation based on 5 independence runs}
  \label{figure:Fashion_local_eb}
\end{figure*}

\bibliographystyle{jasa3}
\bibliography{reference}
